\newcommand{\bgamma}{\boldsymbol{\gamma}}
\newcommand{\stageonesamples}{\tilde{n}}
\newcommand{\meffective}{m^{\dag}}
\newcommand{\deffective}{d^{\dag}}
\newcommand{\bt}{\boldsymbol{t}}
\newenvironment{talign*}
 {\csname align*\endcsname}
 {\endalign}
\newenvironment{talign}
 {\csname align\endcsname}
 {\endalign}
\definecolor{mydarkblue}{rgb}{0,0.08,0.45}
\DeclarePairedDelimiterX{\infdivx}[2]{(}{)}{%
  #1\;\delimsize\|\;#2%
}
\newcommand{\one}{\mathbbm{1}}
\newcommand{\ba}{\mathbf{a}}
\newcommand{\bg}{\mathbf{g}}
\newcommand{\bY}{\mathbf Y}
\newcommand{\bx}{\mathbf x}
\newcommand{\bz}{\mathbf z}
\newcommand{\bv}{\mathbf v}
\newcommand{\bo}{\mathbf o}
\newcommand{\bomega}{\boldsymbol{\omega}}
\newcommand{\calZ}{\mathcal{Z}}
\newcommand{\calR}{\mathcal{R}}
\newcommand{\calF}{\mathcal{F}}
\newcommand{\calE}{\mathcal{E}}
\newcommand{\calH}{\mathcal{H}}
\newcommand{\calA}{\mathcal{A}}
\newcommand{\calC}{\mathcal{C}}
\newcommand{\calG}{\mathcal{G}}
\newcommand{\calB}{\mathcal{B}}
\newcommand{\calN}{\mathcal{N}}
\newcommand{\calO}{\mathcal{O}}
\newcommand{\calL}{\mathcal{L}}
\newcommand{\calX}{\mathcal{X}}
\newcommand{\calD}{\mathcal{D}}
\newcommand{\x}{\mathbf{x}}
\newcommand{\bl}{{\boldsymbol\ell}}
\newcommand{\R}{\mathbb{R}}
\newcommand{\N}{\mathbb{N}}
\newcommand{\bh}{\boldsymbol{h}}
\newcommand{\bs}{\boldsymbol{s}}
\newcommand{\balpha}{\boldsymbol\alpha}
\newcommand{\bbeta}{\boldsymbol\beta}
\newcommand{\closer}[3]{{\kern-#1ex{#2}\kern-#3ex}}
\DeclareMathOperator*{\argmin}{arg\,min}
\mathchardef\mhyphen="2D
\DeclareMathOperator{\E}{\mathbb{E}}
\newcommand\reallywidehat[1]{\arraycolsep=0pt\relax%
\begin{array}{c}
\stretchto{
  \scaleto{
    \scalerel*[\widthof{\ensuremath{#1}}]{\kern-.5pt\bigwedge\kern-.5pt}
    {\rule[-\textheight/2]{1ex}{\textheight}} 
  }{\textheight} %
}{0.5ex}\\           
#1\\                 
\rule{-1ex}{0ex}
\end{array}
}
\def\Id{\mathrm{Id}}
\definecolor{azure}{rgb}{0.0, 0.5, 1.0}
\definecolor{airforceblue}{rgb}{0.36, 0.54, 0.66}
\definecolor{darkgreen}{rgb}{0.0, 0.2, 0.13}
\pgfplotsset{compat=1.14}
\definecolor{mediumgray}{gray}{0.7}
\definecolor{lightgray}{gray}{0.85}
\definecolor{lightlightgray}{gray}{0.9}
\definecolor{C1}{HTML}{1F77B4}
\definecolor{C2}{HTML}{FF7F0E}
\definecolor{C3}{HTML}{2CA02C}
\definecolor{C4}{HTML}{D62728}
\definecolor{C5}{HTML}{9467BD}
\colorlet{C1light}{C1!70!white}
\colorlet{C2light}{C2!70!white}
\colorlet{C3light}{C3!70!white}
\colorlet{C4light}{C4!70!white}
\colorlet{C5light}{C5!70!white}
\colorlet{C1lighter}{C1!50!white}
\colorlet{C2lighter}{C2!50!white}
\colorlet{C3lighter}{C3!50!white}
\colorlet{C4lighter}{C4!50!white}
\colorlet{C5lighter}{C5!50!white}
\colorlet{C1vlight}{C1!20!white}
\colorlet{C2vlight}{C2!20!white}
\colorlet{C3vlight}{C3!20!white}
\colorlet{C4vlight}{C4!20!white}
\colorlet{C5vlight}{C5!20!white}
\colorlet{linkcolor}{violet}
\crefname{enumi}{}{}
\crefname{enumii}{}{}
\newcommand*{\addFileDependency}[1]{
\typeout{(#1)}
%
%
\@addtofilelist{#1}
%
\IfFileExists{#1}{}{\typeout{No file #1.}}
}\makeatother
\newcommand{\steve}[1]{#1}
\newtheorem{thm}{Theorem}[section]
\newtheorem{cor}{Corollary}[section]
\newtheorem{lem}{Lemma}[section]
\newtheorem{defi}{Definition}
\newtheorem{prop}{Proposition}[section]
\newtheorem{rem}{Remark}[section]
\newtheorem{ass}{Assumption}[section]
\newtheorem*{ass*}{Assumption}
\newcommand\subsubsubsection{\@startsection{subsubsubsection}{4}{\z@}%
  {-3.25ex\@plus -1ex \@minus -.2ex}%
  {1.5ex \@plus .2ex}%
  {\normalfont\normalsize\bfseries}}
\newcommand\subsubsubsectionmark[1]{}
\newcommand{\bPhi}{\boldsymbol{\Phi}}
\title{Nonparametric Instrumental Variable Regression with Observed Covariates}
\author[1]{Zikai Shen$^{*}$}
\author[2]{Zonghao Chen$^{*}$}
\author[3]{Dimitri Meunier}
\author[4]{Ingo Steinwart}
\author[3]{Arthur Gretton$^{\dagger}$}
\author[3]{Zhu Li$^{\dagger}$}
\affil[1]{Department of Statistical Science,
University College London}
\affil[2]{Department of Computer Science,
University College London}
\affil[3]{Gatsby Computational Neuroscience Unit,
University College London}
\affil[4]{Department of Mathematics, University of
Stuttgart}
\date{}
\begin{document}

\maketitle
\setcounter{footnote}{0}
\renewcommand{\thefootnote}{\fnsymbol{footnote}} 
\footnotetext[1]{$,^\dagger$ Equal contribution in random order.}
\begin{abstract}
We study the problem of nonparametric instrumental variable regression with observed covariates, which we refer to as NPIV-O. Compared with standard nonparametric instrumental variable regression (NPIV), the additional observed covariates facilitate causal identification and enables  heterogeneous causal effect estimation. However, the presence of observed covariates introduces two challenges for its theoretical analysis. First, it induces a partial identity structure, which renders previous NPIV analyses—based on measures of ill-posedness, stability conditions, or link conditions—inapplicable. Second, it imposes anisotropic smoothness on the structural function. To address the first challenge, we introduce a novel \emph{Fourier measure of partial smoothing}; for the second challenge, we extend the existing kernel 2SLS instrumental variable algorithm with observed covariates, termed KIV-O, to incorporate Gaussian kernel lengthscales adaptive to the anisotropic smoothness. We prove upper $L^2$-learning rates for KIV-O and the first $L^2$-minimax lower learning rates for NPIV-O. Both rates interpolate between known optimal rates of NPIV and nonparametric regression (NPR). Interestingly, we identify a gap between our upper and lower bounds, which arises from the choice of kernel lengthscales tuned to minimize a projected risk. Our theoretical analysis also applies to proximal causal inference, an emerging framework for causal effect estimation that shares the same conditional moment restriction as NPIV-O.
\end{abstract}

\section{Introduction}
\label{sec:intro}

We consider the problem of identifying and estimating the \emph{causal effect} of a treatment variable $X$ on an outcome variable $Y$, where their relationship is confounded by an unobserved confounder $\epsilon$. Despite the existence of unobserved confounding, it is nonetheless possible to identify the causal effect  by leveraging an \emph{instrumental variable} $Z$. For instance, if one aims to identify the causal effect of smoking ($X$) on the risk of lung disease ($Y$), which may be potentially confounded by an individual's occupation and early childhood environment ($\epsilon$), the cigarette cost ($Z$) would be a valid instrument as it only affects the risk of lung disease via smoking~\citep{leigh2004instrumental}. For cases where $X$ and $Y$ are continuous (and possibly multivariate) random variables, \emph{Nonparametric Instrumental Variables Regression} (NPIV) has received significant attention. NPIV is particularly valuable as it avoids imposing potentially misspecified parametric or semiparametric assumptions when such structure is not warranted~\citep{newey_2003, horowitz2011applied}. Prior literature has explored various algorithms for NPIV. These include methods based on: kernel density estimation \citep{HallHorowitz2005, darolles2011nonparametric}, sieve minimum distance estimators \citep{chen2012estimation, newey_2003, Chen_2018} and more recently Reproducing Kernel Hilbert Spaces (RKHSs) with the Kernel Instrumental Variables (KIV) algorithm \citep{singh2019kernel, meunier2024nonparametricinstrumentalregressionkernel}, which is a nonparametric generalization of the two-stage least squares (2SLS) algorithm. Another family of nonparametric algorithms is based on min-max optimization \citep{bennett2019deep, dikkala2020minimax, bennett2023minimax}. We defer a full discussion of these approaches to \Cref{sec:related_work}.

Practitioners often have access to \emph{observed covariates} $O$. These observed covariates encode individual-level characteristics, which allow for the estimation of \emph{heterogenous causal effects}. Returning to the smoking example, confounders such as an individual's occupation fall into this subset as it is readily observable. Such extra information enables the estimation of heterogeneous treatment effects---for instance, the causal effect of smoking on lung disease specifically for manual workers. In this work, we refer to the NPIV framework that incorporates observed covariates as NPIV-O. Formally, we introduce the following NPIV-O model,
\begin{align}
\label{eq:npiv_o_intro}
    Y = f_{\ast}(X, O) + \epsilon, \quad \mathbb{E}[\epsilon \mid Z, O] = 0  .
\end{align}
We refer to $f_{\ast}$ as the \emph{heterogeneous dose response curve}, and it is our target of interest. Here, $\epsilon$ is an unobserved confounder that affects $Y$ additively.  By Eq. \eqref{eq:npiv_o_intro}, we implicitly assume that $Z$ can only possibly affect $Y$ through $X$, a condition known as \emph{exclusion restriction}. The mean independence $\mathbb{E}[\epsilon\mid Z,O] = 0$ is a relaxation of the stronger \emph{unconfoundedness assumption}, that requires $(Z,O)$ to be independent of $\epsilon$. We also require that $Z$ and $X$ are not independent, a condition known as \emph{instrumental relevance}. A random variable $Z$ that satisfies these requirements is referred to as a valid instrumental variable. For a detailed discussion of our assumptions, we refer to \Cref{sec:theory}. We also note that the observed covariates $O$ aid identification by capturing non-linear confounding effects, thereby relaxing the strict additivity assumption required for \emph{all} confounders in classical NPIV. 

Incorporating observed covariates in NPIV estimation poses both an algorithmic and a theoretical challenge. To understand this, define the conditional expectation operator
\begin{align*}
    T : L^2(P_{XO}) \to L^2(P_{ZO}), \quad f \mapsto \E[f(X, O)\mid Z, O].
\end{align*}
Eq.~\eqref{eq:npiv_o_intro} is equivalent to the following \emph{conditional moment restriction} for $f_{\ast}$:
\begin{align}\label{eq:inverse_problem}
    \E[Y\mid Z, O]=(Tf_\ast)(Z, O).
\end{align}
We refer the reader to \Cref{ass:T_injective} for the technical assumption ensuring unique identification of $f_{\ast}$ from this equation.  
With the presence of $O$, $T$ acts as an identity operator on the infinite dimensional function space
\begin{align}
\label{eq:calF_1_espace}
    \calF_1 = \{f : f(X, O) = f_1(O), \text{ } f_1 \in L^2(P_{O})\} \subset L^2(P_{XO}),
\end{align}
rendering the operator non-compact. 
A naive application of kernel instrumental variables without observed covariates \citep{singh2019kernel} augments both $X$ and $Z$ to $(X,O)$ and $(Z,O)$, without exploiting the fact that $T$ is partially an identity operator. As a result, this algorithm is not consistent, a point we elaborate on in \Cref{sec:algorithm}. From a theoretical standpoint, $T$ being partially an identity operator fundamentally alters the statistical properties of NPIV-O estimation compared with classical NPIV, posing significant challenges as detailed below.

The first challenge is to characterize the degree of ill-posedness of the inverse problem, Eq. \eqref{eq:inverse_problem}, while accounting for the fact that $T$ is partially an identity operator. We consider another function space $\calF_2 = \{f : f(X, O) = f_2(X), \;\; f_2 \in L^2(P_{X})\} \subset L^2(P_{XO})$. In this case, under mild conditions on the conditional distributions~\citep[Assumption A.1]{darolles2011nonparametric}, $T$ when treated as a mapping from $\calF_2$ to $L^2(P_{ZO})$ is a compact operator whose smoothing effect can be quantified through the rate of decay of its singular values. \emph{This mixed behaviour of $T$ reveals the nature of NPIV-O as a hybrid between NPIV and Nonparametric Regression (NPR).} Existing theoretical analyses of NPIV estimator mostly preclude the case where $T$ acts as a partial identity operator \citep{blundell2007semi, chen2011rate, chen2012estimation, Chen_2018, meunier2024nonparametricinstrumentalregressionkernel, kim2025optimality,chen2024adaptiveestimationuniformconfidence}, making them unsuitable for the NPIV-O setting in Eq. \eqref{eq:npiv_o_intro}. We refer the reader to \Cref{sec:related_work} and \Cref{rem:partial_smoothing} for a more in-depth discussion.

Another challenge is the \emph{anisotropic smoothness} of the heterogenous dose response curve $f_\ast$ across the treatment $X$ and observed covariates $O$.
In real-world applications, the treatment $X$ (e.g. smoking) is often one dimensional, while the observed covariates $O$ (e.g. occupation, age, gene) are of higher dimensionality. This is because practitioners tend to adjust for as many observed covariates as possible, in an effort to overcome unobserved confounding. 
Therefore, a desirable algorithm would adapt to the \emph{intrinsic smoothness} of $f_\ast$, thereby \steve{adapts appropriately to the high intrinsic smoothness when} the directional smoothness is highly anisotropic~\citep{hoffman2002random}.  
To achieve this desirable property, we modify the KIV-O algorithm to select kernel lengthscales adaptively to the varying directional smoothness of $f_\ast$. In contrast, existing NPIV algorithms are typically analyzed under an isotropic smoothness assumption, meaning their established convergence rates are dictated by the worst smoothness across all dimensions~\citep{singh2019kernel,singh2024kernel}. This limitation of these theoretical guarantees becomes increasingly severe in high dimensions.

In this paper, we tackle the above two challenges and make the following contributions. 

\begin{enumerate}[itemsep=5.0pt,topsep=0pt,leftmargin=*]
    \item \emph{Fourier measure of partial smoothing effect of $T$}:
    In \Cref{sec:ill_posedness}, we introduce a new framework based on Fourier spectra which quantifies the \emph{partial} smoothing effect of $T$ for functions $f:\calX\times\calO\to\R$ with only high frequencies on $X$; while conversely quantifies the \emph{partial} anti-smoothing effect of $T$ for functions $f:\calX\times\calO\to\R$ with only low frequencies on $X$. 
    Our Fourier measure of partial smoothing effect resembles existing ones based on sieves~\citep{blundell2007semi,chen2024adaptiveestimationuniformconfidence,kim2025optimality}, but features two key distinctions:
    1) Our framework takes into account the partial identity structure of $T$ caused by the existence of observed covariates $O$.
    2) Our framework aligns, through Bochner's theorem, with the RKHS of a continuous translational invariant kernel. This alignment will be useful for our next contribution.
    \item \emph{Upper and minimax lower $L^2(P_{XO})$-learning rate}: 
    Under the above framework that quantifies the \emph{partial} smoothing effect of $T$, we prove an upper learning rate (\Cref{thm:upper_rate}) for a kernel based algorithm for instrumental variable
    regression with observed covariates proposed in \citet{singh2024kernel}, termed KIV-O. Furthermore, we prove the first minimax lower learning rate (\Cref{thm:lower_rate}) for NPIV-O defined in Eq.~\eqref{eq:npiv_o_intro}. All our bounds hold in the strong $L^2(P_{XO})$ norm rather than the pseudo-metric $\|T(\cdot)\|_{L^2(P_{ZO})}$ considered in \citet{singh2019kernel}. 
    For the following two edge cases: 1)
    No observed covariates: NPIV-O reduces to classical NPIV; 2) No hidden confounding: instrument variables $Z$ are unnecessary and NPIV-O reduces to NPR, our upper bound of KIV-O matches the minimax lower bound, recovering earlier results on minimax optimality of classical kernel instrumental variable regression~\citep{meunier2024nonparametricinstrumentalregressionkernel} and kernel ridge regression~\citep{hang2021optimal,fischer2020sobolev}, respectively established under analogous assumptions to our work. In the general intermediate case \steve{where $T$ exhibits a partial identity structure}, both our upper and lower learning rates interpolate accordingly, however, there exists a gap between the upper and minimax lower bound. \steve{We posit that this gap is fundamental and we provide insights on why this gap emerges in \Cref{sec:challenge_optimal}. 
    \item \emph{Adaptivity to model intrinsic smoothness:} We modify the existing KIV-O algorithm to select kernel lengthscales separately for each dimension, adaptive to the varying directional smoothness of $f_\ast$. 
    We prove that its learning rate takes into account the anisotropic smoothness of $f_\ast$
    across the treatment $X$ and observed covariates $O$. 
    Compared with existing KIV algorithms and their associated analyses that assume isotropic smoothness~\citep{fischer2020sobolev,meunier2024nonparametricinstrumentalregressionkernel,singh2023kernel,singh2024kernel}, our learning rate is adaptive to the target function's intrinsic smoothness, and alleviates the slow rate caused by the need to account for the worst-case smoothness, when the anisotropic smoothness is highly imbalanced. 
    \item \emph{Interpretable anisotropic smoothness assumption}: 
    Another key feature of our upper and lower bounds is that they highlight the separate contribution of the partial smoothing effect of $T$ and the anisotropic smoothness of $f_{\ast}$, characterized by an anisotropic Besov space. 
    In contrast, much work in the NPIV literature employ a generalized source condition with respect to the unknown conditional expectation operator $T$ \citep{engl1996regularization, singh2019kernel, mastouri2021proximal, singh2023kernel, bozkurt2025densityratiobasedproxycausal,HallHorowitz2005}, which is less interpretable because $T$ is unknown a priori and cannot reveal the separate contribution of the intrinsic (anisotropic) smoothness of $f_{\ast}$ and the smoothness of $T$. 
    }

    \end{enumerate}

\subsection{Organization of the paper}
An outline of the paper is as follows. 
In \Cref{sec:setup}, we introduce the RKHS-based 2SLS algorithm for instrumental variable
regression with observed covariates, referred to as KIV-O.
In \Cref{sec:related_work}, we discuss related work on NPIV in the literature.
In \Cref{sec:theory}, we present the main assumptions and theoretical results, and discuss the interpretation of our findings.
In \Cref{sec:challenge_optimal}, we highlight the fundamental challenges towards obtaining minimax optimal rates of KIV-O.

\section{Setup}\label{sec:setup}

Consider $P$ the joint data-generating probability measure over $(Z,O,X,Y)$, where $Z\in \calZ := [0,1]^{d_z}$ denotes the instrument, $O \in \calO:=[0,1]^{d_o}$ denotes the observed covariates, $Y\in \mathbb{R}$ denotes the outcome variable, and $X \in \calX := [0,1]^{d_x}$ denotes the treatment variable. We use $p$ to denote the probability density functions; for example, $p(\bx\mid \bz, \bo)$ denotes the density of the conditional distribution $P_{X\mid Z=\bz, O=\bo}$. 
As stated in Section~\ref{sec:intro}, we define the conditional expectation operator $T$:
\begin{talign*}
    T : L^2(P_{XO}) \to L^2(P_{ZO}), \quad 
    f \mapsto \left( (\bz,\bo) \mapsto \int_{\calX} f(\bx,\bo)p(\bx|\bz,\bo)\;\mathrm{d}\bx \right) .
\end{talign*}

\textit{Notations:} 
Let $\mathbb{N}_{+}$ denote the set of positive integers and $\mathbb{N}=\mathbb{N}_{+} \cup\{0\}$ denote the set of non-negative integers. We use boldfaced letters, such as $\bx$, to denote a vector in $\mathbb{R}^{d}$ for $d\geq 1$. Specifically, $\bx = [x_1, \ldots, x_d]^\top \in \calX \subset \R^d$. 
For a distribution $P$ defined on a measurable space $(\mathcal{X}, \calB(\calX))$ and $0<p < \infty$, $L^p(P)$ is the space of functions $h: \mathcal{X} \rightarrow \mathbb{R}$ such that $\|h\|_{L^p(P)}:=\mathbb{E}_{X \sim P}\left[|h(X)|^p\right]^{\frac{1}{p}}<\infty$ and $L^{\infty}(P)$ is the space of functions that are bounded $P$-almost everywhere. When $P$ is the Lebesgue measure $\mathcal{L}_{\mathcal{X}}$ over $\mathcal{X}$, we write $L^p(\mathcal{X}) := L^p\left(\mathcal{L}_{\mathcal{X}}\right)$. For $H$ a separable Hilbert space, we let $L^p(\mathcal{X};H)$ denote the space of Bochner 2-integrable functions from $\mathcal{X}$ to $H$ with norm $\|F\|^2_{L^2(\mathcal{X}; H)} = \int_{\mathcal{X}}\|F(\bx)\|^2_{H}\;\mathrm{d}\bx$. Two Banach spaces $E_1,E_2$ are said to be isometrically isomorphic, denoted $E_1\cong E_2$, if there exists an isometric isomorphism $S$, such that $\| S h\|_{E_2} = \|h \|_{E_1}$ for all $h \in E_1$.
Two Banach spaces $E_1,E_2$ are said to be norm equivalent, denoted $E_1\simeq E_2$, if $E_1,E_2$ coincide as sets and there are constants $c_1, c_2>0$ such that $c_1\|h\|_{E_1} \leq\|h\|_{E_2} \leq c_2\|h\|_{E_1}$ holds for all $h \in E_1$.
For an operator $T: E_1 \rightarrow E_2$, $\|T\|$ denotes its operator norm and $T^\ast$ denotes its adjoint. For two Hilbert spaces $H_1, H_2$, $S_2(H_1, H_2)$ is the Hilbert space of Hilbert-Schmidt operators from $H_1$ to $H_2$. For two numbers $\alpha$ and $\beta$, we let $\alpha \wedge \beta=\min (\alpha, \beta)$ and $\alpha \vee \beta=\max (\alpha, \beta)$. $\lesssim$ (resp. $\gtrsim$) means $\leq$ (resp. $\geq$) up to positive multiplicative constants. 

\subsection{Algorithm}
\label{sec:algorithm}
In this section, we introduce a kernel two-stage least-squares approach for instrumental variable regression with observed covariates, which we term the KIV-O algorithm. KIV-O algorithm adopts a sample splitting strategy. In Stage I, we learn the conditional expectation operator $T$ with dataset $\calD_1 := \{(\tilde{\bz}_{i}, \tilde{\bo}_i, \tilde{\bx}_i)\}_{i=1}^{\stageonesamples}$ (see Eq.~\eqref{eq:hat_F_xi}); in Stage II, \steve{we perform regression of the outcome $Y$ on the features learned in Stage I with dataset $\mathcal{D}_2 := \{(\bz_i, \bo_i, y_i)\}_{i=1}^{n}$ (see Eq.~\eqref{eq:hat_f_lambda}).}  The KIV-O algorithm is a generalization of the KIV algorithm proposed in \citet{singh2019kernel}. We note that many existing NPIV learning methods employ a two-stage estimation procedure, see for instance \citet{hartford2017deep, singh2019kernel, xu2020learning, li2024regularized, petrulionyte2024functional, khoury2025learning}. 

We now briefly review the relevant reproducing kernel Hilbert space (RKHS) theory, following \citet{berlinet2004reproducing}. 
For a domain $\calX\subseteq \mathbb{R}^{d}$, a Hilbert space $\calH$ of functions $f:\calX \to \mathbb{R}$ is called a \emph{Reproducing Kernel Hilbert Space} (RKHS) if the evaluation functional $\delta_{\bx}:\calH \to \mathbb{R}$ defined by $f \mapsto f(\bx)$ is continuous for every $\bx\in \calX$. Every RKHS $\calH$ has a unique symmetric, positive definite \emph{reproducing kernel} $k : \calX\times \calX\to \mathbb{R}$, which satisfies $k(\bx,\cdot)\in \calH$ for all $\bx\in \calX$ and $\langle f, k(\cdot, \bx)\rangle_{\calH} = f(\bx)$ for all $f\in \calH$ and $\bx\in \calX$ (the reproducing property). To describe the KIV-O algorithm, we introduce RKHSs $\calH_X$ on $\calX$, $\calH_{O,1}$ and $\calH_{O,2}$ on $\calO$ and $\calH_{Z}$ on $\calZ$. The reasoning for defining two distinct RKHSs on $\calO$ will be clear later in the algorithm. We denote the associated unique reproducing kernels via $k_{X}: \calX\times \calX\to \mathbb{R}, k_{O,1} : \calO\times \calO\to \mathbb{R}, k_{O,2}: \calO\times \calO\to \mathbb{R}, k_{Z} : \calZ\times \calZ\to \mathbb{R}$. We denote the canonical feature map of $\calH_{X}$ as $\phi_{X}(\bx) := k_{X}(\bx,\cdot)$, and similarly for feature maps $\phi_{O,1}, \phi_{O,2}, \phi_{Z}$.

\begin{ass}\label{ass: kernel_technical}
All kernels ($k_X, k_{O,1}, k_{O,2}$ and $k_{Z}$) are measurable and bounded.
\end{ass}
An immediate consequence of \Cref{ass: kernel_technical} is that the embedding $I_{P_{X}} : \calH_{X} \to L^2(P_{X})$, which maps a function $f\in \calH_{X}$ to its $P_{X}$-equivalence class $[f]_{P_{X}}$ is well-defined, compact and Hilbert-Schmidt~\citep[Lemma 2.3]{steinwart2012mercer}. We define $[\calH_X]_{P_X} \subseteq L^2(P_X)$ as the image of $I_{P_{X}}$. For $\beta>0$, we denote by $[\calH_X]_{P_X}^\beta$ the $\beta$-th \emph{power space}, as introduced in \citet[Theorem 4.6]{steinwart2012mercer}. For $0\leq \beta\leq 1$, this space is shown to be isomorphic to the $\beta$-interpolation space $[L^2(P_X),[\calH_X]_{P_X}]_{\beta, 2}$ \citep[Theorem 4.6]{steinwart2012mercer}. 
It is known that the $1$-interpolation space $[\calH_X]_{P_X}^{1}$ is isometrically isomorphic to the closed subspace $(\operatorname{ker} I_{P_{X}})^{\perp}$ of $\calH_X$ via $I_{P_{X}}$~\citep[Lemma 2.12]{steinwart2012mercer}. 
For $\beta \geq 1$, the space contains functions that are smoother than those in $\calH_X$. The same definitions and properties hold for $\calH_{Z}$, $\calH_{O,1}$ and $\calH_{O,2}$ as well.

For two Hilbert spaces $H,H'$, we let $H\otimes H'$ denote their tensor product Hilbert space, defined as 
$H\otimes H' := \overline{\mathrm{span}\{u\otimes u':u\in H,u'\in H'\}}$, 
where $u\otimes u'$ is the linear rank-one operator $H'\rightarrow H$ defined by $(u\otimes u') v' = \langle u', v' \rangle_{H'} u$ \citep[Section 12]{aubin2011applied}. 
In the case of RKHSs, the tensor product $\calH_{ZO,1} := \calH_Z \otimes \calH_{O,1}$ and $\calH_{XO,2} := \mathcal{H}_{X} \otimes \calH_{O,2}$ are the unique RKHSs associated with the product kernels $k_{ZO,1}((\bz,\bo), (\bz',\bo')) = k_{Z}(\bz,\bz') \cdot k_{O,1}(\bo,\bo')$ and $k_{XO,2}((\bx,\bo), (\bx',\bo')) = k_{X}(\bx,\bx') \cdot k_{O,2}(\bo,\bo')$, respectively~\citep{berlinet2004reproducing}. 
We define the embedding $I_{P_{XO,2}} : \calH_{XO,2} \to L^2(P_{XO})$ which maps a function $f\in \calH_{XO,2}$ to its $P_{XO}$-equivalence class $[f]_{P_{XO}}$, and define the $\beta$-th power spaces as $[\calH_{XO,2}]_{P_{XO}}^\beta$.
An analogous construction applies to $\mathcal{H}_{Z O, 1}$, yielding the spaces $[\calH_{Z O, 1}]_{P_{Z O}}^\beta$. 
In the rest of the paper, we omit the subscript and use the notation $[\cdot]$ to denote equivalence classes in $L^2$. 

\vspace{1mm}
We are now ready to present the KIV-O algorithm. \\  
\indent \textit{Stage I.} The action of the operator $T$ on the RKHS $\calH_{XO,2}$ can be represented with the aid of the \emph{conditional mean embedding} (CME)~\citep{song2009hilbert, park2021measuretheoreticapproachkernelconditional, klebanov2020rigorous, lietal2022optimal}. We define the CME $F_{\ast}$ as the mapping from $\calZ\times \calO$ to $\calH_{X}$, given by $(\bz,\bo)\mapsto \mathbb{E}[\phi_{X}(X)\mid Z = \bz,O = \bo]$. Equipped with the CME, we note that the image of $T$ acting on a function $f\in \calH_{XO,2}$ admits the following representation: for any $(\bz,\bo)\in \calZ\times \calO$,
\begin{align*}
    &(Tf)(\bz,\bo) = \mathbb{E}[f(X,O)\mid Z=\bz, O = \bo] = \mathbb{E}[\langle f, \phi_{X}(X) \otimes \phi_{O,2}(O)\rangle_{\calH_{XO,2}} \mid Z = \bz, O = \bo]\\
    &= \langle f, \mathbb{E}[\phi_{X}(X)\mid Z = \bz, O = \bo]\otimes \phi_{O,2}(\bo)\rangle_{\calH_{XO,2}} = \langle f, F_{\ast}(\bz,\bo)\otimes \phi_{O,2}(\bo)\rangle_{\calH_{XO,2}},
\end{align*}
where the second equality follows from the \emph{reproducing property} and the third equality requires a Bochner integrable feature map $\phi_X$ (true for bounded kernels) from \Cref{ass: kernel_technical}~\citep[Definition A.5.20]{steinwart2008support}. 
Note that the feature map $\phi_{X}$ is projected by the conditional expectation of the conditional distribution $P_{X\mid Z,O}$, while the feature map $\phi_{O,2}$ remains unprojected. This is the key distinction from classical KIV. 
In Stage I, our goal is to estimate the CME, $F_{\ast}$, by performing a regularized least squares regression in a vector-valued RKHS $\calG$ induced by the operator-valued kernel ~\citep{grunewalder2012conditional, lietal2022optimal} 
\begin{align}
\label{eq:op_val_kernel}
    K := k_{ZO,1}\Id_{\calH_X} : (\calZ\times \calO)\times (\calZ\times \calO)\to \calL(\calH_{X}),
\end{align}
where $\calL(\calH_{X})$ denotes the space of bounded linear operators $\calH_{X}\to \calH_{X}$, and $\mathrm{Id}_{\calH_{X}}\in \calL(\calH_{X})$ denotes the identity operator on $\calH_{X}$. 
An important property of $\calG$ is that it is isometrically isomorphic to the space $S_2(\mathcal{H}_{Z O, 1}, \mathcal{H}_X)$ of Hilbert-Schmidt operators from $\mathcal{H}_{Z O, 1}$ to $\mathcal{H}_X$. On the other hand, by \citet[Theorem 12.6.1]{aubin2011applied}, $S_2(L^2(P_{ZO}), \calH_X)$ is isometrically isomorphic to the Bochner space $L^2(P_{ZO}, \calH_X)$, and we denote this isomorphism as $\Psi$. 
We can define vector-valued $\beta$-th power spaces~\citep[Definition 4]{lietal2022optimal}:
\begin{talign}\label{vv_interpolation_space}
    [\mathcal{G}]^\beta := \Psi\left(S_2\left([\mathcal{H}_{ZO,1}]^\beta, \mathcal{H}_{X} \right)\right)=\left\{F \mid F=\Psi(C), C \in S_2\left([\mathcal{H}_{ZO,1}]^\beta, \mathcal{H}_{X} \right)\right\} .
\end{talign}
The space $[\mathcal{G}]^\beta$ generalizes the definition of scalar-valued \emph{power space} to vector-valued RKHSs, quantifying the smoothness of $F_\ast$ relative to the RKHS $\calG$ (see Eq.~\eqref{eq:F_ast_power_space}). We refer the reader to \citet{carmeli2006vector, carmeli2010vector} for definitions and properties of more general vector-valued RKHSs. 

Given $\calD_1 = \{(\tilde{\bz}_i, \tilde{\bo}_i, \tilde{\bx}_i)\}_{i=1}^{\stageonesamples}$ sampled i.i.d from
the joint distribution $P_{XZO}$, a regularized estimator of $F_\ast$ is obtained as the solution to the following optimization problem:
\begin{talign}
\label{eq:hat_F_xi}
    \hat{F}_{\xi} := \underset{F \in \mathcal{G}}{\argmin } \frac{1}{\stageonesamples} \sum_{i=1}^{\stageonesamples} \left\|\phi_{X}(\tilde{\bx}_i) - F(\tilde{\bz}_i, \tilde{\bo}_i)\right\|_{\mathcal{H}_{X}}^2 + \xi\|F\|_{\mathcal{G}}^2,
\end{talign}
where $\xi >0 $ denotes the Stage I regularization parameter. 

\textit{Stage II.} In Stage II, we perform regularized least squares regression in the RKHS $\calH_{XO,2}$, using features derived from the estimated conditional mean embedding $\hat{F}_{\xi}$. Specifically, the features are $\hat{F}_{\xi}(Z, O) \otimes \phi_{O,2}(O)$.
Given $\calD_2 = \{(\bz_i,\bo_i, y_i)\}_{i=1}^{n}$ i.i.d sampled from the joint distribution $P_{ZOY}$ and independent of $\calD_1$, the regularized estimator $\hat{f}_{\lambda}$ is defined as: 
\begin{talign}\label{eq:hat_f_lambda}
    \hat{f}_{\lambda} &:= \underset{f\in \calH_{XO,2}}{\argmin}  \frac{1}{n}\sum\limits_{i=1}^n \left(y_i - \left\langle f, \hat{F}_{\xi}(\bz_i,\bo_i) \otimes \phi_{O,2}(\bo_i) \right\rangle_{\calH_{XO,2}} \right)^2 + \lambda \|f\|^2_{\calH_{X}\otimes \calH_{O,2}},
\end{talign}
where $\lambda >0 $ denotes the Stage II regularization parameter.\footnote{The naive extension to observed covariates in KIV \citet{singh2019kernel} considers augmenting $X,Z$ to $(X,O), (Z,O)$. This approach is not consistent because Stage I would then require estimating the conditional mean embedding $(\bz,\bo)\mapsto \E[\phi_X(X)\otimes\phi_O(O)\mid Z=\bz,O=\bo]$, which is \emph{not} Hilbert-Schmidt and for which vector-valued kernel ridge regression is not consistent, see also \citep[Appendix B.9]{mastouri2021proximal} for an illustration.}
Owing to the favourable properties of kernel ridge regression, $\hat{f}_\lambda$ admits a closed-form expression, given in \Cref{sec:closed_form_kivo} in the Supplement.
Upon learning $\hat{f}_\lambda$, the quantity $\hat{f}_\lambda(\bx^\ast, \bo^\ast)$ represents the estimated heterogenous dose response of a new treatment $\bx^\ast$ on a new individual with observed covariates $\bo^\ast$. The estimated dose response curve evaluated at $\bx^\ast$ can then be obtained as the expectation of $\hat{f}_{\lambda}(\bx^{\ast},O)$ with respect to the marginal distribution of the observed covariates. 

\vspace{1mm}
Our primary goal is to study the $L^2(P_{XO})$-risk:
\begin{align}\label{eq:L2_risk}
    \|\hat{f}_\lambda - f_{\ast}\|_{L^2(P_{XO})}.
\end{align}
To this end, we need to impose regularity conditions on the regression targets in both stages. 
Specifically, we characterize the regularity of the conditional mean embedding $F_\ast: \calZ\times\calO \to \calH_X$ through a \emph{dominating mixed-smoothness Sobolev space}, as discussed in \Cref{sec:mixed_smooth_ss}; and we characterize the regularity of the function $f_\ast: \calX \times \calO \to \R$ through an \emph{anisotropic Besov space}, as introduced in \Cref{sec:bg_anb}.
It is thus natural to use two different kernels $k_{O, 1}$ and $k_{O,2}$, because the regularity of $F_\ast$ and $f_\ast$ with respect to $\calO$ might not be the same.
Since the choice of kernel in both stages is dependent on the regularity of their respective regression targets $F_\ast$ and $f_\ast$, we provide a more in-depth description and justification of the kernels we use in stages I and II in Remarks~\ref{rem:kernel_stage_1_choice} and \ref{rem:kernel_stage_2_choice} respectively.  

\subsection{Mixed-smoothness Sobolev spaces}
\label{sec:mixed_smooth_ss}
In this section, we introduce vector-valued mixed-smoothness Sobolev spaces to characterize the smoothness of the conditional mean embedding (CME) $F_{\ast}: (\bz, \bo) \mapsto \E[\phi_{X}(X) \mid Z=\bz, O=\bo]$ in Stage I. 
In fact, the smoothness of $F_{\ast}$ can be identified via the differentiability of the conditional density.

Let $(\mathbb{N}^{+})^{d}$ be the set of all multi-indices $\balpha = (\alpha_1,\dots,\alpha_d)$ with $\alpha_i\in \mathbb{N}$ and $|\balpha| = \sum_{i=1}^{d}\alpha_i$.  
For $\balpha\in \mathbb{N}^{d}$ and $f: \R^d \to \R$,  $\partial^{\balpha}$ denotes the classical (pointwise) partial derivative, and $D^{\balpha} f$ denotes the corresponding weak (distributional) partial derivative. 

\begin{ass}\label{assn: technical}
Let $m_o, m_z \in \mathbb{N}^+$. For any $\bx\in\calX$, the map $(\bz,\bo)\mapsto p(\bx\mid\bz,\bo)$ has bounded, continuous derivatives of order $m_o$ with respect to $\bo$ and order $m_z$ with respect to $\bz$ on the interior of $\calZ\times \calO$. 
\begin{align*}
    \rho := \max_{|\balpha|\leq m_z}\max_{|\bbeta|\leq m_o}\sup_{\bx \in \calX ,\bz\in\calZ,\bo\in\calO} \left|\partial^{\balpha}_{\bz}\partial^{\bbeta}_{\bo}p(\bx\mid \bz,\bo)\right|< \infty. 
\end{align*}
\end{ass}
The differentiability conditions on the conditional density imposed in \Cref{assn: technical} imply that $F_\ast$ belong to a certain vector-valued dominating mixed-smoothness Sobolev space, as defined below.
\begin{defi}[Vector-valued dominating mixed-smoothness Sobolev space]
Let $H$ be a Hilbert space. Let $m_z,m_o\in \mathbb{N}^+$. We define
\begin{align*}
    MW^{m_z,m_o}_2(\calZ\times \calO; H):=\left\{ F \mid F \in L^2(\calZ\times \calO; H), \|F\|_{MW^{m_z,m_o}_2(\calZ\times \calO; H)} <\infty \right\} .
\end{align*}
where $\|F\|_{MW^{m_z,m_o}_2(\calZ\times \calO; H)} := \sum_{|\balpha|\leq m_z}\sum_{|\bbeta|\leq m_o} \| D^{\balpha}_{\bz} D^{\bbeta}_{\bo} F \|_{L^2(\calZ\times \calO; H)}$.
\end{defi}
The real-valued dominating mixed-smoothness Sobolev space~\citep{schmeisser1987unconditional, schmeisser2007recent, SICKEL2009748} $MW^{m_z,m_o}_2(\calZ\times\calO; \mathbb{R})$ is a special case of $MW^{m_z,m_o}_{2}(\calZ\times\calO;H)$ when $H = \mathbb{R}$. 
When $d_z =0$ (or $d_o = 0$), we recover the vector-valued Sobolev spaces $W^{m_z}_2(\calZ; H)$ (or $W^{m_o}_2(\calO; H)$) as defined in \citet[Section 12.7]{aubin2011applied}.
\Cref{assn: technical} implies that $\|D_{\bz}^{\balpha}D^{\bbeta}_{\bo}F_{\ast}\|_{L^2(\calZ\times\calO;\calH_{X})}$ is bounded for any multi-indices  $|\balpha|\leq m_z$ and $|\bbeta|\leq m_o$. Hence, $F_{\ast}\in MW^{m_z, m_o}_2(\calZ\times \calO;\calH_{X})$. 

\vspace{1mm}
Now we are ready to state our choice of kernels $k_Z, k_{O,1}$ in Stage I.
\begin{rem}[Choice of Stage I kernels $k_Z, k_{O,1}$]
\label{rem:kernel_stage_1_choice}
\steve{We let $k_{Z}$ and $k_{O, 1}$ be any positive definite kernels} such that their associated RKHSs \steve{$\calH_{Z}, \calH_{O,1}$} are \steve{respectively} norm equivalent to real-valued Sobolev spaces $W_2^{t_z}(\calZ)$ and $W_2^{t_o}(\calO)$, where $t_z > \frac{d_z}{2},t_o > \frac{d_o}{2}$. Following \cite{chen2025nested}, we say that $k_{Z}, k_{O, 1}$ are Sobolev reproducing kernels of smoothness $t_z, t_o$. An important example of Sobolev reproducing kernel is the Matérn-$\nu$ kernel whose RKHS is norm equivalent to a Sobolev space $W^{t}_2$ of smoothness $t = \nu+d/2$~\citep[Corollary 10.48]{wendland2004scattered}. Since all Sobolev reproducing kernels are bounded and measurable, $k_{Z}, k_{O,1}$ satisfy \Cref{ass: kernel_technical}. 

With the above choice of $k_Z$ and $k_{O,1}$, it follows from \Cref{lem:equivalence_G} that the vector-valued RKHS $\calG$ associated with the operator-valued kernel in Eq.~\eqref{eq:op_val_kernel} is norm equivalent to the mixed-smoothness Sobolev space $MW^{t_z,t_o}(\calZ\times\calO;\calH_X)$.
Since $F_\ast \in MW^{m_z,m_o}_2(\calZ\times\calO;\calH_X)$ as established above, $F_\ast$ lies in the appropriate power space of $\calG$ for suitably chosen $(t_z,t_o)$.
Consequently, \citet{lietal2022optimal,meunier2024optimalratesvectorvaluedspectral} show that estimating the CME via Eq.~\eqref{eq:hat_F_xi} achieves the minimax-optimal rate in both the $L^2(\calZ\times\calO;\calH_X)$ and $\calG$ norms, provided that the regularization parameter $\xi$ is selected adaptively with respect to the sample size $\tilde{n}$. 
\end{rem}

\subsection{Anisotropic Besov spaces}
\label{sec:bg_anb}
In this section, we introduce the definition of anisotropic Besov spaces \citep{leisner_nonlinear_wavelet_approximation_2003}, which is used to characterize the smoothness of $f_\ast$.
\begin{defi}[Modulus of smoothness]
Let $\calX = \prod_{i=1}^d \calX^{(i)} \subseteq \R^d$ be a subset with non-empty interior, $\nu$ be a product measure on $\calX$ with $\nu=\otimes_{i=1}^d \nu_i$, and $f: \calX \rightarrow \R$ be a function in $L^p(\nu)$ for some $p \in (0, \infty]$. 
The $r$-th modulus of smoothness of $f$ is defined by
\begin{talign}\label{eq:module_smoothness}
    \omega_{r, p}\left(f, \bt, \calX \right) = \sup _{0<\left|h_i\right| \leq t_i} \left\|\triangle_{\bh}^{r}f \right\|_{L^p\left(\nu\right)},
\end{talign}
where the $r$-th difference of $f$ in the direction $\bh$ at point $\bx$, denoted as $\triangle_{\bh}^{r} f \left( \bx \right)$, is defined through recursion: $\Delta_{\bh}^0 f(\bx) :=f(\bx)$ and $\Delta_{\bh}^r f(\bx) :=\Delta_{\bh}^{r-1} f(\bx+\bh)-\Delta_{\bh}^{r-1} f(\bx)$ if $\bx, \bx+\bh, \ldots, \bx+r \bh \in \calX$ and $0$ otherwise. 
\end{defi}
\begin{defi}[Anisotropic Besov space $B_{p, q}^{\bs}(\nu)$]
\label{defi:abs}
For $p \in[1, \infty), q \in[1, \infty]$ and $\bs=\left(s_1, \ldots, s_d\right) \in \R_{+}^d$, the anisotropic Besov space $B_{p, q}^{\bs}(\nu)$ is defined by
\begin{talign}\label{eq:ani_besov_space}
    B_{p, q}^{\bs}(\nu):=\left\{f \in L^p(\nu): \|f\|_{B_{p, q}^{\bs}(\nu)} := \|f\|_{L^p(\nu)} + |f|_{B_{p, q}^{\bs}\left(\nu\right)}<\infty\right\} ,
\end{talign}
where the Besov semi-norm $|f|_{B_{p, q}^{\bs}\left(\nu\right)}$ is defined as, 
\begin{talign}\label{eq:besov_semi_norm}
|f|_{B_{p, q}^{\bs}\left(\nu\right)} := \left[ \int_0^1\left[t^{-1} \omega_{r, p} \left(f, t^{1 / s_1}, \ldots, t^{1 / s_d}, \calX \right)\right]^q \;\frac{\mathrm{d}t}{t}\right]^{1 / q},
\end{talign}
for $r = \max\{ \lfloor s_1 \rfloor, \ldots, \lfloor s_d \rfloor \} + 1$. When $q = \infty$, we replace the integral by a supremum in Eq. \eqref{eq:besov_semi_norm}. When $\nu$ is the Lebesgue measure over $\calX$, we use the notation $B_{p, q}^{\bs}(\calX) := B_{p, q}^{\bs}(\nu)$.
\end{defi}
If $s_1 = \dots = s_d = s$, then the anisotropic Besov space recovers the standard isotropic Besov space \citep{devore1988interpolation, devore1993besov}. 
Since $f_\ast$ takes as input both the treatment $X$ and observed covariate $O$, it naturally exhibits different smoothness with respect to $X$ and $O$. 
Hence, as opposed to an isotropic Besov space which imposes uniform smoothness along all directions, an anisotropic Besov space captures such heterogeneous regularity. To simplify the exposition, we focus on anisotropic smoothness across $X$ and $O$, while assuming isotropic smoothness within $X$ and within $O$. 
In other words, we only consider $\bs=(s_x,\dots,s_x, s_o,\dots,s_o) \in \R_{\geq 0}^{d_x+d_0}$ and denote $B^{\bs}_{2,\infty}(\calX\times\calO)$ as $B^{s_x, s_o}_{2,\infty}(\calX\times\calO)$. Let $\mathbb{U}(B^{s_x,s_o}_{2,\infty}(\mathbb{R}^{d_x+d_o}))$ denote the unit ball of $B^{s_x,s_o}_{2,\infty}(\mathbb{R}^{d_x+d_o})$ with respect to the Besov norm. Let $C^{0}(\mathbb{R}^{d_x+d_o})$ denote the space of continuous functions $\mathbb{R}^{d_x+d_o}\to \mathbb{R}$. 
\begin{ass}
\label{ass:f_ast}
$f_{\ast}\in \mathfrak{S}$ where we define $\mathfrak{S} :=  \mathbb{U}(B^{s_x,s_o}_{2,\infty}(\mathbb{R}^{d_x+d_o}))\cap L^{\infty}(\mathbb{R}^{d_x+d_o}) \cap L^1(\mathbb{R}^{d_x+d_o})\cap C^{0}(\mathbb{R}^{d_x+d_o})$.  
\end{ass}
In particular, under \Cref{ass:f_ast}, $\|f_{\ast}\|_{L^2(\mathbb{R}^{d_x+d_o})} \leq \|f_{\ast}\|_{B^{s_x,s_o}_{2,\infty}(\mathbb{R}^{d_x+d_o})}\leq 1$. Moreover, by continuity, for all $\bo\in \mathbb{R}^{d_o}$ the slice function $f_{\ast}(\cdot, \bo)$ is well-defined. We assume $p=2$ in accordance with our $L^2(P_{XO})$-norm learning risk in Eq.~\eqref{eq:L2_risk}. 
We assume $q=\infty$ because $B^{s_x, s_o}_{2,\infty}(\calX\times\calO)$ is the largest anisotropic Besov space among all $B^{s_x, s_o}_{2,q}(\calX\times\calO)$ spaces \citep{triebel2011entropy}. 
To the best of our knowledge, we are the first to consider anisotropic smoothness in the NPIV literature.

We are now ready to state our choice of kernels $k_{X}, k_{O,2}$ in Stage II.
\begin{rem}[Choice of Stage II kernels $k_{X}, k_{O,2}$]\label{rem:kernel_stage_2_choice}
We choose $k_X$ and $k_{O,2}$ to be Gaussian kernels $k_{\gamma_x}$ and $k_{\gamma_o}$ with bandwidths $\gamma_x\in (0,1), \gamma_o\in (0,1)$.
Denote $\calH_{\gamma_x}$ and $\calH_{\gamma_o}$ as the associated Gaussian RKHSs; $\phi_{\gamma_x}$ and $\phi_{\gamma_o}$ as the associated feature maps. 
The tensor product RKHS $\calH_{\gamma_x,\gamma_o} := \calH_{\gamma_x} \otimes \calH_{\gamma_o}$ is the unique RKHS associated with the product kernel
\begin{talign}\label{eq:anis_gaussian_kernel}
    k_{\gamma_x}(\bx,\bx') \cdot k_{\gamma_o}(\bo,\bo') = \exp\left(-\sum_{j=1}^{d_x}\frac{(x_j - x_j')^2}{\gamma_x^2} - \sum_{j=1}^{d_o}\frac{(o_j - o_j')^2}{\gamma_o^2}\right).
\end{talign}
This kernel is called an anisotropic Gaussian kernel and its associated RKHS $\calH_{\gamma_x,\gamma_o}$ is the corresponding anisotropic Gaussian RKHS. 
\citet{hang2021optimal} proves that kernel ridge regression with anisotropic Gaussian kernel in the form of Eq.~\eqref{eq:anis_gaussian_kernel} is minimax optimal for anisotropic Besov space target functions, provided that both the regularization parameter and the kernel lengthscale $\gamma_x, \gamma_o$ are \emph{adaptive} to the number of samples $n$. 
Such adaptivity will also be evident in our setting (see \Cref{thm:upper_rate}).
\citet{singh2019kernel} adopt the median heuristic for selecting the kernel lengthscale in KIV algorithm, a widely used practical choice. Unlike ours, however, the theoretical relationship between their heuristic and the underlying smoothness of the target function remains unclear.
\end{rem}

\begin{rem}[Why use different kernels in Stage I and Stage II]\label{rem:choice_kernel}
We briefly explain the rationale for selecting different types of kernels for Stage I and Stage II. 
For Stage I, the regression target is the conditional mean embedding $F_\ast$. By \Cref{assn: technical}, we have shown that $F_{\ast}$ belongs to the mixed Sobolev space $MW^{m_z,m_o}(\calZ\times\calO;\calH)$, which can be learned at the minimax optimal rate using a tensor-product Sobolev RKHS (see \Cref{prop:cme_rate}). 
On the other hand, the Stage II regression target $f_\ast$ belongs to an anisotropic Besov space (\Cref{ass:f_ast}).
\citet{hang2021optimal} has proved that learning an anisotropic function in a nonparametric regression setting is minimax optimal via an anisotropic Gaussian RKHS. 
We have followed their approach with additional refinements to our setting, that reveals the interplay between the effect of $T$ and the anisotropic smoothness of $f_{\ast}$. See \Cref{sec:theory} for the details. 
\end{rem}

\section{Related work}\label{sec:related_work}
Early NPIV literature focuses on series estimators \citep{newey_2003, blundell2007semi, chen2007large, horowitz2011applied} and methods based on kernel density estimation~\citep{HallHorowitz2005, darolles2011nonparametric, florens2011identification}.
These works established minimax optimal convergence rates under various ill-posedness and smoothness conditions \citep{HallHorowitz2005, chen2011rate, Chen_2018}. 
Recent NPIV algorithms leverage modern machine learning techniques, including RKHSs \citep{singh2019kernel,zhang2023instrumental, meunier2024nonparametricinstrumentalregressionkernel} and neural networks \citep{hartford2017deep, bennett2019deep, xu2020learning, petrulionyte2024functional, kim2025optimality,sun2025spectral,meunier2025demystifying}. 
These modern methods mainly fall into two categories: two-stage estimation and min-max optimization. Min-max approaches \citep{bennett2019deep, dikkala2020minimax, liao2020provably, bennett2023minimax, zhang2023instrumental, wang2022spectral} formulate NPIV as a saddle point optimization problem, which can be unstable
and may fail to converge, especially when deep neural networks are used as function classes. 
In contrast, two-stage methods—such as the KIV-O algorithm studied in this manuscript (\Cref{sec:algorithm})—first estimate the conditional expectation operator $T$, and then perform a second-stage regression using the learned operator \citep{hartford2017deep, singh2019kernel, xu2020learning, li2024regularized, kim2025optimality}.
One recent paper~\citep{kankanala2025generalized} employs a sieve estimator in the first stage and a Gaussian process (a Bayesian analogue of an RKHS) estimator in the second stage. 

In the introduction, we outlined two challenges for the theoretical analysis of NPIV-O. The first challenge concerns the fact that $T$ is an identity operator restricted to the infinite dimensional function space $\calF_1$ (defined in Eq. \eqref{eq:calF_1_espace}). This has the following consequences. The $L^2$-\emph{stability condition} imposed in the NPIV literature (cf. \citet[Assumption 6]{blundell2007semi}, \citet[Assumption 5.2(ii)]{chen2012estimation} and \citet[Assumption 4.2]{Chen_2018}) fails to hold except for the degenerate case where the sieve measure of ill-posedness is $1$ (i.e. $Z=X$, see also \Cref{rem:partial_smoothing}). The \emph{link condition} imposed in the optimal rate literature for NPIV (cf. \citet{HallHorowitz2005}, \citet[Assumption 2.2]{chen2011rate}, \citet[Condition LB]{Chen_2018}) implies that $\|T f\|_{L^2 (P_{Z O})} \leq \|B^r f\|_{L^2(P_{X O})}$ for some known compact operator $B$, where a larger $r$ corresponds to a more ill-posed model. However, for any $f \in \mathcal{F}_1$ defined in Eq.~\eqref{eq:calF_1_espace}, we have $
\|T f\|_{L^2(P_{Z O})}=\|f\|_{L^2(P_{X O})}$ so the link condition only holds with $r = 0$.

The second challenge lies in deriving a unified analysis where $f_{\ast}$ lies in an anisotropic Besov space. 
Several prior works on NPIV-O 
and nonparametric proxy methods 
(see \Cref{sec:pcl}) \citet{singh2019kernel, mastouri2021proximal, singh2023kernel, bozkurt2025densityratiobasedproxycausal,HallHorowitz2005,bozkurt2025density} choose instead to assume the generalized source condition $f_{\ast}\in \mathcal{R}((T^{\ast}T)^{\beta})$ for some $\beta\geq 0$. However, as we have critiqued in the introduction, such an approach does not shed light on the separate contribution of the intrinsic smoothness of $f_{\ast}$ and the smoothing effect of $T$. It also suffers from a lack of interpretability since $T$ is a priori unknown. We also mention that \citet{HallHorowitz2005} derived optimal rates for a kernel density based estimator for NPIV-O, where the smoothness of $f_{\ast}(\cdot, \bo)$ is characterized via a generalized source condition with respect to the partial conditional expectation operator $T_{\bo}$ \citep[Section 4.3]{HallHorowitz2005} for $\bo\in\calO$. Such an assumption suffers from similar drawbacks to those outlined above, and it is moreover unclear how $f_{\ast}$'s smoothness in the direction of $O$ impacts learning rates.

To the best of our knowledge, our paper is the first theoretical analysis that simultaneously addresses the anisotropic smoothness of both $f_\ast$ and the operator $T$ in the $X$ and $O$ directions. We address both challenges by (i) introducing a novel Fourier-based measure of partial smoothing of $T$, and (ii) employing Gaussian kernel lengthscales that adapt to the anisotropic smoothness of $f_\ast$.

\subsection{Proximal causal learning (PCL)}
\label{sec:pcl}
The two challenges mentioned above also arise in a recent popular framework called proximal causal learning (PCL), which has gained considerable interest as a framework to identify and estimate causal effects from observational data, where the analyst only has access to imperfect proxies of the true underlying confounding mechanism without being able to observe the confounders directly~\citep{miao2018identifying, tchetgen2024introduction}. Our contributions to NPIV-O can directly be extended to this context. In the context of PCL, the (heterogeneous) dose response curve $f^{\ast}$ can be identified either via the \emph{outcome bridge function}~\citep{miao2018identifying, deaner2018proxy, mastouri2021proximal, xu2021deep, kallus2021causal, singh2023kernel}, which generalizes outcome regression, or via the \emph{treatment bridge function}~\citep{cui2024semiparametric, kallus2021causal, bozkurt2025densityratiobasedproxycausal, bozkurt2025density}, which generalizes inverse propensity weighting estimators. Analogous to the modern NPIV literature, the nonparametric estimators for bridge functions fall under the 2SLS approach \citep{deaner2018proxy, mastouri2021proximal, singh2023kernel, bozkurt2025densityratiobasedproxycausal, bozkurt2025density}, min-max optimization approach with either RKHS or deep neural networks as function classes~\citep{mastouri2021proximal, ghassami2022minimax, kallus2021causal}, or via spectral methods~\citep{sun2025spectral}. Notably, both outcome bridge function and treatment bridge function are identified via conditional moment constraints of the same form as NPIV-O (see Eq.~\eqref{eq:inverse_problem}), thus our theory in NPIV-O could be extended to 
the estimation of bridge functions in PCL. 

\subsection{Kernel ridge regression (KRR)} Our theoretical analysis of KIV-O builds on and extends existing theory in kernel ridge regression (KRR). The literature on KRR primarily follows two methodological lines: one based on \emph{empirical process}~\citep{steinwart2008support, steinwart2009optimal, eberts2013optimal, hang2021optimal, hamm2021adaptive} and one based on \emph{integral operator} techniques~\citep{de2005learning, smale2005shannon, smale2007learning, blanchard2018optimal, lin2020optimal, fischer2020sobolev, zhang2023optimality, zhang2024optimality}. 
In the context of learning an anisotropic Besov space function $f_\ast$ using KRR, the only available convergence rate is provided by~\citet{hang2021optimal}, which builds on an oracle inequality derived using empirical process techniques and hence necessitates a clipping operation on the KRR estimator. 
In our work, we are the first to remove the clipping step by leveraging integral operator techniques to directly control the finite sample estimation error.
Moreover, our analysis leverages state of the art results on optimal rates for vector-valued kernel ridge regression~\citep{JMLR:v25:23-1663,meunier2024optimalratesvectorvaluedspectral} to bound the statistical error arising from estimating a conditional mean embedding. 
\section{Theory}\label{sec:theory}
This section presents our main theoretical results on the non-asymptotic convergence rate of the learning risk defined in Eq.~\eqref{eq:L2_risk}.
\Cref{sec:ill_posedness} presents our assumptions on the conditional expectation operator $T$.
\Cref{sec:upper} presents an upper bound. 
\Cref{sec:lower_main} presents a minimax lower bound.

\subsection{Partial smoothing effect of $T$}\label{sec:ill_posedness}
The challenge of estimating $f_\ast$ via the inverse problem
$\E[Y \mid Z, O]=(T f_*)(Z, O)$ arises from its ill-posed nature: a small error in estimating $\E[Y \mid Z, O]$ may lead to a large error in estimating $f_\ast$. 
To address this challenge, we make the following assumptions. 
The first assumption enables unique identification of $f_\ast$.
\begin{ass}[$L^{\infty}$-completeness]\label{ass:T_injective}
The conditional distribution $P_{X\mid Z,O}$ satisfies that, for every bounded measurable function $f:\calX\times\calO\to\R$, if $\E[f(X,O)\mid Z, O] = 0$ holds $P_{ZO}$-almost surely, then $f(X,O)=0$ holds $P_{XO}$-almost surely.
\end{ass}
\Cref{ass:T_injective}, known as bounded completeness or $L^{\infty}$-completeness~\citep{d2011completeness, blundell2007semi}, is weaker than the $L^2$-completeness condition, which assumes that $T: L^2(P_{XO})\to L^2(P_{ZO})$ is injective. The latter is standard in the NPIV literature~\citep{newey_2003, HallHorowitz2005, carrasco2007linear,darolles2011nonparametric,  andrews2017examples, chen2014local, Chen_2018, chen2024adaptiveestimationuniformconfidence}.
Although we do not assume that the outcome $Y$ is bounded (see \Cref{ass:subgaussian}), the target heterogenous response curve $f_\ast$ is bounded (assumed in \Cref{ass:f_ast}), hence it suffices to impose the weaker $L^\infty$-completeness identification.
We refer the reader to \citet{andrews2017examples, d2011completeness} for sufficient conditions on the conditional distribution $P_{X\mid Z,O}$ such that bounded completeness holds.

Beyond identification, to establish a non-asymptotic rate of convergence for $f_\ast$, existing work on NPIV imposes additional assumptions on the smoothing properties of $T$ which are not compatible with the partial identity structure of $T$ imposed by the common variable $O$~\citep{blundell2007semi,Chen_2018,chen2011rate}. 
In contrast, as highlighted in \Cref{sec:intro}, with the existence of observed covariates $O$, our $T$ exhibits characteristics of a compact operator in the $X$ direction and acts as an identity operator in the $O$ direction. We thus propose a novel framework to characterize the \emph{partial} smoothing properties of $T$. 

We describe this partial smoothing in terms of the Fourier representation of a function $f$ on which $T$ acts. For $f\in L^1(\mathbb{R}^{d_x})$, its Fourier transform is defined as a Lebesgue integral \citep[9.1]{rudin1987real}: 
$\hat{f}(\cdot) = \int_{\R^{d_x}}f(\bx)\exp(-i\langle \bx,\cdot\rangle)\;\mathrm{d}\bx$.
One can extend the Fourier transform to $L^2(\R^{d_x})$ by defining it as a \emph{unitary} operator on $L^2(\R^{d_x})$~\citep[Theorem 9.13]{rudin1987real}. 
We use $\calF$ to denote this operator, and let $\calF^{-1}$ denote its inverse. 
In particular, $\calF^{-1}[\one[A]]$ is well-defined, where $\one[A]$ denotes the indicator function of a compact set $A\subset\R^{d_x}$. 
For any scalar $\gamma\in (0,1)$, we define the following two sets of functions:
\begin{talign}\label{eq:HF_LF}
\begin{aligned}
\mathrm{LF}(\gamma) := \{ f : \mathbb{R}^{d_x+d_o} \to \mathbb{R} \ \Big| \ 
& \forall \bo \in \mathcal{O}, \; [f(\cdot, \bo)] \in L^2(\mathbb{R}^{d_x}), \\
& \;\mathrm{supp}\big(\mathcal{F}[f(\cdot, \bo)]\big) 
\subseteq \{ \bomega_x \in \mathbb{R}^{d_x} : \|\bomega_x\|_2 \leq \gamma^{-1} \} \}.\\
\mathrm{HF}(\gamma) := \{ f : \mathbb{R}^{d_x+d_o} \to \mathbb{R} \ \Big| \ 
& \forall \bo \in \mathcal{O}, \; [f(\cdot, \bo)] \in L^2(\mathbb{R}^{d_x}), \\
& \;\mathrm{supp}\big(\mathcal{F}[f(\cdot, \bo)]\big) 
\subseteq \{ \bomega_x \in \mathbb{R}^{d_x} : \|\bomega_x\|_2 \geq \gamma^{-1} \} \}.
\end{aligned}
\end{talign}
where $\mathrm{supp}$ for an element of $L^2(\mathbb{R}^{d_x})$ is defined in \Cref{defi:fn_dist} and \Cref{defi:supp_dist} in the Supplementary. The set $\mathrm{LF}(\gamma)$ (respectively, $\mathrm{HF}(\gamma)$) consists of functions such that for every $\bo \in \mathcal{O}$, the slice function $f(\cdot, \bo)$ belongs to $L^1(\mathbb{R}^{d_x})$ and its Fourier transform is supported inside (respectively, outside) the centered ball of radius $\gamma^{-1}$ in the Fourier domain. See \Cref{fig:fourier_circle} for an illustration.

\begin{ass}[Fourier measure of partial ill-posedness of $T$]
\label{ass:T_frequency_ill_posedness}
There exists a constant $c_0>0$ and a parameter $\eta_0\in [0,\infty)$ depending only on $T$, such that for all $\gamma \in (0,1)$ and all functions $f \in \mathrm{LF}(\gamma) \cap L^\infty(P_{XO})$, the following inequality is satisfied:
    \begin{align*}
    \|Tf\|_{L^2(P_{ZO})} \geq c_0\gamma^{d_x\eta_0} \|f\|_{L^2(P_{XO})}.
    \end{align*}
In particular, $c_0$ does not depend on $\gamma$. 
\end{ass}
\begin{ass}[Fourier measure of partial contractivity of $T$]
\label{ass:T_contractivity}
There exists a constant $c_1>0$ and a parameter $\eta_1\in [0,\infty)$ depending only on $T$, such that for all $\gamma \in (0,1)$ and all functions $f \in \mathrm{HF}(\gamma) \cap L^\infty(P_{XO})$, the following inequality is satisfied:
    \begin{align*}
    \|Tf\|_{L^2(P_{ZO})} \leq c_1\gamma^{d_x\eta_1} \|f\|_{L^2(P_{XO})} .
    \end{align*}
In particular, $c_1$ does not depend on $\gamma$. 
\end{ass}

Assumptions~\ref{ass:T_frequency_ill_posedness} and \ref{ass:T_contractivity} are assumptions about the conditional distribution $P(X\mid Z,O)$. In \Cref{sec:appendix_examples} in the Supplement, for any positive integer $k\geq 1$, we construct a distribution $P_k(X,Z,O)$ such that, for the conditional expectation operator $T$ defined by $P_k(X\mid Z,O)$, a weaker version of Assumptions~ \ref{ass:T_contractivity} and \ref{ass:T_frequency_ill_posedness} is satisfied with $\eta_0 = \eta_1 = k$ (where we restrict to considering functions $f(\bx,\bo) = g(\bx)h(\bo)$, and impose a further technical restriction for \Cref{ass:T_frequency_ill_posedness}).
If Assumption \ref{ass:T_frequency_ill_posedness} and \ref{ass:T_contractivity} hold simultaneously, and $P_{XO}$ is absolutely continuous with respect to the Lebesgue measure on $\calX\times\calO$, then $\eta_0\geq \eta_1$ and $c_0\leq c_1$ (\Cref{lem:eta_0_eta_1_appendix} in \Cref{sec:appendix_examples}).
In the remainder of the manuscript, we assume that the distribution $P_{ZXOY}$ is fixed and we set the constants $c_0 = c_1 = 1$ for notational simplicity. \Cref{ass:T_frequency_ill_posedness} and \Cref{ass:T_contractivity} characterize the \emph{mildly ill-posed} regime in the NPIV literature.

\begin{figure}[t]
\centering
\vspace{-20pt}
    \includegraphics[width=0.4\linewidth]{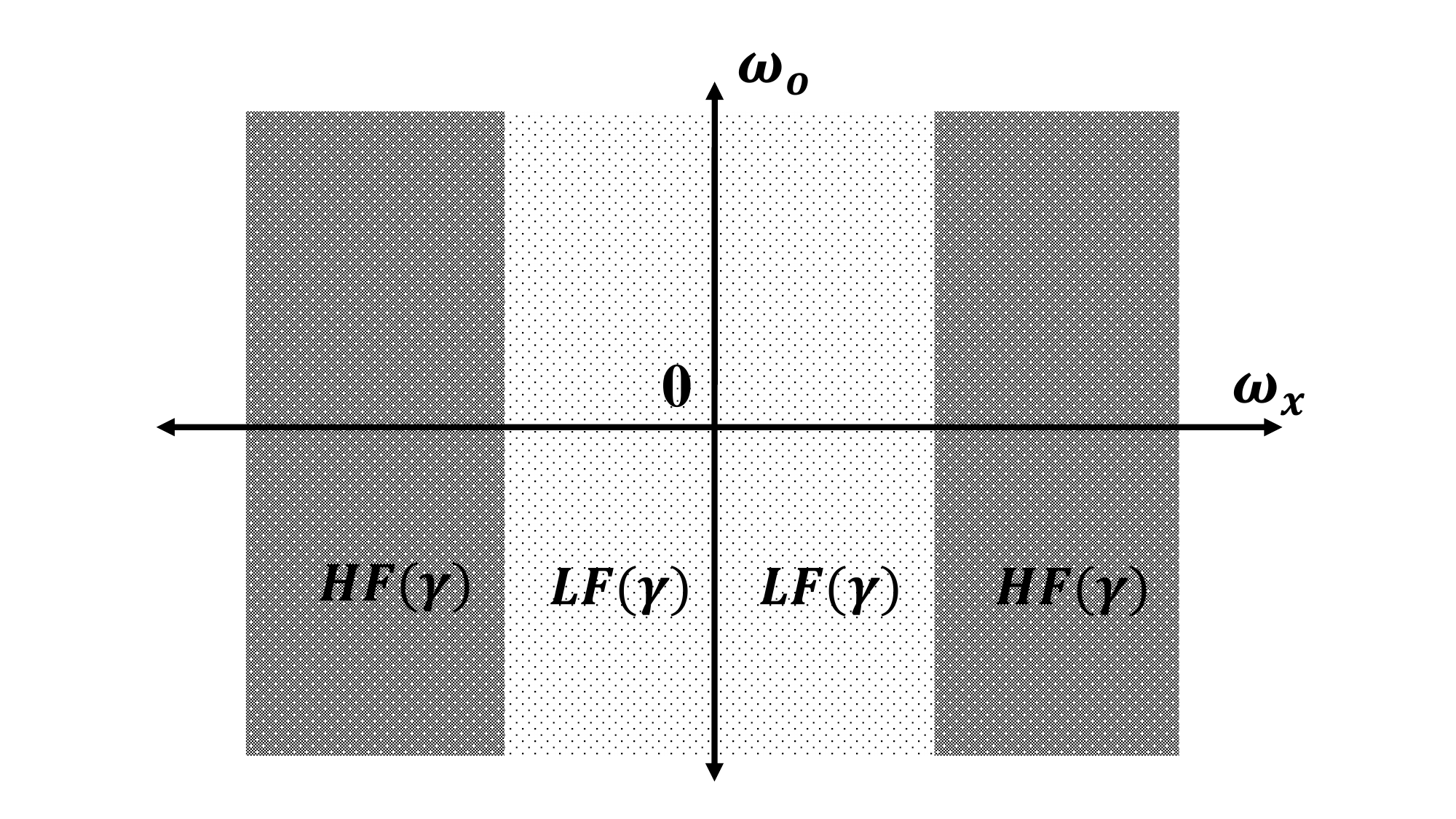}
    \vspace{-10pt}
    \caption{Illustration for LF($\gamma$) and HF($\gamma$).}
    \label{fig:fourier_circle}
    \vspace{-20pt}
\end{figure} 
\Cref{ass:T_contractivity} quantifies the \emph{partial smoothing} effect of $T$ on the high-frequency components of a function $f$ with respect to $X$; while  \Cref{ass:T_frequency_ill_posedness} captures the \emph{partial anti-smoothing} behaviour of $T$ on the low-frequency components of a function $f$ with respect to $X$. When the treatment $X$ is exogenous and we take $X=Z$, then $T$ is an identity mapping so $\eta_0=\eta_1=0$ and NPIV-O reduces to non-parametric regression from $(X,O)$ to $Y$.

To motivate the partial smoothing effect of $T$, notice that the bounded self-adjoint operator $T^{\ast}T: L^2(P_{XO}) \to L^2(P_{XO})$ acts on $f\in L^2(P_{XO})$ as follows:
\begin{align}\label{eq:TastT}
((T^{\ast}T) f)(\bx',\bo) = \int_{\calX}f(\bx,\bo)L(\bx,\bx';\bo)\;\mathrm{d}\bx, 
\end{align}
where $L(\bx,\bx';\bo):= \int_{\calZ}p(\bx\mid \bz,\bo)p(\bz\mid\bx',\bo)\;\mathrm{d}\bz$. Consider two subsets of $L^2(P_{XO})$. 
\begin{align*}
    \mathfrak{G}_{X} &= \left\{ g \in L^2(P_{XO}) \;\middle|\; \exists \Tilde{g} \in L^2(P_X) \text{ such that } \forall \bx\in\calX, \bo\in\calO, \; g(\bx, \bo) = \Tilde{g}(\bx) \right\}, \\
    \mathfrak{G}_{O} &= \left\{ g \in L^2(P_{XO}) \;\middle|\; \exists \Tilde{g} \in L^2(P_O) \text{ such that } \forall \bx\in\calX, \bo\in\calO, \; g(\bx, \bo) = \Tilde{g}(\bo) \right\} .
\end{align*}
Under mild conditions on the conditional distribution $p(\bx\mid \bz, \bo)$~\citep[Assumption A.1]{darolles2011nonparametric}, $T^{\ast}T|_{\mathfrak{G}_{X}}$ 
($T^{\ast}T$ restricted to $\mathfrak{G}_{X}$) is compact and its smoothing effect can be quantified through its eigenvalue decay; while in contrast, $T^{\ast}T|_{\mathfrak{G}_{O}}$ is an identity operator: for $g\in \mathfrak{G}_{O}$
\begin{align}\label{eq:TTast_mapping}
    ((T^{\ast}T) g)(\bx, \bo) = \int_{\calX}g(\bx,\bo)L(\bx,\bx';\bo)\;\mathrm{d}\bx = \tilde{g}(\bo) \int_{\calX}L(\bx,\bx';\bo)\;\mathrm{d}\bx = g(\bx, \bo).
\end{align}
Therefore, when we incorporate observed covariates $O$, the conditional expectation operator $T$ acts as a compact operator in the $X$ direction and as an identity operator in the $O$ direction.  As a result, we propose to characterize its \emph{partial} smoothing properties through measure of \emph{partial} contractivity (\Cref{ass:T_contractivity}) and measure of \emph{partial} ill-posedness (\Cref{ass:T_frequency_ill_posedness}).

In the literature on NPIV, conditions similar to \Cref{ass:T_contractivity} and \Cref{ass:T_frequency_ill_posedness} have been employed to quantify the smoothing effect of $T$. 
For instance, \citet{chen2011rate, meunier2024nonparametricinstrumentalregressionkernel} use the so-called link condition and reverse link condition which relate the smoothness of the hypothesis space with that of $\calR(T^\ast T)$; \citet{Chen_2018, blundell2007semi, chen2024adaptiveestimationuniformconfidence} employ the sieve measure of ill-posedness and stability conditions, which quantify the smoothing effect of $T$ on functions in the hypothesis space spanned by a sieve basis. 
Our \Cref{ass:T_contractivity} and \Cref{ass:T_frequency_ill_posedness} share strong resemblance with the latter. To see why, recall the definition of Gaussian RKHS $\calH_{\gamma_x}$ with length-scale $\gamma_x$ through Fourier transforms~\citep[Theorem 10.12]{wendland2004scattered}:
\begin{talign*}
    \calH_{\gamma_x} = \left\{f : \R^{d_x} \to \R \mid \int_{\R^{d_x}} \left|\calF[f](\bomega_x)\right|^2\; \exp\left(\frac{1}{4} \gamma_x^{2} \|\bomega_x\|_2^2\right) \; \mathrm{d} \bomega_x < \infty \right\} ,
\end{talign*}
where we can see that for $f\in\steve{\calH_{\gamma_x}}$, the bulk of its Fourier spectrum would belong to the ball $\{\bomega_x: \|\bomega_x\|_{2} \leq \gamma_x^{-1}\}$ with the remaining spectrum decaying exponentially as $\|\bomega_x\|_{2} \to \infty$. 
We formulate our \Cref{ass:T_contractivity} and \Cref{ass:T_frequency_ill_posedness} with Fourier transforms rather than Gaussian RKHSs for potential applications beyond Gaussian RKHSs. 
A closely related work is \citet{kankanala2025generalized}, which employs a \emph{local} sieve measure of ill-posedness for functions in the RKHS. 
Unfortunately, these conditions, including ours, are hard to verify in practice as $T$ is unknown.

\begin{rem}[Connection with sieve measure of ill-posedness]
\label{rem:connection_smoip}
In this remark, we connect \Cref{ass:T_frequency_ill_posedness} to the sieve measure of ill-posedness condition employed in the analysis of sieve 2SLS~\citep{blundell2007semi, Chen_2018, chen2024adaptiveestimationuniformconfidence,kim2025optimality}. For this remark, we omit observed covariates $O$, and take $T : L^2(P_{X})\to L^2(P_{Z})$. The sieve measure of ill-posedness is defined as
\begin{align}\label{eq:sieve_moip}
    \tau_J^{\mathrm{sieve}} := \sup_{0\neq f\in \Psi_J} \frac{\|f\|_{L^2(P_{X})}}{\|Tf\|_{L^2(P_{Z})}},
\end{align}
where $\Psi_J$ denotes the $J$th sieve space for the treatment variable \citep[Section 3]{Chen_2018}. For this remark, we let $\Psi_J$ be the linear span of cardinal B-splines of order $\mathfrak{m}$ up to resolution $\mathfrak{K}$ with $J = (2^{\mathfrak{K}} + \mathfrak{m} + 1)^{d_x} \asymp 2^{\mathfrak{K}d_x}$ \citep[Section 5]{devore1993constructive}. We note that the parameter $\gamma^{-1}$, where $\gamma$ occurs in the definition of the function space $\mathrm{LF}(\gamma)$ in Eq. \eqref{eq:HF_LF}, plays a role analogous to the resolution level $\mathfrak{K}$ for cardinal B-splines. Indeed, $\mathrm{LF}(\gamma)$ for smaller values of $\gamma$ (or $\Psi_J$ for larger values of $J$) correspond to a class of less smooth functions. The above observation and the form of Eq. \eqref{eq:sieve_moip} thus suggests an analogous definition:
\begin{align}\label{eq:kernel_moip}
    \tau_{\gamma}^{\mathrm{Fourier}} := \sup_{0\neq f\in \mathrm{LF}(\gamma)\cap L^{\infty}(P_{XO})} \frac{\|f\|_{L^2(P_{X})}}{\|Tf\|_{L^2(P_{Z})}}. 
\end{align}
We can thus restate \Cref{ass:T_frequency_ill_posedness} as: there exists a constant $c_0>0$ and a parameter $\eta_0\in [0,\infty)$ depending only on $T$, such that for all $\gamma\in (0,1)$, the following inequality is satisfied: $\tau_{\gamma}^{\mathrm{Fourier}}\leq c_0^{-1}\gamma^{-d_x\eta_0}$. 
In the sieve NPIV literature \citep{blundell2007semi,chen2024adaptiveestimationuniformconfidence, Chen_2018}, an NPIV model is said to be \emph{mildly ill-posed} if $\tau_{J}^{\mathrm{sieve}} = O(J^{\eta}) = O(2^{\eta\mathfrak{K}d_x})$. By the analogy between $\tau_{\gamma}^{\mathrm{Fourier}}$ and $\tau_{J}^{\mathrm{sieve}}$, we see that our \Cref{ass:T_frequency_ill_posedness} characterizes a \emph{mildly ill-posed} regime.
\end{rem}

\begin{rem}[Existing treatment of partial smoothing of $T$ in NPIV-O]\label{rem:partial_smoothing}
Existing work that concerns NPIV-O in the literature circumvents the partial identity structure of $T$ either by imposing additional structural assumptions~\citep{blundell2007semi, syrgkanis2019machine} or by stratifying the problem on $O$~\citep{horowitz2011applied} thereby reducing NPIV-O to NPIV, which is statistically inefficient and scales poorly with the dimension of $O$. 
Instead of $T$, \citet[Section 3.3]{Chen_2018} consider the compactness and the smoothing effect of the \emph{partial} conditional expectation operator $T_{\bo}: L^2(P_{X\mid O = \bo})\to L^2(P_{Z\mid O=\bo})$ for each $\bo\in \calO $. \citet[Section 3.3]{Chen_2018} proves that, if $\Psi_J$ (defined in the above remark) equals the span of the first $J$ eigenvectors of $T_{\bo}^\ast T_{\bo}$ for any $\bo\in\calO$, then
\begin{align}
\label{eq:chen_smoip_observed}
    \tau_J^{\mathrm{sieve}} \asymp \mathbb{E}_{\bo\sim O}[\mu_{J, \bo}^2]^{-\frac{1}{2}},
\end{align}
where $\mu_{J,\bo}$ is the $J$th singular value of $T_{\bo}$ arranged in non-increasing order. \citet{Chen_2018} then claims that the convergence rates derived for the standard NPIV can be extended to NPIV-O. However, we identify an essential oversight: \citet[Assumption 4 (ii)]{Chen_2018}, which is required for their $L^2$-norm upper bound, cannot hold in the NPIV-O setting. This assumption states that for $\Psi_J$, $f_{\ast}$ satisfies the following inequality:
\begin{align*}
    \tau_J^{\mathrm{sieve}}\|T(f_{\ast} - \Pi_J f_{\ast})\|_{L^2(P_{ZO})}\leq \|f_{\ast} - \Pi_{J}f_{\ast}\|_{L^2(P_{XO})},
\end{align*}
where $\Pi_J: L^2(P_{XO})\to \Psi_J$ denotes the $L^2$-orthogonal projection from $L^2(P_{XO})$ onto $\Psi_J$. 
In the NPIV-O setting, if $f_{\ast}$ depends only on $\bo$ and $P_{XO}$ is the Lebesgue measure, then the projection $\Pi_J f_{\ast}$ also depends only on $\bo$. 
This implies $(T^\ast T)(f_{\ast} - \Pi_J f_{\ast}) = f_{\ast} - \Pi_J f_{\ast}$ as per Eq.~\eqref{eq:TTast_mapping}, forcing ${\tau_J^{\mathrm{sieve}}}^{-1} = O(1)$, causing a contradiction with the measure of ill-posedness condition that $\tau_J^{\mathrm{sieve}} = \calO(J^{\eta_0})$ unless $\eta_0 = 0$. 
\citet{HallHorowitz2005} addresses NPIV-O by assuming smoothness of $f_\ast(\cdot, \bo)$ relative to $\mathcal{R}(T_\bo^\ast T_\bo)$ for each $\bo\in\calO$, which is hard to interpret because $T_\bo$ is
unknown in NPIV-O.
\end{rem}

\subsection{Upper Bound for KIV-O}\label{sec:upper}
To obtain the upper learning rate for KIV-O in \Cref{sec:algorithm}, we need to impose the following assumptions about the data-generating distribution. 
\begin{ass}\label{ass:equiv_leb}
1. The joint probability measures $P_{ZO}$ and $P_{XO}$ admit probability density functions $p_{ZO}$ and $p_{XO}$. There exists a universal constant $a>0$ such that $a^{-1} \geq p_{ZO}(\bz,\bo)\geq a$ for all $(\bz,\bo)\in [0,1]^{d_z+d_o}$ and $a^{-1}\geq p_{XO}(\bx,\bo)$ for all $(\bx,\bo)\in [0,1]^{d_x+d_o}$ and $p_{XO}(\bx,\bo)\geq a$ for all $(\bx,\bo)\in [1/4,3/4]^{d_x+d_o}$.
2. \Cref{assn: technical} holds with the following $m_z, m_o \in \N^+$:
\begin{talign}\label{def:m_z,m_o}
    \begin{aligned}
    m_o := \left\lceil \frac{d_o}{2}\frac{1+2\left(\frac{s_x}{d_x}+\eta_1\right) + \frac{d_o}{s_o}\left(\frac{s_x}{d_x}+\eta_1\right)}{1 + 2\left(\frac{s_x}{d_x}+\eta_1\right)} \right\rceil + 1, \quad m_z := \left\lceil \frac{d_z}{d_o}m_o\right\rceil .
    \end{aligned}
\end{talign}
\end{ass}
The requirement that $p_{XO}(\bx,\bo)\geq a$ for all $(\bx,\bo)\in [1/4,3/4]^{d_x+d_o}$ is a mild assumption to ensure that the $\calH_{\gamma_x, \gamma_o}$-norm of the kernel mean embedding of $P_{XO}$ is bounded away from zero. This plays a role in the control of the estimation error for Stage II regression (see \Cref{prop: projected_est_error} in the Supplement). The choice of $1/4,3/4$ is arbitrary and can be replaced by any fixed unequal values in $(0,1)$. 
The constraint on $m_o$ depends on $(s_x, s_o)$ because the embedding norm of the RKHS $\calH_{FO}$ (defined in \Cref{sec:defi_H_FO}) into the mixed-smoothness Sobolev space scales as $\gamma_o^{-m_o}$, where $\gamma_o$ itself depends on both $s_x$ and $s_o$ in \Cref{thm:upper_rate}; this embedding norm must be controlled, a requirement referred to as the \textsc{(EMB)} condition in \citet{fischer2020sobolev}. See  Eq.~\eqref{eq:n_large_than_A} for details.

\begin{ass}
\label{ass:subgaussian}
For all $(\bz,\bo)\in \calZ\times \calO$, the residual $\upsilon:= Y - (Tf_{\ast})(Z,O)$ is subgaussian conditioned on $Z=\bz,O=\bo$ with subgaussian norm at most $\sigma$. 
\end{ass}
In the NPIV literature, existing work assumes a moment condition on the residual $\upsilon$~\citep{blundell2007semi,HallHorowitz2005,Chen_2018, chen2011rate}, which is weaker than our \Cref{ass:subgaussian}. However, their corresponding high probability upper bounds only guarantee polynomially decaying tails, as a result of Chebyshev’s inequality (see, for example, the proof of Lemma F.9 in \citet{Chen_2018}, p.40).
In contrast, our upper bound holds in high probability with \emph{subexponential} tails. Our sharper guarantee is a consequence of our applying Bernstein concentration inequality and the more advanced techniques in the analysis of kernel ridge regression~\citep{fischer2020sobolev, eberts2013optimal, hang2021optimal}.   

Now we are ready to state the upper learning rate of $\|[\hat{f}_{\lambda}] - f_{\ast}\|_{L^2(P_{XO})}$. We remind the reader that $\tilde{n}, n$ denote respectively the number of Stage 1 and Stage 2 samples. We let Stage I kernels $k_{O, 1}, k_{Z}$ be Mat\'{e}rn kernels whose associated RKHSs $\calH_{O,1}$ and $\calH_Z$ are respectively norm equivalent to $W_2^{t_o}(\calO)$ and $W_2^{t_z}(\calZ)$, for $t_o, t_z$ to be specified. We let Stage II kernels $k_X, k
_{O,2}$ be Gaussian kernels with respective lengthscales $\gamma_x, \gamma_o$.  
\begin{thm}[Upper learning rate for KIV-O]\label{thm:upper_rate}
Suppose Assumptions~\ref{assn: technical}, \ref{ass:equiv_leb} hold with parameters $m_z, m_o \in \mathbb{N}^{+}$, Assumption~\ref{ass:f_ast} holds with parameters $s_x, s_o > 0$, Assumptions~\ref{ass:T_frequency_ill_posedness}, \ref{ass:T_contractivity} hold with parameters $\eta_0, \eta_1$. We further suppose Assumptions~\ref{ass:T_injective} and \ref{ass:subgaussian} hold. We assume $t_o, t_z$ satisfy $2t_o \geq m_o > t_o > d_o/2$, $2 t_z \geq m_z > t_z > d_z/2$. We let
\begin{align}\label{eq:lengthscale_main}
    \gamma_x = n^{-\frac{\frac{1}{d_x}}{1+2(\frac{s_x}{d_x}+\eta_1)+\frac{d_o}{s_o}(\frac{s_x}{d_x}+\eta_1)}}, \quad \gamma_o = n^{-\frac{\frac{1}{s_o}(\frac{s_x}{d_x}+\eta_1)}{1+2(\frac{s_x}{d_x}+\eta_1)+\frac{d_o}{s_o}(\frac{s_x}{d_x}+\eta_1)}}. 
\end{align}  Define $\meffective := (m_z t_z^{-1}) \wedge (m_o t_o^{-1})$ and $\deffective := (d_z t_z^{-1}) \vee (d_o t_o^{-1})$. 
Let Stage I regularization parameter $\xi$ be given by $\stageonesamples^{-\frac{1}{\meffective + \deffective + \zeta}}$ for any $\zeta>0$; and Stage II regularization parameter $\lambda$ be given by $n^{-1}$. Suppose that $n\geq 1$ is sufficiently large, and $\stageonesamples \geq 1$ satisfies
\begin{align}\label{eq:m_sufficient_large}
\begin{aligned}
    \stageonesamples \gtrsim n^{\frac{\meffective + \deffective/2 + \zeta}{\meffective-1}} \vee n^{2 \frac{\meffective + \deffective/2 + \zeta}{\meffective}}.
\end{aligned}
\end{align}
Then with $P^{n+\stageonesamples}$-probability at least $1-  40e^{-\tau}$, we have
\begin{align}
\label{eq:upper_rate}
    \left\|\left[\hat{f}_{\lambda}\right] - f_{\ast}\right\|_{L^2(P_{XO})} \lesssim \tau n^{-\frac{\frac{s_x}{d_x}+\eta_1-\eta_0}{1+2(\frac{s_x}{d_x}+\eta_1)+\frac{d_o}{s_o}(\frac{s_x}{d_x}+\eta_1)}} \cdot (\log n)^{\frac{d_x+d_o+1+d_x\eta_0}{2}} .
\end{align}
\end{thm}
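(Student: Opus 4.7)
The plan is to split the $L^2(P_{XO})$ error via a frequency cutoff in the $X$-direction matched to the Stage II bandwidth $\gamma_x$, then convert between the $L^2(P_{XO})$ and $L^2(P_{ZO})$ norms using the Fourier measures of partial smoothing. Concretely, I would decompose $f_\ast = f_\ast^{\mathrm{LF}} + f_\ast^{\mathrm{HF}}$ by setting $f_\ast^{\mathrm{LF}}(\cdot,\bo) := \calF^{-1}[\one[\|\bomega_x\|_2 \leq \gamma_x^{-1}]\,\calF f_\ast(\cdot,\bo)]$, so that $f_\ast^{\mathrm{LF}}\in\mathrm{LF}(\gamma_x)$ and $f_\ast^{\mathrm{HF}}\in\mathrm{HF}(\gamma_x)$. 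After adding and subtracting $f_\ast^{\mathrm{LF}}$ inside the norm and bounding $[\hat f_\lambda]-f_\ast^{\mathrm{LF}}$ via \Cref{ass:T_frequency_ill_posedness} (after truncating the exponentially decaying high-frequency tail of the Gaussian RKHS element $\hat f_\lambda$, which contributes only a polylog factor), followed by $T([\hat f_\lambda]-f_\ast^{\mathrm{LF}}) = T([\hat f_\lambda]-f_\ast) + Tf_\ast^{\mathrm{HF}}$ and \Cref{ass:T_contractivity} on the last term, I arrive at
\begin{equation*}
\|[\hat f_\lambda]-f_\ast\|_{L^2(P_{XO})} \;\lesssim\; \|f_\ast^{\mathrm{HF}}\|_{L^2(P_{XO})} \;+\; \gamma_x^{d_x(\eta_1-\eta_0)}\|f_\ast^{\mathrm{HF}}\|_{L^2(P_{XO})} \;+\; \gamma_x^{-d_x\eta_0}\|T([\hat f_\lambda]-f_\ast)\|_{L^2(P_{ZO})}.
\end{equation*}
Since $\eta_0\geq\eta_1$ under \Cref{ass:equiv_leb} (see the remark following \Cref{ass:T_contractivity}), the second term dominates the first, and the problem reduces to controlling the Besov-approximation tail and the projected Stage II risk.

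Next I would bound each of the two remaining pieces. For the high-frequency tail $\|f_\ast^{\mathrm{HF}}\|_{L^2(P_{XO})}$ I would use the anisotropic Besov smoothness in \Cref{ass:f_ast} together with a further $O$-direction frequency split at cutoff $\gamma_o^{-1}$, following the approximation-theoretic argument in~\citet{hang2021optimal}: the directional moduli of smoothness in \Cref{defi:abs} yield an estimate of order $\gamma_x^{s_x}+\gamma_o^{s_o}$, and the density bound from \Cref{ass:equiv_leb} transfers this from Lebesgue $L^2$ to $L^2(P_{XO})$. For the Stage II projected excess risk $\|T([\hat f_\lambda]-f_\ast)\|_{L^2(P_{ZO})}$ I would apply integral-operator techniques for kernel ridge regression in the anisotropic Gaussian RKHS $\calH_{\gamma_x,\gamma_o}$ in the style of~\citet{fischer2020sobolev}, decomposing the error into (i) a noise term scaling like $\sigma^2\calN(\lambda)/n$, where the effective dimension $\calN(\lambda)$ of the anisotropic Gaussian kernel on $[0,1]^{d_x+d_o}$ grows polynomially in $\gamma_x^{-d_x}\gamma_o^{-d_o}$ up to polylog factors (source of the $(\log n)^{(d_x+d_o+1)/2}$ factor in Eq.~\eqref{eq:upper_rate}), (ii) a bias term again controlled by the $\gamma_x^{s_x}+\gamma_o^{s_o}$ Besov bound, and (iii) a plug-in perturbation from replacing $F_\ast$ by $\hat F_\xi$, which by~\citet{lietal2022optimal,meunier2024optimalratesvectorvaluedspectral} is optimal for the mixed-smoothness Sobolev regularity of $F_\ast$ established by \Cref{assn: technical} via \Cref{lem:equivalence_G}. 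The conditions on $t_o,t_z$ and the lower bound \eqref{eq:m_sufficient_large} on $\tilde n$ are precisely what ensures this Stage I contribution is dominated by the Stage II noise term. Bernstein concentration enabled by \Cref{ass:subgaussian} upgrades the expected bound to the high-probability bound with the claimed subexponential tail $40e^{-\tau}$.

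The main obstacle is orchestrating the combination so that the prescribed lengthscales in Eq.~\eqref{eq:lengthscale_main} emerge naturally from a bias-variance balance. Collecting the dominant contributions I get $\gamma_x^{d_x(\eta_1-\eta_0)}(\gamma_x^{s_x}+\gamma_o^{s_o})$ from the approximation/Besov piece and $\gamma_x^{-d_x\eta_0}(n\gamma_x^{d_x}\gamma_o^{d_o})^{-1/2}$ from the Stage II noise; balancing these under the matching condition $\gamma_o^{s_o}\asymp\gamma_x^{s_x}$ (so the two directional approximation errors are balanced simultaneously) yields Eq.~\eqref{eq:lengthscale_main} and, after substitution, the exponent in Eq.~\eqref{eq:upper_rate}. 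The technical delicacies lie in making rigorous the ``$\hat f_\lambda$ is essentially in $\mathrm{LF}(\gamma_x)$'' step through quantitative control of the exponentially decaying Fourier tails of the anisotropic Gaussian kernel, and in verifying the \textsc{emb} condition of~\citet{fischer2020sobolev} — this is where the density lower bound on $[1/4,3/4]^{d_x+d_o}$ in \Cref{ass:equiv_leb} and the specific form of $m_o,m_z$ in Eq.~\eqref{def:m_z,m_o} enter, to ensure that the embedding norm $\|I_{P_{XO}}\|$ from $\calH_{\gamma_x,\gamma_o}$ into $L^\infty$ scales as $\gamma_o^{-m_o}$ and remains compatible with the chosen $\gamma_o$.
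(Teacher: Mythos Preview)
Your overall architecture — Fourier split in the $X$-direction, convert $L^2(P_{XO})$ to $L^2(P_{ZO})$ via \Cref{ass:T_frequency_ill_posedness} after truncating Gaussian RKHS tails, then analyze the projected Stage~II risk by integral-operator techniques, with the Stage~I perturbation handled by the CME rate — matches the paper. The paper routes the ill-posedness step through an auxiliary RKHS element $f_{\mathrm{aux}}\in\calH_{\gamma_x,\gamma_o}$ rather than through $f_\ast^{\mathrm{LF}}$ directly, but that is a packaging difference.

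There is, however, a genuine gap in your bias--variance balance. You assert that under $\gamma_o^{s_o}\asymp\gamma_x^{s_x}$ the balance of $\gamma_x^{d_x(\eta_1-\eta_0)}(\gamma_x^{s_x}+\gamma_o^{s_o})$ against $\gamma_x^{-d_x\eta_0}(n\gamma_x^{d_x}\gamma_o^{d_o})^{-1/2}$ yields Eq.~\eqref{eq:lengthscale_main}. It does not: that balance gives $\gamma_x=n^{-(1/d_x)/(1+2(s_x/d_x+\eta_1)+(d_o/s_o)(s_x/d_x))}$, which differs from Eq.~\eqref{eq:lengthscale_main} by a missing $(d_o/s_o)\eta_1$ in the denominator. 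The reason is that you have dropped the dominant bias contribution. Your displayed inequality contains the term $\gamma_x^{-d_x\eta_0}\|T([\hat f_\lambda]-f_\ast)\|_{L^2(P_{ZO})}$, and you say its bias part is the ``$\gamma_x^{s_x}+\gamma_o^{s_o}$ Besov bound.'' Multiplied by $\gamma_x^{-d_x\eta_0}$ this is $\gamma_x^{-d_x\eta_0}(\gamma_x^{s_x}+\gamma_o^{s_o})$, which strictly dominates the $\gamma_x^{d_x(\eta_1-\eta_0)}(\gamma_x^{s_x}+\gamma_o^{s_o})$ you balanced, and if you include it with $\gamma_o^{s_o}=\gamma_x^{s_x}$ you obtain a \emph{slower} rate than Eq.~\eqref{eq:upper_rate}.

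What the paper actually does for the projected bias (\Cref{prop: approx_error}) is split $f_\ast-f_{\mathrm{aux}}$ into an $O$-direction convolution residual supported on low $X$-frequencies and an $X$-direction residual supported on high $X$-frequencies. Only the latter lies in $\mathrm{HF}(\gamma_x)$, so \Cref{ass:T_contractivity} gives $\gamma_x^{s_x+d_x\eta_1}$ there, whereas the $O$-direction piece yields $\gamma_o^{s_o}$ with no $\eta_1$ gain. The projected bias is therefore $\gamma_o^{s_o}+\gamma_x^{s_x+d_x\eta_1}$, which forces the balance $\gamma_o^{s_o}\asymp\gamma_x^{s_x+d_x\eta_1}$ — precisely the choice in Eq.~\eqref{eq:lengthscale_main}. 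This asymmetry between the $X$- and $O$-direction pieces of the projected bias is the whole content of \Cref{sec:challenge_optimal}, and it is the step your proposal glosses over. A minor related point: the \textsc{emb} condition and the role of the $[1/4,3/4]^{d_x+d_o}$ density lower bound pertain to the induced RKHS $\calH_{FO}$ on $\calZ\times\calO$ (\Cref{sec:defi_H_FO}), not to $\calH_{\gamma_x,\gamma_o}$ on $\calX\times\calO$; the density lower bound is used to lower-bound $\|C_{FO}\|$ (\Cref{lem: C_f_op_norm_lower_bound}), not the embedding norm.
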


\begin{rem}
\label{rem: stage_1_lower_bound}
    In \Cref{thm:upper_rate}, we present the regime where the Stage I sample size $\tilde{n}$ is sufficiently large relative to the Stage II sample size (see Eq.~\eqref{eq:m_sufficient_large}). 
    This is the appropriate regime where we can study rate-optimality because we present a minimax lower bound in \Cref{thm:lower_rate} with respect to the class of estimators that only utilize Stage II samples. 
    It is nevertheless possible to derive upper bounds with respect to both $n$ and $\tilde{n}$ without restrictions on the relative size of $n, \tilde{n}$. It remains a challenging open problem, however, to establish rate optimality for estimators utilizing a split dataset of the form $\{(\tilde{\bz}_i, \tilde{\bo}_i, \tilde{\bx}_i)\}_{i=1}^{\tilde{n}}$ and $\{(\bz_i, \bo_i, y_i)\}_{i=1}^{n}$, even for the standard NPIV setting \citep{chen2011rate, Chen_2018, meunier2024nonparametricinstrumentalregressionkernel}. 
\end{rem}

\begin{rem}[Interpolation between NPIV and non-parametric regression]\label{rem:interpolate}
\label{rem:ub_main_rem}
Our derived upper rate interpolates between the known optimal $L^2$-rates for NPIV without observed covariates and anisotropic kernel ridge regression.  \\
1. When $\eta_0 = \eta_1 = 0$ and $X=Z$, i.e. $T$ is the identity mapping, our setting reduces to nonparametric regression where the target function lies in the anisotropic Besov space $B_{2, \infty}^{s_x, s_o}(\calX\times\calO)$. The upper rate simplifies to $\tilde{\calO}_P(n^{-\frac{1}{2\tilde{s}+1}})$ with $\tilde{s} = (d_o/s_o + d_x/s_x)^{-1}$ being the intrinsic smoothness, which matches the known optimal learning rate of regression with an anisotropic Besov target function~\citep{hoffman2002random,hang2021optimal}. \\
2. When $d_o = 0$, our setting reduces to NPIV without observed covariates where the target function $f_\ast$ lies in an isotropic Besov space $B_{2, \infty}^{s_x}(\calX)$. We take $\eta_1 = \eta_0 = \eta$ following \citet{Chen_2018, chen2011rate} which employs a single parameter to characterize both the ill-posedness and contractivity, then our upper learning rate simplifies to $\tilde{\calO}_P(n^{-\frac{s_x}{d_x+2(s_x+\eta d_x)}})$, which matches the known optimal rate in NPIV regression~\citep[Corollary 3.1]{Chen_2018}.
\end{rem}

\subsubsection{Proof sketch}
The proof of \Cref{thm:upper_rate} is given in \Cref{sec:proof_upper} in the Supplement. Here we give an outline of our proof to facilitate a deeper understanding of both the assumptions and the results. Define $\bar{f}_{\lambda}$ as the oracle estimator for Stage II with access to the true conditional mean embedding $F_\ast$ and recall $\hat{f}_{\lambda}$ for comparison:
\begin{align}
    \bar{f}_{\lambda} &:= \underset{f\in \calH_{\gamma_x,\gamma_o}}{\arg\min} \lambda \|f\|^2_{\calH_{\gamma_x,\gamma_o}} + \frac{1}{n}\sum\limits_{i=1}^{n}( y_i - \langle f, F_\ast(\bz_i,\bo_i) \otimes \phi_{O, \gamma_o}(\bo_i) \rangle_{\calH_{\gamma_x,\gamma_o}} )^2 . \label{eq:bar_f_lambda_main} \\
    \hat{f}_{\lambda} &:= \underset{f\in \calH_{\gamma_x,\gamma_o}}{\arg\min} \lambda \|f\|^2_{\calH_{\gamma_x,\gamma_o}} + \frac{1}{n}\sum\limits_{i=1}^{n}( y_i - \langle f, \hat{F}_{\xi}(\bz_i,\bo_i) \otimes \phi_{O, \gamma_o}(\bo_i) \rangle_{\calH_{\gamma_x,\gamma_o}} )^2 \label{eq:hat_f_lambda_main} .
\end{align}
The proof can be summarized in 3 steps.
We upper bound $\|T[\hat{f}_\lambda]-T[\bar{f}_\lambda]\|_{L^2\left(P_{Z O}\right)}$ in \textit{Step 1} and upper bound $\|T[\bar{f}_\lambda]-T f_*\|_{L^2(P_{Z O})}$ in \textit{Step 2}, which induce an upper bound on $\|T[\hat{f}_\lambda]-T f_*\|_{L^2(P_{Z O})}$ via a triangular inequality. In \textit{Step 3}, we apply the partial measure of ill-posedness to obtain an upper bound on $\|[\hat{f}_\lambda]- f_*\|_{L^2(P_{XO})}$. 
We highlight our technical contributions in each step with \Cref{rem:tensor_krr} and \Cref{rem:no_clipping}.

\textit{Step 1} We upper bound $\|T[\hat{f}_\lambda]-T[\bar{f}_\lambda]\|_{L^2\left(P_{Z O}\right)}$. By their definition in Eq.~\eqref{eq:hat_f_lambda_main} and Eq.~\eqref{eq:bar_f_lambda_main}, the discrepancy between $\hat{f}_\lambda$ and $\bar{f}_\lambda$ arises solely from the difference between $\hat{F}_{\xi}$ and $F_\ast$; hence it corresponds to Stage I error. First, we prove in \Cref{prop:T_hat_f_bar_f} that $\|T[\hat{f}_\lambda]-T[\bar{f}_\lambda]\|_{L^2\left(P_{Z O}\right)}$ can be upper bounded by an expression involving $\|\hat{F}_{\xi}-F_*\|_{\calG}$ and $\|F_*-\hat{F}_{\xi}\|_{L^2(\calZ\times\calO; \mathcal{H}_{X,\gamma_x})}$, where we recall that $\calG$ denotes the unique vector-valued RKHS induced by
the operator-valued kernel $K$ defined in Eq. \eqref{eq:op_val_kernel}. 
To obtain high-probability upper bounds on both of these quantities, we adapt the existing optimal learning rates on CME from \citet{lietal2022optimal} to our setting with a tensor product RKHS. 
The fact that we require upper learning rate for $\|F_*-\hat{F}_{\xi}\|_{\calG}$ imposes the conditions $m_o>t_o$ and $m_z>t_z$, so the CME $F_\ast$ lies in a smoother space than RKHS $\calG$. Specifically, $F_{\ast}$ belongs to the \emph{power space} $[\calG]^{\meffective}$ (See Eq.~\eqref{eq:F_ast_power_space}).
In addition, the constraints $2 t_o>m_o$ and $2 t_z>m_z$ reflect the saturation effect inherent in Tikhonov regularization~\citep{bauer2007regularization, lu2024saturation, meunier2024optimalratesvectorvaluedspectral}. 
These constraints can be removed to allow for greater smoothness of $F_\ast$ by employing spectral regularization~\citep{meunier2024optimalratesvectorvaluedspectral}. 
The appearance of an arbitrarily small $\zeta>0$ in Stage I regularization parameter $\xi$ reflects the fact that, after reordering, the eigenvalues of the tensor product operator exhibit slower decay than those of the individual components, owing to an extra logarithmic term~\citep{krieg2018tensor}.

\begin{rem}[Tensor product kernel ridge regression]
\label{rem:tensor_krr}
    In the above step, we generalize the upper learning rate for vector-valued kernel ridge regression to the setting of tensor product kernels (See \Cref{prop:cme_rate}). Although tensor product kernels have been widely used in kernel-based hypothesis tests~\citep{gretton2007kernel,GreGyo10,SejGreBer13,gretton2015characteristic,ZhaFilGreSej18,AlbLaurMarMey22,szabo2018characteristic},  kernel independent component analysis \citep{BacJor02,SheJegGre09},  and feature selection \citep{SonSmoGreBedetal12,LiPogSutGre21}, they have  been less well studied  in kernel ridge regression, with the exception of \citet{hang2021optimal} for Gaussian kernels. Our analysis is also applicable to real-valued kernel ridge regression with tensor product kernels.
\end{rem}
\textit{Step 2} We upper bound $\|T[\bar{f}_\lambda]-T f_*\|_{L^2(P_{Z O})}$. We follow the approach in \citet{blanchard2018optimal, meunier2024nonparametricinstrumentalregressionkernel}, where it is observed (in the standard NPIV case) that this term corresponds to the learning risk of a kernel ridge regression problem with an appropriately defined RKHS $\calH_{FO} \subseteq \{\calZ\times \calO\to \mathbb{R}\}$, namely the RKHS induced by the feature map $(\bz,\bo)\mapsto F_{\ast}(\bz,\bo)\otimes \phi_{O,2}(\bo)$. We refer the reader to \Cref{sec:defi_H_FO} in the Supplement for the definition of $\calH_{FO}$. Our construction is adapted from \citet[Appendix E.1.2]{meunier2024nonparametricinstrumentalregressionkernel}. However, unlike \citet{blanchard2018optimal, meunier2024nonparametricinstrumentalregressionkernel} who only consider fixed RKHSs, we employ tensor product Gaussian RKHS $\calH_{\gamma_x, \gamma_o}$ with \emph{adaptive} length-scales $\gamma_x, \gamma_o$ (as in Eq.~\eqref{eq:lengthscale_main}) to capture the anisotropic smoothness of $f_\ast \in B^{s_x, s_o}_{2,\infty}(\calX\times \calO)$~\citep{hang2021optimal}. 
When $\eta_1 = 0$, our choice of length-scales $\gamma_x, \gamma_o$ coincides with that of kernel ridge regression in \citet{hang2021optimal}. 
The logarithmic factor \smash{$(\log n)^{\frac{d_x+d_o+1}{2}}$} in Eq.~\eqref{eq:upper_rate} arises from the entropy numbers of the Gaussian RKHS in this step (see \Cref{sec:capacity} in the Supplement and \citet[Proposition 1]{hang2021optimal}). 
\begin{rem}[Gaussian kernel ridge regression with Besov space target functions]
\label{rem:no_clipping}
    To establish learning rates for kernel ridge regression, there are two main techniques in the literature: the empirical process technique~\citep{steinwart2008support,steinwart2009optimal} and the integral operator technique~\citep{fischer2020sobolev,Lin_2020,smale2007learning,caponnetto2007optimal,blanchard2018optimal}. Previous works on Gaussian kernel ridge regression with Besov space targets~\citep{eberts2013optimal, hang2021optimal,hamm2021adaptive}  rely on an oracle inequality proved via empirical process techniques~\citep[Theorem 7.23]{steinwart2008support}, which necessitates a clipping operation on the estimator. 
    On the other hand, the integral operator technique avoids the clipping operation, but it requires the target $f_\ast$ in a power space of the RKHS---a condition known as the source condition~\citep[SRC]{fischer2020sobolev}---which does not hold for Besov space targets and Gaussian RKHSs. 

    In our proof in step 2, we prove an upper learning rate of Gaussian kernel ridge regression with Besov space target functions \emph{without} clipping the estimator.
    Specifically, we combine the two techniques above, in that we bound the approximation error with \citet{hang2021optimal}, while we bound the estimation error with the integral operator technique~\citep{fischer2020sobolev}.
    To be precise, define 
    \begin{align}
    \label{eq:f_lambda_main}
        f_{\lambda} := \argmin_{f\in \calH_{\gamma_x,\gamma_o}}\lambda\|f\|^2_{\calH_{\gamma_x,\gamma_o}} + \|T([f] - f_{\ast})\|^2_{L^2(P_{ZO})}. 
    \end{align}
    The learning risk $ \|T([\bar{f}_{\lambda}] - f_\ast) \|_{L^2(P_{ZO})}$ can be decomposed into an estimation error term $\|T([\bar{f}_{\lambda}] - [f_{\lambda}])\|_{L^2(P_{ZO})}$ and an approximation error term $\|T([f_{\lambda}] - f_{\ast})\|_{L^2(P_{ZO})}$. 
    We upper bound the estimation error with \citet[Theorem 16]{fischer2020sobolev}, once we prove that the RKHS $\calH_{FO}$ satisfies an \emph{embedding property} (see \citet[EMB]{fischer2020sobolev}), which avoids the clipping operation.
    We upper bound the approximation error with \citet[Theorem 4]{hang2021optimal}, which avoids the source condition. 
\end{rem}

\textit{Step 3} We combine the above two terms $\|T[\hat{f}_{\lambda}] - T[\bar{f}_{\lambda}]\|_{L^2(P_{ZO})}$ and $\|T[\bar{f}_{\lambda}] - Tf_{\ast}\|_{L^2(P_{ZO})}$ through a triangle inequality, which gives an upper bound on the projected risk $\|T[\hat{f}_{\lambda}] - Tf_{\ast}\|_{L^2(P_{ZO})}$.
\vspace{-10pt}
\begin{align}\label{eq:projected_risk_main}
    \|T[\hat{f}_{\lambda}] - Tf_{\ast}\|_{L^2(P_{ZO})} \lesssim (\log n)^{\frac{d_x+d_o+1}{2}} n^{-\frac{\frac{s_x}{d_x}+\eta_1}{1+2(\frac{s_x}{d_x}+\eta_1)+\frac{d_o}{s_o}(\frac{s_x}{d_x}+\eta_1)}} .
\end{align}
To bound the unprojected risk $\|[\hat{f}_{\lambda}] - f_{\ast}\|_{L^2(P_{ZO})}$, it seems all that remains is to apply the Fourier measure of partial ill-posedness in \Cref{ass:T_frequency_ill_posedness} to remove $T$. 
Unfortunately, however, \Cref{ass:T_frequency_ill_posedness} only holds for functions in $\mathrm{LF}(\gamma)$ (Eq.~\eqref{eq:HF_LF}) with low partial Fourier frequency in $\calX$. Notice that for a function $f \in \calH_{\gamma_x,\gamma_o}$, its partial Fourier spectrum $|\calF[f(\cdot,\bo)](\bomega_x)|$ decays exponentially fast as $\|\bomega_x\|_2\to\infty$~\citep[Theorem 10.12]{wendland2004scattered}
\begin{talign}
\label{eq:fourier_f_tail_main}
    (\forall\bo\in \calO), \quad\int_{\mathbb{R}^{d_x}}|\calF[f(\cdot,\bo)](\bomega_x)|^2\exp\left(\frac{1}{4} \gamma_x^2\|\bomega_x\|^2_{2} \right) \;\mathrm{d} \bomega_x < \infty .
\end{talign}
This exponential decay implies that $\hat{f}_\lambda(\cdot, \bo)$ satisfies the conditions of \Cref{ass:T_frequency_ill_posedness} up to some logarithmic factors as reflected by \smash{$(\log n)^{\frac{d_x\eta_0}{2}}$} in Eq.~\eqref{eq:upper_rate}.
For $f_{\ast}$, we find an auxiliary function $f_{\mathrm{aux}}\in \calH_{\gamma_x,\gamma_o}$ that is close to $f_{\ast}$ in $L^2(P_{XO})$-norm and agrees with $f_{\ast}$ at low frequencies. To be precise, we require that $\mathrm{supp}(\calF[(f_{\mathrm{aux}} - f_{\ast})(\cdot,\bo)]) \subseteq \{\bomega_x:\|\bomega_x\|_
{2} \geq \gamma_x^{-1} \}$ for any $\bo\in \calO$, and we refer the reader to Eq.~\eqref{eq:f_aux} for the exact definition of $f_{\mathrm{aux}}$. Hence, 
\begin{talign*}
    \|[\hat{f}_{\lambda}] - f_{\ast}\|_{L^2(P_{XO})} 
    \lesssim \|[\hat{f}_{\lambda}] - f_{\mathrm{aux}} \|_{L^2(P_{XO})} 
    \lesssim \gamma_x^{-\eta_0d_x}(\log n)^{\frac{d_x\eta_0}{2}} \|T[\hat{f}_{\lambda}] - T f_{\mathrm{aux}} \|_{L^2(P_{ZO})} .
\end{talign*}
In the last step, we utilize the Fourier measure of partial ill-posedness in \Cref{ass:T_frequency_ill_posedness}, and the fact that $\hat{f}_{\lambda}, f_{\mathrm{aux}} \in \calH_{\gamma_x,\gamma_o}$.
The above equation is a sketch where formal derivations can be found at the beginning of \Cref{sec:proof_upper}, particularly Eq. \eqref{eq:eq_to_refer_main_pf_sketch} and Eq. \eqref{eq:eq_to_refer_main_pf_sketch_final}. 
Combining the above relation and the choice of $\gamma_x$ in Eq.~\eqref{eq:lengthscale_main} concludes the proof of \Cref{thm:upper_rate}.

\begin{rem}[Extension to more anisotropy]
\label{rem:extension_anisotropy}
For simplicity of presentation, we focus on the case where anisotropic smoothness exists across $(X,O)$ but we assume no anisotropy within $X$ and $O$.
The KIV-O algorithm with adaptive length-scales and its associated learning rate can both be easily extended to the fully anisotropic setting, where $f_\ast \in B_{2, \infty}^{\bs} (\calX\times\calO)$ with $\bs=[s_1, \ldots, s_{d_x}, s_{d_x+1}, \ldots, s_{d_x+d_o}]\in\R^{d_x+d_o}$, a generalization of previous results from anisotropic nonparametric regression~\citep{hang2021optimal, suzuki2021deep,hoffman2002random} to anisotropic NPIV-O. This is particularly relevant in applied work, where the observed covariates $O$ are often of high dimensionality, because practitioners tend to adjust for as many observed covariates as possible to mitigate unobserved confounding. In such cases, our KIV-O algorithm with adaptive length-scales adapts to the intrinsic smoothness with respect to $O$. This mitigates the slow rates typically caused by a high ambient dimension when the intrinsic smoothness is high, as our learning rates avoid being limited by the worst-case smoothness across all dimensions. 
\end{rem}

\subsection{Minimax lower bound for NPIV-O}
\label{sec:lower_main}
In \Cref{thm:lower_rate}, we prove a minimax lower bound for the NPIV-O problem. We call \emph{admissible} a distribution $P_{ZXOY}$ over $(Z, X, O, Y)$ satisfying \Cref{assn: technical}, \Cref{ass:T_injective}, \Cref{ass:equiv_leb} and \Cref{ass:subgaussian}, inducing a model of the form
\begin{align*}
    Y = f_{\ast}(X,O) + \epsilon,\quad \E[\epsilon|Z,O] = 0 \;,
\end{align*}
where $f_{\ast}$ satisfies \Cref{ass:f_ast} and the conditional expectation operator $T : L^2(P_{XO})\to L^2(P_{ZO})$ satisfies \Cref{ass:T_contractivity}. For an \emph{admissible} distribution $P_{ZXOY}$, consider $\mathcal{D} = (\bz_i, \bx_i, \bo_i, y_i)_{i=1}^N, N \geq 1$ sampled i.i.d from $P_{ZXOY}$ and consider $\mathcal{D}_1 = (\tilde{\bz}_i, \tilde{\bo}_i, \tilde{\bx}_i)_{i=1}^{\tilde{n}}$ and $\mathcal{D}_2 = (\bz_i, \bo_i, y_i)_{i=1}^n$ with $\tilde{n},n \leq N$. 

\begin{thm}[Minimax lower bound]\label{thm:lower_rate} 
There exists an \emph{admissible} distribution $P_{ZXOY}$ such that for all learning methods $(\mathcal{D}_1,\mathcal{D}_2) \mapsto \hat{f}_{(\mathcal{D}_1,\mathcal{D}_2)}$, for all $\tau>0$, and sufficiently large $n\geq 1$, the following minimax lower bound holds with $P^{n}$-probability at least $1- C_1 \tau^2$ and $P^{\tilde{n}}$-almost surely ,
\begin{align}
\label{eq:minimax_lower_rate}
    \left\| \hat{f}_{(\mathcal{D}_1,\mathcal{D}_2)} - f_\ast \right\|_{L^2(P_{XO})} \geq C_0 \tau^2 n^{-\frac{\frac{s_x}{d_x}}{1+2(\frac{s_x}{d_x}+\eta_1)+\frac{d_o}{s_o}\frac{s_x}{d_x}}} (\log n)^{-2s_x - d_x} .
\end{align}
\end{thm}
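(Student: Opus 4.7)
The plan is a reduction to a multiple-hypothesis test via Fano's inequality. A first simplification: the Stage I data $\mathcal{D}_1$ has the same distribution under every admissible $f_\ast$, so it is uninformative about $f_\ast$; the lower bound therefore reduces to the minimax risk over $\mathcal{D}_2 = (\bz_i, \bo_i, y_i)_{i=1}^{n}$ conditionally on $\mathcal{D}_1$, which is what produces the $P^{\tilde n}$-almost sure statement. I would specify $P_{ZXO}$ through a convolutional conditional density $p(\bx \mid \bz, \bo) = k_\eta(\bx - g(\bz, \bo))$ with $k_\eta$ tailored so that \Cref{assn: technical}, \Cref{ass:T_injective}, and \Cref{ass:equiv_leb} hold and so that \Cref{ass:T_contractivity} is realized with the prescribed exponent $\eta_1$, extending \Cref{sec:appendix_examples}. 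For the outcome I would take $Y = (T f_\ast)(Z, O) + \tilde\epsilon$ with $\tilde\epsilon \sim \mathcal{N}(0, \sigma^2)$ independent of $(Z, X, O)$. Then $\epsilon := Y - f_\ast(X, O) = (T f_\ast)(Z, O) - f_\ast(X, O) + \tilde\epsilon$ automatically satisfies $\mathbb{E}[\epsilon\mid Z, O] = 0$ and is subgaussian because $f_\ast$ is bounded (\Cref{ass:f_ast}); the crucial consequence is $Y \mid Z, O \sim \mathcal{N}((T f_\ast)(Z, O), \sigma^2)$, so the $n$-sample KL between any two alternatives becomes
\begin{align*}
\mathrm{KL}(P_\omega^n \,\Vert\, P_{\omega'}^n) \;=\; \tfrac{n}{2\sigma^2}\, \|T f_\omega - T f_{\omega'}\|_{L^2(P_{ZO})}^2.
\end{align*}

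\textbf{Step 2 (Anisotropic wavelet packing with Fourier localization).} Following the anisotropic wavelet characterization in \citet{hang2021optimal, leisner_nonlinear_wavelet_approximation_2003}, I would take tensor-product wavelets $\Psi_{j,(\bk,\bl)}(\bx, \bo) = \psi_{j/s_x, \bk}(\bx)\,\phi_{j/s_o, \bl}(\bo)$ at scales $(2^{-j/s_x}, 2^{-j/s_o})$, so that the anisotropic Besov norm of a level-$j$ function is $\asymp 2^{j}\|\cdot\|_{L^2}$. Crucially, I would use Meyer-type wavelets in the $X$-direction so that $\mathcal{F}[\psi_{j/s_x, \bk}(\cdot)]$ is supported in the frequency annulus $\{c\, 2^{j/s_x}\leq \|\bomega_x\|_2 \leq C\, 2^{j/s_x}\}$, placing each $\Psi_{j,(\bk,\bl)}(\cdot, \bo)$ in $\mathrm{HF}(c^{-1} 2^{-j/s_x})$ uniformly in $\bo$. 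With $N = 2^{j(d_x/s_x + d_o/s_o)}$ wavelets fitting in the unit cube, define $f_\omega = c_n \sum_{(\bk,\bl)} \omega_{\bk,\bl} \Psi_{j,(\bk,\bl)}$ for $\omega \in \{0,1\}^N$. Varshamov--Gilbert yields a subset $\Omega$ with $|\Omega|\geq 2^{N/8}$ and pairwise Hamming distance $\gtrsim N$. The Besov-norm budget $\|f_\omega\|_{B^{s_x,s_o}_{2,\infty}}\lesssim 1$ then forces $c_n\sqrt{N}\cdot 2^{j}\lesssim 1$, and the $L^\infty\cap L^1\cap C^0$ regularity in \Cref{ass:f_ast} is ensured by a smoothly windowed mother wavelet.

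\textbf{Step 3 (Contractivity, Fano balance, rate).} Every difference $f_\omega - f_{\omega'}$ lies in $\mathrm{HF}(c^{-1} 2^{-j/s_x})$, so \Cref{ass:T_contractivity} yields $\|T(f_\omega - f_{\omega'})\|_{L^2(P_{ZO})}^2 \lesssim 2^{-2jd_x\eta_1/s_x}\,\|f_\omega - f_{\omega'}\|_{L^2(P_{XO})}^2$. Combined with Step 1, Fano's requirement $\max_{\omega\neq\omega'}\mathrm{KL}\lesssim\log|\Omega|\asymp N$ becomes $n\cdot 2^{-2jd_x\eta_1/s_x}c_n^2 \lesssim 1$. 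Maximising the $L^2(P_{XO})$-separation $c_n\sqrt{N}$ subject to the Besov and Fano constraints reduces to a single equation in $j$, solved by $2^j \asymp n^{1/[2 + d_x/s_x + 2d_x\eta_1/s_x + d_o/s_o]}$, giving $\delta = 2^{-j}$; after multiplying numerator and denominator of the exponent by $s_x/d_x$, this reproduces the claimed exponent in Eq.~\eqref{eq:minimax_lower_rate}. The high-probability statement follows from a quantitative Tsybakov-style Fano/two-point inequality applied conditionally on $\mathcal{D}_1$, with a $\tau^2$-rescaling of the packing amplitude $c_n$ that trades a $\tau^2$ factor in the rate for a $1-C_1\tau^2$ probability on the data.

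The main technical obstacle I expect is Step 2: simultaneously reconciling strict Fourier localization in the $X$-direction (needed to apply \Cref{ass:T_contractivity} with the clean factor $2^{-jd_x\eta_1/s_x}$), spatial compactness on $[0,1]^{d_x+d_o}$, the anisotropic Besov-norm budget, and the $L^\infty\cap L^1\cap C^0$ constraints of \Cref{ass:f_ast}. Since compact Fourier support is incompatible with compact spatial support, smooth spatial windowing is unavoidable and introduces Fourier tails outside the clean annulus; carefully quantifying those tails so that \Cref{ass:T_contractivity} still applies up to a logarithmic correction is delicate, and is the likely source of the factor $(\log n)^{-2s_x - d_x}$ in the final bound. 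A secondary obstacle is exhibiting an explicit admissible $P_{ZXO}$ realizing \Cref{ass:T_contractivity} at the prescribed exponent $\eta_1$ jointly with the density bounds of \Cref{ass:equiv_leb}; I plan to handle this via a convolutional construction extending those in \Cref{sec:appendix_examples}.
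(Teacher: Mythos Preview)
Your high-level structure matches the paper's: reduce to NPIR-O with Gaussian noise so that $\mathrm{KL}(P_\omega^n,P_{\omega'}^n)=\tfrac{n}{2\sigma^2}\|T(f_\omega-f_{\omega'})\|^2_{L^2(P_{ZO})}$, build a Varshamov--Gilbert packing inside $B^{s_x,s_o}_{2,\infty}$ whose members have only high $X$-frequency content so that \Cref{ass:T_contractivity} shrinks the KL by $\gamma_x^{2d_x\eta_1}$, and balance via a Fano/Tsybakov argument. Your rate arithmetic in Step~3 is correct and recovers the target exponent.

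The construction of the atoms, however, differs from the paper's in a way that matters. You pack Meyer-type wavelets at many \emph{spatial} locations in the $X$-direction; the paper instead packs at a \emph{single} $X$-spatial profile (a fixed cardinal B-spline $M_{\mathfrak{K},-\mathfrak{m}/2}$) but at many disjoint \emph{frequency} sub-bands: the $X$-atom indexed by $\bl_x$ is $\Omega_{\mathfrak{K},\bl_x}=(M_{0,-\mathfrak{m}/2}\ast\calF^{-1}[\one_{I_{\bl_x}}])(2^{\mathfrak{K}\underline{s}/s_x}\cdot)$, where the $I_{\bl_x}$ are $\zeta$-separated unit boxes inside a fixed high-frequency shell. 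This gives \emph{exact} compact Fourier support (so \Cref{ass:T_contractivity} applies verbatim) while the $L^2(P_{XO})$-separation is computed via Plancherel, $\|g\|^2_{L^2(P_X)}=\|\calF[g]\ast\calF[\sqrt{p_X}]\|^2_{L^2}$. Choosing $p_X$ to be a product of bump functions makes $\calF[\sqrt{p_X}]$ decay like $e^{-c\sqrt{|\omega|}}$, and the $\zeta$-spacing of the bands then yields approximate frequency-domain orthogonality of the $P_X$-weighted atoms. The factor $(\log n)^{-2s_x-d_x}$ in \eqref{eq:minimax_lower_rate} comes precisely from the requirement $\zeta\asymp(\log n)^2$ needed for this step, not from Fourier tails induced by a spatial window as you conjecture.

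The gap in your plan is the windowing step. If you multiply a Meyer wavelet by a smooth spatial cutoff you lose membership in $\mathrm{HF}(\gamma)$, and \Cref{ass:T_contractivity} as stated gives no control on the resulting low-frequency leakage; you would have to split $f=f_{\mathrm{HF}}+f_{\mathrm{LF}}$ and separately bound $\|Tf_{\mathrm{LF}}\|_{L^2(P_{ZO})}$, which is a genuine additional argument you have not supplied. If instead you keep un-windowed Meyer wavelets, you face the converse problem: the $L^2(P_{XO})$-separation lower bound must be proved with $P_X$ supported on a cube while the wavelets leak outside, again requiring quantitative tail control. Either route may be closable, but the paper sidesteps both issues entirely by moving the $X$-packing to frequency space and engineering $P_X$ so that its Fourier square-root is effectively band-limited; that is the idea your proposal is missing.
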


We remind the reader here of our upper bound in Eq. \eqref{eq:upper_rate}: with $P^{n+\stageonesamples}$-probability at least $1- 40e^{-\tau}$, we have
\begin{align*}
    \left\|\left[\hat{f}_{\lambda}\right] - f_{\ast}\right\|_{L^2(P_{XO})} \lesssim \tau n^{-\frac{\frac{s_x}{d_x}+\eta_1-\eta_0}{1+2(\frac{s_x}{d_x}+\eta_1)+\frac{d_o}{s_o}(\frac{s_x}{d_x}+\eta_1)}} \cdot (\log n)^{\frac{d_x+d_o+1+d_x\eta_0}{2}} .
\end{align*}
There remains a gap between the upper and minimax lower bounds even if we take $\eta_1 = \eta_0$ and ignore the logarithmic terms. We dedicate \Cref{sec:challenge_optimal} to its discussion. 

\begin{rem}[Interpolation between NPIV and nonparametric regression] \label{rem:lb_main_rem}
    Similar to the \Cref{rem:interpolate} of the upper learning rate, 
    our minimax lower learning rate also interpolates between the known optimal $L^2$-rates for NPIV without observed covariates ($d_o = 0$) and anisotropic
    non-parametric regression ($\eta_1 = \eta_0 = 0$). 
\end{rem}

\begin{rem}[Comparison to existing $L^2$-minimax lower bounds for NPIV regression]
The work \citet{chen2011rate} established minimax rate-optimality in $L^2$-norm for NPIV under an \emph{approximation condition} and \emph{link condition}. For a conveniently chosen compact operator $B$, the approximation condition characterizes the smoothness of the structural function by the optimal $L^2$-rate approximation by the eigenvectors of $B$, and the link condition relates the mapping properties of the conditional expectation operator $T$ to the Hilbert scale generated by $B$. This framework subsumes an earlier minimax convergence rate result in \citet{HallHorowitz2005}, and was subsequently instantiated in \citet{Chen_2018} for a $B$ constructed via a wavelet basis, and generalized to kernel instrumental variables in  \citet{meunier2024nonparametricinstrumentalregressionkernel} with a Hilbert scale given by the covariance operator of the RKHS. A crucial limitation of this framework is the link condition can only hold if the singular values of the conditional expectation operator are decaying to zero. Thus the minimax framework given by the link condition is applicable only when $T$ is compact, and is not suitable for our observed covariates setting. 
\end{rem}

We give an outline of our proof for \Cref{thm:lower_rate} to facilitate a deeper
understanding of both the assumptions and the results. The full proof is in \Cref{sec:NPIR_lower_appendix} in the Supplementary. The primary strategy in deriving minimax lower bounds is to construct a family of distributions that are similar enough so that they are statistically indistinguishable, but for which the target function of interest is maximally separated. 
This implies no estimator can have error uniformly smaller than this separation. 

Following prior work in the literature~\citep{chen2011rate,meunier2024nonparametricinstrumentalregressionkernel}, we observe that NPIV-O in Eq.~\eqref{eq:npiv_o_intro} is statistically more challenging than the reduced form non-parametric indirect regression with observed covariates (NPIR-O) with a \emph{known} operator $T:L^2(P_{XO})\to L^2(P_{ZO})$. The NPIR-O model is defined below
\begin{align}\label{eq:npir}
    Y= (Tf_*)(Z, O) + \upsilon,
\end{align}
where $\upsilon$ is a random variable such that $\mathbb{E}[\upsilon\mid Z,O] = 0$, and it satisfies \Cref{ass:subgaussian}, and where $T$ satisfies \Cref{ass:T_injective}, \Cref{ass:T_contractivity} and the associated conditional distribution $P_{X\mid Z,O}$ satisfies \Cref{assn: technical}. We refer the reader to \Cref{sec:npir_reduction} in the Supplementary Material for a detailed definition of the NPIR-O model class. We formally prove in \Cref{lem:npir_o_more_informative} in the Supplementary that it suffices to construct a minimax lower bound for the NPIR-O model. 
To this end, we adapt Theorem 20 of \cite{fischer2020sobolev}, which is itself an adaptation of Proposition 2.3 of \citet{Tsybakov}. Recall that the \emph{Kullback-Leibler divergence} of two probability measures $P_1, P_2$ on some measurable space $(\Omega,\calA)$ is given by $\mathrm{KL}(P_1,P_2) := \int_{\Omega}\log (\frac{\mathrm{d}P_1}{\mathrm{d}P_2} )\;\mathrm{d}P_1$ 
if $P_1$ is absolutely continuous with respect to $P_2$, and $+\infty$ otherwise. 

To apply Theorem 20 of \cite{fischer2020sobolev} on the measurable space $\Omega = (\calZ\times\calO\times\mathbb{R})^{n}$, we construct a family of probability measures $P_0, P_1, \dots, P_M$ over $(\calZ\times\calO\times\mathbb{R})$ that share the same marginal distribution over $(Z,O)$ but different conditional distributions $P_{Y\mid Z, O}$. Our strategy follows the construction in \citet{chen2011rate} and can be explained as follows. We fix a marginal probability measure $P_{ZO}$ over $\calZ\times \calO$. We then propose a family of conditional probability measures $P_{i; Y\mid Z,O}$ indexed by $f_i \in \mathfrak{F}$ for $0\leq i\leq M$. Since $f_i$'s are functions on $\calX\times \calO$, we fix a marginal probability measure $P_X$ on $\calX$ under the constraint that $X$ is independent of $O$. Then, we define a smooth copula to parametrize the dependence between $P_{X}$ and $P_{Z}$, which fully specifies $P_{X\mid Z,O}$. This induces a fixed conditional expectation operator $T : L^2(P_{XO})\to L^2(P_{ZO})$. We then specify $P_{i; Y\mid Z,O}$ for $0\leq i\leq M$ via the following equation:
\begin{align*}
    Y\mid Z = \bz, O = \bo \sim \mathcal{N}((Tf_{i})(\bz, \bo), \sigma^2),
\end{align*}
for a fixed $\sigma>0$, and for any $(\bz,\bo)\in \calZ\times \calO$. $P_i$ denotes the probability measure on $\calZ\times \calO\times \mathbb{R}$ by coupling $P_{i;Y\mid Z,O}$ and $P_{ZO}$. In the above construction, we require $P_{X\mid Z,O}$ to satisfy \Cref{assn: technical}, and all $f_i$'s to satisfy \Cref{ass:f_ast}, namely $f_i\in \mathfrak{S}$. 

Concretely, we fix $P_{ZO}$ to be the Lebesgue measure over $\calZ\times \calO$. Without loss of generality, we take $\calX = [-0.5, 0.5]^{d_x}$ rather than $[0,1]^{d_x}$.  By working with a symmetric domain, we exploit the fact that even functions have real-valued Fourier transforms, which simplify our subsequent calculations. We fix $P_{X}$ via the density function
\begin{talign}\label{eq:p_X_bump_main}
    p_X(\bx)\propto \prod_{i=1}^{d_x} g_i(x_i), \quad g_i(x_i) := \exp\left(-\frac{2}{1 - 4x_i^2}\right) \one_{x_i \in [-0.5, 0.5]},
\end{talign}
where each $g_i$ is a smooth, compactly supported \emph{bump function}. We fix an arbitrary smooth copula~\citep{nelsen2006introduction}, such that $P_{X\mid Z,O}$ satisfies \Cref{assn: technical} and $T$ satisfies \Cref{ass:T_contractivity} with parameter $\eta_1>0$. 

The main technical challenge of the minimax lower bound is the construction of the function class $\mathfrak{F}$ whose each element $f_i$ induces a conditional distribution $P_{X\mid Z, O}$.
Our goal is to design $\mathfrak{F}$ so that it satisfies three desirable properties: 1) we want $\mathfrak{F}\in B_{2,\infty}^{s_x,s_o}(\R^{d_x+d_o})$ as per \Cref{ass:f_ast}. 
2) we want the size of $\mathfrak{F}$ to be large to enable the application of \citet[Theorem 20]{fischer2020sobolev}. 
3) we want all elements of $\mathfrak{F}$ to exhibit high partial Fourier frequency such that after applying $T$, the resulting functions become difficult to distinguish. 
This increases the intrinsic difficulty of the problem and yields a tighter lower bound. 
The construction of such a function class satisfying the first two properties is standard and can be achieved using anisotropic B-splines~\citep{ibragimov1984asymptotic, suzuki2021deep,schmidt2020nonparametric}. 
However, the third property poses a challenge: the Fourier transform of B-splines has full support. 
Consequently, conventional B-splines cannot enforce the desired partial high-frequency restriction. To address this limitation, we convolve the B-splines with a high frequency bandpass filter in the $X$-direction. 
Since convolution acts as a smoothing operator, the function class constructed with such modified B-splines remains in the Besov space. 

To start with, we introduce the definition of anisotropic B-splines~\citep{leisner_nonlinear_wavelet_approximation_2003}.

\begin{defi}[Anisotropic B-spline \citep{leisner_nonlinear_wavelet_approximation_2003}]
\label{defi:ani_b_spliine_main}
The cardinal B-spline of order $\mathfrak{m}\in \mathbb{N}$ is defined as the repeated $\mathfrak{m}$ times convolution $\iota_{\mathfrak{m}} = \iota_{0}\ast \dots \ast \iota_0$ where $\iota_0$ is the indicator function $\one_{[0,1]}$. Let $d$ denote the ambient dimension, and let $\bs = (s_1,\dots, s_d)$ denote a smoothness vector. Define the notation $\underline{s} = \min(s_1,\dots,s_{d})$ and $s_i^\prime := \frac{\underline{s}}{s_i}$ for $1\leq i\leq d$.  The (anisotropic) B-spline  of order $\mathfrak{m}$ with resolution $\mathfrak{K} \in \N_+$ and location vector $ \bl \in  \prod_{i=1}^{d}\{-\mathfrak{m},-\mathfrak{m}+1,\dots, 2^{\left\lfloor \mathfrak{K} s_i^\prime \right\rfloor} \}$ is defined as $M_{\mathfrak{K},\bl}(\bx) = \prod_{i=1}^{d} \iota_{\mathfrak{m}} (2^{\lfloor \mathfrak{K} s_i^\prime \rfloor}x_i - \ell_i)$.
\end{defi}
\noindent

Let $\mathfrak{m} > s_x \vee s_o$ be a non-negative even integer. 
The resolution $\mathfrak{K} = \mathfrak{K}(n)$ is defined later in Eq.~\eqref{eq:choice_k_lb}, such that $\mathfrak{K} \to \infty$ as $n\to\infty$. 
The B-spline basis in the $O$-direction follows the standard construction. Define the set of location vectors $\calL_O$ and the associated B-splines
\begin{talign}\label{eq:main_calL_defn}
   \calL_O := \left\{ 2 \mathfrak{m} \N\cap \left\{0, \dots, 2^{\left\lfloor \mathfrak{K}\frac{\underline{s}}{s_o}\right\rfloor}\right\}\right\}^{d_o} , \quad M_{\mathfrak{K},\bl_o}(\bo) = \prod_{j=1}^{d_o}\iota_{\mathfrak{m}}\left(2^{\left\lfloor \frac{\mathfrak{K}\underline{s}}{s_o}\right\rfloor} o_j - \ell_{o,j}\right).
\end{talign}
Note that here we enforce all location vectors to be a multiple of $2\mathfrak{m}$ such that the $\{M_{\mathfrak{K},\bl_o}\}_{\bl_o\in\calL_O}$ have disjoint supports, which will simplify the calculations later on. 
In contrast, the basis functions in the $X$-direction are constructed by convolving a standard B-spline $M_{0,-\frac{\mathfrak{m}}{2}}$ on $X$ with the inverse Fourier transform of the indicator function $\one_{\bl_x}$, for $\bl_x \in \calL_X$:  
\begin{talign}\label{eq:main_Omega_defi}
    \calL_X &:= \left\{0,1,\dots, \left\lfloor \frac{0.8\pi}{\zeta}2^{\mathfrak{K}\frac{\underline{s}}{s_x}}\right\rfloor\right\}^{d_x} , \quad \Omega_{\mathfrak{K}\bl_x}(\bx) := \left(M_{0, -\frac{\mathfrak{m}}{2}}\ast \calF^{-1}[\one_{\bl_x}]\right)\left(2^{\frac{\mathfrak{K}\underline{s}}{s_x}}\bx\right) . 
\end{talign}
where $\one_{\bl_x}$ is the indicator function over the following hyper-rectangle: 
\begin{align}\label{eq:mask_main}
    I_{\bl_x} := \bigtimes_{j=1}^{d_x}\left[1.1\pi + \zeta \ell_{x,j}2^{-\mathfrak{K}\frac{\underline{s}}{s_x}}, 1.1\pi + (\zeta \ell_{x,j} + 1)2^{-\mathfrak{K}\frac{\underline{s}}{s_x}}\right]. 
\end{align}
$\zeta>0$ is a width hyperparameter that determines the spacing of different hyper-rectangles; its value will be specified later.
For sufficiently large $n\geq 1$, since $\ell_{x,j} \leq \frac{0.8\pi}{\zeta}2^{\frac{\mathfrak{K}\underline{s}}{s_x}}$, we know that $ 1.1\pi + (\zeta \ell_{x,j} + 1)2^{-\mathfrak{K}\frac{\underline{s}}{s_x}} \leq 1.9\pi + 2^{-\mathfrak{K}\frac{\underline{s}}{s_x}} \leq 1.95\pi$. Thus we have $I_{\bl_x}\subseteq [1.1\pi, 1.95\pi]^{d_x}$. 

The main consequence of this construction is revealed via the convolution theorem~\citep[Theorem 9.2]{rudin1987real}, which gives
\begin{align}
\label{eq:f_x_omega_klx}
    \calF[\Omega_{\mathfrak{K}\bl_x}](\bomega_x) = \calF[M_{\mathfrak{K},-\frac{\mathfrak{m}}{2}}](\bomega_x) \cdot \one_{\bl_x}(2^{-\frac{\mathfrak{K}\underline{s}}{s_x}}\bomega_x).
\end{align}
As a result, we have that $\mathrm{supp}(\calF[\Omega_{\mathfrak{K}\bl_x}])= 2^{\frac{\mathfrak{K}\underline{s}}{s_x}} \cdot I_{\bl_x} \subseteq [\pi 2^{\frac{\mathfrak{K}\underline{s}}{s_x}}, 2\pi 2^{\frac{\mathfrak{K}\underline{s}}{s_x}}]^{d_x}$. Since $\mathfrak{K} \to \infty$ as $n \to \infty$, the construction guarantees that $\Omega_{\mathfrak{K}\bl_x}$ only has high-frequency spectrum, thereby fulfilling the third desirable property for $\mathfrak{F}$.

\begin{figure}
\vspace{-10pt}
    \centering
    \includegraphics[width=0.65\linewidth]{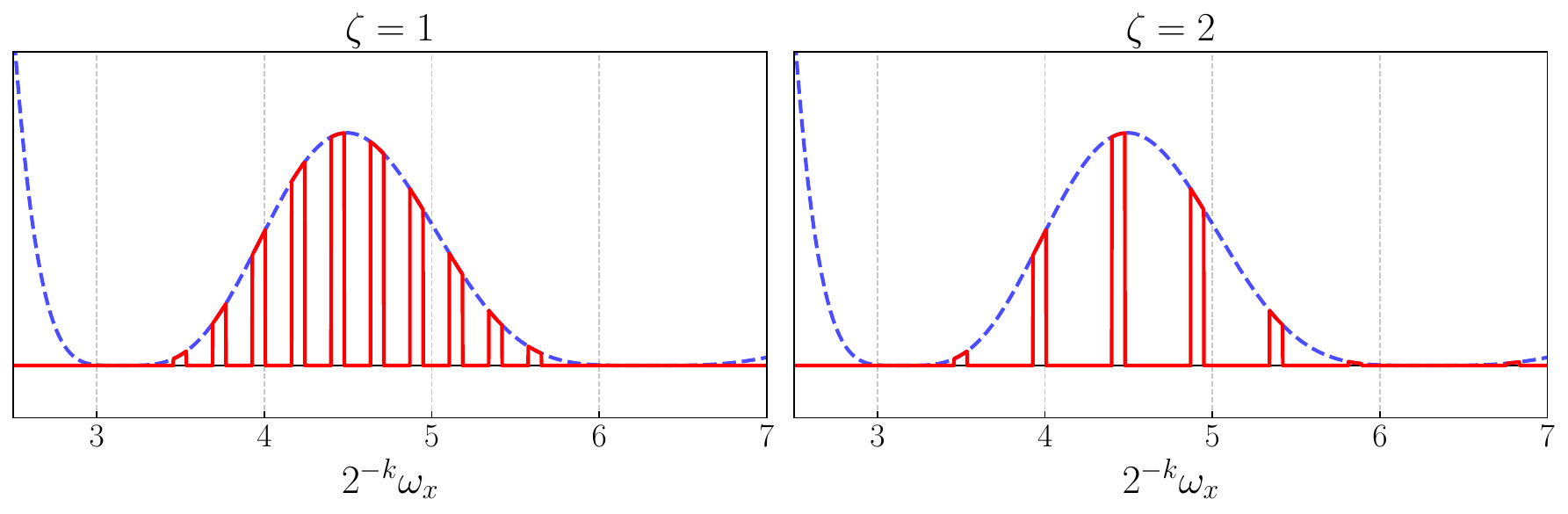}
    \vspace{-5pt}
    \caption{
    This figure illustrates the partial Fourier transform of $f_\bv$ in Eq.~\eqref{eq:mathfrak_F}, where we take $d_x = 1$, $\mathfrak{K} = 2$, $\mathfrak{m} = 4$, $s_x = \underline{s} \leq s_o$.  
    The dashed blue line represents the Fourier transform of the B-spline $M_{\mathfrak{K},-2}$. 
    The red line shows the result of applying frequency masks to this B-spline: $\sum_{\bl_x}\beta_{\bv(\bl_x,\bl_o)} \calF[M_{\mathfrak{K},-2}](\omega_x) \cdot \one_{\bl_x}(2^{-\mathfrak{K}}\omega_x)$. The Fourier transform of $f_\bv(\cdot, \bo)$ is equal to this result up to scaling factors dependent only on $\bo$. 
    }
    \label{fig:masks_visualisation}
    \vspace{-15pt}
\end{figure}

Define the basis $\Omega_{\mathfrak{K}(\bl_x,\bl_o)}(\bx,\bo) := \Omega_{\mathfrak{K}\bl_x}(\bx) \cdot M_{\mathfrak{K},\bl_o}(\bo)$ and the set of all location vectors $\calL=\calL_X\times\calL_O$. 
We have $|\calL| \asymp \zeta^{-d_x}2^{\mathfrak{K}\underline{s}(\frac{d_x}{s_x} + \frac{d_o}{s_o})}$. 
We construct a function class $\mathfrak{F}$ as follows:
\begin{talign}\label{eq:mathfrak_F}
    \mathfrak{F}:= \left\{f_{\bv}: f_{\bv}(\bx,\bo) =  2^{-\mathfrak{K}\underline{s}\left(1 - \frac{d_x}{2s_x}\right)} \sum_{(\bl_x,\bl_o)\in \calL} \beta_{\bv(\bl_x,\bl_o)}\Omega_{\mathfrak{K}(\bl_x,\bl_o)}(\bx,\bo)\,\middle\vert\,  \bv\in \{0,1\}^{|\calL|}\right\},
\end{talign}
where $\beta_{\bv(\bl_x,\bl_o)}\in\{0,1\}$ is the value assigned by $\bv\in \{0,1\}^{|\calL|}$ to the location vector $(\bl_x,\bl_o)\in\calL$. 
Specifically, for each $f_\bv\in\mathfrak{F}$ and for a fixed $\bo\in\calO$, we can see that $f_\bv(\cdot, \bo)$ is constructed by applying a sum of \emph{frequency masks} indexed by $\bl_x\in\calL_X$ to the original B-spline $M_{\mathfrak{K},-\frac{\mathfrak{m}}{2}}$,
\begin{align*}
    \calF[f_{\bv}(\cdot, \bo)](\bomega_x) = \calF\left[M_{\mathfrak{K},-\frac{\mathfrak{m}}{2}}\right](\bomega_x) \cdot \Big(  \sum_{\bl_o\in \calL_O} \underbrace{ \sum_{\bl_x \in \calL_X} \beta_{\bv(\bl_x,\bl_o)} \one_{\bl_x}\left(2^{-\frac{\mathfrak{K}\underline{s}}{s_x}}\bomega_x\right)}_{\text{frequency masks}} M_{\mathfrak{K},\bl_o}(\bo) \Big).
\end{align*}
\Cref{fig:masks_visualisation} is provided to illustrate this effect of frequency masking. 

We prove in Eq. \eqref{eq:f_bv_besov_norm_estimate} in the Supplementary that $\mathfrak{F}\subseteq \mathbb{U}(B^{s_x,s_o}_{2,\infty}(\mathbb{R}^{d_x+d_o})) \cap L^\infty(\R^{d_x+d_o}) \cap L^1(\R^{d_x+d_o}) \cap C^0(\R^{d_x+d_o}) \subseteq \mathfrak{S}$, satisfying \Cref{ass:f_ast}. 
For each $f_\bv \in \mathfrak{F}$, we construct the conditional distribution $P_\bv(\cdot\mid \bz,\bo) := \calN((Tf_\bv)(\bz,\bo), \sigma^2)$ as the normal distribution on $\mathbb{R}$ with mean $(Tf_\bv)(\bz,\bo)$ and variance $\sigma^2$, such that $P_{\bv}$ satisfies \Cref{ass:subgaussian}. Together with the marginal distribution $P_{ZO}$ over $\calZ\times\calO$, we have constructed a family of distributions $\{
P_\bv, \bv \in \{0,1\}^{|\calL|}\}$ over $\calZ\times\calO\times\R$ with each $P_\bv$ associated to a $f_\bv \in \mathfrak{F}$. 
Next, by the Gilbert-Varshamov Bound~\citep[Lemma 2.9]{Tsybakov}, we prove in Eq.~\eqref{eq:f_l2_separate} and Eq.~\eqref{eq:kl_upper_bound_small} in the Supplementary that there exists a subset $V_\mathfrak{K} \subseteq \{0,1\}^{|\calL|}$ with 
$|V_\mathfrak{K}|\geq 2^{\frac{\calL}{8}}$ such that for any $f_\bv, f_{\bv^\prime} \in \mathfrak{F}_{\mathrm{pruned}} = \{f_{\bv}\mid \bv\in V_\mathfrak{K}\}$, there is 
\begin{align*}
    \|f_{\bv} - f_{\bv'}\|^2_{L^2(P_{XO})} \geq 2^{-2\mathfrak{K}\underline{s}} \zeta^{-d_x}, \quad \mathrm{KL} \left(P_{\boldsymbol{v}}^{\otimes n}, P_{\boldsymbol{0}}^{\otimes n}\right) \leq n 2^{-2\mathfrak{K}\underline{s}(\frac{d_x}{s_x}\eta_1 + 1)} . 
\end{align*}
Therefore, we have established that the function class $\mathfrak{F}_{\mathrm{pruned}}$ satisfies all the three properties as desired, namely $\mathfrak{F}_{\mathrm{pruned}}\subset \mathfrak{G}$, functions $f_\bv \in \mathfrak{F}_{\mathrm{pruned}}$ are well-separated in $L^2(P_{XO})$ yet statistically indistinguishable. 
Finally, we take $\zeta = (\log n)^2$, we apply \citet[Theorem 20]{fischer2020sobolev} and the general reduction scheme in \citet[Section 2.2]{Tsybakov} to obtain the desired minimax lower bound. 

\section{On the gap between the upper bound and minimax lower bound}
\label{sec:challenge_optimal}

As explained in \Cref{sec:lower_main}, in the general setting where $\frac{d_x}{s_x}>0$ and $\frac{d_o}{s_o}>0$, a gap arises between the upper bound given in \Cref{thm:upper_rate} and the minimax lower bound given in \Cref{thm:lower_rate}. By setting $\eta_0 = \eta_1 = \eta$ in \Cref{ass:T_contractivity} and \Cref{ass:T_frequency_ill_posedness},  which gives a precise characterization of the \emph{partial} smoothing effect of $T$, we obtain the following upper and lower bounds (ignoring the logarithmic terms for simplicity): 
\begin{align*}
    \text{Lower Bound:} \quad n^{-\frac{\frac{s_x}{d_x}}{1 + 2(\frac{s_x}{d_x}+\eta) + \frac{d_o}{s_o}\frac{s_x}{d_x}}} , \quad\quad \text{Upper Bound:} \quad n^{-\frac{\frac{s_x}{d_x}}{1 + 2(\frac{s_x}{d_x}+\eta) + \frac{d_o}{s_o}\frac{s_x}{d_x} + \frac{d_o}{s_o}\eta}} 
\end{align*}
Note that the denominator in the exponent of the upper bound contains an extra $\frac{d_o}{s_o}\eta$ term, which vanishes when $T$ has no \emph{partial} smoothing effect ($\eta=0$) or when the ratio $\frac{d_o}{s_o}$ is zero. 

The gap between the upper bound and the minimax lower bound arises due to the existence of observed covariates $O$ in the analysis of the approximation error term in the upper bound, 
\begin{align*}
    \|[f_{\lambda}] - f_{\ast}\|_{L^2(P_{XO})},
\end{align*}
where $f_{\lambda}$ is defined in Eq. \eqref{eq:f_lambda_main}. To provide an in-depth discussion, we begin with a warm-up in \Cref{sec:analysis_npiv} by considering the standard KIV setting without observed covariates, where our upper and minimax lower bounds match. We then move on to KIV-O in \Cref{sec:analysis_npivo} and demonstrate how the presence of observed covariates $O$ causes a gap to emerge.

\subsection{Analysis of the approximation error in KIV}\label{sec:analysis_npiv} 
In NPIV, the conditional expectation operator is $T: L^2(P_{X}) \to L^2(P_{Z})$, $f \mapsto \E[f(X)\mid Z]$.
Hence $f_{\lambda}$ defined in Eq.~\eqref{eq:f_lambda_main} reduces to the following
\begin{talign*}
    f_{\lambda} := \argmin_{f\in \calH_{X,\gamma_x}}\lambda\|f\|^2_{\calH_{X,\gamma_x}} + \|T([f]-f_{\ast})\|^2_{L^2(P_{Z})}.
\end{talign*}
We first employ the fact that $f_{\lambda}$ is the minimizer, hence for some $f_{\mathrm{aux}} \in \calH_{X,\gamma_x}$,
\begin{align*}
    \|T([f_{\lambda}] - f_{\ast})\|^2_{L^2(P_{Z})} \leq \lambda \|f_{\mathrm{aux}}\|^2_{\calH_{X,\gamma_x}} + \|T([f_{\mathrm{aux}}] - f_{\ast})\|^2_{L^2(P_{Z})} .
\end{align*}
Here, $f_{\mathrm{aux}}$ is defined as
$f_{\mathrm{aux}} := f_{\ast, \mathrm{low}} + K_{\gamma_x}\ast f_{\ast, \mathrm{high}}$ , where $K_{\gamma_x}$ is defined, following \citet{hang2021optimal,eberts2013optimal}, as
\begin{talign}\label{eq:K_function_main}
    K_{\gamma_x}(\bx) := \sum_{j=1}^r\binom{r}{j}(-1)^{1-j} \frac{1}{(j\gamma_x)^{d}} \left(\frac{2}{\pi}\right)^{\frac{d}{2}}  \exp\left( -2 \sum_{i=1}^{d} \frac{x_i^2}{(j\gamma_x)^2} \right) ,
\end{talign}
and $f_{\ast, \mathrm{low}}, f_{\ast ,\mathrm{high}}$ are defined such that $f_{\ast, \mathrm{low}}$ (resp. $f_{\ast, \mathrm{high}}$) corresponds to the low-frequency (resp. high-frequency) component of $f_{\ast} = f_{\ast, \mathrm{low}} + f_{\ast ,\mathrm{high}}$:
\begin{align*}
    \calF[f_{\ast, \mathrm{low}}](\bomega_x) &= \calF[f_{\ast}](\bomega_x)\cdot \one[\bomega_x: \|\bomega_x\|_{2}\leq \gamma_x^{-1}]\\
    \calF[f_{\ast, \mathrm{high}}](\bomega_x) &= \calF[f_{\ast}](\bomega_x)\cdot \one[\bomega_x: \|\bomega_x\|_{2}\geq \gamma_x^{-1}]. 
\end{align*}
To see why $f_{\mathrm{aux}}\in\calH_{X,\gamma_x}$, note that 
$\|f_{\ast,\mathrm{low}}\|^2_{\calH_{X,\gamma_x}} \lesssim \int_{\mathbb{R}^{d_x}}|\calF[f_{\ast}](\bomega_x)|^2\;\mathrm{d}\bomega_x = \|f_{\ast}\|^2_{L^2(\mathbb{R}^{d_x})} < \infty$ and $K_{\gamma_x}\ast f_{\ast, \mathrm{high}} \in \calH_{X,\gamma_x}$ proved by \citet{eberts2013optimal}.
Then we have
\begin{align}
    \|T([f_{\lambda}] - f_{\ast})\|^2_{L^2(P_{Z})} &\leq \lambda \|f_{\mathrm{aux}}\|^2_{\calH_{X,\gamma_x}} + \|T(f_{\ast,\mathrm{high}} - K_{\gamma_x}\ast f_{\ast, \mathrm{high}})\|^2
    _{L^2(P
    _{Z})} \nonumber \\
    &\stackrel{(\ast)}{\leq} \lambda \|f_{\mathrm{aux}}\|^2_{\calH_{X,\gamma_x}} + \gamma_x^{2d_x\eta}\|f_{\ast,\mathrm{high}} - K_{\gamma_x}\ast f_{\ast, \mathrm{high}}\|^2
    _{L^2(P_{X})} \label{eq:key_step} \\
    &\stackrel{(\ast\ast)}\lesssim \lambda \gamma_x^{-d_x} + \gamma_x^{2d_x\eta + 2s_x} \nonumber .
\end{align}
where $(\ast)$ follows by \Cref{ass:T_contractivity}, and $(\ast\ast)$ follows by the same derivations in \citet[Theorem 2.2, Theorem 2.3]{eberts2013optimal}. 
Eq.~\eqref{eq:key_step} is the key step which would require significant modifications in the setting of NPIV-O due to the existence of observed covariates. Finally, we have
\begin{align*}
    &\quad \|[f_{\lambda}] - f_{\ast}\|^2_{L^2(P_{X})}\nonumber\\
    &\leq \|[f_{\lambda}] - [f_{\mathrm{aux}}]\|^2_{L^2(P_{X})} + \|[f_{\mathrm{aux}}] - f_{\ast}\|^2_{L^2(P_{X})}\nonumber\\
    &\stackrel{(a)}{\leq} \gamma_{x}^{-2d_x\eta} \|T[f_{\lambda}] - T [f_{\mathrm{aux}}]\|^2_{L^2(P_{Z})} + \|[f_{\mathrm{aux}}] - f_{\ast}\|^2_{L^2(P_{X})}\nonumber\\
    &\leq \gamma_{x}^{-2d_x\eta}\left(\|T([f_{\lambda}] - f_{\ast})\|^2_{L^2(P_{Z})} + \|T([f_{\mathrm{aux}}] - f_{\ast})\|^2_{L^2(P_{Z})}\right) + \|[f_{\mathrm{aux}}] - f_{\ast}\|^2_{L^2(P_{X})}\nonumber\\
    &\stackrel{(b)}{\leq} \gamma_{x}^{-2d_x\eta}\left(\|T([f_{\lambda}] - f_{\ast})\|^2_{L^2(P_{Z})} + \gamma_x^{2d_x\eta}\|[f_{\mathrm{aux}}] - f_{\ast}\|^2_{L^2(P_{X})}\right) + \|[f_{\mathrm{aux}}] - f_{\ast}\|^2_{L^2(P_{X})}\nonumber\\
    &\lesssim \gamma_x^{-2d_x\eta} \left(\lambda \gamma_x^{-d_x} + \gamma_x^{2d_x\eta + 2s_x} \right) + \steve{2} \|[f_{\mathrm{aux}}] - f_{\ast}\|^2_{L^2(P_{X})} \nonumber \stackrel{(c)}{\lesssim} n^{-\frac{2s_x}{d_x + 2s_x + 2\eta d_x}} .
\end{align*}
In the above derivations, $(a)$ follows by an application of \Cref{ass:T_frequency_ill_posedness} and \Cref{lem:l2toprojected_illp} since $\|f_\lambda\|^2 \leq \lambda^{-1}\|Tf_\ast\|_{L^2(P_Z)}^2 \lesssim n$ by the optimality of $f_\lambda$ and $\|f_{\text{aux}}\|\leq n$ by the same derivations as in Eq.~\eqref{eq:f_aux_norm} in the Supplement. The second term of Eq.~\eqref{eq:term_lemma_D_18} in the Supplement is subsumed by the following choice of $\gamma_x$. 
Furthermore, $(b)$ follows from \Cref{ass:T_contractivity}, and $(c)$ follows from using Eq.~\eqref{eq:f_f_aux_diff} in the Supplement and choosing $
\lambda = n^{-1}$, $\gamma_x = n^{-\frac{1}{d_x + 2s_x + 2\eta d_x}}$.
The upper bound on the approximation error above is \emph{optimal} in the sense that \steve{it matches the minimax lower bound of NPIV with Besov targets (see e.g \citep{Chen_2018} and our Theorem 4.2 with $d_o=0$)}. 

\subsection{Analysis of the approximation error in KIV-O} \label{sec:analysis_npivo} 
We now proceed to our NPIV-O setting, and demonstrate how the existence of observed covariates $O$ fundamentally changes the problem. As above, to upper bound the approximation error $\|[f_{\lambda}] - f_{\ast}\|_{L^2(P_{XO})}$, we first need to upper bound the projected approximation error $\|T f_{\lambda} - T f_{\ast}\|_{L^2(P_{ZO})}$. To this end, we employ the fact that $f_{\lambda}$ is the minimizer in Eq.~\eqref{eq:f_lambda_main}, hence for some $f_{\mathrm{aux}} \in \calH_{\gamma_x,\gamma_o}$,
\begin{align*}
    \|T([f_{\lambda}] - f_{\ast})\|^2_{L^2(P_{ZO})} \leq \lambda \|f_{\mathrm{aux}}\|^2_{\calH_{\gamma_x,\gamma_o}} + \|T([f_{\mathrm{aux}}] - f_{\ast})\|^2_{L^2(P_{ZO})} .
\end{align*}
We construct such an auxiliary function $f_{\mathrm{aux}}$. For any $\bo\in \calO$, we construct $f_{\ast,\mathrm{low}}(\cdot,\bo)$ and $f_{\ast, \mathrm{high}}(\cdot,\bo)$ such that 
\begin{align}
    \calF[f_{\ast,\mathrm{low}}(\cdot, \bo)](\bomega_x) &= \calF[f_{\ast}(\cdot,\bo)](\bomega_x)\cdot \one[\bomega_x:\|\bomega_x\|_2\leq \gamma_x^{-1}]\label{eq:f_ast_low}\\
    \calF[f_{\ast,\mathrm{high}}(\cdot, \bo)](\bomega_x) &= \calF[f_{\ast}(\cdot,\bo)](\bomega_x)\cdot \one[\bomega_x:\|\bomega_x\|_2\geq \gamma_x^{-1}]\label{eq:f_ast_high}.
\end{align}
Note that $f_{\ast, \mathrm{high}} = f_{\ast} - f_{\ast, \mathrm{low}}$. Despite $ f_{\ast,\mathrm{low}}(\cdot, \bo) \in \calH_{X,\gamma_x}$ for any $\bo\in\calO$, a crucial difference with NPIV is that, $f_{\ast,\mathrm{low}} \notin \calH_{\gamma_x,\gamma_o}$ since $f_{\ast, \mathrm{low}}$ is only constructed by a cut-off with respect to the partial Fourier spectrum of on $X$. Hence, to construct $f_{\mathrm{aux}}$ we convolve $f_{\ast, \mathrm{low}}$ again with $K_{\gamma_o}$:
\begin{align}\label{eq:f_aux}
    f_{\mathrm{aux}} = f_{\ast,\mathrm{low}} \ast K_{\gamma_o} + f_{\ast,\mathrm{high}}\ast K_{\gamma_x,\gamma_o}\in\calH_{\gamma_x,\gamma_o}.
\end{align}
Here, $\;K_{\gamma_x,\gamma_o}(\bx, \bo) := K_{\gamma_x}(\bx)\cdot K_{\gamma_o}(\bo)$. Proceeding from the above, we have
\begin{align}
    &\quad \|T([f_{\lambda}] - f_{\ast})\|^2_{L^2(P_{ZO})} \leq \lambda \|f_{\mathrm{aux}}\|^2_{\calH_{\gamma_x,\gamma_o}} \nonumber \\
    &\qquad\qquad + \|T(f_{\ast, \mathrm{low}} - f_{\ast,\mathrm{low}} \ast K_{\gamma_o})\|_{L^2(P_{ZO})}^2 + \|T(f_{\ast, \mathrm{high}} - f_{\ast,\mathrm{high}} \ast K_{\gamma_x,\gamma_o})\|_{L^2(P_{ZO})}^2 \label{eq:key_term_npivo}  \\
    &\leq \lambda \|f_{\mathrm{aux}}\|^2_{\calH_{\gamma_x,\gamma_o}} + \|f_{\ast, \mathrm{low}} - f_{\ast,\mathrm{low}} \ast K_{\gamma_o}\|_{L^2(P_{XO})}^2 \nonumber \\
    &\qquad\qquad + \gamma_x^{2d_x\eta}\|f_{\ast, \mathrm{high}} - f_{\ast,\mathrm{high}} \ast K_{\gamma_x,\gamma_o}\|_{L^2(P_{XO})}^2. \nonumber 
\end{align}
The last inequality follows by Jensen's inequality for the second term and \Cref{ass:T_contractivity} for the third term. 
We would like to highlight here that the second term in Eq.~\eqref{eq:key_term_npivo} is absent in Eq.~\eqref{eq:key_step} for the NPIV setting when there are no observed covariates $O$. Unlike the high frequency term in Eq.~\eqref{eq:key_term_npivo}, we cannot employ the Fourier measure of \emph{partial} contractivity in \Cref{ass:T_contractivity} because $f_{\ast, \mathrm{low}} - f_{\ast,\mathrm{low}} \ast K_{\gamma_o}$ only contains low frequency spectrum on $X$ by construction.
From the above derivations, we can also deduce 
\begin{align*}
    \|T([f_{\mathrm{aux}}] - f_{\ast})\|^2_{L^2(P_{ZO})} &\lesssim \|f_{\ast, \mathrm{low}} - f_{\ast,\mathrm{low}} \ast K_{\gamma_o}\|_{L^2(P_{XO})}^2 \\
    &\qquad + \gamma_x^{2d_x\eta}\|f_{\ast, \mathrm{high}} - f_{\ast,\mathrm{high}} \ast K_{\gamma_x,\gamma_o}\|_{L^2(P_{XO})}^2 .
\end{align*}
To upper bound the approximation error $\|[f_{\lambda}] - f_{\ast}\|_{L^2(P_{XO})}$, we notice that 
\begin{align*}
    &\quad \|[f_{\lambda}] - f_{\ast}\|_{L^2(P_{XO})}^2 \\
    &\leq \|[f_{\lambda}] - [f_{\mathrm{aux}}]\|^2_{L^2(P_{XO})} + \|[f_{\mathrm{aux}}] - f_{\ast}\|^2_{L^2(P_{XO})} \\
    &\leq \gamma_x^{-2d_x\eta} \|T([f_{\lambda}] - [f_{\mathrm{aux}}])\|_{L^2(P
    _{ZO})}^2 + \|[f_{\mathrm{aux}}] - f_{\ast}\|_{L^2(P_{XO})}^2 \\
    &\lesssim \gamma_{x}^{-2d_x\eta} \|T([f_{\lambda}] - f_{\ast})\|^2_{L^2(P_{ZO})} + \gamma_{x}^{-2d_x\eta} \|T [f_{\mathrm{aux}}] - T f_{\ast}\|^2_{L^2(P_{ZO})} + \|[f_{\mathrm{aux}}] - f_{\ast}\|^2_{L^2(P_{XO})} \\
    &\lesssim \gamma_{x}^{-2d_x\eta} \Big( \lambda \|f_{\mathrm{aux}}\|^2_{\calH_{\gamma_x,\gamma_o}} + \|f_{\ast, \mathrm{low}} - f_{\ast,\mathrm{low}} \ast K_{\gamma_o}\|_{L^2(P_{XO})}^2 \\
    &\qquad\qquad + \gamma_x^{2d_x\eta}\|f_{\ast, \mathrm{high}} - f_{\ast,\mathrm{high}} \ast K_{\gamma_x,\gamma_o}\|_{L^2(P_{XO})}^2 \Big) .
\end{align*}
The last inequality holds by plugging into the upper bound on $\|T([f_{\lambda}] - f_{\mathrm{aux}})\|_{L^2(P_{ZO})}^2$ and $\|T [f_{\mathrm{aux}}] - T f_{\ast}\|^2_{L^2(P_{ZO})}$ derived above.
Notice that the term $\|f_{\ast, \mathrm{low}} - f_{\ast,\mathrm{low}} \ast K_{\gamma_o}\|_{L^2(P_{XO})}$ is unnecessarily inflated by the Fourier measure of \emph{partial} ill-posedness $\gamma_x^{-d_x\eta}\gg 1$ (\Cref{ass:T_frequency_ill_posedness}) even if there is no smoothing effect of $T$ acting on $O$. We obtain
\begin{align*}
    \|[f_{\lambda}] - f_{\ast}\|_{L^2(P_{XO})}^2 &\lesssim \gamma_{x}^{-2d_x\eta} \left( \lambda \gamma_x^{-d_x} \gamma_o^{-d_o} +  \gamma_o^{2s_o} + \gamma_x^{2s_x + 2d_x\eta} \right) + \gamma_o^{2s_o} + \gamma_x^{2s_x} \\
    &\lesssim n^{-\frac{\frac{s_x}{d_x}}{1 + 2(\frac{s_x}{d_x}+\eta) + \frac{d_o}{s_o}(\frac{s_x}{d_x}+\eta)}}.  
\end{align*}
The last step holds by choosing $\lambda, \gamma_x, \gamma_o$ as in \Cref{thm:upper_rate}. The upper bound on the approximation error does not match the minimax lower bound in \Cref{thm:lower_rate}.

We conclude this section by offering a more practical perspective on why the gap emerges. 
Hyperparameters including the kernel lengthscales are selected via cross-validation in practice. 
For the KIV-O estimator $\hat{f}_{\lambda}$ computed from Stage II samples $\mathcal{D}_2=\{(\mathbf{z}_i, \mathbf{o}_i, y_i)\}_{i=1}^n$, the kernel lengthscales $\gamma_x, \gamma_o$ are selected by minimizing the following cross-validation criterion. 
\begin{talign}\label{eq:cv_projected}
    \mathrm{CV}(\gamma_x,\gamma_o,\lambda) = \frac{1}{n}\sum_{i=1}^{n}(y_i - \langle \hat{f}_{-i, \gamma_x,\gamma_o,\lambda}, \hat{F}_{\xi}(\bz_i,\bo_i)\rangle_{\calH_{\gamma_x,\gamma_o}})^2,
\end{talign}
where $\hat{f}_{-i, \gamma_x,\gamma_o,\lambda}$ denotes the version of $\hat{f}_{\gamma_x,\gamma_o,\lambda}$ computed from $n-1$ samples from $\mathcal{D}_2$ excluding its $i$-th sample. 
This cross-validation criterion $\mathrm{CV}(\gamma_x,\gamma_o,\lambda)$ is a consistent estimator for the \emph{projected} risk $\|T([\hat{f}_\lambda] - f_*)\|_{L^2(P_{ZO})}$ and has been used in many 2SLS approaches~\citep{hartford2017deep, xu2021deep, mastouri2021proximal, xu2025kernelsingleproxycontrol}. 

Similarly, the choice of lengthscales $\gamma_x,\gamma_o, \lambda$ in our upper bound analysis is also obtained by minimizing the projected risk $\|T([\hat{f}_\lambda] - f_*)\|_{L^2(P_{X O})}$ which results 
$\gamma_x^{s_x+d_x\eta} = \gamma_o^{s_o}$. 
This can be contrasted with the choice of kernel lengthscales in anisotropic kernel ridge regression which imposes a different balance condition $\gamma_x^{s_x} = \gamma_o^{s_o}$~\citep{hang2021optimal}. The extra $d_x\eta$ in the exponent of $\gamma_x$ arises due to the partial smoothing effect of $T$, which maps a function that is $(s_x, s_o)$-smooth on $\mathcal{X} \times \mathcal{O}$ to a function that is $(s_x + d_x\eta, s_o)$-smooth on $\mathcal{Z} \times \mathcal{O}$. 
On the other hand, our minimax lower bound requires constructing B-splines in Eq.~\eqref{eq:Bspline_main} in the Supplement with resolution $\mathfrak{K}$ and $J_x := 2^{\lfloor\frac{\mathfrak{K} s}{s_x}\rfloor}$ and $J_o := 2^{\lfloor\frac{\mathfrak{K} s}{s_o}\rfloor}$. $J_x, J_o$ play a role analogous to $\gamma_x, \gamma_o$ (see \Cref{rem:connection_smoip}), but they satisfy the balance condition $J_x^{s_x} \asymp J_o^{s_o}$. 
This discrepancy between the kernel lengthscales balance condition and B-spline lengthscales balance condition gives rise to the gap between our upper and lower bound.
Simply enforcing $\gamma_x^{s_x} = \gamma_o^{s_o}$ in our upper bound analysis would result in a slower rate.

\section{Conclusion}

We study nonparametric instrumental variable regression with observed covariates (NPIV-O), a setting that generalizes NPIV by incorporating observed covariates to enable heterogeneous treatment effect estimation. 
The conditional expectation operator $T$ behaves as a partial identity operator, which makes NPIV-O a hybrid of NPIV and NPR. 
We prove an upper bound for kernel 2SLS and the first minimax lower bound. 
Our upper and lower bounds interpolate between the known optimal rates for NPIV and NPR, and adapt to the anisotropic smoothness of $f_\ast$. 
Our analysis reveals a gap between the upper and lower bounds in the general setting, and closing this gap remains an open direction for NPIV-O.

\bibliographystyle{plainnat}
\bibliography{causal_bib}

\begin{appendix}
\section{Examples for the partial smoothing effect of T}
\label{sec:appendix_examples}
In this section, we extend \Cref{ass:T_frequency_ill_posedness} and \Cref{ass:T_contractivity} from the main text—on the Fourier measure of partial ill-posedness and on the Fourier measure of partial contractivity of the operator $T$, respectively—to the space of distributions. This allows us to apply these assumptions to periodic functions and to construct an explicit example verifying \Cref{ass:T_contractivity}. First, we prove the following Lemma relating \Cref{ass:T_frequency_ill_posedness} and \Cref{ass:T_contractivity}. 
\begin{lem}\label{lem:eta_0_eta_1_appendix}
If Assumption \ref{ass:T_frequency_ill_posedness} and \ref{ass:T_contractivity} hold simultaneously, and $P_{XO}$ is absolutely continuous with respect to the Lebesgue measure on $\calX\times\calO$, then $\eta_0\geq \eta_1$.
\end{lem}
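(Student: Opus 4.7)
The plan is to show $\eta_0 \geq \eta_1$ by exhibiting, for every pair $0 < \gamma < \gamma' < 1$, a single nonzero test function $f$ that lies simultaneously in $\mathrm{LF}(\gamma) \cap \mathrm{HF}(\gamma') \cap L^\infty(P_{XO})$. Applying Assumption \ref{ass:T_frequency_ill_posedness} at scale $\gamma$ and Assumption \ref{ass:T_contractivity} at scale $\gamma'$ to this single $f$ yields a two-sided estimate on $\|Tf\|_{L^2(P_{ZO})}$; letting $\gamma' \downarrow \gamma$ and then $\gamma \downarrow 0$ will force $\eta_0 \geq \eta_1$.

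For the construction, I would fix a smooth, even, non-negative bump function $\chi : \R^{d_x} \to \R$, not identically zero, whose support lies in the open annulus $\{\bomega_x : \gamma'^{-1} < \|\bomega_x\|_2 < \gamma^{-1}\}$, and set $g := \calF^{-1}[\chi]$. By Paley--Wiener, $g$ is real-valued, Schwartz, and extends to an entire function on $\mathbb{C}^{d_x}$; in particular $g \in L^\infty(\R^{d_x})$ and the real zero set $\{g = 0\} \subset \R^{d_x}$ has Lebesgue measure zero. Define $f(\bx,\bo) := g(\bx)$. Then for every $\bo \in \calO$ one has $\calF[f(\cdot,\bo)] = \chi$, whose support lies in both $\{\|\bomega_x\|_2 \leq \gamma^{-1}\}$ and $\{\|\bomega_x\|_2 \geq \gamma'^{-1}\}$, so $f \in \mathrm{LF}(\gamma) \cap \mathrm{HF}(\gamma')$. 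Since $f$ is globally bounded, $f \in L^\infty(P_{XO})$. The absolute continuity hypothesis is used precisely at the next step: the cylinder $\{(\bx,\bo) : g(\bx)=0\}$ has Lebesgue measure zero in $\R^{d_x+d_o}$, so by absolute continuity it has $P_{XO}$-measure zero, which gives $\|f\|_{L^2(P_{XO})}^2 = \int g^2 \, dP_{XO} > 0$.

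With such an $f$ in hand, applying both assumptions simultaneously yields
\begin{equation*}
    c_0\, \gamma^{d_x \eta_0}\, \|f\|_{L^2(P_{XO})} \;\leq\; \|Tf\|_{L^2(P_{ZO})} \;\leq\; c_1\, \gamma'^{d_x \eta_1}\, \|f\|_{L^2(P_{XO})},
\end{equation*}
and dividing through by the strictly positive quantity $\|f\|_{L^2(P_{XO})}$ gives the scalar inequality $c_0 \gamma^{d_x \eta_0} \leq c_1 \gamma'^{d_x \eta_1}$ valid for all $0 < \gamma < \gamma' < 1$. Fixing $\gamma$ and sending $\gamma' \downarrow \gamma$ (by continuity of $\gamma' \mapsto \gamma'^{d_x\eta_1}$) produces $c_0/c_1 \leq \gamma^{d_x(\eta_1 - \eta_0)}$ for every $\gamma \in (0,1)$. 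If we had $\eta_1 > \eta_0$, the right-hand side would tend to $0$ as $\gamma \downarrow 0$, contradicting $c_0/c_1 > 0$; hence $\eta_0 \geq \eta_1$.

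The only genuinely delicate step is the construction of the bandpass test function together with the verification that $\|f\|_{L^2(P_{XO})} > 0$: this is where the Paley--Wiener analyticity of $g$ combines with the absolute continuity of $P_{XO}$ to rule out the pathological scenario in which $f$ might vanish $P_{XO}$-almost everywhere. Once that is secured, the argument is a straightforward chaining of the two assumptions followed by a one-parameter limit.
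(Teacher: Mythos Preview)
Your proof is correct and follows essentially the same approach as the paper: construct a band-limited test function whose Fourier support lies in a thin annulus so that it belongs simultaneously to $\mathrm{LF}(\gamma)$ and $\mathrm{HF}(\gamma')$, apply both assumptions to obtain $c_0\gamma^{d_x\eta_0}\leq c_1\gamma'^{d_x\eta_1}$, and then send $\gamma'\downarrow\gamma$ followed by $\gamma\downarrow 0$. The only cosmetic differences are that the paper takes $\chi$ to be the indicator of the annulus $\{\gamma^{-1}\leq\|\bomega_x\|_2\leq\gamma^{-1}+\epsilon\}$ rather than a smooth bump, and parameterizes the shrinking annulus via $\epsilon\to 0$ rather than $\gamma'\to\gamma$; your explicit invocation of Paley--Wiener to justify that $\{g=0\}$ is Lebesgue-null is in fact more careful than the paper's one-line assertion of the same fact.
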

\begin{proof}
Let $\gamma\in (0,1)$ be arbitrary. Let $\epsilon  > 0$ be arbitrary. We define
\begin{align*}
    f_{\gamma, \epsilon}(\bx,\bo) &:= \mathcal{F}^{-1}\left[\one[\|\bomega\|_2 \in [\gamma^{-1}, \gamma^{-1}+\epsilon]]\right](\bx). 
\end{align*}
We have $f_{\gamma,\epsilon}\in \mathrm{LF}\left((\gamma^{-1} + \epsilon)^{-1}\right)\cap \mathrm{HF}(\gamma)\cap L^{\infty}(P_{XO})$. Since $P_{XO}$ admits a density function on $\calX\times \calO$, and $f_{\gamma,\epsilon}$ only vanishes on sets of null Lebesgue measure, we have $\|f_{\gamma,\epsilon}\|_{L^2(P_{XO})}\neq 0$. As imposed by \Cref{ass:T_contractivity} and \Cref{ass:T_frequency_ill_posedness}, we thus have
\begin{align*}
    c_0^{-1} \left(\gamma^{-1} + \epsilon\right)^{-d_x\eta_0}\|f_{\gamma,\epsilon}\|_{L^2(P_{XO})}
\leq \|Tf_{\gamma,\epsilon}\|_{L^2(P_{ZO})} \leq c_1 \gamma^{d_x\eta_1}\|f_{\gamma,\epsilon}\|_{L^2(P_{XO})}.
\end{align*}
Since $\|f_{\gamma,\epsilon}\|_{L^2(P_{XO})}\neq 0$, we have $(\forall \gamma\in (0,1))\;(\forall \epsilon >0)\; c_0^{-1}\left(\gamma^{-1} + \epsilon\right)^{-d_x\eta_0} \leq c_1 \gamma^{d_x\eta_1}$. For a fixed $\gamma$, taking the limit $\epsilon \to 0$, we have by continuity
\begin{align}
\label{eq:lem_eta_key}
   (\forall \gamma\in (0,1))\; \frac{1}{c_0 c_1}\leq \gamma^{d_x(\eta_1-\eta_0)}.
\end{align}
Taking the limit $\gamma\to 1$ in Eq.~\eqref{eq:lem_eta_key}, we find $c_0c_1 \geq 1$. Then since Eq.~\eqref{eq:lem_eta_key} holds for all $\gamma\in (0,1)$, we find that $\eta_1\leq \eta_0$. 
\end{proof}

\begin{defi}[Distribution and distribution of a function]
\label{defi:fn_dist}
Let $\Omega\subseteq \mathbb{R}^{d}$ be a domain. A distribution $u\in \mathcal{D}'(\Omega)$ is a continuous linear functional on the space of test functions $\mathcal{D}(\Omega)$, where $\mathcal{D}(\Omega)$ is the set $C^{\infty}_c(\Omega)$ endowed with the \emph{canonical limit of Fréchet topology}. For a locally integrable function $f\in L^1_{\mathrm{loc}}(\mathbb{R}^{d})$, for all $\phi\in \mathcal{D}(\Omega)$, we define the distribution $T_f$ by
\begin{align*}
    T_f\phi := \int_{\Omega}f(\bx)\phi(\bx)\;\mathrm{d}\bx . 
\end{align*}
\end{defi}

\begin{defi}[Support of a distribution]
\label{defi:supp_dist}
    A distribution $u\in \mathcal{D}'(\Omega)$ is supported in the closed set $K\subset \Omega$ if $u[\phi] = 0$ $\forall \phi\in C^{\infty}_c(\Omega\setminus K)$. The \emph{support} of $u$, $\mathrm{supp}\; u$ is the set 
    \begin{align*}
        \mathrm{supp}\; u = \cap \{K: u\text{ is supported in }K\}.
    \end{align*}
\end{defi}

\begin{defi}[Tempered distribution]
    We define the \emph{Schwartz space} $\mathcal{S}$ via
    \begin{align*}
        \mathcal{S} = \left\{\phi\in C^{\infty}(\mathbb{R}^{d})\mid \forall \bm{\alpha}, \forall N\in \mathbb{N},\;\sup_{\bx\in \mathbb{R}^{d}}\left|(1+|\bx|)^{N}D^{\bm{\alpha}}\phi(\bx)\right| < \infty\right\}.
    \end{align*}
    We say that a sequence $\{\phi_j\}^{\infty}_{j=1}\subset \mathcal{S}$ tends to zero iff
    \begin{align*}
        \sup_{\bx\in\mathbb{R}^{d}}|(1+|\bx|)^{N}D^{\bm{\alpha}}\phi_j(\bx)|\to 0 
    \end{align*}
    for all $N\in \mathbb{N}$ and all multi-indices $\bm{\alpha}$. This endows $\mathcal{S}$ with a topology. We define the space of \emph{tempered distributions} to be the continuous dual space of $\mathcal{S}$, denoted as $\mathcal{S}'$.
\end{defi}

\begin{defi}[Periodic distribution]
    We define the translation of a distribution $u\in \mathcal{D}'(\mathbb{R}^{d})$ via $\tau_{\bz}u[\phi] = u[\tau_{-\bz}\phi]$ 
    for all $\phi\in \mathcal{D}(\mathbb{R}^{d})$, where we use $\tau_{\bz}\phi(\bx):= \phi(\bx-\bz)$. 
    We say that a distribution $u\in \mathcal{D}'(\mathbb{R}^{d})$ is periodic if for each $\bg\in \mathbb{Z}^{d}$ we have $\tau_{\bg} u = u$.
    Clearly, if $f\in L^1_{\mathrm{loc}}(\mathbb{R}^{d})$ is periodic, then $T_f$ is a periodic distribution.
\end{defi}

\begin{defi}[Fourier transform of $L^1$ functions]
    For $f\in L^1(\mathbb{R}^{d})$, we define the Fourier transform $\calF[f] = \hat{f} : \mathbb{R}^{d}\to \mathbb{C}$ by 
    \begin{align*}
        \calF[f](\boldsymbol{\xi}) = \hat{f}(\boldsymbol{\xi}) := \int_{\mathbb{R}^{d}} f(\bx)\exp(-i\langle \bx, \boldsymbol{\xi}\rangle)\;\mathrm{d}\bx.
    \end{align*}
\end{defi}

\begin{defi}[Fourier transform of tempered distributions]
    For a distribution $u\in \mathcal{S}'$, we define the Fourier transform of $u$, written $\hat{u}\in \mathcal{S}'$, to be the distribution satisfying:
    \begin{align*}
        \hat{u}[\phi] = u[\hat{\phi}], \;\forall \phi\in\mathcal{S}, 
    \end{align*}
    which is well defined since the Fourier transform maps $\mathcal{S}$ to $\mathcal{S}$ continuously.
\end{defi}
We note that, for all $\phi\in \mathcal{S}$, by the Fourier inversion theorem \citep[9.11]{rudin1987real}, 
\begin{align*}
    T_{1}[\hat{\phi}] = \int_{\mathbb{R}^{d}} \hat{\phi}(\bx)\;\mathrm{d}\bx = (2\pi)^{d}\phi(\boldsymbol{0}) = (2\pi)^{d}\delta_{\boldsymbol{0}}[\phi]. 
\end{align*}
Hence, $\forall \bx\in \mathbb{R}^{d}$, we define $e_{\boldsymbol{\xi}}(\bx) = \exp(i2\pi \langle \bx, \boldsymbol{\xi}\rangle)$. 
So $\forall \phi\in \mathcal{S}$, we have
\begin{align}
\label{eq:fourier_transform_trig_poly}
    T_{e_{\boldsymbol{\xi}}}[\hat{\phi}] = \int_{\mathbb{R}^{d}} \exp(i2 \pi \langle \bx, \boldsymbol{\xi}\rangle)\hat{\phi}(\bx)\;\mathrm{d}\bx 
    = (2\pi)^{d}\phi(2\pi \boldsymbol{\xi}).
\end{align}
i.e. $\calF[T_{e_{\boldsymbol{\xi}}}] = (2\pi)^{d}\delta_{2\pi \boldsymbol{\xi}}$, the Dirac delta distribution at $2\pi \boldsymbol{\xi}$. 
We can now make sense of the Fourier transform of a periodic function. The following Lemma is from \citet[8.5]{friedlander1998introduction}:
\begin{lem}[Periodic distributions are tempered]
\label{lem:periodic_distributions_tempered}
    Let $u\in \mathcal{D}'(\mathbb{R}^{d})$ be a periodic distribution. Then $u$ is in fact a tempered distribution, i.e. $u\in \mathcal{S}'$.
\end{lem}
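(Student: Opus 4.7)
The plan is to construct a periodic partition of unity, use it to decompose any Schwartz test function into a locally finite sum of compactly supported pieces shifted by lattice vectors, and then exploit periodicity to transfer the estimate on each piece back to a single fundamental domain. First, I would pick a non-negative cutoff function $\chi \in C^\infty_c(\mathbb{R}^d)$ supported in the open cube $(-1,1)^d$ (which contains a fundamental domain of $\mathbb{Z}^d$) such that
\begin{equation*}
\sum_{\mathbf{g}\in\mathbb{Z}^d}\chi(\mathbf{x}-\mathbf{g}) = 1,\qquad \mathbf{x}\in\mathbb{R}^d.
\end{equation*}
Such a $\chi$ is standard: start from any $\psi\in C^\infty_c$ strictly positive on $[0,1]^d$ and normalize $\chi = \psi/\sum_{\mathbf{g}}\tau_{\mathbf{g}}\psi$. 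For $\phi\in\mathcal{S}$ set $\phi_{\mathbf{g}}(\mathbf{x}) := \chi(\mathbf{x}-\mathbf{g})\phi(\mathbf{x})$, so that $\phi = \sum_{\mathbf{g}\in\mathbb{Z}^d}\phi_{\mathbf{g}}$ with each $\phi_{\mathbf{g}} \in C^\infty_c(\mathbf{g}+(-1,1)^d)$.

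Next, I would use the fact that every distribution has finite order on a fixed open set of compact closure: applied to the bounded open neighbourhood $U := (-2,2)^d$ of $\mathrm{supp}(\chi)$, there exist an integer $N$ and a constant $C>0$ such that
\begin{equation*}
|u[\varphi]| \leq C \sum_{|\boldsymbol{\alpha}|\leq N} \sup_{\mathbf{x}\in U}\bigl|\partial^{\boldsymbol{\alpha}}\varphi(\mathbf{x})\bigr|,\qquad \varphi\in C^\infty_c(U).
\end{equation*}
By the periodicity assumption $\tau_{\mathbf{g}}u = u$, we have $u[\phi_{\mathbf{g}}] = u[\tau_{-\mathbf{g}}\phi_{\mathbf{g}}]$, and the translated function $\tau_{-\mathbf{g}}\phi_{\mathbf{g}}$ is supported in the fixed compact set $\overline{\mathrm{supp}(\chi)}\subset U$, independently of $\mathbf{g}$. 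Applying the finite-order bound to $\tau_{-\mathbf{g}}\phi_{\mathbf{g}}$ and expanding derivatives via the Leibniz rule, I obtain
\begin{equation*}
|u[\phi_{\mathbf{g}}]| \leq C' \sum_{|\boldsymbol{\alpha}|\leq N} \sup_{\mathbf{x}\in\mathbf{g}+\mathrm{supp}(\chi)}\bigl|\partial^{\boldsymbol{\alpha}}\phi(\mathbf{x})\bigr|,
\end{equation*}
with $C'$ depending only on $C$, $N$, and $\chi$.

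Now I would plug in the Schwartz decay of $\phi$: for any integer $M$, there is a seminorm $p_{M,N}(\phi) := \sup_{\mathbf{x},\,|\boldsymbol{\alpha}|\leq N}(1+|\mathbf{x}|)^{M}|\partial^{\boldsymbol{\alpha}}\phi(\mathbf{x})|$ finite on $\mathcal{S}$, so
\begin{equation*}
|u[\phi_{\mathbf{g}}]| \leq C'\,p_{M,N}(\phi)\cdot(1+|\mathbf{g}|-\sqrt{d})^{-M}.
\end{equation*}
Choosing $M = d+1$ makes the series $\sum_{\mathbf{g}\in\mathbb{Z}^d}|u[\phi_{\mathbf{g}}]|$ absolutely convergent, so I can define $\tilde{u}[\phi] := \sum_{\mathbf{g}}u[\phi_{\mathbf{g}}]$, which agrees with $u[\phi]$ whenever $\phi\in\mathcal{D}(\mathbb{R}^d)$ (only finitely many terms are nonzero) and satisfies the estimate $|\tilde{u}[\phi]|\lesssim p_{d+1,N}(\phi)$. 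This gives continuity on $\mathcal{S}$ and establishes $u\in\mathcal{S}'$.

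The main obstacle I anticipate is the uniform finite-order step: one must be careful that the constant $C$ and order $N$ coming from the definition of a distribution apply on a single relatively compact open set large enough to contain every translate $\mathrm{supp}(\tau_{-\mathbf{g}}\phi_{\mathbf{g}})$. This is exactly why I enlarge to $U = (-2,2)^d$ and invoke the standard seminorm characterization of $\mathcal{D}'$ on a bounded open set, rather than trying to bound $u$ on all of $\mathbb{R}^d$ at once. The rest is then a routine combination of the Leibniz rule and Schwartz rapid decay.
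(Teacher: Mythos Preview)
Your proposal is correct and follows the standard partition-of-unity argument. The paper itself does not give a proof of this lemma; it simply cites \citet[8.5]{friedlander1998introduction}, where essentially the same argument appears. One small cosmetic point: the decay factor $(1+|\mathbf{g}|-\sqrt{d})^{-M}$ is awkward for the finitely many $\mathbf{g}$ with $|\mathbf{g}|\leq\sqrt{d}$, but replacing it by $C''(1+|\mathbf{g}|)^{-M}$ (absorbing the shift into the constant for large $|\mathbf{g}|$ and bounding the remaining terms trivially) fixes this without affecting the argument.
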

\begin{prop}
    Suppose $u\in \mathcal{D}'(\mathbb{R}^{d})$ is a periodic distribution. Then there exist constants $c_{\boldsymbol{\xi}}\in \mathbb{C}$ such that $u$ can be represented as a (generalized) Fourier series, 
    \begin{align*}
        u = \sum_{\boldsymbol{\xi}\in \mathbb{Z}^d}c_{\boldsymbol{\xi}} \,T_{e_{\boldsymbol{\xi}}},
    \end{align*}
    with $c_{\boldsymbol{\xi}}$ satisfying the bound $|c_{\boldsymbol{\xi}}|\leq K(1+|\boldsymbol{\xi}|)^N$ for some $K>0$ and $N\in \mathbb{Z}$.
\end{prop}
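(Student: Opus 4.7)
The plan is to pass to the Fourier side and exploit the structure of $\hat{u}\in \mathcal{S}'$, which is well-defined by \Cref{lem:periodic_distributions_tempered}. The periodicity $\tau_{\bg}u = u$ for every $\bg\in \mathbb{Z}^d$ translates under the Fourier transform into the modulation identity $(e^{-i\langle \bg, \cdot\rangle} - 1)\hat{u} = 0$ in $\mathcal{S}'$, as follows from the standard computation $\widehat{\tau_{\bg}u} = e^{-i\langle \bg, \cdot\rangle}\hat{u}$. Since the common zero set of the family $\{\boldsymbol{\xi}\mapsto e^{-i\langle \bg, \boldsymbol{\xi}\rangle} - 1 : \bg\in \mathbb{Z}^d\}$ is exactly the dual lattice $2\pi\mathbb{Z}^d$, this forces $\operatorname{supp}(\hat{u})\subseteq 2\pi\mathbb{Z}^d$. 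I would then localize using a smooth partition of unity $\{\chi_{\boldsymbol{\xi}}\}_{\boldsymbol{\xi}\in \mathbb{Z}^d}\subset C^{\infty}_c(\mathbb{R}^d)$ with $\chi_{\boldsymbol{\xi}}$ equal to $1$ near $2\pi\boldsymbol{\xi}$ and with pairwise disjoint supports, so that each piece $\chi_{\boldsymbol{\xi}}\hat{u}$ is a compactly supported distribution concentrated at the single point $2\pi\boldsymbol{\xi}$.

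By the classical structure theorem for distributions supported at a point \citep[Theorem 3.2.1]{friedlander1998introduction}, $\chi_{\boldsymbol{\xi}}\hat{u} = \sum_{|\alpha|\leq N_{\boldsymbol{\xi}}} a_{\boldsymbol{\xi},\alpha}\, D^{\alpha}\delta_{2\pi\boldsymbol{\xi}}$ for a finite expansion. The decisive step is to eliminate all $\alpha\neq \boldsymbol{0}$. Specializing the modulation identity to $\bg = \mathbf{e}_j$ gives $(e^{-i\xi_j} - 1)\hat{u} = 0$, and since $e^{-i\xi_j} - 1$ vanishes to first order in $\xi_j$ at $2\pi\boldsymbol{\xi}$ with derivative $-i$, Leibniz's rule distributionally produces $(e^{-i\xi_j} - 1) D^{\alpha}\delta_{2\pi\boldsymbol{\xi}} = i\alpha_j D^{\alpha - \mathbf{e}_j}\delta_{2\pi\boldsymbol{\xi}} + (\text{strictly lower order in }|\alpha|)$ whenever $\alpha_j\geq 1$. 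Grading the multi-indices lexicographically and requiring the top-order terms to cancel forces the corresponding $a_{\boldsymbol{\xi},\alpha}$ to vanish; iterating over $j = 1,\ldots, d$ and descending in $|\alpha|$ strips off every derivative component, leaving $\hat{u} = \sum_{\boldsymbol{\xi}\in \mathbb{Z}^d}(2\pi)^d c_{\boldsymbol{\xi}}\delta_{2\pi\boldsymbol{\xi}}$ for suitable scalars $c_{\boldsymbol{\xi}}$. Inverting the Fourier transform and invoking Eq.~\eqref{eq:fourier_transform_trig_poly}, which identifies $\calF[T_{e_{\boldsymbol{\xi}}}] = (2\pi)^d \delta_{2\pi\boldsymbol{\xi}}$, recovers the expansion $u = \sum_{\boldsymbol{\xi}\in \mathbb{Z}^d} c_{\boldsymbol{\xi}} T_{e_{\boldsymbol{\xi}}}$ in $\mathcal{S}'$ (and hence in $\mathcal{D}'$, via the continuous embedding $\mathcal{S}'\hookrightarrow \mathcal{D}'$).

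For the growth bound, I would test $\hat{u}$ against the translated bump $\psi_{\boldsymbol{\xi}}(\cdot) := \psi_0(\cdot - 2\pi\boldsymbol{\xi})$ for a fixed $\psi_0\in C^{\infty}_c(\mathbb{R}^d)$ equal to $1$ near the origin with support small enough to isolate a single lattice point; this extracts $(2\pi)^d c_{\boldsymbol{\xi}} = \hat{u}[\psi_{\boldsymbol{\xi}}]$. Since $\hat{u}\in \mathcal{S}'$ is continuous with respect to the Schwartz seminorms, $|c_{\boldsymbol{\xi}}|$ is controlled by $\sup_{\boldsymbol{\eta},\,|\alpha|\leq N}(1 + |\boldsymbol{\eta}|)^N|D^{\alpha}\psi_{\boldsymbol{\xi}}(\boldsymbol{\eta})|$ for some fixed $N$ and $K_0$ independent of $\boldsymbol{\xi}$, and translation by $2\pi\boldsymbol{\xi}$ moves the weight factor, producing exactly a factor of $(1 + |\boldsymbol{\xi}|)^N$ up to an absolute constant, which yields the required polynomial bound. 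The main obstacle is the derivative-killing step: the inductive Leibniz calculation must be organized (using the lexicographic order on $\alpha$ and separately exploiting each coordinate direction $\bg = \mathbf{e}_j$) so that the top-order coefficients genuinely force descent rather than being absorbed by lower-order cross-terms. Once this combinatorial bookkeeping is handled, the remaining steps are routine.
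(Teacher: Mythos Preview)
The paper does not actually supply its own proof of this proposition; it is stated without proof as a classical fact, in the same appendix section where the preceding lemma is attributed to \citet[8.5]{friedlander1998introduction}. So there is no paper proof to compare against.

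Your argument is correct and is one of the standard routes. One small refinement: the derivative-killing step is cleaner if you grade by total degree $|\alpha|$ rather than lexicographically. At the top degree $|\alpha|=N_{\boldsymbol{\xi}}$, only the $m=1$ term of the Leibniz expansion of $(e^{-i\eta_j}-1)\cdot D^{\alpha}\delta_{2\pi\boldsymbol{\xi}}$ survives (higher $m$ would require $|\alpha|>N_{\boldsymbol{\xi}}$), so the coefficient of $D^{\alpha-\mathbf{e}_j}\delta_{2\pi\boldsymbol{\xi}}$ is $i\alpha_j a_{\boldsymbol{\xi},\alpha}$ alone, forcing $a_{\boldsymbol{\xi},\alpha}=0$ whenever $\alpha_j\geq 1$. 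Running over $j=1,\dots,d$ kills all $|\alpha|=N_{\boldsymbol{\xi}}$, and then you descend. This sidesteps the cross-term bookkeeping you flagged as the main obstacle.

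For comparison, the textbook proof in Friedlander--Joshi takes a different and somewhat shorter path: pick $\chi\in C^{\infty}_c(\mathbb{R}^d)$ with $\sum_{\bg\in\mathbb{Z}^d}\chi(\cdot-\bg)\equiv 1$, write $u=\sum_{\bg}\tau_{\bg}(\chi u)$ by periodicity, and apply Poisson summation on the Fourier side to get $\hat{u}=(2\pi)^d\sum_{\boldsymbol{\xi}}\widehat{\chi u}(2\pi\boldsymbol{\xi})\,\delta_{2\pi\boldsymbol{\xi}}$ directly, so that $c_{\boldsymbol{\xi}}=\widehat{\chi u}(2\pi\boldsymbol{\xi})$. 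The polynomial bound then drops out of Paley--Wiener--Schwartz since $\chi u$ is compactly supported of finite order. This avoids both the structure theorem at a point and the derivative-killing induction; your approach trades those against having to know Poisson summation for tempered distributions. Both are standard and your version is entirely valid.
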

We can now make sense of the following definitions: for any scalar $\gamma\in (0,1)$, we define the following two sets of functions which are generalization of the $\mathrm{LF}(\gamma)$ and $\mathrm{HF}(\gamma)$ defined in the main text to \emph{distributions}:
\begin{align} 
\label{eq:distribution_lf_hf_set_defn}
\begin{aligned}
\mathrm{LF}(\gamma) := \{ f\in L^1_{\mathrm{loc}}(\mathbb{R}^{d_x+d_o}) \ | \
& \forall \bo \in \mathcal{O}, \; T_{f(\cdot, \bo)} \in \mathcal
{S}'(\mathbb{R}^{d_x}), \\
& \;\mathrm{supp}\big(\calF[f(\cdot, \bo)]\big) 
\subseteq \left\{ \bomega_x \in \mathbb{R}^{d_x} : \|\bomega_x\|_2 \leq \gamma^{-1} \right\}
\}.\\
\mathrm{HF}(\gamma) := \{ f\in L^1_{\mathrm{loc}}(\mathbb{R}^{d_x+d_o}) \ | \
& \forall \bo \in \mathcal{O}, \; T_{f(\cdot, \bo)} \in \mathcal{S}'(\mathbb{R}^{d_x}), \\
& \;\mathrm{supp}\big(\calF[f(\cdot, \bo)]\big) 
\subseteq \left\{ \bomega_x \in \mathbb{R}^{d_x} : \|\bomega_x\|_2 \geq \gamma^{-1} \right\}
\}.
\end{aligned}
\end{align}
We now recall the statements of Assumption 4.2 and 4.3 from the main text with the above generalization of $\mathrm{LF}(\gamma)$ and $\mathrm{HF}(\gamma)$.
\begin{ass}[Fourier measure of partial ill-posedness of $T$]
\label{ass:T_frequency_ill_posedness_schwartz}
There exists a constant $c_0>0$ and a parameter $\eta_0\in [0,\infty)$ only depending on $T$, such that for all $\gamma \in (0,1)$ and all functions $f \in \mathrm{LF}(\gamma) \cap L^\infty(P_{XO})$, the following inequality is satisfied:
    \begin{align*}
        \|f\|_{L^2(P_{XO})} \leq c_0\gamma^{-d_x\eta_0} \|Tf\|_{L^2(P_{ZO})} .
    \end{align*}
    In particular, $c_0$ does not depend on $\gamma$. 
\end{ass}
\begin{ass}[Fourier measure of partial contractivity of $T$]
\label{ass:T_contractivity_schwartz}
There exists a constant $c_1>0$ and a parameter $\eta_1\in [0,\infty)$ only depending on $T$, such that for all $\gamma \in (0,1)$ and all functions $f \in \mathrm{HF}(\gamma) \cap L^\infty(P_{XO})$, the following inequality is satisfied:
    \begin{align*}
        \|Tf\|_{L^2(P_{ZO})} \leq c_1\gamma^{d_x\eta_1}\|f\|_{L^2(P_{XO})} .
    \end{align*}
    In particular, $c_1$ does not depend on $\gamma$. 
\end{ass}

Let $(S^{1}, +)$ denote the unit circle (equipped with a group structure via addition), and let $\mathrm{d}x$ denote the Haar measure on $S^{1}$, which coincides with the pushforward of the Lebesgue measure under the quotient map $S^{1}\cong \frac{\mathbb{R}}{\mathbb{Z}}$.   We make use of the obvious identification between functions on $(S^1)^{d}$ and $1$-periodic functions on $\mathbb{R}^{d}$, for any $d\geq 1$. 
\begin{defi}[Fourier series \citep{Katznelson_2004}]
    Let $f\in L^1(S^{1})$. We define the $n$th Fourier coefficient of $f$ by $\calF[f][n] = \int_{S^1} f(x)e^{-i2\pi n x}\;\mathrm{d}x$.
    The Fourier series of $f\in L^1(S^1)$ is the trigonometric series $f(x) = \sum_{n=-\infty}^{\infty}\calF[f][n]e^{i2\pi nx}$.
\end{defi}
We make use of the group structure of $S^{1}$ and the \emph{translation invariance} of the measure $\mathrm{d}x$ on $S^{1}$ to define the convolution operation in $L^1(S^{1})$, following \citep[Section 1.7]{Katznelson_2004}.
\begin{prop}
\label{prop:convolution_theorem}
    Let $f,g \in  L^1(S^{1})$. For almost all $t\in S^1$, the function $f(t - \tau)g(\tau)$ is $L^1(S^{1})$-integrable as a function of $\tau$, and if we define the convolution
    \begin{align*}
        h(t) = \int_{S^1} f(t-\tau)g(\tau)\;\mathrm{d}\tau
    \end{align*}
    then $h\in L^1(S^1)$ with $\|h\|_{L^1(S^{1})}\leq \|f\|_{L^1(S^{1})}\|g\|_{L^1(S^{1})}$. Moreover, $(\forall n\in \mathbb{Z})$, we have
    \begin{align*}
        \calF[h][n] = \calF[f][n]\calF[g][n].
    \end{align*}
\end{prop}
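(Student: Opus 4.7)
The plan is to reduce the statement to a direct application of the Fubini--Tonelli theorem, exploiting the translation invariance of the Haar measure on $S^1$. This is a textbook computation, but let me outline the steps so the dependencies are transparent.

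First, I would verify joint measurability. Since $f,g:S^1\to\mathbb{C}$ are measurable, and both $(t,\tau)\mapsto t-\tau$ and $(t,\tau)\mapsto\tau$ are continuous (hence measurable) maps $S^1\times S^1\to S^1$, the composition $(t,\tau)\mapsto f(t-\tau)g(\tau)$ is measurable on the product space. This is the prerequisite for invoking Fubini--Tonelli.

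Next, I would apply the Tonelli theorem to the non-negative function $|f(t-\tau)g(\tau)|$. Integrating first in $t$ and using translation invariance of the Haar measure on $S^1$ gives
\begin{align*}
\int_{S^1}\int_{S^1}|f(t-\tau)g(\tau)|\,\mathrm{d}t\,\mathrm{d}\tau
= \int_{S^1}|g(\tau)|\Bigl(\int_{S^1}|f(t-\tau)|\,\mathrm{d}t\Bigr)\mathrm{d}\tau
= \|f\|_{L^1(S^1)}\|g\|_{L^1(S^1)}<\infty.
\end{align*}
By Fubini, this implies that for almost every $t\in S^1$ the slice $\tau\mapsto f(t-\tau)g(\tau)$ lies in $L^1(S^1)$, so $h(t)$ is well defined a.e. Taking absolute values inside the integral defining $h$ and applying the bound just derived yields
\begin{align*}
\|h\|_{L^1(S^1)}\leq\int_{S^1}\int_{S^1}|f(t-\tau)g(\tau)|\,\mathrm{d}\tau\,\mathrm{d}t\leq\|f\|_{L^1(S^1)}\|g\|_{L^1(S^1)},
\end{align*}
which in particular shows $h\in L^1(S^1)$.

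Finally, for the Fourier coefficient identity, I would compute $\mathcal{F}[h][n]=\int_{S^1}h(t)e^{-i2\pi nt}\mathrm{d}t$ by substituting the definition of $h$, interchanging the order of integration (justified by the absolute integrability just established), and making the substitution $s=t-\tau$, $t=s+\tau$:
\begin{align*}
\mathcal{F}[h][n]
&=\int_{S^1}\int_{S^1}f(t-\tau)g(\tau)e^{-i2\pi n t}\,\mathrm{d}\tau\,\mathrm{d}t\\
&=\int_{S^1}g(\tau)e^{-i2\pi n\tau}\Bigl(\int_{S^1}f(s)e^{-i2\pi n s}\,\mathrm{d}s\Bigr)\mathrm{d}\tau
=\mathcal{F}[f][n]\,\mathcal{F}[g][n],
\end{align*}
again invoking translation invariance of $\mathrm{d}t$ on $S^1$. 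No step here poses any real obstacle; the only subtlety worth flagging is the routine verification that the double integral is absolutely convergent before swapping the order, which we obtain from the Tonelli estimate in the second step.
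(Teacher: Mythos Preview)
Your proof is correct and is the standard Fubini--Tonelli argument one finds in any harmonic analysis textbook. The paper itself does not prove this proposition; it simply states it and cites Katznelson, \emph{An Introduction to Harmonic Analysis}, Section~1.7, where exactly this argument appears.
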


For any $\gamma>0$, we now exhibit a class of functions $f\in \mathrm{HF}(\gamma)$ and a distribution $p(x,z,o)$ satisfying the statement in \Cref{ass:T_contractivity_schwartz}, with the help of \Cref{prop:convolution_theorem}. For simplicity, we assume that $d_x = d_z = d_o = 1$.  Fix a scalar $\gamma \in(0,1)$. Let $g \in L^1\left(S^1\right)$ be a function whose Fourier coefficients vanish on low frequencies, in the sense that
\begin{align}
\label{eq:schwartz_assumption_calf_g}
    \mathcal{F}[g][n] = 0 \quad \text{for all } n \in \mathbb{Z} \text{ such that } |n| \leq (2\pi \gamma)^{-1}.
\end{align}
An example of such a function is $g(x) = \exp(i2\pi m x)$, where $m\in \mathbb{Z}$ and $m > (2\pi\gamma)^{-1}$. Let $h\in L^1_{\mathrm{loc}}(\mathbb{R}^{d_o})$ be such that it does not vanish identically. 
We then define $f:\mathbb{R}^{2} \to \mathbb{C}$:
\begin{align*}
f(x + t, o) := g(x)h(o) \quad \text{for all } x \in [0,1), t \in \mathbb{Z}, o \in \mathbb{R}.
\end{align*}
Then it follows that $f\in L^1_{\mathrm{loc}}(\mathbb{R}^{d_x+d_o})$ since $g\in L^1(S^{1})$ and $h\in L^1_{\mathrm{loc}}(\mathbb{R})$. It follows from \Cref{lem:periodic_distributions_tempered} that $(\forall o\in \mathbb{R})$, $T_{f(\cdot, o)}\in \mathcal{S}'(\mathbb{R})$ for any $o\in\R$. Moreover, for every $o\in\R$, we observe that: i) $h(o) = 0$, then $\mathrm{supp}(\calF_x[T_{f(\cdot, o)}]) =  \emptyset$, ii) $h(o)\neq 0$, then
\begin{align*}
    \mathrm{supp}(\calF[T_{f(\cdot, o)}]) &= \mathrm{supp}(\calF[T_g])\\
    &= \mathrm{supp}\left(\calF\left[\sum_{n = -\infty}^{\infty}\calF[g][n]\,T_{e^{i2\pi n\cdot }}\right]\right)\\
    &\stackrel{(a)}{=} \mathrm{supp}\left(\sum_{n=-\infty}^{\infty}\calF[g][n](2\pi)\delta_{2\pi n}\right)\\
    &\stackrel{(b)}{=} \mathrm{supp}\left(\sum_{|n|> (2\pi\gamma)^{-1}}\calF[g][n](2\pi)\delta_{2\pi n}\right)\\
    &\subseteq \left\{\omega_x\in \mathbb{R} : |\omega_x| > \frac{1}{\gamma}\right\}.
\end{align*}
Here, step (a) follows from \eqref{eq:fourier_transform_trig_poly} and the use of distributional support as defined in \Cref{defi:supp_dist}, and step (b) follows from Eq.~\eqref{eq:schwartz_assumption_calf_g}. Thus, from both cases, we conclude that $f\in \mathrm{HF}(\gamma)$ defined in Eq. \eqref{eq:distribution_lf_hf_set_defn}. 

Fix an integer $k\geq 1$. 
We consider a probability space $\Omega = S^1 \times  S^1 \times \underbrace{S^1 \times \dots \times S^1}_{k}$, where each copy of $S^1$ is equipped with its Borel $\sigma$-algebra and the normalized Haar measure, and $\Omega$ is equipped with the product Borel $\sigma$-algebra and the product measure. We define the following mappings, where $1\leq i\leq k$:
\begin{align*}
    \pi_Z  : \Omega \to S^1, \quad&\pi_Z(z,o, u_1,\dots, u_k) = z \\
    \pi_{O} : \Omega \to S^1, \quad&\pi_{O}(z, o, u_1, \dots, u_k) = o \\
    \pi_{U_i} : \Omega \to S^1, \quad&\pi_{U_i}(z,o,u_1,\dots,u_k) = u_i,
\end{align*}
Since $\pi_Z, \pi_O, \pi_{U_i}$'s are measurable, they are valid random variables, and we also denote them by $Z,O, U_1,\dots, U_k$. We then write $W = 0.1(U_1 + \dots + U_k)$ and $X = Z + W$, where addition is understood as a group operation on $S^1$. By construction, $X$ is independent of $O$. The random tuple $(X,Z,O)$ is as considered in the NPIV-O set-up in the main text. 

We are now going to show that there exists some constant $c_1 > 0$ (which does not depend on $\gamma$), such that 
\begin{align*}
    \|Tf\|_{L^2(P_{ZO})}\leq c_1 \gamma^{k}\|f\|_{L^2(P_{XO})}.
\end{align*}
We observe that 
\begin{align}
\label{eq:schwartz_f_g_l2_norm_equiv}
    \|f\|_{L^2(P_{XO})}^2 &= \int_{\mathbb{R}^{2}}|f(x,o)|^2p(x,o)\;\mathrm{d}x\;\mathrm{d}o \nonumber \\
    &= \int_{S^1\times S^1}|f(x,o)|^2p(x,o)\;\mathrm{d}x\;\mathrm{d}o = \|g\|_{L^2(P_{X})}^2\|h\|^2_{L^2(P_{O})}. 
\end{align}
We also observe that 
\begin{align}
    \|Tf\|_{L^2(P_{ZO})}^2 &= \langle f, T^{\ast}Tf\rangle_{L^2(P_{XO})}\nonumber\\
    &= \int_{S^1\times S^1}\overline{f(x,o)}(T^{\ast}Tf)(x,o)p(x,o)\;\mathrm{d}x\;\mathrm{d}o.\label{eq:schwartz_expansion_tf_l2_sq}
\end{align}
We also observe that for all $x \in S^1$ and $o \in S^1$,
\begin{align*}
    (T^{\ast}Tf)(x,o) &= \int_{S^1}\int_{S^1} f(x',o)p(x'\mid z,o) \;\mathrm{d}x'\;p(z\mid x,o)\;\mathrm{d}z\\
    &\stackrel{(\ast)}{=} \int_{S^1}f(x',o)\left(\int_{S^1}p(x'\mid z,o)p(z\mid x,o)\;\mathrm{d}z\right)\;\mathrm{d}x'\\
    &= \int_{S^1}f(x',o)L(x,x',o)\;\mathrm{d}x',
\end{align*}
where $(\ast)$ follows by Fubini's theorem. $L$ is defined as, for all $x, x^\prime \in S^1$ and $o \in S^1$,
\begin{align*}
    L(x,x',o) := \int_{S^1}p(x'\mid z,o)p(z\mid x,o)\;\mathrm{d}z = \int_{S^1}p(x'\mid z)p(z\mid x)\;\mathrm{d}z.
\end{align*}
The last step holds because $X$ is independent of $O$. 
As $L$ is not dependent on $o$, we write $L(x,x',o) = L(x,x')$. Hence $(\forall x\in S^1,o \in S^{1})$
\begin{align*}
    (T^{\ast}Tf)(x,o) = \left(\int_{S^1}g(x')L(x,x')\;\mathrm{d}x'\right)h(o).
\end{align*}
Hence continuing from Eq. \eqref{eq:schwartz_expansion_tf_l2_sq}, we find
\begin{align}
    \|Tf\|_{L^2(P_{ZO})}^2 &= \int_{S^1\times S^1}\overline{g(x)}\overline{h(o)}(T^{\ast}Tf)(x,o)p(x,o)\;\mathrm{d}x\;\mathrm{d}o\nonumber\\
    &=  \left(\int_{S^1}\overline{g(x)}\left(\int_{S^1}g(x')L(x,x')p(x)\;\mathrm{d}x'\right)\;\mathrm{d}x\right)\|h\|^2_{L^2(P_{O})}\label{eq:schwartz_tf_intm}.
\end{align}
We also notice that $(\forall x\in S^1, x'\in S^1)$, the following hold via Bayes' rule and the fact that $p(z)$ is the Haar measure on $S^1$:
\begin{align*}
    L(x,x')p(x) &= \int_{z\in S^1}p(x'\mid z)p(z\mid x)p(x)\;\mathrm{d}z\\
    &= \int_{z\in S^1} p(x'\mid z)p(x\mid z)p(z)\;\mathrm{d}z\\
    &= \int_{z\in S^1}p(x'\mid z)p(x\mid z)\;\mathrm{d}z\\
    &= \int_{z\in S^1}p_W(x'-z)p_{W}(x-z)\;\mathrm{d}z\\
    &= \int_{z\in S^1}p_{W}(z)p_W(z + (x-x'))\;\mathrm{d}z\\
    &=: \mathfrak{L}(x-x').
\end{align*}
where the second last step follows from the change of variable $z\leftarrow x'-z$, and in the last step we use the fact that $L(x,x')p(x)$ is translation-invariant. As throughout the calculations, the difference $x-x'$ denotes the group operation on $S^{1}$ rather than the usual difference on $\mathbb{R}$. We calculate the Fourier coefficients of $\mathfrak{L}$ as follows:
\begin{align*}
    \calF[\mathfrak{L}][n] &= \int_{w\in S^1}\mathfrak{L}(w)e^{-i2\pi nw}\;\mathrm{d}w = \int_{w\in S^1}\int_{z\in S^1}p_{W}(z)p_W(z + w)\;\mathrm{d}z\;e^{-i2\pi nw}\;\mathrm{d}w\\
    &=\left|\int_{z\in S^1}p_{W}(z)e^{2\pi in z}\;\mathrm{d}z\right|^2 = |\calF[p_{W}][n]|^2.
\end{align*}
Since $W = 0.1(U_1 + \dots + U_k)$, we have $p_{W}$ is the $k$-times convolution of the probability density function of $0.1U_1$. By the convolution Theorem, we find
\begin{align}
\label{eq: schwartz_fourier_nonabs_val}
    \calF[\mathfrak{L}][n] &= \left|10\int_{0}^{0.1}e^{-2\pi i nz}\;\mathrm{d}z\right|^{2k} = \left|\frac{10}{2\pi i n}\left(1 - e^{-0.2\pi i n}\right)\right|^{2k}  = \left(\frac{10}{\pi n}\right)^{2k}\sin^{2k}(0.1\pi n).
\end{align}
Hence
\begin{align}
\label{eq:schwartz_fcoeff_mf_L}
    \calF[\mathfrak{L}][n]\leq \left(\frac{10}{\pi n}\right)^{2k}.
\end{align}
We have
\begin{align}
    \frac{\|Tf\|^2_{L^2(P_{ZO})}}{\|h\|^2_{L^2(P_{O})}}
    &= \int_{S^1}\overline{g(x)}\left(\int_{S^1}g(x')\mathfrak{L}(x-x')\;\mathrm{d}x'\right)\;\mathrm{d}x\nonumber\\
    &\stackrel{(a)}{=} \sum_{n = -\infty}^{\infty}\overline{\calF[g][n]}\calF\left[\int_{S^1}g(x')\mathfrak{L}(\cdot -x')\;\mathrm{d}x'\right][n]\nonumber\\
    &\stackrel{(b)}{=} \sum_{n = -\infty}^{\infty}\overline{\calF[g][n]}\calF[g][n]\calF[\mathfrak{L}][n] = \sum_{n=-\infty}^{\infty}|\calF[g][n]|^2\cdot \calF[\mathfrak{L}][n]\label{eq: example_ratio}\\
    &\stackrel{(c)}{=} \sum_{n\in \mathbb{Z}, |n|> (2\pi\gamma)^{-1}}|\calF[g][n]|^2\cdot \calF[\mathfrak{L}][n] \stackrel{(d)}{\leq} (20\gamma)^{2k}\sum_{n\in \mathbb{Z}, |n|> (2\pi\gamma)^{-1}}|\calF[g][n]|^2\nonumber\\
    &= (20\gamma)^{2k}\sum_{n=-\infty}^{\infty}|\calF[g][n]|^2 \stackrel{(e)}{=} (20\gamma)^{2k}\|g\|^2_{L^2(S^1)}\nonumber\\
    &\stackrel{(f)}{=} (20\gamma)^{2k}\|g\|^2_{L^2(P_{X})}. \nonumber
\end{align}
In the above derivations, step $(a)$ follows from Parseval's theorem \citep[5.4]{Katznelson_2004} and the fact that $\{e^{i2\pi nx}\}_{n\in \mathbb{Z}}$ forms an orthonormal system in $L^2(S^{1})$, step $(b)$ follows from \Cref{prop:convolution_theorem}, step $(c)$ follows Eq. \eqref{eq:schwartz_assumption_calf_g}, step $(d)$ follows from Eq. \eqref{eq:schwartz_fcoeff_mf_L}, step $(e)$ follows again from Parseval's Theorem, and finally step $(f)$ follows from the fact that $P_{X}$ is the Haar measure on $S^1$.  Continuing from Eq. \eqref{eq:schwartz_tf_intm}, we thus have
\begin{align*}
    \|Tf\|^2_{L^2(P_{ZO})}\leq (20\gamma)^{2k}\|g\|^2_{L^2(P_{X})}\|h\|^2_{L^2(P_{O})} = (20\gamma)^{2k}\|f\|^2_{L^2(P_{XO})},
\end{align*}
where the last equality follows from Eq. \eqref{eq:schwartz_f_g_l2_norm_equiv}. Therefore, we have proved that \Cref{ass:T_contractivity_schwartz} is satisfied with $\eta_1 = k$ and $c_1 = 20^k$. For any $\gamma>0$, we now exhibit a class of functions $f'\in \mathrm{LF}(\gamma)$ and a distribution $p(x,z,o)$ satisfying the statement in \Cref{ass:T_frequency_ill_posedness_schwartz}. We let $p(x,z,o)$ be the probability distribution constructed above. Let $g'\in L^1(S^1)$ be a function whose Fourier coefficients vanish on high frequencies, in the sense that 
\begin{align*}
    \mathcal{F}[g'][n] = 0 \quad \text{for all } n \in \mathbb{Z} \text{ such that } |n| \geq (2\pi\gamma)^{-1}.
\end{align*}
We further assume that $\calF[g'][n]\neq 0$ only if $\sin^{2k}(0.1\pi n) \geq c$ for a fixed positive constant $c>0$. Let $h\in L^1_{\mathrm{loc}}(\mathbb{R}^{d_o})$ be such that it does not vanish identically. We then define $f': \mathbb{R}^2 \to \mathbb{C}$:
\begin{align*}
    f'(x+t,o):= g'(x)h(o)\quad \text{for all }x\in [0,1), t\in \mathbb{Z}, o\in \mathbb{R}.
\end{align*}
We can show that $f'\in \mathrm{LF}(\gamma)$ defined in Eq.~\eqref{eq:distribution_lf_hf_set_defn} via a similar argument as before. By Eq.~\eqref{eq: example_ratio}, we have
\begin{align*}
    \frac{\|Tf'\|^2_{L^2(P_{ZO})}}{\|h\|^2_{L^2(P_O)}} &= \sum_{n = -\infty}^{\infty}|\calF[g'][n]|^2\cdot \calF[\mathfrak{L}][n]\\
    &= \sum_{n\in \mathbb{Z}, |n| < (2\pi\gamma)^{-1}}|\calF[g'][n]|^2\cdot \calF[\mathfrak{L}][n]\\
    &\stackrel{(a)}{=} \sum_{n\in \mathbb{Z}, |n| < (2\pi\gamma)^{-1}}|\calF[g'][n]|^2 \left(\frac{10}{\pi n}\right)^{2k}\sin^{2k}(0.1\pi n)\\
    &\stackrel{(b)}{\geq} \sum_{n\in \mathbb{Z}, |n| < (2\pi\gamma)^{-1}}|\calF[g'][n]|^2 \left(\frac{10}{\pi n}\right)^{2k}c\\
    &\geq c(20\gamma)^{2k}\sum_{n\in \mathbb{Z}, |n| < (2\pi\gamma)^{-1}}|\calF[g'][n]|^2 = c(20\gamma)^{2k}\|g'\|^2_{L^2(S^1)},
\end{align*}
where step $(a)$ follows from Eq.~\eqref{eq: schwartz_fourier_nonabs_val}, and step $(b)$ follows from the assumption on the Fourier coefficient of $g$. We thus have
\begin{align*}
    \|Tf'\|^2_{L^2(P_{ZO})} \geq c(20\gamma)^{2k}\|f'\|^2_{L^2(P_{XO})}.
\end{align*}

\section{Explicit Solutions of KIV-O}
\label{sec:closed_form_kivo}

The following derivation is adapted from \citet[Section D]{meunier2024nonparametricinstrumentalregressionkernel}, which only covers the case of no observed covariates. We refer the reader to \citet[Section A.5.1]{singh2019kernel} for the original derivation of closed-form solution for KIV with no observed covariates, and to \citet{mastouri2021proximal, singh2023kernel, xu2025kernelsingleproxycontrol} for derivation of closed-form solutions for the RKHS two-stage proximal causal learning framework, which is mathematically equivalent to KIV with observed covariates, as discussed in \Cref{sec:pcl}. Whenever an operator or Gram matrix require distinguishing between Stage I or Stage II kernel on $O$, we denote this via $;1$ or $;2$ in the subscript. $\odot$  denotes Hadamard product. For a matrix $J\in \mathbb{R}^{m\times n}$, $J_{\cdot, j}$ denotes its $j$th column. \\
\textbf{Stage 1}\quad We follow the closed-form solution given in \citet{lietal2022optimal}.  We define
\begin{align*}
    \boldsymbol{\Phi}_{\tilde{Z}\tilde{O}; 1}: \mathcal{H}_{ZO} \rightarrow \mathbb{R}^{\tilde{n}},& \quad  \boldsymbol{\Phi}_{\tilde{Z}\tilde{O};1} = \left[\phi_Z\left(\tilde{\bz}_1\right) \otimes \phi_{O,1}\left(\tilde{\bo}_1\right), \ldots, \phi_Z\left(\tilde{\bz}_{\tilde{n}} \right) \otimes \phi_{O,1}(\tilde{\bo}_{\tilde{n}})\right]^*, \\
    \boldsymbol{\Phi}_{\tilde{X}}: \mathcal{H}_X \rightarrow \mathbb{R}^{\tilde{n}},& \quad \boldsymbol{\Phi}_{\tilde{X}}=\left[\phi_X\left(\tilde{\bx}_1\right), \ldots, \phi_X\left(\tilde{\bx}_{\tilde{n}} \right)\right]^* 
\end{align*}
We obtain the following estimator
\begin{align}
\label{eq:closed_form_stage_1_estimator}
    \hat{F}_{\xi}(\cdot, \cdot) = \hat{C}_{X\mid Z,O; \xi} \phi_{Z}(\cdot)\otimes \phi_{O,1}(\cdot),\quad 
    \hat{C}_{X\mid Z,O; \xi} = \bPhi_{\tilde{X}}^{\ast}\left(K_{\tilde{Z}\tilde{O};1} + \tilde{n}\xi\mathrm{Id}\right)^{-1}\bPhi_{\tilde{Z}\tilde{O}; 1},
\end{align}
where we introduce the Gram matrix
\begin{align*}
    K_{\tilde{Z}\tilde{O};1} = \bPhi_{\tilde{Z}\tilde{O};1}\bPhi_{\tilde{Z}\tilde{O};1}^{\ast}, \quad [K_{\tilde{Z}\tilde{O};1}]_{ij} = k_{Z}(\tilde{\bz}_i, \tilde{\bz}_j)k_{O,1}(\tilde{\bo}_i, \tilde{\bo}_j). 
\end{align*}
\textbf{Stage 2}\quad The Stage 2 solution can be written as 
\begin{align*}
    \hat{f}_{\lambda} = \left(\frac{1}{n}\bPhi_{\hat{F}O;2}^{\ast}\bPhi_{\hat{F}O;2} + \lambda\mathrm{Id}\right)^{-1}\frac{1}{n}\bPhi_{\hat{F}O;2}^{\ast}\mathbf{Y}
\end{align*}
where we define 
\begin{align*}
    \bPhi_{\hat{F}O;2} : \calH_{XO}\to \mathbb{R}^n \quad \bPhi_{\hat{F}O;2}  = \left[\hat{F}_{\xi}(\bz_1,\bo_1)\otimes \phi_{O,2}(\bo_1),\dots, \hat{F}_{\xi}(\bz_n,\bo_n)\otimes \phi_{O,2}(\bo_n)\right]^{\ast}.
\end{align*}
We then write this in a dual form
\begin{align*}
    \hat{f}_{\lambda} = \bPhi_{\hat{F}O;2}^{\ast}\left(K_{\hat{F}O;2} + n\lambda\mathrm{Id}\right)^{-1}\mathbf{Y}.
\end{align*}
where we introduce the Gram matrix
\begin{align*}
    K_{\hat{F}O;2} = \bPhi_{\hat{F}O;2}\bPhi_{\hat{F}O;2}^{\ast}, \quad [K_{\hat{F}O;2}]_{ij} = \langle \hat{F}_{\xi}(\bz_i,\bo_i), \hat{F}_{\xi}(\bz_j, \bo_j)\rangle_{\calH_{X}}k_{O,2}(\bo_i, \bo_j).
\end{align*}
By Eq.~\eqref{eq:closed_form_stage_1_estimator}, we obtain, for $1\leq j\leq n$,  
\begin{align}
\label{eq:closed_form_hat_F_xi}
    \hat{F}_{\xi}(\bz_j, \bo_j) = \bPhi_{\tilde{X}}^{\ast}\underbrace{\left(K_{\tilde{Z}\tilde{O},1} + \tilde{n}\xi\mathrm{Id}\right)^{-1}\left(K_{\tilde{Z}Z}\odot K_{\tilde{O}O;1}\right)_{: j}}_{=: J_{:j}} = \sum_{i=1}^{\tilde{n}}J_{ij}\phi_{X}(\tilde{\bx}_i),
\end{align}
where $J$ as defined column-wise is a $\tilde{n}\times n$ matrix, and we define the (cross) Gram matrices
\begin{align*}
    [K_{\tilde{Z}Z}]_{ij} = k_{Z}(\tilde{\bz}_i, \bz_j),\quad [K_{\tilde{O}O;1}]_{ij} = k_{O,1}(\tilde{\bo}_i, \bo_j),
\end{align*}
for $1\leq i\leq \tilde{n}$, $1\leq j\leq n$. Consequently, for $1\leq i,j\leq n$,  we have
\begin{align*}
    K_{\hat{F}O;2} = \left(J^{T}K_{\tilde{X}\tilde{X}}J\right)\odot K_{OO;2},
\end{align*}
where we define the Gram matrices
\begin{align*}
    [K_{\tilde{X}\tilde{X}}]_{ij} = k_{X}(\tilde{\bx}_{i}, \tilde{\bx}_{j}), \quad [K_{OO;2}]_{lm} = k_{O,2}(\bo_l, \bo_m),
\end{align*}
for $1\leq i,j\leq \tilde{n}$, $1\leq l, m\leq n$. For a new test point $(\bx,\bo)\in \calX\times \calO$, we have, for $1\leq j\leq n$, 
\begin{align*}
    \left\langle \phi_{X}(\bx)\otimes \phi_{O,2}(\bo), \hat{F}_{\xi}(\bz_j, \bo_j)\otimes \phi_{O,2}(\bo_j)\right\rangle_{\calH_{XO}} &= k_{O,2}(\bo, \bo_j) \left(\sum_{i=1}^{\tilde{n}}J_{ij}k_{X}(\tilde{\bx}_i, \bx)\right)\\
    &= \left(K_{O\bo,2}\odot (J^{T}K_{\tilde{X}\bx})\right)_j, 
\end{align*}
where we define $K_{\tilde{X}\bx} \in \mathbb{R}^{\tilde{n}\times 1}$ and $K_{O\bo,2}\in \mathbb{R}^{n\times 1}$ respectively as follows: 
\begin{align*}
[K_{\tilde{X}\bx}]_{i} = k_{X}(\tilde{\bx}_i, \bx),\quad [K_{O\bo, 2}]_{i} = k_{O,2}(\bo_i, \bo).     
\end{align*}
Thus we have
\begin{align*}
    \hat{f}_{\lambda}(\bx,\bo) 
    &= \left\langle \phi_{X}(\bx)\otimes \phi_{O,2}(\bo), \bPhi_{\hat{F}O;2}^{\ast}\left(K_{\hat{F}O;2} + n\lambda \mathrm{Id}\right)^{-1}\mathbf{Y}\right\rangle_{\calH_{XO}}\\
    &= \left(K_{O\bo, 2}^{T} \odot (K_{\tilde{X}\bx}^TJ)\right) \left(\left(J^{T}K_{\tilde{X}\tilde{X}}J\right)\odot K_{OO;2} + n\lambda \mathrm{Id}\right)^{-1}\mathbf{Y},
\end{align*}
where the last line follows by Eq.~\eqref{eq:closed_form_hat_F_xi}, 
Thus the derivation is concluded.

\section{RKHS $\calH_{FO}$}\label{sec:defi_H_FO}
We recall that the NPIV-O problem can be written as 
\begin{align}
\label{eq:xi_inverse}
    Y = (Tf_\ast)(Z, O) + \upsilon, \quad \E[\upsilon \mid Z, O] = 0,
\end{align}
where $\upsilon:= f_{\ast}(X,O) - (Tf_{\ast})(Z,O) + \epsilon$. \steve{As introduced in \citet[Appendix E.1.2]{meunier2024nonparametricinstrumentalregressionkernel}, we define an RKHS $\calH_{FO}$ induced by the statistical inverse problem Eq.~\eqref{eq:xi_inverse}, following \citet[Theorem 4.21]{steinwart2008support}. Our construction can be obtained from that of \citet{meunier2024nonparametricinstrumentalregressionkernel} with an appropriate feature map construction, as follows. } Recall the definition of $F_{\ast}: \calZ\times \calO\to \calH_{X,\gamma_x}$ that for any $\bz\in \calZ, \bo\in \calO$, $F_{\ast}(\bz,\bo) = \mathbb{E}[\phi_{X,\gamma_x}(X)\mid Z = \bz, O = \bo]$.

\begin{defi}\label{defi:H_F}
We define a reproducing kernel Hilbert space $\calH_{FO}$ as
\begin{align*}
    \calH_{FO} = \left\{f: \mathcal{Z}\times \calO\to \R\mid \exists w\in \calH_{\gamma_x,\gamma_o}, \; f(\bz,\bo) \equiv \langle w, F_{\ast}(\bz,\bo) \otimes \phi_{\gamma_o}(\bo)\rangle_{\calH_{\gamma_x,\gamma_o}}\right\},
\end{align*}
equipped with the norm
\begin{align*}
    \|f\|_{\calH_{FO}} := \inf\left\{\|w\|_{\calH_{\gamma_x,\gamma_o}}\mid f(\bz,\bo) \equiv \langle w, F_{\ast}(\bz,\bo)\otimes \phi_{\gamma_o}(\bo)\rangle_{\calH_{\gamma_x,\gamma_o}}\right\}.
\end{align*}
\end{defi}
We observe that $(\bz, \bo) \mapsto F_*\left(\bz, \bo \right) \otimes \phi_{\gamma_o}(\bo)$ is \emph{not} the canonical feature map of $\calH_{FO}$. To construct its canonical feature map, we define $V:\calH_{\gamma_x,\gamma_o}\to \calH_{FO}$ such that
\begin{equation*}
    (Vw)(\bz, \bo) \equiv \langle w, F_{\ast}(\bz,\bo)\otimes \phi_{\gamma_o}(\bo)\rangle_{\calH_{\gamma_x,\gamma_o}}, \quad w\in \calH_{\gamma_x,\gamma_o} .
\end{equation*}
From Theorem 4.21 of \cite{steinwart2008support}, $V$ is a metric surjection. 
By definition, for any $f\in \calH_{\gamma_x,\gamma_o}$, we have
\begin{align*}
    (Vf)(\bz,\bo) = \langle f, F_{\ast}(\bz,\bo)\otimes \phi_{\gamma_o}(\bo)\rangle_{\calH_{\gamma_x,\gamma_o}} = \E[f(X,O)\mid Z=\bz, O = \bo] = (T[f])(\bz,\bo).
\end{align*}
Furthermore, we know that $V$ is also a surjective partial isometry, i.e. $V$ is surjective and satisfies
\begin{align*}
    (\forall f\in \ker(V)^{\perp}),\quad\|f\|_{\calH_{\gamma_x,\gamma_o}} = \|Vf\|_{\calH_{FO}}.
\end{align*}
Equivalently, $(\forall r\in \calH_{FO})$,
\begin{align}
\label{eq:r_pseudoinv_norm}
    \|r\|_{\calH_{FO}} = \inf\{\|h\|_{\calH_{\gamma_x,\gamma_o}}: h\in \calH_{\gamma_x,\gamma_o},\;r = Vh\},
\end{align}
Thus, the canonical feature map of $\calH_{FO}$ is $(\bz,\bo)\mapsto V(F_{\ast}(\bz,\bo)\otimes \phi_{\gamma_o}(\bo))$, a fact which was also observed on \citet[Page 28]{meunier2024nonparametricinstrumentalregressionkernel}.

Recall the definition of $f_{\lambda}$ in Eq.~\eqref{eq:f_lambda_main}, 
\begin{align}
    \label{eq:f_lambda}
        f_{\lambda} := \argmin_{f\in \calH_{\gamma_x,\gamma_o}}\lambda\|f\|^2_{\calH_{\gamma_x,\gamma_o}} + \|T([f] - f_{\ast})\|^2_{L^2(P_{ZO})}. 
    \end{align}
and $\bar{f}_{\lambda}$ in Eq.~\eqref{eq:bar_f_lambda_main}, 
\begin{align}
\label{eq:bar_f_lambda}
    \bar{f}_{\lambda}:= \argmin_{f\in \calH_{\gamma_x,\gamma_o}} \lambda \|f\|^2_{\calH_{\gamma_x,\gamma_o}} + \frac{1}{n}\sum_{i=1}^{n}(y_i - \langle f, F_{\ast}(\bz_i, \bo_i)\otimes \phi_{\gamma_o}(\bo_i)\rangle_{\calH_{\gamma_x,\gamma_o}})^2.
\end{align}
We define the images of $f_{\lambda}$ and $\bar{f}_{\lambda}$ respectively under the metric surjection $V$ as follows:
\begin{align*}
    h_{\lambda} := Vf_{\lambda}, \quad \Bar{h}_{\lambda} := V\Bar{f}_{\lambda} \in \calH_{FO},\quad h_{\ast} := Tf_{\ast}.
\end{align*}
We observe by combining Eq. \eqref{eq:r_pseudoinv_norm} and Eq. \eqref{eq:bar_f_lambda} that $\overline{h}_{\lambda}$ is the solution to a standard kernel ridge regression problem with the RKHS $\calH_{FO}$:
\begin{align}\label{eq:bhl_bvfl}
    \Bar{h}_{\lambda} = \argmin_{h\in \calH_{FO}}\lambda \|h\|_{\calH_{FO}}^2 + \frac{1}{n}\sum_{i=1}^{n}(h(\bz_i, \bo_i) - y_i)^2 . 
\end{align}
Similarly, for $h_\lambda$ we have 
\begin{align}\label{eq:hat_bhl_bvfl}
    h_{\lambda} = \argmin_{h\in \calH_{FO}} \lambda \|h\|^2_{\calH_{FO}} + \|h_{\ast} - h\|^2_{L^2(P_{ZO})}.
\end{align} 
Finally, we have 
\begin{align}\label{eq:error_translate}
    \left\|T\left(\left[\Bar{f}_{\lambda} \right] - [f_{\lambda}]\right)\right\|_{L^2(P_{ZO})} = \left\| \left[V \Bar{f}_{\lambda}\right] - [V f_{\lambda}] \right\|_{L^2(P_{ZO})} = \left\|\left[\Bar{h}_{\lambda} \right] - \left[h_{\lambda}\right]\right\|_{L^2(P_{ZO})},
\end{align}
which means that the projected error $\|T([\Bar{f}_{\lambda}] -[f_{\lambda}]])\|_{L^2(P_{ZO})}$ has been translated to the generalization error $\left\|\left[\Bar{h}_{\lambda} \right] - [h_{\lambda}]\right\|_{L^2(P_{ZO})}$ of a standard kernel ridge regression (KRR) with this new hypothesis space $\calH_{FO}$, which allows us to apply techniques from the analysis of kernel ridge regression. To this end, we will analyse the capacity (\Cref{sec:capacity}) along with the embedding property (\Cref{sec:emb}) of $\calH_{FO}$, both of which are crucial properties for characterizing the generalization error of KRR~\citep{fischer2020sobolev}. 

\subsection{Capacity of $\calH_{FO}$}
\label{sec:capacity}
We have
\begin{align*}
    &\quad \sup_{\bo,\bo^\prime} \sup_{\bz,\bz^\prime} \left\langle V (F_{\ast}(\bz,\bo) \otimes \phi_{\gamma_o}(\bo)), V (F_{\ast}(\bz^\prime,\bo^\prime) \otimes \phi_{\gamma_o}(\bo^\prime) ) \right\rangle_{\calH_{FO}} \\
    &= \sup_{\bo,\bo^\prime} \sup_{\bz,\bz^\prime} \left\langle F_{\ast}(\bz,\bo) \otimes \phi_{\gamma_o}(\bo), F_{\ast}(\bz^\prime,\bo^\prime) \otimes \phi_{\gamma_o}(\bo^\prime) \right\rangle_{\calH_{\gamma_x,\gamma_o}} \\
    &= \sup_{\bo,\bo^\prime} k_{O, \gamma_o}(\bo, \bo^\prime) \sup_{\bz,\bz^\prime} \iint_{\calX\times\calX} k_{X, \gamma_x}(\bx, \bx^\prime)p(\bx \mid \bz,\bo) p(\bx^\prime \mid \bz^\prime,\bo^\prime)\;\mathrm{d}\bx\;\mathrm{d}\bx'\\
    &\leq \sup_{\bo,\bo^\prime}k_{O, \gamma_o}(\bo, \bo^\prime) \sup_{\bx,\bx^\prime} k_{X, \gamma_x}(\bx, \bx^\prime) \sup_{\bz,\bz^\prime} \iint_{\calX\times\calX} p(\bx \mid \bz,\bo) p(\bx^\prime \mid \bz^\prime,\bo^\prime)\;\mathrm{d}\bx\;\mathrm{d}\bx' \\
    &= \sup_{\bo,\bo^\prime} k_{O, \gamma_o}(\bo, \bo^\prime) \sup_{\bx,\bx^\prime}k_{X, \gamma_x}(\bx, \bx^\prime) . 
\end{align*}
Hence the RKHS $\calH_{FO}$ has a bounded kernel, therefore by \citet[Lemma 2.3]{steinwart2012mercer}, $\calH_{FO}$ is compactly embedded into $L^2(P_{ZO})$. We define the covariance operator $C_{FO}: \calH_{FO} \to \calH_{FO}$ 
\begin{align*}
    C_{FO} := V\mathbb{E}_{ZO}\left[(F_{\ast}(Z,O)\otimes \phi_{\gamma_o}(O))\otimes (F_{\ast}(Z,O)\otimes \phi_{\gamma_o}(O))\right]V^{\ast} 
\end{align*}
It is a self-adjoint compact operator by \citet[Lemma 2.2]{steinwart2012mercer}.
The spectral theorem for self-adjoint compact
operators~\cite[Theorems VI.16,
VI.17]{reed1980methods} yields, there exists countable $\mu_{FO,1}\geq \mu_{FO,2}\geq \dots \geq 0$,  $([e_{FO,i}])_{i\geq 1}$ an orthonormal system of $L^2(P_{ZO})$ and $\left(\sqrt{\mu_{FO,i}}e_{FO,i}\right)\subseteq \calH_{FO}$ an orthonormal system in $\calH_{FO}$, such that 
\begin{align}\label{eq:C_F_covariance}
    C_{FO} = \sum_{i\geq 1}\mu_{FO,i}\langle \cdot, \sqrt{\mu_{FO,i}}e_{FO,i}\rangle_{\calH_{FO}} \sqrt{\mu_{FO,i}}e_{FO,i}.
\end{align}
\begin{defi}[Effective dimension]
The effective dimension of $\calH_{FO}$, denoted as $\calN_{FO} : [0,\infty)\to [0,\infty)$ is  defined as 
    \begin{align*}
        \calN_{FO}(\lambda) = \mathrm{tr}\left((C_{FO}+\lambda)^{-1}C_{FO}\right) = \sum_{i\geq 1}\frac{\mu_{FO,i}}{\mu_{FO,i} + \lambda} .
    \end{align*}
\end{defi}
\begin{prop}
\label{prop:eff_dim}
Let $n\geq 10$ and $\lambda=n^{-1}$. Let $\gamma_x,\gamma_o\in (0,1]$ be the lengthscales for the RKHS $\calH_{\gamma_x,\gamma_o}$. Suppose that the distribution $P_{XO}$ satisfies Assumption~\ref{assn: technical}. Then we have,
    \begin{align*}
        \mathcal{N}_{FO}(\lambda) \leq C'(\log n)^{d_x+d_o+1} \left(\gamma_x^{d_x}\gamma_o^{d_o}\right)^{-1}
    \end{align*}
    for some constant $C'$ independent of $n, \gamma_x,\gamma_o$.
\end{prop}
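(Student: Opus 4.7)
The plan is to dominate $\mathcal{N}_{FO}(\lambda)$ by the effective dimension of the anisotropic Gaussian RKHS $\calH_{\gamma_x,\gamma_o}$ with respect to Lebesgue measure on $[0,1]^{d_x+d_o}$, for which an entropy-style bound of the desired form is standard in the Gaussian KRR literature. The proof proceeds in three reductions.

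\emph{Step 1: replace $C_{FO}$ by an operator living on $\calH_{\gamma_x,\gamma_o}$.} Define $A:\calH_{\gamma_x,\gamma_o}\to L^2(P_{ZO})$ by $Aw=[Vw]=[Tw]$ and let $I_{FO}:\calH_{FO}\hookrightarrow L^2(P_{ZO})$ denote the canonical embedding, so that $C_{FO}=I_{FO}^*I_{FO}$. Because $V$ is a metric surjection we have $VV^*=\mathrm{Id}_{\calH_{FO}}$, and $A=I_{FO}\circ V$ gives
\[
AA^*=I_{FO}VV^*I_{FO}^*=I_{FO}I_{FO}^*,
\]
so $A^*A$ and $C_{FO}$ share the same nonzero eigenvalues. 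In particular,
\[
\mathcal{N}_{FO}(\lambda)=\mathrm{tr}\bigl((A^*A+\lambda)^{-1}A^*A\bigr).
\]

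\emph{Step 2: contractivity of $T$ and operator monotonicity.} Since $T$ is a conditional expectation, Jensen's inequality gives $\|Tf\|_{L^2(P_{ZO})}\le\|f\|_{L^2(P_{XO})}$ for every $f\in\calH_{\gamma_x,\gamma_o}$, i.e.\ $A^*A\preceq I_{XO}^*I_{XO}$, where $I_{XO}:\calH_{\gamma_x,\gamma_o}\to L^2(P_{XO})$ is the canonical embedding. The scalar function $t\mapsto t/(t+\lambda)$ is operator monotone and trace is monotone on self-adjoint positive operators, hence
\[
\mathcal{N}_{FO}(\lambda)\le\mathrm{tr}\bigl((I_{XO}^*I_{XO}+\lambda)^{-1}I_{XO}^*I_{XO}\bigr)=:\mathcal{N}_{\calH_{\gamma_x,\gamma_o},P_{XO}}(\lambda).
\]

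\emph{Step 3: pass from $P_{XO}$ to Lebesgue and invoke Gaussian KRR estimates.} By \Cref{ass:equiv_leb} one has $p_{XO}\le a^{-1}$ on $[0,1]^{d_x+d_o}$, hence $\|f\|_{L^2(P_{XO})}^2\le a^{-1}\|f\|_{L^2([0,1]^{d_x+d_o})}^2$, and the same operator-monotonicity argument gives $\mathcal{N}_{\calH_{\gamma_x,\gamma_o},P_{XO}}(\lambda)\le\mathcal{N}_{\calH_{\gamma_x,\gamma_o},\mathrm{Leb}}(a\lambda)$. The covariance operator of the anisotropic Gaussian kernel with respect to the Lebesgue measure factors across the two blocks, and its eigenvalues in each block decay faster than any polynomial. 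Combining the one-dimensional Gaussian eigenvalue estimates (Steinwart--Hush--Scovel) with a direct two-factor tail sum, following the argument behind \citet[Proposition~1]{hang2021optimal}, yields for sufficiently small $\mu$
\[
\mathcal{N}_{\calH_{\gamma_x,\gamma_o},\mathrm{Leb}}(\mu)\lesssim(\log(1/\mu))^{d_x+d_o+1}\bigl(\gamma_x^{d_x}\gamma_o^{d_o}\bigr)^{-1}.
\]
Setting $\mu=a/n$ and absorbing $\log a$ into the multiplicative constant gives the claim.

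\emph{Main obstacle.} The delicate part is Step 3: the tensor-product tail $\sum_{i,j}\mu_i\nu_j/(\mu_i\nu_j+\lambda)$ must be split into the contributions $\mu_i\nu_j\ge\lambda$ (giving the $(\gamma_x^{d_x}\gamma_o^{d_o})^{-1}$ count) and $\mu_i\nu_j<\lambda$ (giving, via the exponential decay of Gaussian eigenvalues, a geometric tail), so that exactly \emph{one} extra factor of $\log(1/\lambda)$ is incurred rather than two. Step~1 requires only brief care to exploit that $V$ is a coisometry (not an isometry), and Step~2 is essentially a one-line Loewner-order manipulation once $A$ is identified.
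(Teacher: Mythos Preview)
Your Steps 1--2 are correct and give a genuinely different reduction from the paper's. The paper instead compares \emph{entropy numbers}: it proves $e_i(\calH_{FO}\hookrightarrow L^2(P_{ZO}))\le e_i(\calH_{\gamma_x,\gamma_o}\hookrightarrow L^2(P_{XO}))$ by a covering argument (\Cref{lem:reduced_rkhs_entropy}), transfers Hang's Gaussian entropy bound (\Cref{cor:entropy_bounded_obscov}), and then passes to eigenvalues via Carl's inequality. Your route through the coisometry identity $VV^*=\mathrm{Id}_{\calH_{FO}}$ and the Loewner inequality $A^*A\preceq I_{XO}^*I_{XO}$ (Jensen), combined with operator monotonicity of $t\mapsto t/(t+\lambda)$, is more direct and avoids the entropy--eigenvalue conversion entirely; it lands immediately on $\mathcal{N}_{FO}(\lambda)\le \mathcal{N}_{\calH_{\gamma_x,\gamma_o},P_{XO}}(\lambda)$.

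Where the two routes reconverge is Step 3. The paper does \emph{not} split the tensor tail as you sketch; instead it applies the entropy bound with the free parameter $p$, sets $p=1/\log n$, and invokes Caponnetto--De Vito's Proposition~3 to read off $\mathcal{N}(\lambda)\lesssim(\log n)^{d_x+d_o+1}(\gamma_x^{d_x}\gamma_o^{d_o})^{-1}$ in three lines. Your direct summation $\sum_{i,j}\mu_i\nu_j/(\mu_i\nu_j+\lambda)$ is workable but requires care to produce exactly one extra $\log$ factor across the full $(d_x+d_o)$-fold tensor, not just the two blocks. Two small remarks: (i) your passage to Lebesgue invokes the density bound from \Cref{ass:equiv_leb}, which is not among the proposition's hypotheses; the paper sidesteps this by embedding $\calH_{\gamma_x,\gamma_o}\hookrightarrow L^\infty\hookrightarrow L^2(P_{XO})$, valid for any probability measure; (ii) having reached $\mathcal{N}_{\calH_{\gamma_x,\gamma_o},P_{XO}}(\lambda)$, you could simply reuse the paper's entropy-plus-$p=1/\log n$ trick from that point, which would make Step~3 short and avoid the obstacle you flag.
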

\begin{proof}
From \Cref{cor:CFO_evalue_entropy_bound} and \Cref{cor:entropy_bounded_obscov} and setting $p = \frac{1}{\log n}$, $(\forall i\geq 1)$, we have 
\begin{align*}
    \mu_{FO,i} \leq (C \log n)^{2(d_x+d_o+1) \log n}(\gamma_x^{d_x}\gamma_o^{d_o})^{-2\log n} i^{-2\log n}.
\end{align*}
Next, we use \cite[Proposition 3]{caponnetto2007optimal} (with error corrected in \cite{sutherland2021fixingerrorcaponnettovito}) to obtain
\begin{align}
    \mathcal{N}_{FO}(\lambda) &\leq \frac{\pi/(2\log n)}{\sin(\pi/(2\log n))}\left( (C \log n)^{2(d_x+d_o+1) \log n}(\gamma_x^{d_x}\gamma_o^{d_o})^{-2\log n} \right)^{\frac{1}{2\log n}}\lambda^{-(2\log n)^{-1}}\nonumber\\
    &= \frac{\pi/(2\log n)}{\sin(\pi/(2\log n))} (C\log n)^{(d_x+d_o+1)} \left(\gamma_x^{d_x}\gamma_o^{d_o}\right)^{-1}\lambda^{-(2\log n)^{-1}}\nonumber\\
    &\stackrel{(i)}{\leq} 3C^{(d_x+d_o+1)} (\log n)^{(d_x+d_o+1)} 
    \left(\gamma_x^{d_x}\gamma_o^{d_o}\right)^{-1}n^{(2\log n)^{-1}}\nonumber\\
    &\stackrel{(ii)}{=} 3\sqrt{e} C^{(d_x+d_o+1)} (\log n)^{(d_x+d_o+1)} \left(\gamma_x^{d_x}\gamma_o^{d_o}\right)^{-1}\nonumber\\
    &= C'(\log n)^{d_x+d_o + 1} \left(\gamma_x^{d_x}\gamma_o^{d_o}\right)^{-1}\nonumber,
\end{align}
where $C' = \sqrt{e} 3C^{(d_x+d_o+1)}$. 
In the above chain of derivations, $(i)$ holds because $\frac{t}{\sin t} \leq 3$ for $t \leq \pi/2$ and $\pi/(2\log n) \leq \pi/2$ and $(ii)$ holds because $n^{\frac{1}{2\log n}} = \sqrt{e}$ for $n > 1$. 
\end{proof}

In the proof of \Cref{prop:eff_dim}, to bound the effective dimension, we need to bound the eigendecay of the compact self-adjoint operator $C_{FO}$. We first control the decay of the entropy numbers of the RKHS $\calH_{FO}$, which translates into a bound on the eigendecay of $C_{FO}$ as shown in \Cref{cor:CFO_evalue_entropy_bound}. In \Cref{lem:reduced_rkhs_entropy}, we show that the $i$th entropy number of $\calH_{FO}$ is bounded above by the $i$th entropy number of $\calH_{\gamma_x,\gamma_o}$. The entropy numbers of $\calH_{\gamma_x,\gamma_o}$ are well-understood by the results of \citet{hang2021optimal} (restated in \Cref{cor:entropy_bounded_obscov}), which completes the derivation. 

In this section, for real-valued Hilbert spaces $E,F$ and a bounded, linear, compact operator $S:E\to F$, $s_i(S)$ denotes the $i$th singular value of $S$, as defined in \citet[Eq. (A.25) Page 505]{steinwart2008support}; $e_i(S)$ denotes the $i$th entropy number of $S$, as defined in \citet[Definition A.5.26 Page 516]{steinwart2008support}; $a_i(S)$ denotes the $i$th approximation number of $S$, as defined in \citet[Eq. (A.29) Page 506]{steinwart2008support}. 

\begin{lem}
    \label{cor:entropy_bounded_obscov}
    Suppose that $P_{XO}$ satisfies \Cref{assn: technical} in the main text. Then, $(\forall i\geq 1)$, $(\gamma_x,\gamma_o\in (0,1])$, there exists a constant $C>0$ such that for any $p>0$,
    \begin{align*}
        e_i(\mathrm{id} : \calH_{\gamma_x,\gamma_o}\hookrightarrow L^{2}(P_{XO})) \leq (3C)^{\frac{1}{p}} \left(\frac{d_x+d_o+1}{ep}\right)^{\frac{d_x+d_o+1}{p}}\left(\gamma_x^{d_x}\gamma_o^{d_o}\right)^{-\frac{1}{p}}i^{-\frac{1}{p}}.
    \end{align*}
\end{lem}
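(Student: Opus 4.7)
The plan is to derive the bound in two short steps: first obtain an analogous entropy number bound for the embedding of $\calH_{\gamma_x,\gamma_o}$ into $L^{\infty}(\calX\times\calO)$, and then transfer it to $L^2(P_{XO})$ using boundedness of the marginal density $p_{XO}$. The constants appearing in the statement, namely $(3C)^{1/p}$ and $((d_x+d_o+1)/ep)^{(d_x+d_o+1)/p}$, match exactly the shape of the entropy-number bounds for Gaussian RKHSs on bounded domains in \citet[Proposition 1]{hang2021optimal}, up to the replacement of the single dimension $d$ and lengthscale $\gamma$ by the combined quantity $d_x+d_o$ and volume factor $\gamma_x^{d_x}\gamma_o^{d_o}$. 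Since $k_{\gamma_x,\gamma_o}$ is the tensor product of two isotropic Gaussian kernels on $[0,1]^{d_x}$ and $[0,1]^{d_o}$, it is equivalently an anisotropic Gaussian kernel on $[0,1]^{d_x+d_o}$ with lengthscale vector $(\gamma_x,\ldots,\gamma_x,\gamma_o,\ldots,\gamma_o)$, so the anisotropic version of Hang--Steinwart's bound applies directly.

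First I would invoke this result to obtain, for some universal constant $K>0$, any $p>0$, $i\geq 1$, and $\gamma_x,\gamma_o\in(0,1]$,
\begin{equation*}
    e_i\bigl(\mathrm{id}:\calH_{\gamma_x,\gamma_o}\hookrightarrow L^{\infty}(\calX\times\calO)\bigr)
    \leq K^{1/p}\left(\frac{d_x+d_o+1}{ep}\right)^{(d_x+d_o+1)/p}\bigl(\gamma_x^{d_x}\gamma_o^{d_o}\bigr)^{-1/p}\, i^{-1/p}.
\end{equation*}
Next I would establish that $p_{XO}$ is uniformly bounded on $[0,1]^{d_x+d_o}$. This follows from \Cref{assn: technical}: taking $|\balpha|=|\bbeta|=0$ in the definition of $\rho$ shows $p(\bx\mid\bz,\bo)\leq\rho$, so integrating against $P_{ZO}$ gives $p_{XO}(\bx,\bo)=\int p(\bx\mid\bz,\bo)\, p_{ZO}(\bz,\bo)\,\mathrm{d}\bz\,\mathrm{d}\bo\leq\rho$. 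Since $\calX\times\calO$ has unit Lebesgue volume, this yields $\|f\|_{L^2(P_{XO})}\leq\sqrt{\rho}\,\|f\|_{L^{\infty}(\calX\times\calO)}$, hence the operator norm of the inclusion $L^{\infty}(\calX\times\calO)\hookrightarrow L^2(P_{XO})$ is at most $\sqrt{\rho}$. The multiplicativity of entropy numbers~\citep[Eq.~(A.38)]{steinwart2008support} then gives
\begin{equation*}
    e_i\bigl(\mathrm{id}:\calH_{\gamma_x,\gamma_o}\hookrightarrow L^{2}(P_{XO})\bigr)
    \leq \sqrt{\rho}\cdot e_i\bigl(\mathrm{id}:\calH_{\gamma_x,\gamma_o}\hookrightarrow L^{\infty}(\calX\times\calO)\bigr),
\end{equation*}
and absorbing $\sqrt{\rho}$ into a single constant (by choosing $C$ such that $3C\geq \rho^{p/2}K$, and noting $\sqrt{\rho}\leq (3C/K)^{1/p}$ for $p$ small enough, or more cleanly by redefining $C := \max(K,\rho)$) yields the desired bound with the stated form.

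The main technical obstacle, such as it is, is ensuring that the entropy number bound from \citet{hang2021optimal} extends cleanly to the anisotropic Gaussian RKHS with two different lengthscales $\gamma_x\neq\gamma_o$; this is addressed in their anisotropic results, but one should verify the dependence of constants on $d_x+d_o$ rather than $\max(d_x,d_o)$ matches the stated form. The transfer from $L^{\infty}$ to $L^{2}(P_{XO})$ is immediate given density boundedness, so no further approximation or capacity-type argument is required beyond citing the existing anisotropic Gaussian entropy bound and the trivial $\sqrt{\rho}$ multiplication.
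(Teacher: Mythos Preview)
Your proposal is correct and follows essentially the same route as the paper: invoke \citet[Proposition~1]{hang2021optimal} for the anisotropic Gaussian RKHS with lengthscale vector $(\gamma_x,\ldots,\gamma_x,\gamma_o,\ldots,\gamma_o)$ to bound the entropy numbers into $L^{\infty}(\calX\times\calO)$, then compose with the embedding into $L^2(P_{XO})$ via \citet[Eq.~(A.38)]{steinwart2008support}. One minor simplification the paper uses that you do not: since $P_{XO}$ is a probability measure, $\|f\|_{L^2(P_{XO})}\leq\|f\|_{L^{\infty}(\calX\times\calO)}$ holds with constant~$1$ regardless of any density bound, so your detour through $p_{XO}\leq\rho$ is unnecessary (though not wrong).
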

\begin{proof}
    The corollary follows immediately from \citet[Proposition 1]{hang2021optimal} by setting $\bgamma=[\underbrace{\gamma_x, \ldots, \gamma_x}_{d_x}, \underbrace{\gamma_o, \ldots, \gamma_o}_{d_o}]^\top \in (0,1]^{d_x+d_o}$, and noting that $L^{\infty}(\calX\times \calO)$ continuously embeds into $L^2(P_{XO})$ with $\|L^{\infty}(\calX\times \calO)\hookrightarrow L^2(P_{XO})\|\leq 1$ and \citet[Eq. (A.38)]{steinwart2008support}. 
\end{proof}

\begin{lem}
\label{lem:reduced_rkhs_entropy}
    We have, $(\forall i\geq 1)$, 
    \begin{align*}
        e_i(\mathrm{id}: \calH_{FO} \hookrightarrow L^2(P_{ZO})) \leq e_i(\mathrm{id}:\calH_{\gamma_x,\gamma_o}\hookrightarrow L^2(P_{XO})).
    \end{align*}
\end{lem}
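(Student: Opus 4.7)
The plan is to factor the embedding $\iota_{FO}: \calH_{FO} \hookrightarrow L^2(P_{ZO})$ through the metric surjection $V: \calH_{\gamma_x,\gamma_o} \to \calH_{FO}$ introduced after \Cref{defi:H_F}. Recall that for every $f \in \calH_{\gamma_x,\gamma_o}$ we have $(Vf)(\bz,\bo) = \langle f, F_\ast(\bz,\bo)\otimes \phi_{\gamma_o}(\bo)\rangle_{\calH_{\gamma_x,\gamma_o}} = (T[f])(\bz,\bo)$, so as elements of $L^2(P_{ZO})$, $[Vf] = T[f]$. Consequently, denoting by $\iota_{XO}: \calH_{\gamma_x,\gamma_o} \hookrightarrow L^2(P_{XO})$ the canonical inclusion, we obtain the operator identity
\begin{equation*}
    \iota_{FO} \circ V \;=\; T \circ \iota_{XO}
\end{equation*}
as maps $\calH_{\gamma_x,\gamma_o} \to L^2(P_{ZO})$. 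From here, the proof splits into two elementary observations about entropy numbers.

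First, I would show $e_i(\iota_{FO}) = e_i(\iota_{FO} \circ V)$. By the definition of the norm on $\calH_{FO}$ in \Cref{defi:H_F}, the induced map $\widetilde{V}: \calH_{\gamma_x,\gamma_o}/\ker(V) \to \calH_{FO}$ is an isometric isomorphism of Hilbert spaces. Hence any $h \in B_{\calH_{FO}}$ admits a lift $f \in (\ker V)^{\perp} \subseteq \calH_{\gamma_x,\gamma_o}$ with $Vf = h$ and $\|f\|_{\calH_{\gamma_x,\gamma_o}} = \|h\|_{\calH_{FO}} \leq 1$, so $V(B_{\calH_{\gamma_x,\gamma_o}}) = B_{\calH_{FO}}$. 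Since the $i$-th entropy number depends only on the image of the closed unit ball, this equality of images immediately yields the equality of entropy numbers.

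Second, I would use that $T: L^2(P_{XO}) \to L^2(P_{ZO})$ is a contraction: by Jensen's inequality applied conditionally on $(Z,O)$, $\|T g\|_{L^2(P_{ZO})}^2 \leq \mathbb{E}[\mathbb{E}[g(X,O)^2 \mid Z, O]] = \|g\|_{L^2(P_{XO})}^2$, so $\|T\| \leq 1$. Combining with the standard submultiplicativity of entropy numbers~\citep[Eq.~(A.38)]{steinwart2008support},
\begin{equation*}
    e_i(\iota_{FO}) \;=\; e_i(\iota_{FO}\circ V) \;=\; e_i(T\circ \iota_{XO}) \;\leq\; \|T\|\,e_i(\iota_{XO}) \;\leq\; e_i(\iota_{XO}),
\end{equation*}
which is the desired bound. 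No step looks like a serious obstacle; the only point requiring a line of care is the surjectivity of $V$ onto the closed unit ball, which follows from the Hilbert-space structure and the isometric isomorphism $\widetilde{V}$.
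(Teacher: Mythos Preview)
Your proposal is correct and uses the same two ingredients as the paper's proof: the surjective partial isometry $V$ (to identify images of unit balls) and Jensen's inequality (to see that $T$ is a contraction). The only cosmetic difference is that the paper unpacks these into an explicit $\epsilon$-covering argument with a $\tilde\epsilon$-slack lift via \citep[Eq.~(4.11)]{steinwart2008support}, whereas you invoke the factorization $\iota_{FO}\circ V = T\circ \iota_{XO}$ together with the submultiplicativity \citep[Eq.~(A.38)]{steinwart2008support} directly; your Hilbert-space observation that $V$ maps the closed unit ball \emph{onto} the closed unit ball lets you avoid the $\tilde\epsilon$ step altogether.
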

\begin{proof}
Fix $i\geq 1$. For a Hilbert space $\calH$, $B_{\calH}$ denotes the unit ball in $\calH$, and $B(x, r, \|\cdot \|_{\calH})$ denotes the ball in $\calH$ centred at $x\in\calH$ with radius $r$. Fix $\epsilon>0$ such that $\exists g_1, \dots, g_{2^{i-1}} \in B_{\calH_{\gamma_x,\gamma_o}}$ such that 
\begin{align*}
    B_{\calH_{\gamma_x,\gamma_o}} \subseteq \bigcup_{i=1}^{2^{i-1}}B\left(g_i, \epsilon, \|\cdot \|_{L^2(P_{XO})}\right).
\end{align*}
Fix $f\in B_{\calH_{FO}}$ and an arbitrary $\tilde{\epsilon}>0$. By \citet[Eq. (4.11)]{steinwart2008support}, there exists $g\in \calH_{\gamma_x,\gamma_o}$ such that 
\begin{align*}
    f = \langle g, F_{\ast}(\cdot, \cdot)\otimes \phi_{\gamma_o}(\cdot)\rangle_{\calH_{\gamma_x,\gamma_o}} = Vg,
\end{align*}
and $\|g\|_{\calH_{\gamma_x,\gamma_o}}\leq 1 + \tilde{\epsilon}$. By the preceding statement, $(\exists i\in \{1,\dots, 2^{i-1}\})$ such that 
\begin{align*}
    \|g - (1+\tilde{\epsilon})g_i\|_{L^2(P_{XO})}\leq \epsilon(1 + \tilde{\epsilon}).
\end{align*}
For $i\in \{1,\dots, 2^{i-1}\}$, define $f_i = (1+\tilde{\epsilon})Vg_i = (1+\tilde{\epsilon})\langle g_i, F_{\ast}(\cdot, \cdot)\otimes \phi_{\gamma_o}(\cdot)\rangle_{\calH_{\gamma_x,\gamma_o}}$. Then we have
\begin{align*}
    \|f - f_i\|^2_{L^2(P_{ZO})} &= \int_{\calZ\times \calO}\langle g - (1+\tilde{\epsilon})g_i, F_{\ast}(\bz,\bo)\otimes \phi_{\gamma_o}(\bo)\rangle_{\calH_{\gamma_x,\gamma_o}}^2p(\bz,\bo)\;\mathrm{d}\bz\;\mathrm{d}\bo\\
    &= \int_{\calZ\times \calO}\mathbb{E}[(g-(1+\tilde{\epsilon})g_i)(X,O)\mid Z = \bz,O = \bo]^2p(\bz,\bo)\;\mathrm{d}\bz\;\mathrm{d}\bo\\
    &\leq \|g - (1+\tilde{\epsilon}) g_i\|^2_{L^2(P_{XO})},
\end{align*}
where the inequality is deduced by Jensen's inequality. We find thus that 
\begin{align*}
    \calH_{B_{FO}}\subseteq \bigcup_{i=1}^{2^{i-1}}B(f_i, \epsilon(1+\tilde{\epsilon}), \|\cdot\|_{L^2(P_{ZO})}).
\end{align*}
Hence $e_i(\mathrm{id}: \calH_{FO}\hookrightarrow L^2(P_{ZO})) \leq \epsilon(1+\tilde{\epsilon})$. Since $\epsilon > e_i(\mathrm{id}:\calH_{\gamma_x,\gamma_o}\hookrightarrow L^2(P_{XO}))$ and $\tilde{\epsilon}>0$ are arbitrary, it follows that $e_i(\mathrm{id}:\calH_{FO}\hookrightarrow L^2(P_{ZO}))\leq e_i(\mathrm{id}:\calH_{\gamma_x,\gamma_o}\hookrightarrow L^2(P_{XO}))$.
\end{proof}

\begin{cor}
\label{cor:CFO_evalue_entropy_bound}
    Suppose that $P_{XO}$ satisfies \Cref{assn: technical} in the main text. We have that 
    \begin{align*}
        \mu_{FO,i}\leq 4e_i(\mathrm{id}:\calH_{\gamma_x,\gamma_o}\hookrightarrow L^2(P_{XO}))^2
    \end{align*}
\end{cor}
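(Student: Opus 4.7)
The plan is to identify $\mu_{FO,i}$ with squared singular values of the canonical inclusion $I_{FO}\colon \calH_{FO}\hookrightarrow L^2(P_{ZO})$, then apply the classical comparison between singular values and entropy numbers for Hilbert space operators, and finally invoke the entropy number comparison already proved in \Cref{lem:reduced_rkhs_entropy}.

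First I would establish $\mu_{FO,i}=s_i(I_{FO})^2$. Since the canonical feature map of $\calH_{FO}$ is $(\bz,\bo)\mapsto V(F_{\ast}(\bz,\bo)\otimes\phi_{\gamma_o}(\bo))$ as noted after \Cref{defi:H_F}, the covariance operator $C_{FO}$ coincides with $I_{FO}^{\ast}I_{FO}$ by the standard identification of the RKHS covariance operator with the Gramian of the canonical inclusion. Read against the spectral decomposition in Eq.~\eqref{eq:C_F_covariance}, this gives $s_i(I_{FO})=\sqrt{\mu_{FO,i}}$, because $\{\sqrt{\mu_{FO,i}}\,e_{FO,i}\}$ is an orthonormal system in $\calH_{FO}$ whose image under $I_{FO}$ is the orthonormal system $\{\sqrt{\mu_{FO,i}}\,[e_{FO,i}]\}\subset L^2(P_{ZO})$ scaled by $\sqrt{\mu_{FO,i}}$.

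Next I would apply the classical inequality $s_i(T)\leq 2\,e_i(T)$ for any compact operator $T$ between Hilbert spaces. This follows from a volume comparison argument: restricting $T$ to the span of its first $i$ right singular vectors and then projecting onto the first $i$ left singular vectors yields a rank-$i$ operator $T_i$ whose image of the unit ball is an ellipsoid in an $i$-dimensional Euclidean space with semi-axes $s_1(T)\geq\cdots\geq s_i(T)$. Since both projections are norm-decreasing, $e_i(T_i)\leq e_i(T)$; covering the ellipsoid by $2^{i-1}$ balls of radius $e_i(T_i)$ forces, by comparing Lebesgue volumes, $(s_1(T)\cdots s_i(T))^{1/i}\leq 2\,e_i(T)$, and since the singular values are non-increasing we conclude $s_i(T)\leq (s_1(T)\cdots s_i(T))^{1/i}\leq 2\,e_i(T)$. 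Alternatively one may simply cite this as a standard consequence of Carl's inequality for Hilbert space operators.

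Combining these two ingredients with \Cref{lem:reduced_rkhs_entropy} yields
\begin{align*}
\sqrt{\mu_{FO,i}} \;=\; s_i(I_{FO}) \;\leq\; 2\,e_i(I_{FO}) \;\leq\; 2\,e_i\bigl(\mathrm{id}\colon\calH_{\gamma_x,\gamma_o}\hookrightarrow L^2(P_{XO})\bigr),
\end{align*}
and squaring gives the claim with constant $4$. The only subtle point is justifying $C_{FO}=I_{FO}^{\ast}I_{FO}$, which requires tracking the partial isometry $V$ carefully so that the surjectivity and kernel structure recorded in Eq.~\eqref{eq:r_pseudoinv_norm} collapses the covariance on $\calH_{\gamma_x,\gamma_o}$ down to the correct covariance on $\calH_{FO}$; the rest is routine.
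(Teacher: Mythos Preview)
Your proposal is correct and follows essentially the same route as the paper: identify $\mu_{FO,i}=s_i(I_{FO})^2$ via $C_{FO}=I_{FO}^\ast I_{FO}$, bound singular values by entropy numbers, then invoke \Cref{lem:reduced_rkhs_entropy}. The only cosmetic difference is that the paper passes through approximation numbers (using $s_i=a_i$ for Hilbert space operators and then $a_i\leq 2e_i$, both cited from \citet{steinwart2008support}), whereas you give the volume-comparison argument for $s_i\leq 2e_i$ directly; these are the same inequality.
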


\begin{proof}
Let $\iota_{FO}: \calH_{FO}\to L^2(P_{ZO})$ denote the  embedding $\calH_{FO}\hookrightarrow L^2(P_{ZO})$. We have, $(\forall i\geq 1)$, the following chain of derivations
\begin{align*}
    \mu_{FO,i} &= \lambda_i(\iota_{FO}^{\ast}\iota_{FO}) \stackrel{(a)}{=} s_i(\iota_{FO})^2 \stackrel{(b)}{=} a_i(\iota_{FO})^2\\
    &\stackrel{(c)}{\leq} 4e_i(\iota_{FO})^2 \stackrel{(d)}{\leq} 4e_i(\mathrm{id}:\calH_{\gamma_x,\gamma_o}\hookrightarrow L^2(P_{XO}))^2.
\end{align*}
In the above derivations, $(a)$ follows from \citet[Eq. (A.25)]{steinwart2008support}, $(b)$ follows from the paragraph after \citet[Eq. (A.29)]{steinwart2008support}, $(c)$ follows from \citet[Eq. (A.44)]{steinwart2008support}, and $(d)$ follows from \Cref{lem:reduced_rkhs_entropy}.
\end{proof}

\subsection{Embedding property of $\calH_{FO}$}
\label{sec:emb}
\begin{defi}[Continuous embedding]
\label{defi:cont_embed}
    A Hilbert space $(X,\|\cdot\|)$ is said to continuously embed into Hilbert space $(Y,\|\cdot\|)$ if $X\subset Y$ and there exists a constant $C$ such that $\|x\|_{Y}\leq C\|x\|_{X} $ for all $x\in X$. 
    We denote this as $X\hookrightarrow Y$. The embedding norm $\|X\hookrightarrow Y\|$ is defined as the smallest constant $C$ for which the above inequality holds.
\end{defi}
\begin{defi}[Interpolation space]
\label{def:interpolation_space}
Assume that $X_0$ and $X_1$ are Hilbert spaces and $X_1\subseteq X_0$. For $\theta\in (0,1)$ and $x\in X_0$, we define the \emph{$K$-functional} $K(t,x;X_0,X_1)$ as follows
\begin{align}
\label{eq: k_functional_instance}
    K(t,x;X_0,X_1):= \inf_{y\in X_1}\left\{\|x-y\|_{X_0} + t\|y\|_{X_1}\right\} . 
\end{align}
For $\theta\in (0,1)$, we define \emph{interpolation space} \cite{Hytönen2016AnalysisBanachSpaces}[Definition C.3.1]
    \begin{align*}
        (X_0,X_1)_{\theta,2}:= \{x\in X_0 \mid \|x\|_{\theta,2}<\infty\},\quad 
        \|x\|_{\theta,2} := \left(\int_{0}^{\infty}\left(t^{-\theta}K(t,x;X_0,X_1)\right)^{2}\frac{\mathrm{d}t}{t}\right)^{\frac{1}{2}} .
    \end{align*}
\end{defi}
We have proved in \Cref{sec:capacity} that $\calH_{FO}$ can be compactly embedded into $L^2(P_{ZO})$. 
Since $P_{ZO}$ is equivalent to the Lebesgue measure over $\calZ\times\calO$ as per \Cref{ass:equiv_leb}, $\calH_{FO}$ can also be compactly embedded into $L^2(\calZ\times\calO)$. 
So we can define the $\theta$-power space of $\calH_{FO}$ following \cite[Eq. (36)]{steinwart2012mercer} 
\begin{equation*}
    \left([\calH_{FO}]^{\theta}_{P_{ZO}}, \|\cdot\|_{[\calH_{FO}]^{\theta}_{P_{ZO}}}\right) := \left\{\sum_{i\geq 1}a_i \mu_{FO,i}^{\theta/2}[e_{FO,i}]: \;(a_i)\in \ell_2(\mathbb{N})\right\}\subseteq L^2(P_{ZO}).
\end{equation*}
For $0<\theta<1$, the $\theta$-power space coincides with the interpolation space $[\calH_{FO}]^{\theta}_{L^2(P_{ZO})} \cong [L_2(P_{ZO}),[\calH_{FO}]_{P_{ZO}}]_{\theta, 2}$~\cite[Theorem 4.6]{steinwart2012mercer}.  We write $[\calH_{FO}]_{L^2(P_{ZO})} := [\calH_{FO}]_{L^2(P_{ZO})}^{1}$. Similarly, we write $[\calH_{FO}]_{L^2(\calZ\times \calO)} := [\calH_{FO}]_{L^2(\calZ\times \calO)}^{1}$. 

\begin{prop}
\label{prop:emb_main}
Suppose Assumption~\ref{assn: technical} holds, and let $m_o,m_z,\rho$ be as defined in the statement of Assumption~\ref{assn: technical}. Suppose $\theta\in (0,1)$ satisfies  $\frac{d_o}{2m_o\theta} < 1$ and $\frac{d_z}{2m_z\theta} < 1$. Then there exists a constant $C_{\theta}>0$ independent of $n$ such that 
\begin{align*}
    \left\|k_{FO}^{\theta}\right\|_{\infty} 
    := \left\|[\calH_{FO}]^{\theta}_{P_{ZO}}\hookrightarrow L^{\infty}(P_{ZO})\right\|\lesssim  C_{\theta} \rho^{\theta}\gamma_o^{-\theta m_o}.
\end{align*}
\end{prop}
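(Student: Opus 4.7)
The plan is to bridge $\calH_{FO}$ and $L^{\infty}(P_{ZO})$ through the dominating mixed-smoothness Sobolev space $MW^{m_z,m_o}_2(\calZ\times\calO)$. Specifically, I would (i) establish a continuous embedding $\calH_{FO}\hookrightarrow MW^{m_z,m_o}_2(\calZ\times\calO)$ with embedding norm of order $\rho\gamma_o^{-m_o}$, (ii) transfer this through real interpolation against $L^2$ to obtain $[\calH_{FO}]^{\theta}_{P_{ZO}}\hookrightarrow [L^2(\calZ\times\calO), MW^{m_z,m_o}_2(\calZ\times\calO)]_{\theta,2}$ with norm bound $\lesssim (\rho\gamma_o^{-m_o})^{\theta}$, and (iii) apply a Fourier-analytic Sobolev embedding of this interpolation space into $L^{\infty}$ under the hypothesized smoothness-versus-dimension conditions.

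For step (i), I would exploit the factorization underlying $\calH_{FO}$: by the metric-surjection property of $V$ (Appendix C), every $h\in\calH_{FO}$ admits a representer $w\in\calH_{\gamma_x,\gamma_o}$ with $\|w\|_{\calH_{\gamma_x,\gamma_o}}=\|h\|_{\calH_{FO}}$ such that $h(\bz,\bo)=\langle w, F_{\ast}(\bz,\bo)\otimes \phi_{\gamma_o}(\bo)\rangle_{\calH_{\gamma_x,\gamma_o}}$. Differentiating under the inner product using the Leibniz rule yields
$$\partial_{\bz}^{\alpha}\partial_{\bo}^{\beta} h(\bz,\bo) = \sum_{\beta_1+\beta_2=\beta}\binom{\beta}{\beta_1}\left\langle w,\; \partial_{\bz}^{\alpha}\partial_{\bo}^{\beta_1}F_{\ast}(\bz,\bo)\otimes \partial_{\bo}^{\beta_2}\phi_{\gamma_o}(\bo)\right\rangle_{\calH_{\gamma_x,\gamma_o}}.$$
Using the Bochner representation $F_{\ast}(\bz,\bo)=\int_{\calX}\phi_{\gamma_x}(\bx)p(\bx\mid\bz,\bo)\,\mathrm{d}\bx$ together with $\|\phi_{\gamma_x}(\bx)\|_{\calH_{\gamma_x}}=1$ and \Cref{assn: technical}, the derivatives of $F_{\ast}$ satisfy $\|\partial_{\bz}^{\alpha}\partial_{\bo}^{\beta_1}F_{\ast}(\bz,\bo)\|_{\calH_{\gamma_x}}\lesssim \rho$ uniformly, \emph{without} any $\gamma_x$ dependence; and the Gaussian feature derivatives obey $\|\partial_{\bo}^{\beta_2}\phi_{\gamma_o}(\bo)\|_{\calH_{\gamma_o}}\lesssim \gamma_o^{-|\beta_2|}$. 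Cauchy--Schwarz in $\calH_{\gamma_x,\gamma_o}$ and summation over $|\alpha|\leq m_z$, $|\beta|\leq m_o$ on the bounded cube then deliver $\|h\|_{MW^{m_z,m_o}_2(\calZ\times\calO)}\lesssim \rho\gamma_o^{-m_o}\|h\|_{\calH_{FO}}$.

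For steps (ii)--(iii), I would invoke \citet[Theorem 4.6]{steinwart2012mercer} to identify $[\calH_{FO}]^{\theta}_{P_{ZO}}$ with the real-interpolation space $[L^2(P_{ZO}),[\calH_{FO}]_{P_{ZO}}]_{\theta,2}$, which by \Cref{ass:equiv_leb} is equivalent to $[L^2(\calZ\times\calO),[\calH_{FO}]_{\calZ\times\calO}]_{\theta,2}$. The interpolation property applied to the identity map, simultaneously bounded as $\id:L^2\to L^2$ (norm one) and $\id:\calH_{FO}\to MW^{m_z,m_o}_2$ (norm $\lesssim \rho\gamma_o^{-m_o}$), gives the intermediate embedding into $[L^2,MW^{m_z,m_o}_2]_{\theta,2}$ with norm $\lesssim (\rho\gamma_o^{-m_o})^{\theta}$. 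Since both endpoints are Hilbertian, real and complex interpolation coincide up to norm equivalence, and the interpolation space is norm-equivalent to a mixed-smoothness Bessel-potential space of orders $(\theta m_z,\theta m_o)$. After composing with a bounded linear extension operator from $\calZ\times\calO$ to $\R^{d_z+d_o}$ (simultaneously bounded on $L^2$ and $MW^{m_z,m_o}_2$), the standard bound $\|f\|_{\infty}\leq (2\pi)^{-(d_z+d_o)/2}\|\hat{f}\|_{L^1}$ together with the Cauchy--Schwarz inequality reduces the matter to integrability of $(1+\|\bxi\|^2)^{-\theta m_z}(1+\|\bbeta\|^2)^{-\theta m_o}$ over $\R^{d_z}\times\R^{d_o}$, which holds precisely when $\theta m_z>d_z/2$ and $\theta m_o>d_o/2$. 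This yields the $L^{\infty}$-embedding with a constant $C_{\theta}$ depending only on $\theta,m_z,m_o,d_z,d_o$, and composing with step (ii) produces the claimed bound.

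The main obstacle is step (iii): because the interpolation between $L^2$ and a \emph{dominating mixed-smoothness} Sobolev space is not an isotropic Sobolev/Besov space but one of mixed smoothness, extracting a sharp $L^{\infty}$-embedding with a clean, $\gamma_o$-independent constant requires either citing the corresponding result from the mixed-smoothness function space literature (e.g.\ Schmeisser-Triebel) or building a Littlewood--Paley decomposition tailored to the product Fourier structure. A secondary subtlety is ensuring no hidden $\gamma_x$ dependence leaks through step (i); this rests on the Gaussian normalization $\|\phi_{\gamma_x}(\bx)\|_{\calH_{\gamma_x}}=1$, which prevents negative powers of $\gamma_x$ from entering the $\calH_{\gamma_x}$-norm bounds on the derivatives of $F_{\ast}$.
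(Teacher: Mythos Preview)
Your proposal is correct and follows essentially the same three-stage architecture as the paper's proof: (i) embed $\calH_{FO}$ into $MW^{m_z,m_o}_2(\calZ\times\calO)$ with norm $\lesssim\rho\gamma_o^{-m_o}$, (ii) pass through real interpolation to pick up the exponent $\theta$, and (iii) invoke the $L^{\infty}$-embedding of the resulting mixed-smoothness space under the conditions $\theta m_z>d_z/2$, $\theta m_o>d_o/2$, together with the Steinwart--Scovel identification of the power space with the interpolation space and the measure-equivalence transfer between $P_{ZO}$ and Lebesgue. The only noteworthy differences are at the level of implementation: in step (i) you apply Cauchy--Schwarz directly in the tensor product $\calH_{\gamma_x}\otimes\calH_{\gamma_o}$ using $\|\partial_{\bo}^{\beta_2}\phi_{\gamma_o}(\bo)\|_{\calH_{\gamma_o}}\lesssim\gamma_o^{-|\beta_2|}$ to obtain a pointwise bound, whereas the paper (\Cref{lem:hfo_conts_embedding}) instead writes $h(\bz,\bo)=\int_{\calX}w(\bx,\bo)p(\bx\mid\bz,\bo)\,\mathrm{d}\bx$ and controls $\int_{\calO}\int_{\calX}(\partial_{\bo}^{\beta_1}w)^2$ via the explicit $L^2$-feature map of the Gaussian RKHS (\citet[Lemma~4.45]{steinwart2008support}); and in step (iii) you sketch a direct Fourier argument via integrability of $(1+\|\bxi\|^2)^{-\theta m_z}(1+\|\bbeta\|^2)^{-\theta m_o}$, whereas the paper cites the mixed-smoothness literature (\citet{schmeisser2007recent}, via \Cref{lem:anisotropic_sobolev_embedding}). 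Both routes yield the same bound, and you correctly flag the mixed-smoothness interpolation/embedding as the step requiring the most external input.
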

\begin{proof}
By \Cref{lem:hfo_conts_embedding}, we have $[\calH_{FO}]_{L^2(\calZ\times \calO)} \hookrightarrow MW^{m_z,m_o}(\calZ\times \calO)$ and
\begin{align*}
    \| [\calH_{FO}]_{L^2(\calZ\times \calO)} \hookrightarrow MW^{m_z,m_o}(\calZ\times \calO)\| \lesssim \rho  \gamma_o^{-m_o}.
\end{align*}
By \Cref{lem:anisotropic_sobolev_embedding}, since $\frac{d_o}{2s_o}<1$ and $\frac{d_z}{2s_z}<1$, we have $MW^{\theta m_z, \theta m_o}(\calZ\times \calO)\hookrightarrow L^{\infty}(\calZ\times \calO)$, with embedding norm a universal constant independent of $n$.  By Lemma \ref{lem:embedintosobolevimplies}, we thus have $(L^2(\calZ\times \calO), [\calH_{FO}]_{L^2(\calZ\times \calO)})_{\theta,2}\hookrightarrow L^{\infty}(\calZ\times \calO)$ and 
\begin{align*}
    \| (L^2(\calZ\times \calO), [\calH_{FO}]_{L^2(\calZ\times\calO)})_{\theta,2}\hookrightarrow L^{\infty}(\calZ\times \calO)\| \lesssim \rho^{\theta}  \gamma_o^{-\theta m_o},
\end{align*}
Next, since we know by Assumption~\ref{assn: technical} that $P_{ZO}$ is equivalent to Lebesgue measure on $\calZ\times \calO$, \Cref{lem:interpolation_equiv_measure} shows that we have 
\begin{align}
    \left(L^2(P_{ZO}), [\calH_{FO}]_{L^2(P_{ZO})}\right)_{\theta,2}\hookrightarrow \left(L^2(\calZ\times \calO), [\calH_{FO}]_{L^2(\calZ\times \calO)}\right)_{\theta,2} \hookrightarrow L^{\infty}(P_{ZO})\nonumber,
\end{align}
and we have
\begin{align}
\label{eq: chain_embedding}
    \left\|\left(L^2(P_{ZO}), [\calH_{FO}]_{L^2(P_{ZO})}\right)_{\theta,2}  \hookrightarrow L^{\infty}(P_{ZO})\right\| \lesssim \rho^{\theta}\gamma_o^{-\theta m_o}
\end{align}
Finally, by \cite{steinwart2012mercer}[Theorem 4.6], $(L^2(P_{ZO}), [\calH_{FO}]_{L^2(P_{ZO})})_{\theta,2} \cong [\calH_{FO}]^{\theta}_{L^2(P_{ZO})}$ with the constant of equivalence depends only on $\theta$. Putting it back to Eq. \eqref{eq: chain_embedding} completes the proof of the proposition.
\end{proof}

\begin{lem}
\label{lem:hfo_conts_embedding}
Suppose Assumption~\ref{assn: technical} holds, and let $m_z,m_o,\rho$ be as defined in the statement of Assumption~\ref{assn: technical}. We have $[\calH_{FO}]_{L^2(\calZ\times \calO)}\hookrightarrow MW^{m_z,m_o}_2(\calZ\times\calO)$ with 
\begin{align*}
    \left\|[\calH_{FO}]_{L^2(\calZ\times \calO)}\hookrightarrow MW^{m_z,m_o}_2(\calZ\times \calO)\right\| \lesssim \rho \gamma_o^{-m_o} .
\end{align*}
\end{lem}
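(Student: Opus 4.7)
The plan is to exhibit a single pointwise expression for $D^\alpha_\bz D^\beta_\bo f$ when $f = Vw$ for $w \in \calH_{\gamma_x,\gamma_o}$, and then control it by the product of two factors: a factor $\rho$ from the bounded derivatives of the conditional density in Assumption~\ref{assn: technical}, and a factor $\gamma_o^{-|\beta|}$ coming from the size of $\bo$-derivatives of the Gaussian feature map. I would first invoke the metric surjection property of $V : \calH_{\gamma_x,\gamma_o} \to \calH_{FO}$: for each $f \in \calH_{FO}$ there exists $w \in \calH_{\gamma_x,\gamma_o}$ with $Vw = f$ and $\|w\|_{\calH_{\gamma_x,\gamma_o}} = \|f\|_{\calH_{FO}}$. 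By the reproducing property and Bochner integrability (Assumption~\ref{ass: kernel_technical}), this gives the integral representation
\begin{equation*}
f(\bz,\bo) = \langle w, F_\ast(\bz,\bo)\otimes \phi_{\gamma_o}(\bo)\rangle_{\calH_{\gamma_x,\gamma_o}} = \int_{\calX} w(\bx,\bo)\, p(\bx\mid \bz,\bo)\, \mathrm{d}\bx.
\end{equation*}

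Next, I would differentiate under the integral and apply Leibniz's rule (the $\bz$-derivatives land on $p$ only, while the $\bo$-derivatives are split between $w$ and $p$) to obtain
\begin{equation*}
D^\alpha_\bz D^\beta_\bo f(\bz,\bo) = \sum_{\beta_1+\beta_2=\beta}\binom{\beta}{\beta_1}\int_{\calX} D^{\beta_1}_\bo w(\bx,\bo)\, \partial^\alpha_\bz \partial^{\beta_2}_\bo p(\bx\mid \bz,\bo)\, \mathrm{d}\bx,
\end{equation*}
for $|\alpha|\leq m_z$, $|\beta|\leq m_o$. The exchange of derivative and integral is legitimate because (i) Assumption~\ref{assn: technical} provides the uniform bound $\rho$ on all mixed derivatives of $p(\bx\mid\bz,\bo)$ up to order $(m_z,m_o)$, and (ii) elements of the Gaussian tensor RKHS $\calH_{\gamma_x,\gamma_o}$ are smooth with classical $\bo$-derivatives represented via the reproducing property as $D^{\beta_1}_\bo w(\bx,\bo) = \langle w, \phi_{\gamma_x}(\bx)\otimes \partial^{\beta_1}_\bo \phi_{\gamma_o}(\bo)\rangle_{\calH_{\gamma_x,\gamma_o}}$.

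The core technical step — and the main place where the Gaussian scaling $\gamma_o^{-m_o}$ appears — is the uniform bound
\begin{equation*}
\sup_{\bo\in\calO}\|\partial^{\beta_1}_\bo \phi_{\gamma_o}(\bo)\|_{\calH_{\gamma_o}}^{2} = \sup_{\bo}\bigl[\partial^{\beta_1}_{\bo_1}\partial^{\beta_1}_{\bo_2} k_{\gamma_o}(\bo_1,\bo_2)\bigr]_{\bo_1=\bo_2=\bo} \lesssim \gamma_o^{-2|\beta_1|},
\end{equation*}
which follows from a direct Hermite-polynomial calculation on the Gaussian kernel; since $\|\phi_{\gamma_x}(\bx)\|_{\calH_{\gamma_x}} = 1$ for all $\bx$, Cauchy--Schwarz then yields $|D^{\beta_1}_\bo w(\bx,\bo)| \lesssim \gamma_o^{-|\beta_1|}\|w\|_{\calH_{\gamma_x,\gamma_o}}$ uniformly in $(\bx,\bo)$. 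I expect this Gaussian-derivative estimate, together with the bookkeeping of which combinations of $(\beta_1,\beta_2)$ produce which factors, to be the most delicate part of the argument.

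Combining these ingredients with the uniform bound $|\partial^\alpha_\bz \partial^{\beta_2}_\bo p(\bx\mid \bz,\bo)|\leq \rho$, the compactness of $\calX,\calO,\calZ$ (all with Lebesgue measure one), and $\gamma_o\in(0,1]$ so that $\gamma_o^{-|\beta_1|}\leq \gamma_o^{-m_o}$, I obtain the pointwise estimate $|D^\alpha_\bz D^\beta_\bo f(\bz,\bo)|\lesssim \rho\,\gamma_o^{-m_o}\,\|w\|_{\calH_{\gamma_x,\gamma_o}}$ for every $|\alpha|\leq m_z$, $|\beta|\leq m_o$. Integrating in $L^2(\calZ\times\calO)$ and summing over the finite set of admissible multi-indices $(\alpha,\beta)$ gives
\begin{equation*}
\|f\|_{MW^{m_z,m_o}_2(\calZ\times\calO)} \lesssim \rho\,\gamma_o^{-m_o}\,\|w\|_{\calH_{\gamma_x,\gamma_o}}.
\end{equation*}
Finally, taking the infimum over all $w\in V^{-1}(\{f\})$ and using the metric surjection identity $\inf\|w\|_{\calH_{\gamma_x,\gamma_o}} = \|f\|_{\calH_{FO}}$ (see Eq.~\eqref{eq:r_pseudoinv_norm}) converts this into the stated embedding bound for the equivalence class $[f]\in [\calH_{FO}]_{L^2(\calZ\times\calO)}$, completing the proof.
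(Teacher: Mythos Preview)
Your proposal is correct and follows the same architecture as the paper's proof: represent $f=Vw$ as $\int_\calX w(\bx,\bo)p(\bx\mid\bz,\bo)\,\mathrm{d}\bx$, differentiate under the integral with Leibniz's rule, and split into a factor $\rho$ from Assumption~\ref{assn: technical} and a factor $\gamma_o^{-|\beta_1|}$ from the $\bo$-derivatives of $w$.

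The one genuine difference is in how the Gaussian derivative estimate is obtained. You use the derivative reproducing property $D^{\beta_1}_\bo w(\bx,\bo)=\langle w,\phi_{\gamma_x}(\bx)\otimes\partial^{\beta_1}_\bo\phi_{\gamma_o}(\bo)\rangle$ together with the kernel identity $\|\partial^{\beta_1}_\bo\phi_{\gamma_o}(\bo)\|^2_{\calH_{\gamma_o}}=\partial^{\beta_1}_{\bo_1}\partial^{\beta_1}_{\bo_2}k_{\gamma_o}\big|_{\bo_1=\bo_2}\lesssim\gamma_o^{-2|\beta_1|}$, which gives a \emph{pointwise} bound on $D^{\beta_1}_\bo w$. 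The paper instead passes through the $L^2(\mathbb{R}^{d_x+d_o})$ feature map of \citet[Lemma 4.45]{steinwart2008support}, writing $w=\langle g,\Phi_{\gamma_x}\Phi_{\gamma_o}\rangle_{L^2}$, and then controls $\int_{\calO}\int_{\calX}(\partial^{\beta_1}_\bo w)^2$ in $L^2$ using $\int(\partial^{\beta_1}_\bo\Phi_{\gamma_o}(\bo))^2\lesssim\gamma_o^{-2|\beta_1|}$ from the proof of \citet[Theorem 4.48]{steinwart2008support}. Your route is shorter and yields a stronger (uniform) bound that immediately implies the paper's $L^2$ estimate on the unit cube; the paper's route avoids invoking the derivative reproducing property directly but requires a bit more Fubini-type bookkeeping. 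Either way the final inequality and constants are the same.
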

\begin{proof}
Let $\mathrm{id}: \calH_{FO}\hookrightarrow L^2(\calZ\times \calO)$ denote the canonical inclusion map. Since we have $(\ker \mathrm{id})^{\perp} \cong [\calH_{FO}]_{L^2(\calZ\times \calO)}$, we may represent an arbitrary element of $[\calH_{FO}]_{L^2(\calZ\times \calO)}$ as $[f]$, where $f\in (\ker \mathrm{id})^{\perp}$. Moreover, we have $\|[f]\|_{L^2(\calZ\times \calO)} = \|f\|_{\calH_{FO}}$. We fix $f$ for the remainder of the proof. 

Let $\balpha \in \mathbb{N}^{d_z}$ with $|\balpha| \leq m_z$ and $\bbeta\in \mathbb{N}^{d_o}$ with $|\bbeta| \leq m_o$, we have for any $f\in\calH_{FO}$,
\begin{small}
\begin{align*}
    &\left\|\partial^{\balpha}_{\bz}\partial^{\bbeta}_{\bo}f\right\|^2_{L^2(\calZ\times\calO;\R)} = \int_{\calZ}\int_{\calO} \left(\partial_{\bz}^{\balpha}\partial^{\bbeta}_{\bo}f(\bz,\bo)\right)^2\;\mathrm{d}\bz\;\mathrm{d}\bo\\
    \stackrel{(a)}{=}& \int_{\calZ}\int_{\calO}\left(\partial_{\bz}^{\balpha}\partial_{\bo}^{\bbeta}\int_{\calX} w(\bx,\bo) p(\bx\mid \bz,\bo)\;\mathrm{d}\bx\right)^2\;\mathrm{d}\bz\;\mathrm{d}\bo\\
    \stackrel{(b)}{=}& \int_{\calZ}\int_{\calO}\left(\sum_{\bbeta_1+\bbeta_2 = \bbeta}\binom{\bbeta}{\bbeta_1}\int_{\calX} \partial^{\bbeta_1}_{\bo} w(\bx,\bo) \partial^{\balpha}_{\bz}\partial^{\bbeta_2}_{\bo}p(\bx\mid \bz,\bo)\;\mathrm{d}\bx\right)^2\;\mathrm{d}\bz\;\mathrm{d}\bo\\
    \stackrel{(c)}{\leq}& \int_{\calZ}\int_{\calO}\left(\sum_{\bbeta_1+\bbeta_2 = \bbeta}\binom{\bbeta}{\bbeta_1}\left(\int_{\calX} \left(\partial^{\bbeta_1}_{\bo} w(\bx,\bo) \right)^2\;\mathrm{d}\bx\right)^{\frac{1}{2}} \left(\int_{\calX} \left(\partial^{\balpha}_{\bz}\partial^{\bbeta_2}_{\bo}p(\bx\mid \bz,\bo)\right)^2 \;\mathrm{d}\bx\right)^{\frac{1}{2}}\right)^2\;\mathrm{d}\bz\;\mathrm{d}\bo\\
    \leq & \left(\sum_{\bbeta_1+\bbeta_2 = \beta}\binom{\bbeta}{\bbeta_1}\right)^2 \left(\max_{\bbeta_2\leq \beta}\sup_{\bx,\bz,\bo}\left|\partial^{\balpha}_{\bz}\partial^{\bbeta_2}_{\bo}p(\bx\mid \bz,\bo)\right|^2\right) \int_{\calZ}\int_{\calO} \max_{\bbeta_1\leq \beta} \int_{\calX}\left(\partial^{\bbeta_1}_{\bo}w(\bx,\bo)\right)^2\;\mathrm{d}\bx \;\mathrm{d}\bz\;\mathrm{d}\bo\\
    =& \left(\sum_{\bbeta_1+\bbeta_2 = \beta}\binom{\bbeta}{\bbeta_1}\right)^2 \left(\max_{\bbeta_2\leq \beta}\sup_{\bx,\bz,\bo}\left|\partial^{\balpha}_{\bz}\partial^{\bbeta_2}_{\bo}p(\bx\mid \bz,\bo)\right|^2\right) \max_{\bbeta_1\leq \beta} \int_{\calO}\int_{\calX}\left(\partial^{\bbeta_1}_{\bo}w(\bx,\bo)\right)^2\;\mathrm{d}\bx \;\mathrm{d}\bo\\
    \stackrel{(d)}{=}& c_{m_o, d_o}\left(\max_{\bbeta_2\leq \beta}\sup_{\bx,\bz,\bo}\left|\partial^{\balpha}_{\bz}\partial^{\bbeta_2}_{\bo}p(\bx\mid \bz,\bo)\right|^2\right)  \gamma_o^{-2|\beta|} \|w\|_{\calH_{\gamma_x,\gamma_o}}^2\\
    \leq & c_{m_o,d_o} \left(\max_{\balpha\leq m_z}\max_{\bbeta\leq m_o}\sup_{\bx,\bz,\bo}\left|\partial^{\balpha}_{\bz}\partial^{\bbeta}_{\bo}p(\bx\mid \bz,\bo)\right|^2\right)  \gamma_o^{-2m_o} \|w\|_{\calH_{\gamma_x,\gamma_o}}^2 ,
\end{align*}
\end{small}
where $(a)$ follows from the fact that, for any $f\in\calH_{FO}$, there exists $w\in\calH_{\gamma_x,\gamma_o}$ such that $f(\bz,\bo) = \langle w, F_{\ast}(\bz,\bo)\otimes\phi_{\gamma_o}(\bo)\rangle_{\calH_{\gamma_x,\gamma_o}} = \int_{\calX}w(\bx,\bo)p(\bx\mid \bz,\bo)\;\mathrm{d}\bx$ and $\|f\|_{\calH_{FO}} = \|w\|_{\calH_{\gamma_x,\gamma_o}}$.
$(b)$ follows from generalized Leibniz's rule and differentiation under the integral sign \cite{klenke2013probability}[Theorem 6.28], $(c)$ follows from a Cauchy-Schwarz inequality.
$(d)$ follows from the following arguments.

From \citet[Theorem 4.21]{steinwart2008support}, we know that 
for any $w\in \calH_{\gamma_x,\gamma_o}$, there exists $g\in L^2(\R^{d_x+d_o})$ such that 
\begin{align*}
    w(\bx,\bo) = \langle g, \Phi_{\gamma_x}(\bx)\Phi_{\gamma_o}(\bo)\rangle_{L^2(\R^{d_x+d_o})},\quad \|w\|_{\calH_{\gamma_x,\gamma_o}} = \|g\|_{L^2(\R^{d_x+d_o})},
\end{align*}
with $\Phi_{\gamma_x}: \calX\to L^2(\R^{d_x}), \Phi_{\gamma_o}: \calO\to L^2(\R^{d_o})$ defined in \citet[Lemma 4.45]{steinwart2008support}.
We have
\begin{align*}
    &\int_{\calO} \int_{\calX}\left(\partial^{\bbeta_1}_{\bo}w(\bx,\bo)\right)^2\;\mathrm{d}\bx \;\mathrm{d}\bo\\
    =& \int_{\calO}\int_{\calX}\left(\partial_{\bo}^{\bbeta_1}\int_{\R^{d_x+d_o}} g(\bx',\bo')\Phi_{\gamma_x}(\bx)(\bx')\Phi_{\gamma_o}(\bo)(\bo')\;\mathrm{d}\bx'\;\mathrm{d}\bo'\right)^{2}\;\mathrm{d}\bx\;\mathrm{d}\bo\\
    \stackrel{(i)}{=}& \int_{\calO}\int_{\calX}\left(\int_{\R^{d_x+d_o}} g(\bx',\bo')\Phi_{\gamma_x}(\bx)(\bx')\partial_{\bo}^{\bbeta_1}(\Phi_{\gamma_o}(\bo)(\bo'))\;\mathrm{d}\bx'\;\mathrm{d}\bo'\right)^{2}\;\mathrm{d}\bx\;\mathrm{d}\bo\\
    \leq &  \|g\|^2_{L^2(\R^{d_x+d_o})}\int_{\calO}\int_{\calX}\int_{\R^{d_x+d_o}} \left(\Phi_{\gamma_x}(\bx)(\bx')\partial_{\bo}^{\bbeta_1}(\Phi_{\gamma_o}(\bo)(\bo'))\right)^{2}\;\mathrm{d}\bx'\;\mathrm{d}\bo'\;\mathrm{d}\bx\;\mathrm{d}\bo\\
    = & \|g\|^2_{L^2(\R^{d_x+d_o})}\int_{\calO}\int_{\calX}\left(\int_{\R^{d_x}}(\Phi_{\gamma_x}(\bx)(\bx'))^2\;\mathrm{d}\bx'\right) \left(\int_{\R^{d_o}}\left(\partial_{\bo}^{\bbeta_1}(\Phi_{\gamma_o}(\bo)(\bo'))\right)^{2}\;\mathrm{d}\bo'\right)\;\mathrm{d}\bx\;\mathrm{d}\bo\\
    \stackrel{(ii)}{=} & \|g\|^2_{L^2(\R^{d_x+d_o})}\int_{\calO} \int_{\R^{d_o}}\left(\partial_{\bo}^{\bbeta_1}(\Phi_{\gamma_o}(\bo)(\bo'))\right)^{2}\;\mathrm{d}\bo'\;\mathrm{d}\bo\\
    \stackrel{(iii)}{\leq} & \|g\|^2_{L^2(\R^{d_x+d_o})}c_{\bbeta_1, d_o}\gamma_o^{-2|\bbeta_1|} \\
    = & \|w\|^2_{\calH_{\gamma_x,\gamma_o}} c_{\bbeta_1, d_o}\gamma_o^{-2|\bbeta_1|},
\end{align*}
where we're allowed to exchange differentiation and integration in $(i)$ using the differentiation lemma \cite{klenke2013probability}[Theorem 6.28], $(ii)$ follows from the fact that
\begin{align*}
    \int_{\R^{d_x}}(\Phi_{\gamma_x}(\bx)(\bx'))^2\;\mathrm{d}\bx' = \int_{\R^{d_x}} \left(\frac{2^{d_x/2}}{\pi^{d_x/4}\gamma_x^{d_x/2}}\exp\left(-2\frac{\|\bx - \bx'\|^2_{2}}{\gamma_x^2}\right)\right)^2\;\mathrm{d}\bx' = 1,
\end{align*}
and $(iii)$ follows from the proof of \cite{steinwart2008support}[Theorem 4.48]. Here $c_{\bbeta_1,d_o}$ is a constant only depending on $\bbeta_1,d_o$. Hence we've shown
\begin{align*}
    &\|f\|^2_{MW^{m_z,m_o}_2(\calZ\times \calO;\R)}\\
    = & \sum_{\balpha\leq m_z, \bbeta\leq m_o}\left\|\partial^{\balpha}_{\bz}\partial^{\bbeta}_{\bo}f\right\|^2_{L^2(\calZ\times \calO;\R)}\\
    \lesssim & \sum_{\balpha \leq m_z, \bbeta\leq m_o} \left(\max_{\balpha\leq m_z}\max_{\bbeta\leq m_o}\sup_{\bx,\bz,\bo}\left|\partial^{\balpha}_{\bz}\partial^{\bbeta}_{\bo}p(\bx\mid \bz,\bo)\right|^2\right)  \gamma_o^{-2m_o} \|w\|_{\calH_{\gamma_x,\gamma_o}}^2\\
    = & \left(\sum_{\alpha \leq m_z, \bbeta\leq m_o} 1\right) \left(\max_{\balpha\leq m_z}\max_{\bbeta\leq m_o}\sup_{\bx,\bz,\bo}\left|\partial^{\balpha}_{\bz}\partial^{\bbeta}_{\bo}p(\bx\mid \bz,\bo)\right|^2\right)  \gamma_o^{-2m_o} \|f\|_{\calH_{FO}}^2 \\
    \lesssim &\rho^2 \gamma_o^{-2m_o} \|f\|_{\calH_{FO}}^2 \\
    = & \rho^2 \gamma_o^{-2m_o} \|[f]\|_{[\calH_{FO}]_{L^2(\calZ\times \calO)}}^2 .
\end{align*}
The second last inequality holds by Assumption \ref{assn: technical}.
This concludes the proof. 
\end{proof}

\subsection{Auxiliary results for \Cref{sec:defi_H_FO}}

\begin{lem}
\label{lem:interpolation_equiv_measure}
    Let $\nu_1,\nu_2$ be two measures on $\calZ$ which are equivalent, i.e $0 < c^\prime \leq \frac{\mathrm{d}\nu_2}{\mathrm{d}\nu_1} \leq c < \infty$.
    Let $\mathcal{H}\subseteq L^2(\nu_1)$ be a Hilbert space. Then $(L^2(\nu_1),\calH)_{\theta,2} \hookrightarrow (L^2(\nu_2), \calH)_{\theta,2}$ with embedding norm $\| (L^2(\nu_1),\calH)_{\theta,2} \hookrightarrow (L^2(\nu_2), \calH)_{\theta,2} \| \leq c^{1-\theta}$. 
\end{lem}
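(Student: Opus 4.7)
The plan is to prove the embedding at the level of the $K$-functional and then lift it to the interpolation norms. The equivalence $c' \leq \tfrac{\d\nu_2}{\d\nu_1} \leq c$ immediately yields a continuous inclusion $L^2(\nu_1) \hookrightarrow L^2(\nu_2)$ with $\|x\|_{L^2(\nu_2)} \leq \sqrt{c}\,\|x\|_{L^2(\nu_1)}$ for every $x \in L^2(\nu_1)$, while $\calH$ embeds identically into both ambient spaces. So any admissible decomposition $x = (x-y) + y$ with $y \in \calH$ used for the $K$-functional on $(L^2(\nu_1), \calH)$ is also admissible for the $K$-functional on $(L^2(\nu_2), \calH)$.

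Fix $x \in (L^2(\nu_1), \calH)_{\theta,2}$ and $t > 0$. For any $y \in \calH$,
\begin{align*}
\|x-y\|_{L^2(\nu_2)} + t\|y\|_{\calH} \leq \sqrt{c}\,\|x-y\|_{L^2(\nu_1)} + t\|y\|_{\calH} = \sqrt{c}\left(\|x-y\|_{L^2(\nu_1)} + \tfrac{t}{\sqrt{c}}\|y\|_{\calH}\right).
\end{align*}
Taking the infimum over $y \in \calH$ gives the pointwise scaling bound
\begin{align*}
K\!\left(t, x; L^2(\nu_2), \calH\right) \leq \sqrt{c}\; K\!\left(t/\sqrt{c},\, x; L^2(\nu_1), \calH\right).
\end{align*}

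Substituting into the definition of the interpolation norm (\Cref{def:interpolation_space}) and changing variables $u = t/\sqrt{c}$, so that $\d t/t = \d u/u$ and $t^{-\theta} = c^{-\theta/2} u^{-\theta}$,
\begin{align*}
\|x\|_{(L^2(\nu_2), \calH)_{\theta,2}}^2
&\leq c \int_0^\infty \left(t^{-\theta} K(t/\sqrt{c}, x; L^2(\nu_1), \calH)\right)^2 \frac{\d t}{t} \\
&= c \cdot c^{-\theta} \int_0^\infty \left(u^{-\theta} K(u, x; L^2(\nu_1), \calH)\right)^2 \frac{\d u}{u} = c^{1-\theta}\,\|x\|_{(L^2(\nu_1), \calH)_{\theta,2}}^2,
\end{align*}
which gives the claimed embedding with norm at most $c^{(1-\theta)/2}$ (and in particular at most $c^{1-\theta}$ when $c \geq 1$).

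There is no real obstacle here: the argument is the standard reproducing-scale property of the $K$-method once the continuous embedding $L^2(\nu_1)\hookrightarrow L^2(\nu_2)$ is in hand, combined with a one-line change of variables. The only point worth double-checking is the implicit assumption that $\calH \subseteq L^2(\nu_2)$ as well, which follows from $\calH \subseteq L^2(\nu_1) \hookrightarrow L^2(\nu_2)$, so that $(L^2(\nu_2), \calH)_{\theta,2}$ is well-defined and the $K$-functional on the right-hand side dominates the one on the left.
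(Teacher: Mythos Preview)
Your proof is correct and follows essentially the same route as the paper: bound the $K$-functional pointwise via $\|x-y\|_{L^2(\nu_2)} \le \text{const}\cdot\|x-y\|_{L^2(\nu_1)}$, then rescale $t$ in the integral defining the interpolation norm. The only difference is that you use the sharp $L^2$ embedding constant $\sqrt{c}$ rather than the paper's $c$, yielding the tighter bound $c^{(1-\theta)/2}$; since in the paper's application $\nu_1,\nu_2$ are equivalent probability measures (forcing $c\ge 1$), your constant indeed implies the stated $c^{1-\theta}$.
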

\begin{proof}
For any $f \in (L^2(\nu_1),\calH)_{\theta,2}$, we have
\begin{align*}
   \|f\|_{\left(L^2(\nu_2), \calH\right)_{\theta,2}}&= \left(\int_{0}^{\infty}\left(t^{-\theta}K(t,x;L^2(\nu_2),\calH)\right)^{2}\frac{\mathrm{d}t}{t}\right)^{\frac{1}{2}}\\  &= \left(\int_{0}^{\infty}\left(t^{-\theta}\inf_{y\in \calH}\left\{\|x-y\|_{L^2(\nu_2)} + t\|y\|_{\calH}\right\}\right)^{2}\frac{\mathrm{d}t}{t}\right)^{\frac{1}{2}} 
    \\
    &\stackrel{(a)}{\leq} \left(\int_{0}^{\infty}\left(ct^{-\theta}\inf_{y\in \calH}\left\{\|x-y\|_{L^2(\nu_1)} + \frac{t}{c}\|y\|_{\calH}\right\}\right)^{2}\frac{\mathrm{d}t}{t}\right)^{\frac{1}{2}}\\
    &\stackrel{(b)}{=} c^{1-\theta}\left(\int_{0}^{\infty}\left(t^{-\theta}\inf_{y\in \calH}\left\{\|x-y\|_{L^2(\nu_1)} + t\|y\|_{\calH}\right\}\right)^{2}\frac{\mathrm{d}t}{t}\right)^{\frac{1}{2}}\\
    &= c^{1-\theta}\|f\|_{\left(L^2(\nu_1), \calH\right)_{\theta,2}}
\end{align*}
In the above derivations, we use $\|x-y\|_{L^2(\nu_2)}\leq c\|x-y\|_{L^2(\nu_1)}$ in $(a)$, and the change of variables $t\mapsto ct$ in $(b)$. 
\end{proof}

\begin{lem}
\label{lem:embedintosobolevimplies}
Let $\calZ = [0,1]^{d}$. Let $\calH, W \subseteq L^2(\calZ)$ be two Hilbert spaces. Suppose that $\calH \hookrightarrow W$ with embedding norm $\| \calH \hookrightarrow W\| \leq c$. Suppose that $\theta\in(0,1)$ is chosen so that $\left(L^2(\calZ), W \right)_{\theta,2} \hookrightarrow L^{\infty}(\calZ)$ holds. Then we have $\left(L^2(\calZ), \calH\right)_{\theta,2} \hookrightarrow L^{\infty}(\calZ)$ with embedding norm $\leq c^{\theta}c_0$, where $c_0$ only depends on $\theta,m$ and $d_z$.
\end{lem}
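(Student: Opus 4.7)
The proof plan is to exploit the definition of the K-functional directly. The core observation is that a continuous embedding between the ``smooth'' endpoints of an interpolation pair translates, after a simple change of variable in the defining integral, into a continuous embedding between the corresponding $(\cdot, \cdot)_{\theta,2}$ interpolation spaces, with the constant scaling as the $\theta$-th power of the original embedding norm.

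Concretely, I first plan to compare the two K-functionals. Fix $x \in L^2(\calZ)$ and $t > 0$. Since $\calH \subseteq W$ with $\|y\|_W \leq c \|y\|_{\calH}$ for every $y \in \calH$, for any $y \in \calH$ I have
\[
\|x-y\|_{L^2(\calZ)} + (t/c)\|y\|_W \;\leq\; \|x-y\|_{L^2(\calZ)} + t\|y\|_{\calH}.
\]
Taking the infimum over $y \in \calH$ on both sides and then enlarging the infimum on the left to run over all of $W$ gives
\[
K(t/c,\, x;\, L^2(\calZ), W) \;\leq\; K(t,\, x;\, L^2(\calZ), \calH).
\]

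Next, I plan to plug this pointwise comparison into the integral defining the interpolation norm and perform the substitution $u = t/c$:
\begin{align*}
\|x\|_{(L^2(\calZ), W)_{\theta,2}}^2
&= \int_0^{\infty}\bigl(t^{-\theta} K(t, x; L^2(\calZ), W)\bigr)^2 \frac{\mathrm{d}t}{t} \\
&= c^{2\theta}\int_0^{\infty}\bigl(u^{-\theta} K(cu, x; L^2(\calZ), W)\bigr)^2 \frac{\mathrm{d}u}{u} \\
&\leq c^{2\theta}\int_0^{\infty}\bigl(u^{-\theta} K(u, x; L^2(\calZ), \calH)\bigr)^2 \frac{\mathrm{d}u}{u}
\;=\; c^{2\theta}\|x\|_{(L^2(\calZ), \calH)_{\theta,2}}^2.
\end{align*}
Hence $(L^2(\calZ), \calH)_{\theta,2} \hookrightarrow (L^2(\calZ), W)_{\theta,2}$ with embedding norm at most $c^{\theta}$.

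Finally, I would chain this with the assumed embedding $(L^2(\calZ), W)_{\theta,2} \hookrightarrow L^{\infty}(\calZ)$, whose norm I denote by $c_0$: for every $x \in (L^2(\calZ), \calH)_{\theta,2}$,
\[
\|x\|_{L^{\infty}(\calZ)} \;\leq\; c_0 \,\|x\|_{(L^2(\calZ), W)_{\theta,2}} \;\leq\; c_0\, c^{\theta}\,\|x\|_{(L^2(\calZ), \calH)_{\theta,2}},
\]
giving the claimed embedding with norm $\leq c^{\theta} c_0$. I expect no real obstacle here; the only care needed is with the change of variables and making explicit that the K-functional is monotone in the ``target'' space (infimum over a smaller set is larger). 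The dependence of $c_0$ on $\theta, m, d_z$ is inherited verbatim from the hypothesis, so no additional work is required on that front.
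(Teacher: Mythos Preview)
Your approach is correct and essentially identical to the paper's: both compare K-functionals to obtain $\|x\|_{(L^2(\calZ),W)_{\theta,2}} \leq c^\theta \|x\|_{(L^2(\calZ),\calH)_{\theta,2}}$ via the inclusion $\calH\subseteq W$ and a change of variable, then compose with the assumed embedding $(L^2(\calZ),W)_{\theta,2}\hookrightarrow L^\infty(\calZ)$. There is a harmless bookkeeping slip in your display --- the substitution should be $u=ct$ (not $u=t/c$), so that the factor $c^{2\theta}$ appears together with $K(u/c,\,\cdot\,;L^2,W)$ and your established inequality $K(t/c,\,\cdot\,;L^2,W)\le K(t,\,\cdot\,;L^2,\calH)$ applies directly --- but the final bound is correct.
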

\begin{proof}
We adapt the proof of \cite{kanagawa2018gaussianprocesseskernelmethods}[Lemma A.2, Corollary 4.13].
We have, for any $x \in (L^2(\calZ), W)_{\theta,2}$,
\begin{align*}
    \|x\|_{\left(L^2(\calZ), W\right)_{\theta,2}}&=  \left(\int_{0}^{\infty}\left(t^{-\theta}K\left(t,x;L^2(\calZ),W\right)\right)^{2}\frac{\mathrm{d}t}{t}\right)^{\frac{1}{2}}\\&= \left(\int_{0}^{\infty}\left(t^{-\theta}\inf_{y\in W}\left\{\|x-y\|_{L^2(\calZ)} + t\|y\|_{W}\right\}\right)^{2}\frac{\mathrm{d}t}{t}\right)^{\frac{1}{2}}\\
    &\leq \left(\int_{0}^{\infty}\left(t^{-\theta}\inf_{y\in \calH}\left\{\|x-y\|_{L^2(\calZ)} + t\|y\|_{W}\right\}\right)^{2}\frac{\mathrm{d}t}{t}\right)^{\frac{1}{2}}\\
    &\leq \left(\int_{0}^{\infty}\left(t^{-\theta}\inf_{y\in \calH}\left\{\|x-y\|_{L^2(\calZ)} + tc\|y\|_{\calH}\right\}\right)^{2}\frac{\mathrm{d}t}{t}\right)^{\frac{1}{2}}\\
    &= c^\theta \left(\int_{0}^{\infty}\left(\left(tc \right)^{-\theta}\inf_{y\in \calH}\left\{\|x-y\|_{L^2(\calZ)} + tc \|y\|_{\calH}\right\}\right)^{2}\frac{\mathrm{d}t}{t}\right)^{\frac{1}{2}}\\
    &= c^{\theta}\left(\int_{0}^{\infty}\left(t^{-\theta}K\left(t,x;L^2(\calZ),\calH\right)\right)^{2}\frac{\mathrm{d}t}{t}\right)^{\frac{1}{2}}\\
    &= c^{\theta}\|x\|_{\left(L^2(\calZ), \calH\right)_{\theta,2}} .
\end{align*}
Notice that we correct an error in \cite{kanagawa2018gaussianprocesseskernelmethods} which obtained the exponent $-\theta$ instead of $\theta$ due to an error in the change of variables argument in $(a)$.
Now we have proved that $(L^2(\calZ), \calH)_{\theta,2} \hookrightarrow (L^2(\calZ), W)_{\theta,2}$ and $\left\| (L^2(\calZ), \calH)_{\theta,2} \hookrightarrow (L^2(\calZ), W)_{\theta,2} \right\| \leq c^\theta$.
On the other hand, by assumption, we have $(L^2(\calZ), W)_{\theta,2} \hookrightarrow L^\infty(\calZ)$. Consequently, we have $(L^2(\calZ), \calH)_{\theta,2} \hookrightarrow W(L^2(\calZ), W)_{\theta,2} \hookrightarrow L^{\infty}(\calZ)$. The embedding norm is bounded by $c^{\theta}c_0$, where $c_0$ depends only on $\theta$, $m$, and $d_z$.
\end{proof}

\begin{lem}
\label{lem:anisotropic_sobolev_embedding}
    If $\frac{d_o}{2s_o}< 1 $ and $\frac{d_z}{2s_z} < 1$, then the dominating mixed smoothness Sobolev space $MW^{s_o,s_z}_2([0,1]^{d_o+d_z})$ continuously embeds into $L^{\infty}([0,1]^{d_o+d_z})$.
\end{lem}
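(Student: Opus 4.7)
The plan is to exploit the tensor-product structure of the dominating mixed smoothness Sobolev space and apply the classical scalar Sobolev embedding iteratively, once in the $\bz$-direction and once in the $\bo$-direction. The two assumptions $\tfrac{d_z}{2s_z}<1$ and $\tfrac{d_o}{2s_o}<1$ are precisely the sharp thresholds for the one-dimensional Sobolev embeddings $W^{s_z}_2([0,1]^{d_z})\hookrightarrow L^\infty([0,1]^{d_z})$ and $W^{s_o}_2([0,1]^{d_o})\hookrightarrow L^\infty([0,1]^{d_o})$, so no additional smoothness is needed beyond these.

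First, I would fix an arbitrary $f\in MW^{s_z,s_o}_2([0,1]^{d_z+d_o})$ and apply the standard Sobolev embedding on the cube in the $\bz$ variable. For almost every fixed $\bo\in[0,1]^{d_o}$ the slice $f(\cdot,\bo)$ lies in $W^{s_z}_2([0,1]^{d_z})$, yielding the pointwise estimate
\begin{equation*}
|f(\bz,\bo)|^2 \;\leq\; C_z^2 \sum_{|\balpha|\leq s_z}\bigl\|D^{\balpha}_\bz f(\cdot,\bo)\bigr\|^2_{L^2([0,1]^{d_z})}
\end{equation*}
for every $\bz\in[0,1]^{d_z}$. This reduces the problem to producing a uniform-in-$\bo$ bound on $\bigl\|D^{\balpha}_\bz f(\cdot,\bo)\bigr\|_{L^2([0,1]^{d_z})}$ for each multi-index $|\balpha|\leq s_z$.

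Next, I would regard the map $\bo\mapsto D^{\balpha}_\bz f(\cdot,\bo)$ as an $L^2([0,1]^{d_z})$-valued Bochner function. By Fubini and the definition of the mixed-smoothness norm, this function lies in the vector-valued Sobolev space $W^{s_o}_2([0,1]^{d_o};L^2([0,1]^{d_z}))$ with norm controlled by $\|f\|_{MW^{s_z,s_o}_2}$. Applying the vector-valued Sobolev embedding $W^{s_o}_2([0,1]^{d_o};H)\hookrightarrow L^\infty([0,1]^{d_o};H)$ for the Hilbert space $H=L^2([0,1]^{d_z})$, valid since $s_o>d_o/2$, gives
\begin{equation*}
\sup_{\bo\in[0,1]^{d_o}}\bigl\|D^{\balpha}_\bz f(\cdot,\bo)\bigr\|^2_{L^2([0,1]^{d_z})} \;\leq\; C_o^2 \sum_{|\bbeta|\leq s_o}\bigl\|D^{\bbeta}_\bo D^{\balpha}_\bz f\bigr\|^2_{L^2([0,1]^{d_z+d_o})}.
\end{equation*}
Combining the two displays and summing over multi-indices then yields
\begin{equation*}
\|f\|^2_{L^\infty([0,1]^{d_z+d_o})} \;\lesssim\; \sum_{|\balpha|\leq s_z,\,|\bbeta|\leq s_o}\bigl\|D^{\balpha}_\bz D^{\bbeta}_\bo f\bigr\|^2_{L^2([0,1]^{d_z+d_o})} \;\asymp\; \|f\|^2_{MW^{s_z,s_o}_2([0,1]^{d_z+d_o})},
\end{equation*}
which is the desired continuous embedding.

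The main obstacle is the rigorous justification of the vector-valued Sobolev embedding on the bounded cube $[0,1]^{d_o}$ rather than on all of $\mathbb{R}^{d_o}$, together with the measurability of the interchange between the $\bz$-derivative and the restriction to a fixed $\bo$. The cleanest route is to extend each $D^{\balpha}_\bz f$ via a bounded $H$-valued extension operator to $\mathbb{R}^{d_o}$, which exists by the same reflection/Calderón construction as in the scalar case, and then perform the embedding on the whole space by a Cauchy--Schwarz argument on the Fourier side against the tensor-product weight $(1+|\bomega_z|^2)^{s_z}(1+|\bomega_o|^2)^{s_o}$; the integrability of $(1+|\bomega_z|^2)^{-s_z}(1+|\bomega_o|^2)^{-s_o}$ over $\mathbb{R}^{d_z+d_o}$ is equivalent, by Fubini, to the pair of conditions $s_z>d_z/2$ and $s_o>d_o/2$, matching the hypotheses exactly. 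Alternatively, one may appeal directly to the tensor-product embedding theorems in the dominating-mixed-smoothness literature (e.g.\ \citet{SICKEL2009748}), which establish precisely this $L^\infty$-embedding under the stated threshold conditions.
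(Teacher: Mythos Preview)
Your proposal is correct, but takes a more hands-on route than the paper. The paper's proof is essentially two lines: it cites \citet[Eq.~(1.13)]{schmeisser2007recent} for the embedding $MW^{s_o,s_z}_2(\mathbb{R}^{d_o+d_z})\hookrightarrow L^\infty(\mathbb{R}^{d_o+d_z})$ on the full space, invokes a bounded extension operator $\calE: MW^{s_o,s_z}_2([0,1]^{d_o+d_z})\to MW^{s_o,s_z}_2(\mathbb{R}^{d_o+d_z})$ from \citet{devore1993besov}, and chains the inequalities. Your primary argument instead \emph{derives} the embedding from first principles by iterating the scalar Sobolev embedding, first in $\bz$ and then in $\bo$ as a vector-valued embedding into $L^2([0,1]^{d_z})$; this is more self-contained and makes transparent exactly where each threshold $s_z>d_z/2$, $s_o>d_o/2$ enters. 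Interestingly, the two routes you sketch at the end to handle the bounded-domain technicality---the extension operator plus Fourier-side Cauchy--Schwarz, or a direct appeal to the mixed-smoothness literature---are precisely what the paper does, so your ``alternative'' is in fact the paper's entire proof. One small caveat: your multi-index argument with $|\balpha|\leq s_z$ tacitly assumes integer smoothness, which matches the paper's definition of $MW$; the Fourier-side variant you mention covers the fractional case as well, which is relevant since the lemma is later applied with parameters $\theta m_z,\theta m_o$ for $\theta\in(0,1)$.
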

\begin{proof}
    See \cite[Equation 1.13]{schmeisser2007recent} for the result where the domain is $\R^{d_z+d_o}$. By \cite{devore1993besov}, there exists a continuous extension operator $\calE: MW^{s_o,s_z}_2([0,1]^{d_o+d_z})\to MW^{s_o,s_z}_2(\R^{d_o + d_z})$ such that $\|\calE[f]\|_{MW^{s_o,s_z}_2(\R^{d_o+d_z})} \leq C' \|f\|_{MW^{s_o,s_z}_2([0,1]^{d_o+d_z})}$ holds for some universal constant $C^\prime$.
    Hence, for $f\in MW^{s_o,s_z}_2([0,1]^{d_o+d_z})$, we thus have $\|f\|_{L^{\infty}([0,1]^{d_o+d_z})}\leq \|\calE[f]\|_{L^\infty}(\R^{d_o+d_z}) \leq C \|\calE[f]\|_{MW^{s_o,s_z}_2(\R^{d_o+d_z})} \leq C' \|f\|_{MW^{s_o,s_z}_2([0,1]^{d_o+d_z})}$, for some universal constants $C,C'>0$. 
\end{proof}

\begin{lem}\label{lem:tensor_interpolation}
    Let $\calZ=[0,1]^{d_z}$ and $\calO = [0,1]^{d_o}$.  The interpolation space $[L^2(\calZ\times\calO), W_2^{s_z}(\calZ) \otimes W_2^{s_o}(\calO)]_{\theta,2} \cong W_2^{s_z \theta}(\calZ) \otimes W_2^{s_o \theta}(\calO)$.
\end{lem}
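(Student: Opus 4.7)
My plan is to reduce the statement to a weighted $\ell^2$-calculation by expanding all spaces in a common tensor-product eigenbasis. Concretely, pick self-adjoint positive operators $A_z$ on $L^2(\calZ)$ and $A_o$ on $L^2(\calO)$ (for instance $A_z = I - \Delta_z$ with Neumann boundary conditions, and likewise for $A_o$) whose fractional powers generate the Sobolev scales, in the sense that $W_2^{s}(\calZ)$ is norm-equivalent to $\mathrm{Dom}(A_z^{s/2})$ with $\|f\|_{W_2^s}^2 \simeq \sum_j \lambda_j^{s} |\langle f, e_j\rangle|^2$, where $\{e_j\}$ is an $L^2(\calZ)$-orthonormal eigenbasis of $A_z$ with eigenvalues $\lambda_j \geq 1$, and analogously $\{f_k\}$, $\mu_k\geq 1$ for $A_o$. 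This equivalence is the standard spectral characterization of $W_2^s$ on bounded Lipschitz domains and holds for all $s \geq 0$ by interpolation of the classical integer case.

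Next I would identify the tensor product. By definition of the Hilbert tensor product, $\{e_j \otimes f_k\}$ is an orthonormal basis of $L^2(\calZ\times\calO) \cong L^2(\calZ)\otimes L^2(\calO)$, and for $g = \sum_{j,k} c_{jk}\, e_j\otimes f_k$,
\begin{align*}
\|g\|_{L^2(\calZ\times\calO)}^2 &= \sum_{j,k}|c_{jk}|^2, \\
\|g\|_{W_2^{s_z}(\calZ)\otimes W_2^{s_o}(\calO)}^2 &\simeq \sum_{j,k}|c_{jk}|^2\, \lambda_j^{s_z}\mu_k^{s_o}.
\end{align*}
Thus both couples in the interpolation are, up to equivalence of norms, weighted $\ell^2$-spaces with weights $(1,1)$ and $(\lambda_j^{s_z}\mu_k^{s_o})$ in the same orthonormal basis.

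The third step is to compute the $K$-functional in this diagonal setting. For a compatible couple of Hilbert spaces that are simultaneously diagonalized, the real interpolation space $(\cdot,\cdot)_{\theta,2}$ is again a weighted $\ell^2$ whose weight is the $\theta$-th power of the heavier weight (see e.g.\ Bergh--L\"ofstr\"om, Thm.\ 5.4.1, or a direct computation of the $K$-functional via orthogonal projection onto eigencoordinates). This gives
\begin{align*}
\|g\|^2_{[L^2(\calZ\times\calO), W_2^{s_z}(\calZ)\otimes W_2^{s_o}(\calO)]_{\theta,2}} \simeq \sum_{j,k}|c_{jk}|^2\,\bigl(\lambda_j^{s_z}\mu_k^{s_o}\bigr)^{\theta} = \sum_{j,k}|c_{jk}|^2\,\lambda_j^{s_z\theta}\mu_k^{s_o\theta}.
\end{align*}
Applying the spectral identification of Step~2 with the exponents $(s_z\theta, s_o\theta)$ in place of $(s_z, s_o)$, the right-hand side is precisely equivalent to $\|g\|_{W_2^{s_z\theta}(\calZ)\otimes W_2^{s_o\theta}(\calO)}^2$, proving the isomorphism.

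The main technical obstacle will be the spectral characterization of $W_2^s$ for non-integer $s$ on the cube (Step~1). I would handle it by first establishing the equivalence for integer $s$ from the elliptic regularity of $A_z$ with Neumann data, then invoking the known single-factor interpolation identity $[L^2(\calZ), W_2^{m}(\calZ)]_{\sigma,2} \simeq W_2^{\sigma m}(\calZ)$ (see e.g.\ Triebel) to extend the spectral description to all $s\in[0,m]$, which is all that the diagonal argument above requires.
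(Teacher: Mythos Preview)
There is a genuine gap in Step~1: the Neumann Laplacian does \emph{not} generate the full Sobolev scale on the cube once $s>3/2$. For the integer case $s=2$ one has $\mathrm{Dom}(I-\Delta_N)=\{u\in W_2^2:\text{Neumann BC}\}$, a proper closed subspace of $W_2^2$; concretely, $f(x)=x$ on $[0,1]$ is in $W_2^2$ but its cosine coefficients obey $c_n\asymp n^{-2}$ for odd $n$, so $\sum_n (1+n^2)^2|c_n|^2=\infty$. Elliptic regularity yields only the inclusion $\mathrm{Dom}(A_z)\subset W_2^2$, not equality, so your ``integer case'' already fails and interpolating from it cannot recover the missing boundary directions. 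The easy repair is to drop the explicit PDE operator and instead take the abstract one furnished by the compact dense embedding $W_2^{s_z}(\calZ)\hookrightarrow L^2(\calZ)$: its singular system gives an $L^2$-orthonormal basis $\{e_j\}$ that is simultaneously orthogonal in $W_2^{s_z}$ with $\|e_j\|_{W_2^{s_z}}^2=\lambda_j$, diagonalizing the couple $(L^2,W_2^{s_z})$ with weights $(1,\lambda_j)$; the weighted-$\ell^2$ interpolation then produces weights $\lambda_j^\theta$, and the single-factor identity $[L^2,W_2^{s_z}]_{\theta,2}\simeq W_2^{\theta s_z}$ (which you already invoke) identifies this with the $W_2^{\theta s_z}$ norm in the \emph{same} basis. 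With this choice on each factor your Steps~2--4 go through verbatim.

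For comparison, the paper's proof is a one-line citation to a general theorem stating that real $(\theta,2)$ interpolation commutes with the Hilbert tensor product, $[H_0\otimes K_0,\,H_1\otimes K_1]_{\theta,2}\cong [H_0,H_1]_{\theta,2}\otimes[K_0,K_1]_{\theta,2}$, after which the same single-factor Sobolev identity finishes. Your (repaired) diagonal argument is essentially a self-contained proof of that tensor-interpolation identity in the Hilbert setting.
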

\begin{proof}
    The proof follows immediately from \cite{defant2000complex}.
\end{proof}

\begin{lem}\label{lem:equivalence_G}
    Let $\calG$ be the vector-valued RKHS of the operator-valued kernel defined in Eq. \eqref{eq:op_val_kernel} with $k_{Z}, k_{O, 1}$ being Sobolev reproducing kernels of smoothness $t_z, t_o$. Then, 
\begin{align}\label{eq:equivalence_RKHS_sobolev_space}
\begin{aligned}
    \calG &\overset{(a)}{\cong} \calH_{X} \otimes \calH_{Z} \otimes \calH_{O,1} \overset{(b)}{\simeq} \calH_{X} \otimes W^{t_z}_2(\calZ;\mathbb{R}) \otimes W^{t_o}_2(\calO;\mathbb{R}) \\
    &\overset{(c)}{\cong} \calH_{X} \otimes MW^{t_z, t_o}_2(\calZ\times\calO;\mathbb{R}) \overset{(d)}{\cong} MW^{t_z,t_o}_2(\calZ\times\calO;\calH_{X}),
\end{aligned}
\end{align}
\end{lem}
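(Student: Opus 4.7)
The plan is to verify the four equivalences (a)--(d) in sequence, combining standard constructions for tensor products of RKHSs, norm equivalence of Sobolev reproducing kernels, the tensor decomposition of dominating mixed-smoothness Sobolev spaces, and the Hilbert-space identification of vector-valued $L^2$ spaces with tensor products.

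For step (a), I would invoke the general structural result for vector-valued RKHSs induced by operator-valued kernels of the form $K = k \cdot \mathrm{Id}_H$ (see e.g.\ \citet[Proposition 2.3]{carmeli2010vector} or \citet[Section 4]{lietal2022optimal}): the unique vector-valued RKHS associated with the kernel $k_{ZO,1}\,\mathrm{Id}_{\calH_X}$ is isometrically isomorphic to the tensor product $\calH_X \otimes \calH_{ZO,1}$. Then since $k_{ZO,1}$ is the product kernel $k_Z\cdot k_{O,1}$, the standard tensor decomposition of RKHSs with product kernels \citep[Theorem 13]{berlinet2004reproducing} yields $\calH_{ZO,1}\cong \calH_Z\otimes \calH_{O,1}$, giving (a).

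For step (b), the Sobolev reproducing kernel hypothesis gives norm equivalences $\calH_Z\simeq W^{t_z}_2(\calZ)$ and $\calH_{O,1}\simeq W^{t_o}_2(\calO)$. I would then note that norm equivalence lifts to Hilbert tensor products: if $S_1\colon \calH_Z\to W^{t_z}_2(\calZ)$ and $S_2\colon \calH_{O,1}\to W^{t_o}_2(\calO)$ are bounded bijections with bounded inverses, the tensor operator $\mathrm{Id}_{\calH_X}\otimes S_1\otimes S_2$ is a bounded bijection between the corresponding triple tensor products, with norm and inverse norm bounded by the products of the individual bounds. This yields the norm equivalence claimed in (b).

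For step (c), the decomposition $W^{t_z}_2(\calZ)\otimes W^{t_o}_2(\calO)\cong MW^{t_z,t_o}_2(\calZ\times\calO;\R)$ is the defining Hilbert tensor product characterization of dominating mixed-smoothness Sobolev spaces \citep{schmeisser1987unconditional,SICKEL2009748}; I would cite this and tensor with $\calH_X$, using associativity of the Hilbert tensor product. Finally, for step (d), I would use the canonical isomorphism $H\otimes L^2(\mu;\R)\cong L^2(\mu;H)$ for any Hilbert space $H$ and $\sigma$-finite measure $\mu$ \citep[Section 12.7]{aubin2011applied}, together with the fact that weak derivatives commute with this identification (so the mixed-Sobolev seminorms transport correctly). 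Concretely, the isomorphism sends $u\otimes f\mapsto (f(\cdot)u)$ and intertwines $D^{\bm\alpha}_{\bz}D^{\bm\beta}_{\bo}$ acting componentwise with the derivative in the vector-valued sense, so the $MW^{t_z,t_o}$-norm is preserved.

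The main obstacle is step (c)/(d): one must be careful that the tensor characterization of $MW^{t_z,t_o}_2$ is genuinely \emph{isometric} (so that $\cong$ rather than merely $\simeq$ can be used) and that weak differentiation commutes with the Bochner tensor identification on a dense subspace before extending by continuity. The remaining steps are essentially formal once the underlying tensor-product and norm-equivalence facts are invoked.
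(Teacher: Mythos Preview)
Your proposal is correct and follows essentially the same approach as the paper: both arguments verify (a)--(d) by chaining together the identification of the vector-valued RKHS of $k\,\mathrm{Id}_H$ with a Hilbert tensor product, the defining norm equivalence of Sobolev reproducing kernels lifted to tensor products, the tensor-product characterization of dominating mixed-smoothness Sobolev spaces, and the Bochner/tensor identification for vector-valued Sobolev spaces. The only cosmetic differences are in the specific references (the paper cites \citet[Theorem~1]{JMLR:v25:23-1663} for (a) and \citet[Theorems~12.7.1--12.7.2, 12.4.1]{aubin2011applied} for (c)--(d), whereas you point to \citet{carmeli2010vector}, \citet{berlinet2004reproducing}, and \citet{SICKEL2009748}); your caution about isometric versus merely norm-equivalent identifications in (c)/(d) is well placed but is handled by the cited results.
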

\begin{proof}
    $(a)$ is proved in \citet[Theorem 1]{JMLR:v25:23-1663}, $(b)$ holds by definition of Sobolev reproducing kernels, $(c)$ is proved in \citet[Theorem 12.7.2]{aubin2011applied} and $(d)$ is an extension of \citet[Theorem 12.7.1, Theorem 12.4.1]{aubin2011applied}. 
\end{proof}
\newpage
\begin{center}
    {\LARGE \text{Roadmap for Proof of \Cref{thm:upper_rate}}}
\end{center}
The following figures illustrate how \Cref{thm:upper_rate} follows from supporting propositions and lemmas (with arrows indicate that one result is used in the proof of another).
\begin{figure}[!ht]
\centering
\begin{tikzpicture}[node distance=2cm, every node/.style={align=center}]
\node (S1) at (8,0) {\textbf{\Cref{prop:projected_stage_one}}\\(projected stage 1 error w/ CME rate)};
\node (S2) at (8,2) {\Cref{lem:f_bar_lambda_norm_bound}\\(Bound on $\|\bar{f}_{\lambda}\|_{\calH_{XO}}$)};
\node (S3) at (8,4) {\Cref{prop: approx_error}\\(projected stage 2 approximation error)};
\node (S4) at (1.5,0) {\Cref{prop:T_hat_f_bar_f}\\(projected stage 1 error)};
\node (S5) at (-3,-0.5) {\Cref{lem:thm11_meunier}};
\node (S6) at (-3,0.5) {\Cref{lem:thm12_meunier}};
\node (S7) at (2,2) {\Cref{prop:cme_rate}\\(CME learning rate)};
\node (S8) at (2,4) {\Cref{prop:F_ast_sobolev}\\(CME source condition)};
\node (S9) at (-3,2) {\Cref{lem:tensor_evd}\\(tensor RKHS eigenvalue)};

\draw[->] (S2) -- (S1);
\draw[->] (S3) -- (S2);
\draw[->] (S4) -- (S1);
\draw[->] (S5) -- (S4);
\draw[->] (S6) -- (S4);
\draw[->] (S7) -- (S1);
\draw[->] (S8) -- (S7);
\draw[->] (S9) -- (S7);

\end{tikzpicture}
\caption{Roadmap for proof of \Cref{prop:projected_stage_one}.}
\end{figure}
\vspace{-5mm}
\begin{figure}[!ht]
\centering
\begin{tikzpicture}[node distance=2cm, every node/.style={align=center}]
\node (S1) at (8,0) {\textbf{\Cref{prop: projected upper rate}}\\(projected stage 2 error)};
\node (S2) at (8,2) {\Cref{prop: approx_error}\\(projected stage 2 approximation error)};
\node (S3) at (8,4) {\Cref{lem:hang_prop_3}\\(Gaussian convolution)};
\node (S4) at (2,0) {\Cref{prop: projected_est_error}\\(projected stage 2 estimation error)};
\node (S5) at (-4,0) {\Cref{lem: C_f_op_norm_lower_bound}\\(lower bound on $\|C_{FO}\|$)};
\node (S6) at (2,3) {\Cref{prop:eff_dim}\\(effective dimension of $\calH_{FO}$)};
\node (S7) at (-2,2) {\Cref{prop:emb_main}\\(embedding property of $\calH_{FO}$)};

\draw[->] (S2) -- (S1);
\draw[->] (S3) -- (S2);
\draw[->] (S4) -- (S1);
\draw[->] (S5) -- (S4);
\draw[->] (S6) -- (S4);
\draw[->] (S7) -- (S4);
\draw[->] (S2) -- (S4);

\end{tikzpicture}
\caption{Roadmap for proof of \Cref{prop: projected upper rate}.}
\end{figure}
\vspace{-5mm}
\begin{figure}[!ht]
\centering
\begin{tikzpicture}[node distance=2cm, every node/.style={align=center}]
\node (S1) at (2,0) {\textbf{\Cref{thm:upper_rate}}\\($L^2$-upper rate)};
\node (S2) at (2,2) {Eq. \eqref{eq:f_aux_norm}\\(Bound on $\|f_{\mathrm{aux}}\|_{\calH_{XO}}$)};
\node (S3) at (0,4) {\Cref{lem:hang_prop_4}\\\Cref{cor:hang_prop_4_obs_cov}\\(Gaussian convolution $\calH_{XO}$-norm)};
\node (S4) at (4,4) {\Cref{lem:mixed_convolution}};
\node (S5) at (-3,0) {\Cref{lem:l2toprojected_illp}\\(projected rate to $L^2$ rate)};
\node (S6) at (-5,2) {\Cref{lem:unit_ball_grkhs}\\($\|\cdot\|_{\calH_{XO}}$ w/ Fourier transform)};
\node (S7) at (7,1) {\Cref{prop:projected_stage_one}\\(projected stage one error)};
\node (S8) at (7,-1) {\Cref{prop: projected upper rate}\\(projected stage two error)};

\draw[->] (S2) -- (S1);
\draw[->] (S3) -- (S2);
\draw[->] (S4) -- (S2);
\draw[->] (S5) -- (S1);
\draw[->] (S6) -- (S5);
\draw[->] (S7) -- (S1);
\draw[->] (S8) -- (S1);

\end{tikzpicture}
\caption{Roadmap for proof of \Cref{thm:upper_rate}.}
\end{figure}

\newpage
\section{Proof of Theorem~\ref{thm:upper_rate} in the main text}\label{sec:proof_upper}

We first construct the auxiliary RKHS function $f_{\mathrm{aux}}$ defined in Eq. \eqref{eq:f_aux} in the main text, and give an upper bound on $\|f_{\mathrm{aux}}\|_{\calH_{\gamma_x,\gamma_o}}$. Let $r = \max\{ \lfloor s_x\rfloor, \lfloor s_o \rfloor\} + 1 $. For $\bgamma = (\gamma_1,\dots, \gamma_d)$, we define an \emph{approximate identity} \citep[Section 4.1.2]{Giné_Nickl_2015} $K_{\bgamma}: \R^{d}\to \R$ via
\begin{align}
\label{eq:K_function}
    K_{\mathbf{1}}(\bx) := \sum_{j=1}^r\binom{r}{j}(-1)^{1-j} \frac{1}{j^{d}}\left(\frac{2}{\pi}\right)^{\frac{d}{2}}  \exp\left( -2 \sum_{i=1}^{d} \frac{x_i^2}{j^2} \right) , \quad
    K_{\bgamma}(\bx) :=  K_{\mathbf{1}}\left(\frac{\bx}{\bgamma}\right)\prod_{i=1}^{d}\frac{1}{\gamma_i}
\end{align}
Note that Eq. \eqref{eq:K_function} reduces to \cite{eberts2013optimal}[Eq. (8)] when $\gamma_1 = \dots = \gamma_{d}$, $s_1 = \dots, s_{d}$. 
We define $K_{\gamma_x} : \R^{d_x}\to \R$ to be $K_{\bgamma_x}$ with $\bgamma_x = \left[\gamma_x, \ldots,\gamma_x\right]\in\R^{d_x}$ (note this agrees with Eq. \eqref{eq:K_function_main} in the main) and 
$K_{\gamma_o} : \R^{d_o}\to \R$ to be $K_{\bgamma_o}$ with $\bgamma_o = \left[\gamma_o, \ldots,\gamma_o\right]\in\R^{d_o}$. 
Define $\iota_{x, \gamma_x^{-1}}:\R^{d_x} \to \R$ as an indicator function of $\{\bomega_x: \|\bomega_x\| \leq \gamma_x^{-1}\}$. 

By the Young's Convolution Inequality and the fact that $\calF$ is unitary, we have
\begin{align*}
    \left\|f_{\ast}(\cdot, \bo)\ast \calF^{-1}[\iota_{x,\gamma_x^{-1}}]\right\|_{L^2(\mathbb{R}^{d_x})} \leq \|f_{\ast}(\cdot, \bo)\|_{L^1(\mathbb{R}^{d_x})} \|\iota_{x,\gamma_x^{-1}}\|_{L^2(\mathbb{R}^{d_x})}
\end{align*}
The right hand side is finite by \Cref{ass:f_ast} in the main text. Hence we apply the Fourier operator $\calF$ to $f_{\ast}(\cdot, \bo)\ast \calF^{-1}[\iota_{x,\gamma_x^{-1}}]$ to find that, $(\forall \bo\in \calO)$,
\begin{align*}
    \calF[f_{\ast}(\cdot, \bo)\ast \calF^{-1}[\iota_{x,\gamma_x^{-1}}]] = \calF[f_{\ast}(\cdot,\bo)]\cdot \iota_{x,\gamma_x^{-1}}.
\end{align*}
Therefore, $f_{\ast}\ast \calF^{-1}[\iota_{x,\gamma_x^{-1}}] = f_{\ast, \mathrm{low}}$ where $f_{\ast, \mathrm{low}}$ is defined in Eq. \eqref{eq:f_ast_low}. We define $f_{\ast, \mathrm{high}} = f_{\ast} - f_{\ast, \mathrm{low}}$. Then we have, $(\forall \bo\in \calO)$, 
\begin{align*}
    \calF[f_{\ast, \mathrm{high}}(\cdot, \bo)] = \calF[f_{\ast}(\cdot, \bo)] - \calF[f_{\ast, \mathrm{low}}(\cdot, \bo)]
\end{align*}
satisfies Eq. \eqref{eq:f_ast_high} in the main text. We let $f_{\mathrm{aux}}$ be as defined in Eq. \eqref{eq:f_aux} in the main text. We have
\begin{align}\label{eq:f_aux_norm}
    &\quad \|f_{\mathrm{aux}}\|_{\calH_{\gamma_x,\gamma_o}} 
    \\ &\leq \left\| f_{\ast} \ast K_{\gamma_o} \ast \calF^{-1}[\iota_{x, \gamma_x^{-1}}] \right\|_{\calH_{\gamma_x,\gamma_o}} + \left\| K_{\gamma_x} \ast K_{\gamma_o} \ast (f_{\ast} - f_{\ast, \mathrm{low}}) \right\|_{\calH_{\gamma_x,\gamma_o}} \nonumber \\
    &\stackrel{(a)}{\leq} n^{\frac{1}{2} \frac{1+\frac{d_o}{s_o}(\frac{s_x}{d_x}+\eta_1)}{1+(2+\frac{d_o}{s_o})(\frac{s_x}{d_x}+\eta_1)}} \|f_\ast\|_{L^2(\R^{d_x+d_o})} + n^{\frac{1}{2} \frac{1+\frac{d_o}{s_o}(\frac{s_x}{d_x}+\eta_1)}{1+(2+\frac{d_o}{s_o})(\frac{s_x}{d_x}+\eta_1)}}
    \|f_{\ast} - f_{\ast,\mathrm{low}} \|_{L^2(\R^{d_x+d_o})}  \nonumber \\
    &\stackrel{(b)}{\leq}  3 n^{\frac{1}{2} \frac{1+\frac{d_o}{s_o}(\frac{s_x}{d_x}+\eta_1)}{1+(2+\frac{d_o}{s_o})(\frac{s_x}{d_x}+\eta_1)}}.
\end{align}
In the above chain of derivations, $(a)$ holds by applying \Cref{lem:mixed_convolution} to the first term and applying \Cref{cor:hang_prop_4_obs_cov} to the second term, $(b)$ holds by $\|f_{\ast, \mathrm{low}}\|_{L^2(\mathbb{R}^{d_x+d_o})}\leq \|f_{\ast}\|_{L^2(\mathbb{R}^{d_x+d_o})} \leq 1$ by Plancherel's Theorem and \Cref{ass:f_ast}.
Notice that,
\begin{align}
\label{eq:appendix_ast_triangular}
    (\ast) := \left\|\left[\hat{f}_{\lambda}\right] - f_{\ast}\right\|_{L^2(P_{XO})} \leq \left\|\left[\hat{f}_{\lambda}\right] - f_{\mathrm{aux}} \right\|_{L^2(P_{XO})} + \left\|f_\ast - f_{\mathrm{aux}} \right\|_{L^2(P_{XO})} .
\end{align}
By definition of $\hat{f}_{\lambda}$ in Eq.~\eqref{eq:hat_f_lambda} in the main text, it satisfies 
\begin{align}\label{eq:trivial_bound}
    \lambda \|\hat{f}_{\lambda}\|_{\calH_{\gamma_x,\gamma_o}}^2 + \frac{1}{n}\sum_{i=1}^n \left(y_i - \left\langle f, \hat{F}_{\xi}(\bz_i,\bo_i) \otimes \phi_{\gamma_o}(o_i) \right\rangle_{\calH_{\gamma_x,\gamma_o}} \right)^2 \leq \frac{1}{n}\sum_{i=1}^n y_i^2.
\end{align}
Notice that
\begin{align*}
    \frac{1}{n}\sum_{i=1}^{n}y_i^2 = \frac{1}{n}\sum_{i=1}^{n}((Tf_{\ast})(\bz_i, \bo_i) + \upsilon_i)^2  \lesssim 2 + \frac{2}{n}\sum_{i=1}^{n}\upsilon_i^2,
\end{align*}
where the inequality follows from 
\begin{align*}
    \|Tf_{\ast}\|_{L^{\infty}(\calZ\times \calO)}\leq \|f_{\ast}\|_{L^{\infty}(\calX\times \calO)} \leq \|f_{\ast}\|_{L^{\infty}(\mathbb{R}^{d_x+d_o})} \lesssim 1,
\end{align*}
by \Cref{ass:f_ast}. By \Cref{ass:subgaussian}, $\upsilon_1, \ldots, \upsilon_n$ are $n$ i.i.d. mean zero $\sigma$-sub-Gaussian random variables so $\upsilon_1^2, \ldots, \upsilon_n^2$ are $n$ i.i.d. $\sigma^2$-sub-exponential random variables~\citep[Lemma 2.7.6]{vershynin2018high} with mean $\E[\upsilon_1^2] < \infty$. By Exercise 2.7.10 (Centering) of \cite{vershynin2018high} we know that $\upsilon_1^2 - \E[\upsilon_1^2], \ldots, \upsilon_n^2 - \E[\upsilon_n^2]$ are $n$ i.i.d. $C\sigma^2$-sub-exponential random variables for some universal constant $C$. By \citet[Theorem 2.8.1 (Bernstein's inequality)]{vershynin2018high}, we have
\begin{align*}
    (\forall t\geq 0), \;P\left(\left|\sum_{i=1}^{n}\upsilon_i^2\right| \geq n\left(1+\mathbb{E}\left[\upsilon_i^2\right]\right)\right) \leq 2\exp\left(-c\min\left(\frac{n}{C^2\sigma^4}, \frac{n}{C\sigma^2}\right)\right),
\end{align*}
where $c$ is a universal constant. For a fixed $\tau\geq 1$, for sufficiently large $n\geq 1$, the above right hand side $\leq 2\exp(-\tau)$. Thus for a fixed $\tau\geq 1$, with $P^n$-probability $\geq 1 - 2e^{-\tau}$, for sufficiently large $n\geq 1$, we have
\begin{align*}
    \frac{1}{n}\sum_{i=1}^{n}y_i^2  \lesssim 2 + \frac{2}{n}\sum_{i=1}^{n}\upsilon_i^2 \leq 4 + 2\mathbb{E}\left[\upsilon_i^2\right].
\end{align*}
Under the same high probability event, from Eq.~\eqref{eq:trivial_bound}, using $\lambda = \frac{1}{n}$, we have
\begin{align}
\label{eq:mean_y_squared_concentration}
    \|\hat{f}_{\lambda}\|_{\calH_{\gamma_x,\gamma_o}} \lesssim \sqrt{4 + 2\mathbb{E}\left[\upsilon_i^2\right]}\sqrt{n}. 
\end{align}
Also, we have 
\begin{align}
\label{eq:appendix_f_aux_upper_sqrtn}
    \|f_{\mathrm{aux}}\|_{\calH_{\gamma_x,\gamma_o}}\leq 2\sqrt{n}
\end{align}
proved above in Eq.~\eqref{eq:f_aux_norm}. Continuing from Eq. \eqref{eq:appendix_ast_triangular}, we apply \Cref{lem:l2toprojected_illp} to $\|[\hat{f}_{\lambda}] - f_{\mathrm{aux}}\|_{L^2(P_{XO})}$, where we notice that $\hat{f}_{\lambda}, f_{\mathrm{aux}}\in \calH_{\gamma_x,\gamma_o}$, and we use Eq. \eqref{eq:mean_y_squared_concentration} and Eq. \eqref{eq:appendix_f_aux_upper_sqrtn} to verify the assumption of that lemma. We find, with $P^n$-probability $\geq 1 - 2e^{-\tau}$,
\begin{small}
\begin{align*}
    (\ast) \leq \gamma_x^{-d_x\eta_0} (\log n)^{\frac{d_x\eta_0}{2}} \left\| T \left[\hat{f}_{\lambda}\right] -  T f_{\mathrm{aux}} \right\|_{L^2(P_{ZO})} + n^{-\frac{\frac{s_x}{d_x} }{1+(2 + \frac{d_o}{s_o})(\frac{s_x}{d_x}+\eta_1)}} + \left\|f_{\ast} -  f_{\mathrm{aux}} \right\|_{L^2(P_{XO})} .
\end{align*}
\end{small}
From Eq.~\eqref{eq:f_f_aux_diff}, $\|f_{\ast} -  f_{\mathrm{aux}}\|_{L^2(P_{XO})}$ can be upper bounded (up to a constant) by the second last term above and hence subsumed. 
Through a triangular inequality, we have with $P^n$-probability $\geq 1 - 2e^{-\tau}$,
\begin{align}
    (\ast) &\leq \gamma_x^{-d_x\eta_0} (\log n)^{\frac{d_x\eta_0}{2}} \left(\left\| T \left[\hat{f}_{\lambda}\right] -  T f_{\ast} \right\|_{L^2(P_{ZO})}  +  \left\| T f_\ast -  T f_{\mathrm{aux}} \right\|_{L^2(P_{ZO})}\right) \nonumber \\
    &\qquad \qquad + n^{-\frac{\frac{s_x}{d_x} }{1+(2 + \frac{d_o}{s_o})(\frac{s_x}{d_x}+\eta_1)}} \label{eq:unprojected_amplified_projected_example} .
\end{align}
We prove in Eq.~\eqref{eq:Tf_Tf_aux} in \Cref{sec:approx_main} that $
\left\| T f_\ast -  T f_{\mathrm{aux}} \right\|_{L^2(P_{ZO})} \leq n^{-\frac{\frac{s_x}{d_x}+\eta_1}{1+(2+\frac{d_o}{s_o})(\frac{s_x}{d_x}+\eta_1)}}$, so we continue from above to obtain, with $P^n$-probability $\geq 1 - 2e^{-\tau}$, 
\begin{align}
\label{eq:eq_to_refer_main_pf_sketch}
    (\ast) \lesssim \gamma_x^{-d_x\eta_0} (\log n)^{\frac{d_x\eta_0}{2}} \left\| T \left[\hat{f}_{\lambda}\right] -  T f_{\ast} \right\|_{L^2(P_{ZO})} + n^{-\frac{\frac{s_x}{d_x}+\eta_1 - \eta_0}{1+(2+\frac{d_o}{s_o})(\frac{s_x}{d_x}+\eta_1)}} (\log n)^{\frac{d_x\eta_0}{2}} .
\end{align}
Note the last term of Eq.~\eqref{eq:unprojected_amplified_projected_example} is subsumed by the second term above since $\eta_0 \geq \eta_1$.
Recall the definition of $\bar{f}_{\lambda}$ in Eq. \eqref{eq:bar_f_lambda_main} in the main text. Then, we have, through a triangular inequality, with $P^n$-probability $\geq 1 - 2e^{-\tau}$, the following holds
\begin{align*}
    (\ast) &\leq \gamma_x^{-d_x\eta_0} (\log n)^{\frac{d_x\eta_0}{2}} \left(\underbrace{\left\| T \left[\hat{f}_{\lambda}\right] -  T \left[\bar{f}_{\lambda}\right] \right\|_{L^2(P_{ZO})}}_{\text{Projected Stage I Error}} +  \underbrace{\left\| T \left[\bar{f}_{\lambda}\right] -  T f_{\ast} \right\|_{L^2(P_{ZO})}}_{\text{Projected Stage II Error}}\right) \\
    &\qquad + n^{-\frac{\frac{s_x}{d_x}+\eta_1 - \eta_0}{1+(2+\frac{d_o}{s_o})(\frac{s_x}{d_x}+\eta_1)}} (\log n)^{\frac{d_x\eta_0}{2}} .
\end{align*}
The projected stage I error can be upper bounded by \Cref{prop:projected_stage_one} as $ n > A_{\lambda,\tau}$ is satisfied for sufficiently large $n \geq 1$ proved in Eq.~\eqref{eq:n_large_than_A}. 
\begin{align*}
    \text{Projected Stage I Error} = \left\| T \left[\hat{f}_{\lambda}\right] -  T \left[\bar{f}_{\lambda}\right] \right\|_{L^2(P_{ZO})} \leq \tau n^{-\frac{\frac{s_x}{d_x}+\eta_1}{1+(2+\frac{d_o}{s_o})(\frac{s_x}{d_x}+\eta_1)}}
\end{align*}
holds with $P^{n+\stageonesamples}$-probability $\geq 1-  34e^{-\tau}$.
The projected stage II error can be upper bounded by \Cref{prop: projected upper rate}.
\begin{align*}
    \text{Projected Stage II Error} = \left\| T \left[\bar{f}_{\lambda}\right] -  T f_{\ast} \right\|_{L^2(P_{ZO})} \leq \tau n^{-\frac{\frac{s_x}{d_x}+\eta_1}{1+(2+\frac{d_o}{s_o})(\frac{s_x}{d_x}+\eta_1)}} \cdot (\log n)^{\frac{d_x+d_o+1}{2}}
\end{align*}
holds with $P^n$-probability $\geq 1 - 4e^{-\tau}$. Combine the above two upper bounds, and we have
\begin{align}
\label{eq:eq_to_refer_main_pf_sketch_final}
     (\ast) = \left\|\left[\hat{f}_{\lambda}\right] - f_{\ast}\right\|_{L^2(P_{XO})} \leq \tau n^{-\frac{\frac{s_x}{d_x}+\eta_1-\eta_0}{1+(2+\frac{d_o}{s_o})(\frac{s_x}{d_x}+\eta_1)}} (\log n)^{\frac{d_x+d_o+1 + d_x\eta_0}{2}}
\end{align}
holds with $P^{n+\stageonesamples}$-probability $1 - 40e^{-\tau}$.  So the proof is concluded.
\qed

\begin{prop}[Projected stage-I error]
\label{prop:projected_stage_one}
Suppose that the assumptions of \Cref{prop:cme_rate} hold. Suppose that $n > A_{\lambda,\tau}$ with $A_{\lambda,\tau}$ defined in Eq.~\eqref{eq:FS_thm_16_const}. Suppose Assumptions~\ref{assn: technical} hold.  Suppose that $\stageonesamples \geq 1$ satisfies Eq. \eqref{eq:m_sufficient_large}. Let $\lambda \asymp n^{-1}$ and 
\begin{align*}
    \gamma_x = n^{-\frac{\frac{1}{d_x}}{1+(2+\frac{d_o}{s_o})(\frac{s_x}{d_x}+\eta_1)}}, \quad \gamma_o = n^{-\frac{\frac{1}{s_o}(\frac{s_x}{d_x}+\eta_1)}{1+(2+\frac{d_o}{s_o})(\frac{s_x}{d_x}+\eta_1)}}. 
\end{align*} Then, with $P^{n+\stageonesamples}$-probability $\geq 1-  34e^{-\tau}$, we have
\begin{align*}
    \left\|T\left(\left[\bar{f}_{\lambda}\right] - \left[\hat{f}_{\lambda}\right]\right)\right\|_{L^2(P_{ZO})} \lesssim \tau n^{-\frac{\frac{s_x}{d_x}+\eta_1}{1+(2+\frac{d_o}{s_o})(\frac{s_x}{d_x}+\eta_1)}} .
\end{align*}
\end{prop}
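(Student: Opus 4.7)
The plan is to reduce the projected Stage I error to the CME estimation error, then invoke the already-established CME rate. First, I would apply \Cref{prop:T_hat_f_bar_f}, which expresses $\|T([\bar f_\lambda]-[\hat f_\lambda])\|_{L^2(P_{ZO})}$ as a linear combination of the Stage I error measured in two norms, $\|\hat F_\xi - F_\ast\|_{L^2(\calZ\times\calO;\calH_{X,\gamma_x})}$ and $\|\hat F_\xi - F_\ast\|_\calG$, each multiplied by factors that depend on $\|\bar f_\lambda\|_{\calH_{\gamma_x,\gamma_o}}$ and $\lambda$. This reduction isolates all of the data-dependence on $\calD_1$ into Stage I quantities and keeps the Stage II aspect confined to the norm of the oracle regressor.

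Second, I would control $\|\bar f_\lambda\|_{\calH_{\gamma_x,\gamma_o}}$ using \Cref{lem:f_bar_lambda_norm_bound}, which with $\lambda\asymp n^{-1}$ and the sub-Gaussian residual assumption gives (with high probability, by a Bernstein-type argument analogous to the derivation of Eq.~\eqref{eq:mean_y_squared_concentration}) a bound of the order $\sqrt{n}$, possibly amplified by the Stage II approximation error from \Cref{prop: approx_error}.

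Third, I would apply the CME convergence result \Cref{prop:cme_rate}, which (since Assumptions~\ref{assn: technical} and \ref{ass:equiv_leb} place $F_\ast$ in the power space $[\calG]^{\meffective}$ and the tensor-product Sobolev kernel has effective dimension exponent $\deffective$) yields with high probability
\begin{align*}
\|\hat F_\xi - F_\ast\|_{L^2(\calZ\times\calO;\calH_{X,\gamma_x})} &\lesssim \tau\,\tilde n^{-\frac{\meffective}{2(\meffective+\deffective+\zeta)}},\\
\|\hat F_\xi - F_\ast\|_\calG &\lesssim \tau\,\tilde n^{-\frac{\meffective-1}{2(\meffective+\deffective+\zeta)}},
\end{align*}
under $\xi=\tilde n^{-1/(\meffective+\deffective+\zeta)}$. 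Substituting these bounds together with the $\sqrt{n}$ control on $\|\bar f_\lambda\|_{\calH_{\gamma_x,\gamma_o}}$ back into the decomposition of Step~1, and then imposing the two-sided condition Eq.~\eqref{eq:m_sufficient_large} on $\tilde n$, is precisely what makes both resulting terms smaller than the target rate $n^{-(s_x/d_x+\eta_1)/[1+(2+d_o/s_o)(s_x/d_x+\eta_1)]}$.

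The main obstacle will be the careful bookkeeping of the factors contributed by the Gaussian RKHS $\calH_{\gamma_x,\gamma_o}$ whose lengthscales $\gamma_x,\gamma_o$ shrink with $n$: the feature map norm, the embedding constant into $L^\infty$, and the norm of $\bar f_\lambda$ all pick up polynomial factors in $\gamma_x^{-d_x}\gamma_o^{-d_o}$. These factors must be absorbed into the rate exponent of $\tilde n$, which is exactly why Eq.~\eqref{eq:m_sufficient_large} requires $\tilde n \gtrsim n^{(\meffective+\deffective/2+\zeta)/(\meffective-1)} \vee n^{2(\meffective+\deffective/2+\zeta)/\meffective}$ rather than simply $\tilde n\asymp n$. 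A secondary technical point is that $\|\hat F_\xi - F_\ast\|_\calG$, rather than merely the $L^2(\calZ\times\calO;\calH_X)$-norm, is needed in the decomposition; this is the reason the hypothesis requires $m_o>t_o$ and $m_z>t_z$ (so that $F_\ast$ genuinely lies in the $\meffective$-th power space of $\calG$ with $\meffective>1$) while the saturation constraints $2t_o\geq m_o$ and $2t_z\geq m_z$ keep Tikhonov regularization from stalling.
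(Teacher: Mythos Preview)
Your high-level structure matches the paper: invoke \Cref{prop:T_hat_f_bar_f} for the decomposition, \Cref{lem:f_bar_lambda_norm_bound} for $\|\bar f_\lambda\|_{\calH_{\gamma_x,\gamma_o}}$, \Cref{prop:cme_rate} for the two CME norms, and Eq.~\eqref{eq:m_sufficient_large} to force the bracketed Stage~I term below $n^{-1}$.

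The gap is in your Step~2. You describe the output of \Cref{lem:f_bar_lambda_norm_bound} as ``a bound of the order $\sqrt{n}$,'' obtained ``by a Bernstein-type argument analogous to the derivation of Eq.~\eqref{eq:mean_y_squared_concentration}.'' That is the crude bound, and it is too weak: with $\lambda^{-1/2}=\sqrt n$, the Stage~I bracket $\lesssim n^{-1}$ from Eq.~\eqref{eq:m_sufficient_large}, and $\|\bar f_\lambda\|\lesssim\sqrt n$, the product in \Cref{prop:T_hat_f_bar_f} is only $O(1)$, not the target rate. What \Cref{lem:f_bar_lambda_norm_bound} actually delivers---via an oracle inequality in $\calH_{FO}$ (entropy-number control plus sub-Gaussian clipping at level $\sqrt{\log n}$), combined with the approximation bound of \Cref{prop: approx_error}---is the strictly sharper
\[
\|\bar f_\lambda\|_{\calH_{\gamma_x,\gamma_o}}\lesssim \tau\, n^{\frac12\cdot\frac{1+\frac{d_o}{s_o}(\frac{s_x}{d_x}+\eta_1)}{1+(2+\frac{d_o}{s_o})(\frac{s_x}{d_x}+\eta_1)}}.
\]
The shortfall of this exponent from $1/2$ is exactly $(s_x/d_x+\eta_1)/[1+(2+d_o/s_o)(s_x/d_x+\eta_1)]$, and that is precisely what survives in the product $\sqrt n\cdot n^{-1}\cdot\|\bar f_\lambda\|$. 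The approximation error does not ``amplify'' the bound; it is what reduces it below $\sqrt n$. (Minor: the CME exponent denominator should be $\meffective+\deffective/2+\zeta$, not $\meffective+\deffective+\zeta$.)
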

\begin{proof}
From \Cref{prop:T_hat_f_bar_f}, with $P^{n+\stageonesamples}$-probability $\geq 1 - 28e^{-\tau}$, we have
\begin{align*}
    &\left\|T\left(\left[\bar{f}_{\lambda}\right] - \left[\hat{f}_{\lambda}\right]\right)\right\|_{L^2(P_{ZO})}\\ &\lesssim \tau \lambda^{-\frac{1}{2}} \underbrace{\left(\frac{\left\|\hat{F}_{\xi}-F_*\right\|_{\calG}}{\sqrt{n}}+\left\|F_*-[\hat{F}_{\xi}] \right\|_{L^2\left(\calZ\times\calO; \mathcal{H}_{X,\gamma_x}\right)} \right)}_{(\ddag)} \left(1 + \|\bar{f}_{\lambda}\|_{\calH_{XO, \gamma_x, \gamma_0}}\right) .
\end{align*}
By \Cref{lem:f_bar_lambda_norm_bound}, we have with $P^n$-probability $\geq 1 - 2e^{-\tau}$
\begin{align*}
    \|\bar{f}_{\lambda}\|_{\calH_{\gamma_x,\gamma_o}}\lesssim \tau n^{\frac{1}{2}\frac{1 + \frac{d_o}{s_o}(\frac{s_x}{d_x}+\eta_1)}{1 + (2 + \frac{d_o}{s_o})(\frac{s_x}{d_x} + \eta_1)}}.
\end{align*}
Under the probabilistic event that \Cref{prop:T_hat_f_bar_f} holds, the bounds in \Cref{prop:cme_rate} also hold, so we have
\begin{align*}
    \left\|\left[\hat{F}_\xi \right]-F_*\right\|_{L^2(\calZ\times\calO; \calH_{X,\gamma_x})} \leq J \tau \stageonesamples^{-\frac{1}{2}\frac{\meffective}{\meffective + \deffective/2 + \zeta}} , \quad \left\| \hat{F}_\xi -F_*\right\|_{\calG} \leq J \tau \stageonesamples^{-\frac{1}{2}\frac{\meffective - 1}{\meffective + \deffective/2 + \zeta}} ,
\end{align*}
hold for any $\zeta > 0$. $J$ is some constant independent of $\stageonesamples, n$. 
Thus under this probabilistic event, a sufficient condition so that $(\ddag)\lesssim \frac{1}{n}$ is given by
\begin{align*}
    \stageonesamples \geq (J\tau n)^{2\frac{\meffective+\deffective/2+\zeta}{\meffective}} \vee (J^2\tau^2n)^{\frac{\meffective + \deffective/2 + \zeta}{\meffective - 1}}.
\end{align*}
This is satisfied since $\tilde{n}$ satisfies Eq. \eqref{eq:m_sufficient_large}.
Hence,
\begin{align}\label{eq:m_sufficient_large_CME}
    \left\|\left[\hat{F}_\xi \right]-F_*\right\|_{L^2(\calZ\times\calO; \calH_{X,\gamma_x})} \leq n^{-1} , \quad \left\| \hat{F}_\xi - F_*\right\|_{\calG} \leq n^{-\frac{1}{2}} .
\end{align}
By the union bound, we have that with $P^{n+\stageonesamples}$-probability $\geq 1 - 30 e^{-\tau}$, the following bound holds
\begin{align*}
    \left\|T\left(\left[\bar{f}_{\lambda}\right] - \left[\hat{f}_{\lambda}\right]\right)\right\|_{L^2(P_{ZO})} \lesssim \tau n^{-\frac{\frac{s_x}{d_x}+\eta_1}{1+(2+\frac{d_o}{s_o})(\frac{s_x}{d_x}+\eta_1)}}.
\end{align*}
Hence the proof is concluded. 
\end{proof}

\begin{prop}[Projected stage-II error]
\label{prop: projected upper rate}
Suppose Assumptions~\ref{ass:T_contractivity}, \ref{ass:T_frequency_ill_posedness}, \ref{ass:T_injective}, \ref{ass:f_ast}, \ref{assn: technical} hold. Let $\lambda \asymp n^{-1}$ and 
\begin{align*}
    \gamma_x = n^{-\frac{\frac{1}{d_x}}{1+(2+\frac{d_o}{s_o})(\frac{s_x}{d_x}+\eta_1)}}, \quad \gamma_o = n^{-\frac{\frac{1}{s_o}(\frac{s_x}{d_x}+\eta_1)}{1+(2+\frac{d_o}{s_o})(\frac{s_x}{d_x}+\eta_1)}}. 
\end{align*}
Then, with $P^n$-probability $\geq 1-  4e^{-\tau}$, for sufficiently large $n\geq 1$, we have
\begin{align*}
    \left\|T\left(f_{\ast} - \left[\bar{f}_{\lambda}\right]\right)\right\|_{L^2(P_{ZO})} \leq  2C\tau (\log n)^{\frac{d_x+d_o+1}{2}} n^{-\frac{\frac{s_x}{d_x}+\eta_1}{1+(2+\frac{d_o}{s_o})(\frac{s_x}{d_x}+\eta_1)}}
\end{align*}
for some constant $C>0$ independent of $n$.
\end{prop}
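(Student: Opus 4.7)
\medskip

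\noindent\textbf{Proof proposal.} The plan is to split the projected stage-II error via the oracle minimizer $f_{\lambda}$ defined in Eq.~\eqref{eq:f_lambda_main} and the associated element $h_{\lambda} = V f_{\lambda} \in \calH_{FO}$. By the triangle inequality and the isometry-type identity Eq.~\eqref{eq:error_translate},
\begin{align*}
\bigl\|T\bigl([\bar{f}_{\lambda}]-f_{\ast}\bigr)\bigr\|_{L^2(P_{ZO})}
&\leq \bigl\|[\bar{h}_{\lambda}]-[h_{\lambda}]\bigr\|_{L^2(P_{ZO})} + \bigl\|T\bigl([f_{\lambda}]-f_{\ast}\bigr)\bigr\|_{L^2(P_{ZO})} .
\end{align*}
I will refer to the first term as the \emph{estimation error} (it is the generalization error of kernel ridge regression in $\calH_{FO}$ with target $h_{\ast}=Tf_{\ast}$, by Eq.~\eqref{eq:bhl_bvfl}--\eqref{eq:hat_bhl_bvfl}) and the second as the \emph{approximation error}.

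For the estimation error, I would invoke the integral-operator machinery of \citet[Theorem 16]{fischer2020sobolev} applied to the scalar-valued KRR problem with hypothesis space $\calH_{FO}$ and noise variable $\upsilon$ from Eq.~\eqref{eq:xi_inverse}. The two ingredients required by that theorem are already available in the excerpt: the effective dimension bound $\calN_{FO}(\lambda)\lesssim (\log n)^{d_x+d_o+1}(\gamma_x^{d_x}\gamma_o^{d_o})^{-1}$ from \Cref{prop:eff_dim}, and the embedding property $\|[\calH_{FO}]^{\theta}_{P_{ZO}}\hookrightarrow L^{\infty}(P_{ZO})\|\lesssim \gamma_o^{-\theta m_o}$ from \Cref{prop:emb_main} (one takes $\theta$ such that the \textsc{(EMB)} condition with our choice of $m_o$ gives an admissible exponent). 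A lower bound on $\|C_{FO}\|$ (\Cref{lem: C_f_op_norm_lower_bound}) is needed to verify the regime hypothesis $n \geq A_{\lambda,\tau}$; this is precisely the role of the constraint on $m_o$ in \Cref{ass:equiv_leb}. Combined with $\lambda = n^{-1}$, this yields a high-probability bound of order $\tau (\log n)^{(d_x+d_o+1)/2}\,(n\gamma_x^{d_x}\gamma_o^{d_o})^{-1/2}$ on the estimation error.

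For the approximation error, I would use the optimality of $f_{\lambda}$ against the auxiliary function $f_{\mathrm{aux}}$ constructed in Eq.~\eqref{eq:f_aux}, giving
\begin{align*}
\bigl\|T\bigl([f_{\lambda}]-f_{\ast}\bigr)\bigr\|^2_{L^2(P_{ZO})} \leq \lambda\|f_{\mathrm{aux}}\|^2_{\calH_{\gamma_x,\gamma_o}} + \bigl\|T\bigl([f_{\mathrm{aux}}]-f_{\ast}\bigr)\bigr\|^2_{L^2(P_{ZO})} .
\end{align*}
The RKHS norm of $f_{\mathrm{aux}}$ is controlled by Eq.~\eqref{eq:f_aux_norm}. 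For the second term, I would split $f_{\mathrm{aux}}-f_{\ast}$ according to the low-frequency and high-frequency decomposition (Eqs.~\eqref{eq:f_ast_low}--\eqref{eq:f_ast_high}): the low-frequency residual $f_{\ast,\mathrm{low}}-f_{\ast,\mathrm{low}}\ast K_{\gamma_o}$ is handled by Jensen's inequality to pass from $L^2(P_{ZO})$ to $L^2(P_{XO})$, while the high-frequency residual lies in $\mathrm{HF}(\gamma_x)$ and is contracted by \Cref{ass:T_contractivity} to gain the factor $\gamma_x^{d_x\eta_1}$. Each resulting $L^2(P_{XO})$ approximation error is then estimated by the Gaussian-convolution Besov bounds of \citet[Theorem~4]{hang2021optimal} encapsulated in \Cref{lem:hang_prop_3}. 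This gives approximation error of order $\gamma_o^{2s_o} + \gamma_x^{2(d_x\eta_1 + s_x)}$, plus the Tikhonov term $\lambda\gamma_x^{-d_x}\gamma_o^{-d_o}$.

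Finally I would balance. With the choices of $\gamma_x,\gamma_o$ in Eq.~\eqref{eq:lengthscale_main} and $\lambda=n^{-1}$, the three contributions $\gamma_o^{s_o}$, $\gamma_x^{s_x+d_x\eta_1}$, and $(n\gamma_x^{d_x}\gamma_o^{d_o})^{-1/2}$ are all of the target order $n^{-(s_x/d_x+\eta_1)/(1+(2+d_o/s_o)(s_x/d_x+\eta_1))}$, leading to the claimed rate with the logarithmic factor $(\log n)^{(d_x+d_o+1)/2}$ inherited from $\calN_{FO}(\lambda)$. The main subtlety — and the reason this proof is non-trivial — is the need to combine two technically different branches of KRR theory (the integral-operator approach for estimation, which avoids clipping, and the empirical-process/Gaussian-convolution approach of \citet{hang2021optimal} for approximation in an anisotropic Besov class), and to verify that the \textsc{(EMB)} condition with lengthscale-dependent constants survives the adaptive scaling of $\gamma_x,\gamma_o$ with $n$; this is exactly what forces the specific choice of $m_o, m_z$ in Eq.~\eqref{def:m_z,m_o} and the bookkeeping in the regime condition $n>A_{\lambda,\tau}$.
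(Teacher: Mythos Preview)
Your proposal is correct and follows essentially the same route as the paper's proof: the triangle-inequality split into the approximation term (handled via \Cref{prop: approx_error}, i.e.\ the $f_{\mathrm{aux}}$ construction, the low/high-frequency decomposition, \Cref{ass:T_contractivity}, and \Cref{lem:hang_prop_3}) and the estimation term (handled via \Cref{prop: projected_est_error}, i.e.\ \citet[Theorem~16]{fischer2020sobolev} together with the effective-dimension bound of \Cref{prop:eff_dim}, the embedding of \Cref{prop:emb_main}, and the lower bound on $\|C_{FO}\|$ from \Cref{lem: C_f_op_norm_lower_bound}). The balancing of $\gamma_o^{s_o}$, $\gamma_x^{s_x+d_x\eta_1}$, and $(n\gamma_x^{d_x}\gamma_o^{d_o})^{-1/2}$ you describe is exactly how the paper arrives at the stated rate.
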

\begin{proof}
    The proposition is proved through the following triangular inequality
    \begin{align*}
        \left\|T\left(f_{\ast} - \left[\bar{f}_{\lambda}\right]\right)\right\|_{L^2(P_{ZO})} &\leq \left\|T\left(f_{\ast} - \left[f_{\lambda}\right]\right)\right\|_{L^2(P_{ZO})} + \left\|T\left(\left[\bar{f}_{\lambda}\right] - [f_{\lambda}]\right)\right\|_{L^2(P_{ZO})} \\
        &\leq n^{-\frac{\frac{s_x}{d_x}+\eta_1}{1+(2+\frac{d_o}{s_o})(\frac{s_x}{d_x}+\eta_1)}} + n^{-\frac{\frac{s_x}{d_x}+\eta_1}{1+(2+\frac{d_o}{s_o})(\frac{s_x}{d_x}+\eta_1)}} (\log n)^{\frac{d_x+d_o+1}{2}} \\
        &\leq 2 n^{-\frac{\frac{s_x}{d_x}+\eta_1}{1+(2+\frac{d_o}{s_o})(\frac{s_x}{d_x}+\eta_1)}} (\log n)^{\frac{d_x+d_o+1}{2}}.
    \end{align*}
    The second last inequality holds by using \Cref{prop: approx_error} for the first term and \Cref{prop: projected_est_error} for the second term, which holds with $P^n$-probability $\geq 1-  4e^{-\tau}$.
\end{proof}

\subsection{Projected Stage I Error}\label{sec:proj_stage_one_error}
With the introduction of $\calH_{FO}$ and the partial isometry $V:\calH_{XO}\to \calH_{FO}$ in \Cref{sec:defi_H_FO}, we define
\begin{align*}
    \hat{h}_{\lambda} &:= \argmin_{h\in \calH_{FO}}  \frac{1}{n}\sum_{i=1}^{n}  \left(y_i - \left\langle h, V\Big(\hat{F}_\xi (\bz_i, \bo_i)\otimes \phi_{\gamma_o}(\bo_i) \Big) \right\rangle_{ \calH_{FO}} \right) ^2 + \lambda \|h\|_{\calH_{FO}}^2 \\
    \bar{h}_{\lambda} &= \argmin_{h \in \calH_{FO}}  \frac{1}{n}\sum_{i=1}^{n}  \left(y_i - \left\langle h, V \Big( F_\ast(\bz_i, \bo_i) \otimes \phi_{\gamma_o}(\bo_i) \Big) \right\rangle_{ \calH_{FO}} \right) ^2 + \lambda \|h\|_{\calH_{FO}}^2 .
\end{align*}
As shown in \Cref{sec:defi_H_FO}, we have $\hat{h}_{\lambda} = V\hat{f}_{\lambda}$ and $\bar{h}_{\lambda} = V\bar{f}_{\lambda}$, where $V: \calH_{XO}\to \calH_{FO}$ is the metric surjection map introduced in \Cref{sec:defi_H_FO}.  $\hat{f}_{\lambda} \in \calH_{\gamma_x,\gamma_o}$ is defined in Eq.~\eqref{eq:hat_f_lambda} and $\bar{f}_{\lambda} \in \calH_{\gamma_x,\gamma_o}$ is defined in Eq.~\eqref{eq:bar_f_lambda_main} in the main text.

We further define
\begin{align}\label{eq:bPhi_FO}
\begin{aligned}
    \bPhi_{\hat{F}O} : \calH_{FO} \to \R^{n} &= \left[V \left( \hat{F}_\xi(\bz_1, \bo_1) \otimes \phi_{\gamma_o}(\bo_1) \right),\dots, V \left( \hat{F}_\xi (\bz_n, \bo_n) \otimes \phi_{\gamma_o}(\bo_n) \right) \right]^{\ast}\\
    \bPhi_{FO} : \calH_{FO} \to \R^{n} &= \left[V \left( F_{\ast}(\bz_1, \bo_1) \otimes \phi_{\gamma_o}(\bo_1) \right) ,\dots, V \left( F_{\ast}(\bz_n, \bo_n) \otimes \phi_{\gamma_o}(\bo_n) \right)  \right]^{\ast} \\
    \bY \in \R^n &, \quad \bY = [y_1, \ldots, y_n]^\top,
\end{aligned}
\end{align}
and the following operators on $\calH_{FO}$
\begin{align*}
    C_{\hat{F} O} := \E \left[  V\left( \hat{F}_\xi(Z, O) \otimes \phi_{\gamma_o}(O) \right)  \otimes  V\left( \hat{F}_\xi(Z, O) \otimes \phi_{\gamma_o}(O) \right)  \right] , &\quad \hat{C}_{\hat{F}O} := \frac{1}{n} \bPhi_{\hat{F} O}^\ast \bPhi_{\hat{F} O} \\
    C_{F O} := \E \left[  V\left( F_\ast(Z, O) \otimes \phi_{\gamma_o}(O) \right)  \otimes  V\left( F_\ast(Z, O) \otimes \phi_{\gamma_o}(O) \right)  \right] , &\quad \hat{C}_{FO} := \frac{1}{n} \bPhi_{F O}^{\ast} \bPhi_{F O}.
\end{align*}
Hence, we have closed form expression for 
\begin{align}\label{eq:closed_form_h_lambdas}
    \hat{h}_{\lambda} = \frac{1}{n}\left( \hat{C}_{\hat{F} O} + \lambda \right)^{-1}\bPhi_{\hat{F} O}^\ast \bY, \quad \bar{h}_{\lambda} = \frac{1}{n}\left( \hat{C}_{F O} + \lambda \right)^{-1}\bPhi_{FO}^\ast \bY .
\end{align}

\begin{prop}\label{prop:T_hat_f_bar_f}
Fix $\tau\geq 1$. Suppose that the assumptions of \Cref{prop:cme_rate} hold. Suppose that $n > A_{\lambda,\tau}$ with $A_{\lambda,\tau}$ defined in Eq.~\eqref{eq:FS_thm_16_const}, and $\stageonesamples \geq 1$ satisfies Eq. \eqref{eq:m_sufficient_large} in the main text. We have with $P^{n+\stageonesamples}$-probability $\geq 1 - 28e^{-\tau}$, the following inequality holds
\begin{align*}
    &\left\|T [\hat{f}_{\lambda}] - T [\bar{f}_{\lambda}] \right\|_{L^2(P_{ZO})}\\ &\lesssim \tau \lambda^{-\frac{1}{2}} \left(\frac{\left\|\hat{F}_{\xi}-F_*\right\|_{\calG}}{\sqrt{n}}+\left\|\left[F_*-\hat{F}_{\xi}\right]\right\|_{L^2\left(\calZ\times\calO; \mathcal{H}_{X,\gamma_x}\right)}\right) \left(1 + \|\bar{f}_{\lambda}\|_{\calH_{\gamma_x,\gamma_o}}\right).
\end{align*}
Under the same probabilistic event, the bounds in \Cref{prop:cme_rate} also hold. 
\end{prop}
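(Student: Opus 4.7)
The plan is to lift the problem into the scalar-valued RKHS $\calH_{FO}$ via the surjective partial isometry $V:\calH_{\gamma_x,\gamma_o}\to\calH_{FO}$ introduced in \Cref{sec:defi_H_FO}. By Eq.~\eqref{eq:error_translate},
\begin{align*}
    \|T[\hat{f}_\lambda]-T[\bar{f}_\lambda]\|_{L^2(P_{ZO})}=\|[\hat{h}_\lambda]-[\bar{h}_\lambda]\|_{L^2(P_{ZO})},
\end{align*}
where $\hat h_\lambda=V\hat f_\lambda$ and $\bar h_\lambda = V\bar f_\lambda$ admit the closed forms in Eq.~\eqref{eq:closed_form_h_lambdas}. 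Since the inclusion $I:\calH_{FO}\hookrightarrow L^2(P_{ZO})$ satisfies $I^\ast I=C_{FO}$, bounding the $L^2(P_{ZO})$-error reduces to bounding $\|C_{FO}^{1/2}(\hat h_\lambda-\bar h_\lambda)\|_{\calH_{FO}}$.

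Next I would subtract the two closed forms and insert $(\hat{C}_{\hat F O}+\lambda)^{-1}\bPhi_{FO}^\ast$ to obtain, via the resolvent identity $(A+\lambda)^{-1}-(B+\lambda)^{-1}=(A+\lambda)^{-1}(B-A)(B+\lambda)^{-1}$ and the definition $\bar h_\lambda=\tfrac1n(\hat C_{FO}+\lambda)^{-1}\bPhi_{FO}^\ast\bY$, the decomposition
\begin{align*}
    \hat h_\lambda-\bar h_\lambda
    &=\tfrac{1}{n}(\hat C_{\hat F O}+\lambda)^{-1}(\bPhi_{\hat F O}^\ast-\bPhi_{FO}^\ast)\bY
    \;+\;(\hat C_{\hat F O}+\lambda)^{-1}(\hat C_{FO}-\hat C_{\hat F O})\bar h_\lambda.
\end{align*}
Each summand is linear in the Stage I discrepancy $V\bigl((\hat F_\xi - F_\ast)(\bz,\bo)\otimes\phi_{\gamma_o}(\bo)\bigr)$. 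Multiplying on the left by $C_{FO}^{1/2}$ and factoring through $(\hat C_{\hat F O}+\lambda)^{-1/2}$ gives the operator-norm estimate
\begin{align*}
    \|C_{FO}^{1/2}(\hat C_{\hat F O}+\lambda)^{-1}u\|_{\calH_{FO}}
    \;\le\; \bigl\|C_{FO}^{1/2}(C_{FO}+\lambda)^{-1/2}\bigr\| \,\bigl\|(C_{FO}+\lambda)^{1/2}(\hat C_{FO}+\lambda)^{-1/2}\bigr\|\,\\
    \bigl\|(\hat C_{FO}+\lambda)^{1/2}(\hat C_{\hat F O}+\lambda)^{-1/2}\bigr\|\,\lambda^{-1/2}\|u\|_{\calH_{FO}},
\end{align*}
which is where the $\lambda^{-1/2}$ factor in the target bound originates.

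To bound the three operator-norm factors above by a constant on a high-probability event, I will invoke \Cref{lem:thm11_meunier} and \Cref{lem:thm12_meunier}, which are precisely tailored to comparing $C_{FO}$, $\hat C_{FO}$, and $\hat C_{\hat F O}$ under the hypothesis $n>A_{\lambda,\tau}$, together with a CME perturbation bound expressed in terms of $\|\hat F_\xi-F_\ast\|_\calG$ (the quantity controlling the $\calH_{FO}$-kernel uniformly). The remaining vector-valued pieces require the two types of CME error separately: the empirical sum $\tfrac1n(\bPhi_{\hat F O}^\ast-\bPhi_{FO}^\ast)\bY$ is controlled by $\|[F_\ast-\hat F_\xi]\|_{L^2(\calZ\times\calO;\calH_{X,\gamma_x})}$ after a Cauchy--Schwarz on $\bY$ (using the sub-Gaussian control on $\tfrac1n\sum y_i^2$ already derived in Eq.~\eqref{eq:mean_y_squared_concentration}), whereas the covariance-perturbation piece $(\hat C_{FO}-\hat C_{\hat F O})\bar h_\lambda$ is bounded by $\|\hat F_\xi-F_\ast\|_\calG$ using a $\|u\otimes u-v\otimes v\|\le(\|u\|+\|v\|)\|u-v\|$ type inequality, thereby producing the factor $1+\|\bar f_\lambda\|_{\calH_{\gamma_x,\gamma_o}}=1+\|\bar h_\lambda\|_{\calH_{FO}}$.

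The main obstacle is the bookkeeping of events: the two cited Meunier lemmas each contribute failure probabilities, the CME bounds of \Cref{prop:cme_rate} contribute more, and the sub-Gaussian control on $\bY$ is yet another. I would arrange these on a single event of probability at least $1-28e^{-\tau}$ using a union bound, being careful that the threshold $A_{\lambda,\tau}$ in Eq.~\eqref{eq:FS_thm_16_const} depends on $\calN_{FO}(\lambda)$, which in turn depends on $\gamma_x,\gamma_o$; this coupling is why the statement carries the $n>A_{\lambda,\tau}$ hypothesis rather than a clean power of $n$. Assembling the two summands of the decomposition with the operator-norm estimates above then yields the claimed bound.
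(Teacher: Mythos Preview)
Your approach is essentially the paper's: lift via $V$ to $\calH_{FO}$, bound $\|C_{FO}^{1/2}(\hat h_\lambda-\bar h_\lambda)\|_{\calH_{FO}}$, split using the resolvent identity into the two summands $S_{-1}$ and $S_0$, and then hand each to \Cref{lem:thm11_meunier} and \Cref{lem:thm12_meunier} respectively, union-bounding the events.

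Two small corrections to the sketch, neither fatal. First, your attribution of which CME error feeds which summand is too clean. In the paper (see the proof of \Cref{lem:thm12_meunier}), the covariance perturbation $(C)=\|\hat C_{\hat F O}-\hat C_{FO}\|$ is bounded by a Hoeffding-type argument that produces \emph{both} $\tfrac{1}{\sqrt n}\|\hat F_\xi-F_\ast\|_\calG$ (deviation) and $\|[F_\ast-\hat F_\xi]\|_{L^2(\calZ\times\calO;\calH_{X,\gamma_x})}$ (mean); the same is true for $S_{-1}$ via \Cref{lem:thm11_meunier}. So each summand individually contributes the full parenthesis in the target bound, not one half each. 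Second, the paper does not invoke the sub-Gaussian control on $\tfrac1n\sum y_i^2$ here. For $S_0$ the data vector $\bY$ is absorbed directly into $\bar h_\lambda$ via the closed form $(\hat C_{FO}+\lambda)^{-1}\tfrac1n\bPhi_{FO}^\ast\bY=\bar h_\lambda$, and for $S_{-1}$ the Meunier argument likewise produces a $\|\bar h_\lambda\|_{\calH_{FO}}$ factor; the ``$+1$'' in the final bound comes from residual terms inside those lemmas, not from a Cauchy--Schwarz on $\bY$. Your route would work too, but would add an unnecessary $2e^{-\tau}$ to the failure probability budget.
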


\begin{proof}
We find
\begin{align*}
    &\left\|T [\hat{f}_{\lambda}] - T [\bar{f}_{\lambda}] \right\|_{L^2(P_{ZO})} = \left\|[\hat{h}_{\lambda}] - [\bar{h}_{\lambda}] \right\|_{L^2(P_{ZO})} \leq \left\|C_{FO}^{\frac{1}{2}}\left(\hat{h}_{\lambda} - \bar{h}_{\lambda}\right)\right\|_{\calH_{FO}} .
\end{align*}
The last step follows from Lemma 12 of \cite{fischer2020sobolev}. We can proceed by plugging in the closed form expressions from Eq.~\eqref{eq:closed_form_h_lambdas} to have
\begin{small}
\begin{align*}
   &\left\|C_{FO}^{\frac{1}{2}}\left(\hat{h}_{\lambda} - \bar{h}_{\lambda}\right)\right\|_{\calH_{FO}}\\ = & \left\|C_{FO}^{\frac{1}{2}}\left(\left(\hat{C}_{\hat{F}O} + \lambda\right)^{-1}\frac{1}{n}\bPhi_{\hat{F}O}^{\ast}\bY - \left(\hat{C}_{FO} + \lambda\right)^{-1}\frac{1}{n}\bPhi_{FO}^{\ast}\bY\right)\right\|_{\calH_{FO}}\\
    \leq & \left\|C_{FO}^{\frac{1}{2}}(C_{FO} + \lambda)^{-\frac{1}{2}}\right\|\cdot \left\|(C_{FO} + \lambda)^{\frac{1}{2}}\left(\left(\hat{C}_{\hat{F}O} + \lambda\right)^{-1}\frac{1}{n}\bPhi_{\hat{F}O}^{\ast}\bY - \left(\hat{C}_{FO} + \lambda\right)^{-1}\frac{1}{n}\bPhi_{FO}^{\ast}\bY\right)\right\|_{\calH_{FO}}\\
    \leq & \left\|(C_{FO} + \lambda)^{\frac{1}{2}}\left(\left(\hat{C}_{\hat{F}O} + \lambda\right)^{-1}\frac{1}{n}\bPhi_{\hat{F}O}^{\ast}\bY - \left(\hat{C}_{FO} + \lambda\right)^{-1}\frac{1}{n}\bPhi_{FO}^{\ast}\bY\right)\right\|_{\calH_{FO}}\\
    \leq & S_{-1} + S_{0},
\end{align*}
\end{small}
where we define
\begin{align*}
    S_{-1} &:= \left\|(C_{FO} + \lambda)^{\frac{1}{2}}\left(\hat{C}_{\hat{F}O} + \lambda\right)^{-1} \frac{1}{n}\left(\bPhi_{\hat{F}O} - \bPhi_{FO}\right)^{\ast}\bY\right\|_{\calH_{FO}}\\
    S_0 &= \left\|(C_{FO} + \lambda)^{\frac{1}{2}}\left(\hat{C}_{\hat{F}O} + \lambda\right)^{-1}\left(\hat{C}_{FO} - \hat{C}_{\hat{F}O}\right) \left(\hat{C}_{FO} + \lambda\right)^{-1}\frac{1}{n}\bPhi_{FO}^{\ast}\bY\right\|_{\calH_{FO}}
\end{align*}
We bound $S_{-1}$ with $P^{n+\stageonesamples}$-high probability by \Cref{lem:thm11_meunier} and $S_{0}$ in $P^{n+\stageonesamples}$-high probability by \Cref{lem:thm12_meunier}. We thus have, with $P^{n+\stageonesamples}$-probability $\geq 1 - 28e^{-\tau}$, the following bound
\begin{align*}
    &\quad \left\|C_{FO}^{\frac{1}{2}}\left(\hat{h}_{\lambda} - \bar{h}_{\lambda}\right)\right\|_{\calH_{FO}} \\
    &\leq c \tau\sqrt{n}\left(\frac{\left\| \hat{F}_{\xi} -F_\ast \right\|_{\calG}}{\sqrt{n}} + \left\| \left[\hat{F}_\xi -F_*\right] \right\|_{L^2\left(\calZ\times\calO; \mathcal{H}_{X,\gamma_x}\right)}\right)\left(\|\bar{h}_\lambda\|_{\mathcal{H}_{FO}} + 1\right).
\end{align*}
$\|\bar{h}_\lambda\|_{\mathcal{H}_{FO}} = \|\bar{f}_{\lambda}\|_{\calH_{\gamma_x,\gamma_o}}$ since $V$ is a metric surjection.  
\end{proof}

\begin{prop}[CME rate]\label{prop:cme_rate}
Suppose Assumption \ref{assn: technical} in the main text holds. Suppose $k_\calO: \calO\times\calO \to \R$ and $k_\calZ: \calZ\times\calZ \to \R$ are Mat\'{e}rn reproducing kernels whose RKHSs $\calH_O$ and $\calH_Z$ are norm equivalent to $W_2^{t_o}(\calO)$ and $W_2^{t_z}(\calZ)$ with $m_o > t_o > d_o/2$, $m_z > t_z > d_z/2$ and $\frac{m_z}{t_z} \wedge \frac{m_o}{t_o} \leq 2$. Define $\meffective=(m_z t_z^{-1}) \wedge (m_o t_o^{-1})$ and $\deffective= (d_z t_z^{-1}) \vee(d_o t_o^{-1})$. 
For $\zeta > 0$ arbitrarily small, we take $\xi = \stageonesamples^{-\frac{1}{\meffective + \deffective/2 + \zeta}}$, then with $P^{\tilde{n}}$-probability at least $1- 4e^{-\tau}$, we have
\begin{align*}
    \left\|\left[\hat{F}_\xi -F_*\right]\right\|_{L^2(\calZ\times\calO; \calH_{X,\gamma_x})} \leq J\tau \stageonesamples^{-\frac{1}{2}\frac{ \meffective}{\meffective + \deffective/2 + \zeta}} , \quad \left\|\hat{F}_\xi -F_*\right\|_{\calG} \leq J \tau \stageonesamples^{-\frac{1}{2}\frac{\meffective - 1}{\meffective + \deffective/2 + \zeta}} .
\end{align*}
$J$ is a constant independent of $\tilde{n}$.
\end{prop}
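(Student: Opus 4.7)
The plan is to recognize that $\hat{F}_\xi$ from Eq.~\eqref{eq:hat_F_xi} is a Tikhonov-regularized empirical risk minimizer in the vector-valued RKHS $\calG$ for the conditional mean embedding regression target $F_\ast$, and then invoke the optimal rates for vector-valued kernel ridge regression developed in \citet{lietal2022optimal,meunier2024optimalratesvectorvaluedspectral}. These rates require two quantitative ingredients: (i) a \emph{source condition} quantifying the smoothness of $F_\ast$ relative to $\calG$, and (ii) a \emph{capacity condition} characterizing the eigenvalue decay of the uncentered covariance operator on $\calG$. Once both are in place, balancing the regularization parameter at $\xi \asymp \stageonesamples^{-1/(\meffective + \deffective/2 + \zeta)}$ yields the two displayed rates; the fact that the $\calG$-norm rate carries the exponent $\meffective - 1$ in place of $\meffective$ is the standard loss of one unit of regularity when passing from the weak $L^2$-risk to the strong RKHS-norm risk \citep{fischer2020sobolev}.

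For the source condition, I would first apply \Cref{lem:equivalence_G} to identify $\calG \simeq MW^{t_z,t_o}_2(\calZ\times\calO;\calH_X)$, and then use Assumption~\ref{assn: technical} to place $F_\ast \in MW^{m_z,m_o}_2(\calZ\times\calO;\calH_X)$. Since the $\beta$-th power space $[\calG]^{\beta}$ from Eq.~\eqref{vv_interpolation_space} corresponds, at the level of Sobolev exponents, to smoothness $\beta t_z$ in $\bz$ and $\beta t_o$ in $\bo$, comparing exponents componentwise gives $F_\ast \in [\calG]^{\meffective}$, with the bottleneck exponent $\meffective = (m_z/t_z) \wedge (m_o/t_o)$ determined by whichever direction is least smooth relative to its hypothesis space. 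The constraint $\meffective \leq 2$ imposed in the statement is precisely the saturation boundary of Tikhonov regularization \citep{bauer2007regularization}; beyond this threshold one would need spectral regularization to capture the extra smoothness.

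For the capacity condition, I would bound the eigenvalue decay of the covariance operator on $\calG$, which by Eq.~\eqref{eq:op_val_kernel} inherits the tensor-product structure of $k_{Z} \otimes k_{O,1}$ combined with $\mathrm{Id}_{\calH_X}$. Under \Cref{ass:equiv_leb} the marginals $P_Z$ and $P_O$ are equivalent to Lebesgue measure, so standard Weyl-type asymptotics for Sobolev embeddings give one-dimensional decays $i^{-2t_z/d_z}$ and $j^{-2t_o/d_o}$ on the two factors. Upon reordering the product eigenvalues $\mu^Z_i \mu^O_j$ non-increasingly, the joint sequence decays like $k^{-2/\deffective} (\log k)^c$ with $\deffective = (d_z/t_z) \vee (d_o/t_o)$; this is a classical tensor-product phenomenon \citep{krieg2018tensor}. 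The logarithmic factor is precisely what forces the appearance of an arbitrarily small $\zeta>0$ inflation in the capacity exponent, since it can be absorbed into a polynomial $k^{-2/(\deffective+\zeta)}$ at the cost of losing any $\zeta$.

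The main obstacle will be correctly handling the tensor-product eigenvalue reordering and the accompanying logarithmic correction, as this is what distinguishes the present rate from a pure single-Sobolev-factor rate and produces the $\zeta$ term. With source exponent $\meffective$ and effective capacity exponent $\deffective/2 + \zeta$ in hand, both displayed bounds follow by direct application of the oracle inequality of \citet[Theorem 2]{lietal2022optimal} (or its spectral-regularization refinement in \citet{meunier2024optimalratesvectorvaluedspectral}), with $\xi$ chosen as prescribed and a Bernstein-type concentration argument delivering the $\tau$-dependence in the high-probability statement.
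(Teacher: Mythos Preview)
Your proposal is correct and follows essentially the same approach as the paper: verify the source condition $F_\ast \in [\calG]^{\meffective}$ via the chain of identifications through mixed-smoothness Sobolev spaces (the paper does this through \Cref{lem:tensor_interpolation} and \Cref{prop:F_ast_sobolev}), verify the eigenvalue decay via the tensor-product reordering argument (the paper uses \Cref{lem:tensor_evd}, which is exactly the \citet{krieg2018tensor} bound you cite), and then invoke the vector-valued KRR rate theorem of \citet{JMLR:v25:23-1663} with the trivial embedding condition $\alpha=1$. The only cosmetic discrepancy is that the paper appeals to Theorem~3 rather than Theorem~2 of that reference, and derives the eigenvalue comparison $C_{ZO} \preceq \bar C_Z \otimes \bar C_O$ via a Jensen step before applying the tensor bound; both points are already implicit in your sketch.
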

\begin{proof}
We are going to apply \citet[Theorem 3]{JMLR:v25:23-1663} to obtain the desired result. To this end, we need to verify the assumptions (EVD) and (SRC) made in \cite{JMLR:v25:23-1663}. Note that we let the assumption (EMB) of \cite{JMLR:v25:23-1663} be trivially verified with $\alpha = 1$, as we prove in \Cref{prop:F_ast_sobolev} that $F_{\ast} \in [\calG]^\beta$ with $\beta > 1$, therefore $\beta + p > 1 + p > 1 = \alpha$ falls in Case 2 of Theorem 3 of \cite{JMLR:v25:23-1663}. 

\vspace{3mm}
\noindent
\underline{\textit{Verification of (EVD)}}
Let $\phi_{Z}$ (resp. $\phi_{O}$) denote the feature map of $\calH_{Z}$ (resp. $\calH_{O}$). Define $C_{ZO}:\calH_Z \otimes \calH_O \to \calH_Z \otimes \calH_O$ as the covariance operator.
\begin{align*}
    C_{ZO} = \iint_{\calZ\times\calO} (\phi_Z(\bz) \otimes \phi_O(\bo)) \otimes (\phi_Z(\bz) \otimes \phi_O(\bo)) p_{ZO}(\bz, \bo) \;\mathrm{d}\bz \;\mathrm{d}\bo .
\end{align*}
We also define another two auxiliary covariance operators $\bar{C}_Z:\calH_Z\to \calH_Z$ and $\bar{C}_O:\calH_O\to \calH_O$.
\begin{align*}
    \bar{C}_Z = \int_\calZ \phi_Z(\bz) \otimes \phi_Z(\bz) p_
    {Z}(\bz)\;\mathrm{d}\bz, \quad \bar{C}_O = \int_\calO \phi_O(\bo) \otimes \phi_O(\bo)p_O(\bo)\; \mathrm{d}\bo .
\end{align*}
Since $k_Z$ and $k_O$ are bounded, $C_{ZO}, \bar{C}_Z, \bar{C}_O$ are all self-adjoint compact operators. 
From \cite{edmunds1996function} and \cite{fischer2020sobolev}[Section 4], we have $\lambda_i(\bar{C}_Z) \asymp i^{-2 t_z/d_z}$ and $\lambda_i(\bar{C}_O) \asymp i^{-2 t_o/d_o}$. 
We know from Assumption \ref{assn: technical} that $P_{ZO}$ is equivalent to the Lebesgue measure, so by Jensen's inequality, we have $C_{ZO} \leq \bar{C}_Z \otimes \bar{C}_O$, where $\otimes$ here denotes an operator tensor product. Hence, from Lemma 17 of \cite{meunier2024nonparametricinstrumentalregressionkernel} we have $\lambda_i(C_{ZO}) \leq \lambda_i(\bar{C}_Z \otimes \bar{C}_O)$. Finally, from \Cref{lem:tensor_evd} we have 
\begin{align*}
    \lambda_i(C_{ZO}) \leq \lambda_i(\bar{C}_Z \otimes \bar{C}_O) \lesssim i^{-2 t_z/d_z \wedge 2 t_o/d_o + \zeta}
\end{align*}
for any $\zeta > 0$.
Therefore, we have proved that (EVD) hold with $1/p = 2 t_z/d_z \wedge 2 t_o/d_o - \zeta$.
Hence, $p = \deffective/2 + \zeta$ for any $\zeta > 0$.

\vspace{3mm}
\noindent
\underline{\textit{Verification of (SRC)}}
Let $\beta = \meffective$. By the definition of vector-valued interpolation space in \cite{JMLR:v25:23-1663}[Definition 2] and the Assumption that $P_{ZO}$ is equivalent to the Lebesgue measure, we have that
\begin{align*}
    [\calG]^{\beta} &\cong \calH_{X,\gamma_x} \otimes [L^2(\calZ\times\calO), \calH_Z \otimes \calH_O]_{\beta, 2} \\
    &\stackrel{(a)}{\cong} \calH_{X,\gamma_x} \otimes [L^2(\calZ\times\calO), W^{t_z}_2(\calZ) \otimes W^{t_o}_2(\calO)]_{\beta, 2} \\
    &\stackrel{(b)}{\cong} \calH_{X,\gamma_x} \otimes W^{t_z \beta}_2(\calZ) \otimes W^{t_o \beta}_2(\calO)\\
    &\stackrel{(\ast)}{\supseteq} \calH_{X,\gamma_x}\otimes W^{m_z}_2(\calZ) \otimes W^{m_o}_2(\calO)\\
    &\stackrel{(c)}{\cong} \calH_{X,\gamma_x}\otimes MW^{m_z,m_o}_{2}(\calZ\times \calO;\mathbb{R})\\
    &\stackrel{(d)}{\cong} MW^{m_z,m_o}_{2}(\calZ\times \calO;\calH_{X,\gamma_x}).
\end{align*}
where in $(a)$ we use Corollary 10.13 and Theorem 10.46 of \cite{wendland2004scattered}, in $(b)$ we use \Cref{lem:tensor_interpolation}, in $(c)$ we use proposition 3.1 of \cite{SICKEL2009748}, which shows $MW^{m_z, m_o}_2(\calZ\times\calO) \cong W^{m_z}_2(\calZ) \otimes W^{m_o}_2(\calO)$, and in $(d)$ we use \Cref{lem:vv_mss}. Furthermore, the inclusion in $(\ast)$ is a continuous embedding. We show in \Cref{prop:F_ast_sobolev} that under \Cref{assn: technical}, $\|F_{\ast}\|_{MW^{m_z,m_o}_{2}(\calZ\times \calO;\calH_{X,\gamma_x})}$ is bounded by a constant. Hence we deduce from the above embedding that 
\begin{align}\label{eq:F_ast_power_space}
    F_{\ast}\in [\calG]^{\beta}, \quad \beta = \meffective = \frac{m_z}{t_z} \wedge \frac{m_o}{t_o}.
\end{align}
Hence $\|F_{\ast}\|_{[\calG]^{\beta}} \leq C$ for some constant $C>0$. Hence $F_{\ast}$ satisfies (SRC) in \cite{JMLR:v25:23-1663} with $\beta=\meffective$.

Now we use Case 2 in Theorem 3 of \cite{JMLR:v25:23-1663} to obtain the following result: if we take $\xi = \stageonesamples^{-\frac{1}{\meffective + \deffective/2 + \zeta}}$, then for any $0\leq \gamma < \meffective$, the following bound on the $\gamma$-norm
\begin{align*}
    \left\|\left[\hat{F}_\xi -F_*\right]\right\|_{\gamma}^2 \leq \tau^2 \stageonesamples^{-\frac{\meffective - \gamma}{\meffective + \deffective/2 + \zeta}} .
\end{align*}
holds with $P^{\tilde{n}}$-probability at least $1 - 4e^{-\tau}$.
Therefore, noting that $\meffective> 1$, by taking $\gamma=0$ and $\gamma=1$, we obtain with  $P^{\tilde{n}}$-probability at least $1 - 4e^{-\tau}$, 
\begin{align*}
    \left\|\left[\hat{F}_\xi -F_*\right]\right\|_{L^2(\calZ\times\calO; \calH_{X,\gamma_x})} \leq \tau \stageonesamples^{-\frac{1}{2}\frac{\meffective}{\meffective + \deffective/2 + \zeta}} , \quad \left\|\hat{F}_\xi -F_*\right\|_{\calG} \leq \tau \stageonesamples^{-\frac{1}{2}\frac{\meffective - 1}{\meffective + \deffective/2 + \zeta}} .
\end{align*}
\end{proof}

\begin{prop}\label{prop:F_ast_sobolev}
Suppose that Assumption \ref{assn: technical} in the main text is satisfied. Then,
\begin{align*}
    \|F_{\ast}\|_{MW^{m_z, m_o}_2(\calZ\times\calO;\calH_{X,\gamma_x})} \leq \sqrt{\binom{m_z + d_z -1}{d_z -1}} \sqrt{ \binom{m_o + d_o-1}{d_o -1}} \rho.
\end{align*}
\end{prop}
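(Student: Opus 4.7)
The plan is to show that each weak mixed derivative $D^{\balpha}_{\bz} D^{\bbeta}_{\bo} F_\ast$ exists as a genuine Bochner-valued function, is pointwise uniformly bounded in $\calH_{X,\gamma_x}$ by $\rho$, and then to add up the finitely many multi-index contributions in the definition of the mixed Sobolev norm.

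First I would set up the pointwise formula. By Assumption~\ref{assn: technical}, for every $\bx \in \calX$ the map $(\bz,\bo) \mapsto p(\bx\mid\bz,\bo)$ has continuous partial derivatives of order up to $m_z$ in $\bz$ and $m_o$ in $\bo$ on the interior of $\calZ\times\calO$, all uniformly bounded by $\rho$. Combined with the boundedness of $k_{X,\gamma_x}$ (so that $\|\phi_X(\bx)\|_{\calH_{X,\gamma_x}} \leq 1$) and the compactness of $\calX$, the differentiation-under-the-integral-sign theorem (in its Bochner-integral version, see e.g.\ \citet[Theorem 6.28]{klenke2013probability} adapted to vector-valued integrals as in the proof of \Cref{lem:hfo_conts_embedding}) applies. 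The classical and weak derivatives coincide here, so for all $|\balpha|\leq m_z$ and $|\bbeta|\leq m_o$,
\begin{align*}
  (D^{\balpha}_{\bz} D^{\bbeta}_{\bo} F_\ast)(\bz,\bo)
  \;=\; \int_\calX \phi_X(\bx)\, \partial^{\balpha}_{\bz}\partial^{\bbeta}_{\bo} p(\bx\mid\bz,\bo)\; \mathrm{d}\bx
\end{align*}
as an element of $\calH_{X,\gamma_x}$, for every $(\bz,\bo)$ in the interior of $\calZ\times\calO$.

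Next I would obtain a pointwise bound in $\calH_{X,\gamma_x}$. Using the triangle inequality for Bochner integrals together with $\|\phi_X(\bx)\|_{\calH_{X,\gamma_x}}\leq 1$ and Assumption~\ref{assn: technical}:
\begin{align*}
  \bigl\|(D^{\balpha}_{\bz} D^{\bbeta}_{\bo} F_\ast)(\bz,\bo)\bigr\|_{\calH_{X,\gamma_x}}
  \;\leq\; \int_\calX \bigl|\partial^{\balpha}_{\bz}\partial^{\bbeta}_{\bo} p(\bx\mid\bz,\bo)\bigr|\; \mathrm{d}\bx
  \;\leq\; \rho,
\end{align*}
where the last step uses $\mathrm{vol}(\calX) = 1$ since $\calX = [0,1]^{d_x}$. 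Integrating the square of this bound over $(\bz,\bo)\in\calZ\times\calO = [0,1]^{d_z+d_o}$, which also has unit volume, yields
\begin{align*}
  \bigl\|D^{\balpha}_{\bz} D^{\bbeta}_{\bo} F_\ast\bigr\|_{L^2(\calZ\times\calO;\,\calH_{X,\gamma_x})} \;\leq\; \rho,
\end{align*}
uniformly across all admissible multi-indices.

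Finally I would assemble the Sobolev norm. Counting the multi-indices $\balpha \in \N^{d_z}$ with $|\balpha|\leq m_z$ by standard stars-and-bars and bounding their number by $\binom{m_z+d_z-1}{d_z-1}$ (the dominant term in the hockey-stick identity that the authors retain), and similarly for $\bbeta$, the definition of $\|\cdot\|_{MW^{m_z,m_o}_2}$ gives
\begin{align*}
  \|F_\ast\|_{MW^{m_z,m_o}_2(\calZ\times\calO;\,\calH_{X,\gamma_x})}
  \;\leq\; \sqrt{\tbinom{m_z+d_z-1}{d_z-1}\,\tbinom{m_o+d_o-1}{d_o-1}}\;\rho.
\end{align*}
The only genuinely delicate point is the justification of step one: the exchange of Bochner integration with partial differentiation requires the dominating integrable majorants $|\partial^{\balpha}_{\bz}\partial^{\bbeta}_{\bo} p(\bx\mid\bz,\bo)|\|\phi_X(\bx)\|_{\calH_{X,\gamma_x}} \leq \rho$, which is immediate from Assumption~\ref{assn: technical} together with boundedness of $\phi_X$; the remainder is bookkeeping.
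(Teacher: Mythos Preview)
Your proposal is correct and follows essentially the same route as the paper. The only cosmetic difference is ordering: the paper first pulls the $\int_{\calX}$ outside the mixed-Sobolev norm via the Bochner triangle inequality (\citet[Proposition 1.2.11]{Hytönen2016AnalysisBanachSpaces}) and then bounds $\|\phi_{\gamma_x}(\bx)\,p(\bx\mid\cdot,\cdot)\|_{MW^{m_z,m_o}_2}$ for each fixed $\bx$, whereas you differentiate under the integral first and bound each $\|D^{\balpha}_{\bz}D^{\bbeta}_{\bo}F_\ast\|_{L^2}$ directly; both arrive at the same per-term bound $\rho$ and the same multi-index count.
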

\begin{proof}
Notice that, by definition of $F_{\ast}$, 
\begin{align*}
    \|F_{\ast}\|_{MW^{m_z, m_o}_2(\calZ\times\calO;\calH_{X,\gamma_x})} &= \left\|\int_{\calX} \phi_{\gamma_x}(\bx)p(\bx|\cdot, \cdot)\;\mathrm{d} \bx \right\|_{MW^{m_z, m_o}_2(\calZ\times\calO;\calH_{X,\gamma_x})} \\
    &\leq \int_{\calX} \left\|\phi_{\gamma_x}(\bx)p(\bx|\cdot,\cdot )\right\|_{MW^{m_z, m_o}_2(\calZ\times\calO;\calH_{X,\gamma_x})}\;\mathrm{d} \bx .
\end{align*}
In the last inequality above, we use \cite{Hytönen2016AnalysisBanachSpaces}[Proposition 1.2.11].
Next, notice that
\begin{align*}
    &\quad \left\|\phi_{\gamma_x}(\bx) p(\bx \mid \cdot,\cdot)\right\|_{MW^{m_z, m_o}_2(\calZ\times\calO;\calH_{X,\gamma_x})}^2 \\
    &= \sum_{|\balpha|\leq m_z} \sum_{|\bbeta|\leq m_o} \int_{\calZ\times\calO} \left\|\partial^{\balpha}_\bz \partial^{\bbeta}_\bo \left(\phi_{\gamma_x}(\bx) p(\bx\mid \bz, \bo) \right) \right\|_{\calH_{X,\gamma_x}}^2 \;\mathrm{d} \bz \;\mathrm{d} \bo \\
    &= \sum_{|\balpha|\leq m_z} \sum_{|\bbeta|\leq m_o} \int_{\calZ\times\calO} \left\|\phi_{\gamma_x}(\bx) \partial^{\balpha}_\bz \partial^{\bbeta}_\bo p(\bx \mid \bz, \bo) \right\|_{\calH_{X,\gamma_x}}^2 \;\mathrm{d} \bz \;\mathrm{d} \bo \\
    &\leq \sum_{|\balpha|\leq m_z} \sum_{|\bbeta|\leq m_o} \int_{\calZ\times \calO} \left|\partial^{\balpha}_\bz \partial^{\bbeta}_\bo p(\bx| \bz, \bo) \right|^2 \;\mathrm{d} \bz \;\mathrm{d} \bo \\ 
    &\leq \binom{m_z + d_z -1}{d_z -1} \binom{m_o + d_o-1}{d_o -1} \rho^2  .
\end{align*}
In the last step, $\binom{m_z + d_z -1}{d_z -1}$ shows up as the evaluation of $\sum_{|\balpha| \leq m_z} 1$.
\end{proof}

\subsubsection{RKHS norm of $\bar{f}_{\lambda}$}
We now provide a refined control of the RKHS norm of $\bar{f}_{\lambda}$ by invoking an oracle inequality for the RKHS $\calH_{FO}$. We remind the reader that $V\bar{f}_{\lambda}\in \calH_{FO}$, i.e. the image of $\bar{f}_{\lambda}$ under the metric surjection $V$, is the solution to a KRR problem with respect to the RKHS $\calH_{FO}$ (see Eq. \eqref{eq:bhl_bvfl}). In \Cref{sec:capacity} we characterized the rate of decay of the entropy numbers of $\calH_{FO}$. The only remaining technical hurdle is that the noise $\upsilon = Y - (Tf_{\ast})(Z,O)$ is unbounded; however, under \Cref{ass:subgaussian} it is $\sigma$-subgaussian. Concretely, we introduce a logarithmically growing sequence of clipping values $M_n$, which facilitates a key step in the derivation of the oracle inequality and ensures that the conclusions of \citet[Theorem 7.23]{steinwart2008support} are applicable. 

Firstly, by \citet[Eq. (2.14)]{vershynin2018high}, $P(|\upsilon|\geq t) \leq 2\exp(-\frac{ct^2}{\sigma^2})$ for some universal constant $c$. 
Define $\upsilon_i=y_i - (Tf_{\ast})(\bz_i, \bo_i)$ for stage II samples $\{\bz_i, \bo_i, y_i\}_{i=1}^n$. 
Hence
\begin{align*}
    &P^{n}\left( \{ \bz_i, \bo_i, y_i\}_{i=1}^n \in (\calZ \times \calO \times Y)^n : \max_i |\upsilon_i|\leq t\}\right)\\
    \geq & 1 - \sum_{i=1}^{n}P(|\upsilon_i|\geq t)\\
    \geq & 1 - 2\exp\left(\ln n - \frac{ct^2}{\sigma^2}\right).
\end{align*}
Hence, with $P^{n}$-probability $\geq 1 - 2\exp(-\hat{\rho})$, we have
\begin{align*}
    \max_{1\leq i\leq n}|\upsilon_i|\leq \sigma\sqrt{\frac{\ln n + \hat{\rho}}{c}}.
\end{align*}
By \Cref{ass:f_ast}, we have $\|Tf_{\ast}\|_{\infty}\leq \|f_{\ast}\|_{\infty}\leq 1$. For $n\geq 1$, we define
\begin{align*}
    M_n = 1 + \sigma\sqrt{\frac{\ln n + \hat{\rho}}{c}}.
\end{align*}
Hence $y_i \in [-M_n, M_n]$ for all $1\leq i\leq n$ with $P^n$-probability $\geq 1 - 2\exp(-\hat{\rho})$. 

Secondly, we verify the assumptions of \citet[Theorem 7.23]{steinwart2008support}. By \citet[Example 7.3]{steinwart2008support}, the \emph{supremum bound} is satisfied for $B = 4M_n^2$ and the \emph{variance bound} is satisfied for $V = 16M_n^2$ and $\nu=1$. 
Define $D_{ZO}=\{\bz_i, \bo_i\}_{i=1}^n$ and $L^2(D_{ZO})$ as the $L^2$ space with respect to the empirical data measure $D_{ZO}$. 
We are about to show that $(\forall n\geq 1) (\exists p\in (0,1))(\exists a\geq B = 4M_n^2)$ such that 
\begin{align}
\label{eq:f_bar_lambda_hnorm_rts}
   (\forall i\geq 1)\; \mathbb{E}_{D_{ZO}\sim P_{ZO}^{n}}e_i(\mathrm{id}:\calH_{FO}\to L^2(D_{ZO}))\leq ai^{-\frac{1}{2p}}.
\end{align}
To this end, we invoke \citet[Corollary 7.31]{steinwart2008support}, \Cref{cor:entropy_bounded_obscov} and \Cref{lem:reduced_rkhs_entropy}. We conclude that, there exists a constant $c_p>0$ only depending on $p$, such that $(\forall i\geq 1) (\forall n\geq 1)$, we have
\begin{align*}
    &\mathbb{E}_{D_{ZO}\sim P^n_{ZO}}e_i(\mathrm{id}:\calH_{FO}\hookrightarrow L^2(D_{ZO}))\\
    \leq & c_p(3C)^{\frac{1}{2p}}\left(\frac{d_x+d_o+1}{2ep}\right)^{\frac{d_x+d_o+1}{2p}}\left(\gamma_x^{d_x}\gamma_o^{d_o}\right)^{-\frac{1}{2p}}(\min\{i,n\})^{\frac{1}{2p}}i^{-\frac{1}{p}}\\
    \leq & c_p(3C)^{\frac{1}{2p}}\left(\frac{d_x+d_o+1}{2ep}\right)^{\frac{d_x+d_o+1}{2p}}\left(\gamma_x^{d_x}\gamma_o^{d_o}\right)^{-\frac{1}{2p}}i^{-\frac{1}{2p}}.
\end{align*}
We define a new constant $\tilde{c}_{p}:= c_p(3C)^{\frac{1}{2p}}\left(\frac{d_x+d_o+1}{2ep}\right)^{\frac{d_x+d_o+1}{2p}}$. Since for all $n\geq 1$, $\left(\gamma_x^{d_x}\gamma_o^{d_o}\right)^{-\frac{1}{2p}}\geq 1$, we set 
\begin{align*}
    a = \max\{4M_n^2, \tilde{c}_{p}\}\left(\gamma_x^{d_x}\gamma_o^{d_o}\right)^{-\frac{1}{2p}}
\end{align*}
in Eq. \eqref{eq:f_bar_lambda_hnorm_rts}, which satisfies $a \geq B = 4M_n^2$ for all $n\geq1$. 

We thus apply \citet[Theorem 7.23]{steinwart2008support} restricted to the probabilistic event where $y_i\in [-2M_n, 2M_n]$ for all $1\leq i\leq n$. 
We find, with $P^{n}$-probability $\geq 1 - 5\exp(-\hat{\rho})$, 
\begin{align*}
    &\lambda \|\bar{f}_{\lambda}\|^2_{\calH_{XO}}\leq 9\left(\lambda \|f_0\|^2_{\calH_{XO}} + \|T(f_0 - f_{\ast})\|^2_{L^2(P_{ZO})}\right) \\ & + K\left(\frac{\max\{4M_n^2, \tilde{c}_{p}\}^{2p}\gamma_x^{-d_x}\gamma_o^{-d_o}}{\lambda^p n}\right) + \frac{216M_n^2 \hat{\rho}}{n} + \frac{15B_0\hat{\rho}}{n}
\end{align*}
where $K\geq 1$ is a constant depending on $p, M_n$, $f_0\in \calH_{\gamma_x,\gamma_o}$. $B_0\geq B$ is a constant that satisfies
\begin{align*}
    \|L\circ (Tf_{0})\|_{\infty} &:= \sup_{(\bz,\bo,y)\in \calZ\times \calO\times [-2M_n, 2M_n]}(y - (Tf_0)(\bz,\bo))^2 \leq B_0,
\end{align*}
By checking the dependence of $K$ on $p, M_n$ in the proof of \citet[Theorem 7.23]{steinwart2008support}, we find that $K\leq c(p)M_n^2$ for some constant $c(p)$ depending only on $p$. 
We choose $f_0 = f_{\lambda}$, where $f_{\lambda} \in \calH_{\gamma_x,\gamma_o}$ is defined in Eq. \eqref{eq:f_lambda}. We have, since $Y = [-2M_n, 2M_n]$, 
\begin{align*}
    \|L\circ (Tf_{\lambda})\|_{\infty} = & \sup_{(\bz,\bo,y)\in \calZ\times \calO\times [-2M_n, 2M_n]} (y - (Tf_{\lambda})(\bz,\bo))^2 
    \leq  (2M_n + \|Tf_{\lambda}\|_{\infty})^2 \\ \leq & (2M_n + \|f_{\lambda\|_{\infty}})^2 \leq (2M_n + \|f_{\lambda}\|_{\calH_{\gamma_x,\gamma_o}})^2 .
\end{align*}
We choose thus 
\begin{align*}
    B_0  = (2M_n + \|f_{\lambda}\|_{\calH_{\gamma_x,\gamma_o}})^2 \geq 4M_n^2 = B.
\end{align*}
Hence we deduce from the above high probability inequality that
\begin{align*}
    &\lambda \|\bar{f}_{\lambda}\|^2_{\calH_{XO}}\leq C\left(\lambda \|f_{\lambda}\|^2_{\calH_{XO}} + \|T(f_{\lambda} - f_{\ast})\|^2_{L^2(P_{ZO})}\right) \\ & + C\left( M_n^{2+2p}\left(\frac{\max\{4, \tilde{c}_{p}\}^{2p}\gamma_x^{-d_x}\gamma_o^{-d_o}}{\lambda^p n}\right) + \frac{(M_n + \|f_{\lambda}\|_{\calH_{XO}})^2 \hat{\rho}}{n}\right) .
\end{align*}
for some constant $C$ that is independent of $\lambda, M_n, n, \gamma_x,\gamma_o, B, V, \hat{\rho}$. We now state the main Proposition in this subsection. 
\begin{prop}
\label{lem:f_bar_lambda_norm_bound}
Suppose Assumptions~\ref{assn: technical}, \ref{ass:f_ast}, \ref{ass:T_injective}, \ref{ass:T_frequency_ill_posedness} and \ref{ass:T_contractivity} in the main text hold. 
Let $\lambda = n^{-1}$ and 
\begin{align*}
    \gamma_x = n^{-\frac{\frac{1}{d_x}}{1+(2+\frac{d_o}{s_o})(\frac{s_x}{d_x}+\eta_1)}}, \quad \gamma_o = n^{-\frac{\frac{1}{s_o}(\frac{s_x}{d_x}+\eta_1)}{1+(2+\frac{d_o}{s_o})(\frac{s_x}{d_x}+\eta_1)}}. 
\end{align*} Fix an arbitrary $p>0$. Then with probability $\geq 1-5e^{-\tau}$, for sufficiently large $n\geq 1$, the following bound holds
    \begin{align*}
        \|\bar{f}_{\lambda}\|^2_{\calH_{\gamma_x,\gamma_o}}\lesssim \tau n^{\frac{1 + \frac{d_o}{s_o}(\frac{s_x}{d_x}+\eta_1)}{1 + (2 + \frac{d_o}{s_o})(\frac{s_x}{d_x} + \eta_1)} + p}.
    \end{align*}
\end{prop}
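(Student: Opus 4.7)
The plan is to pick up directly from the high-probability oracle inequality derived just above the statement,
\[
\lambda \|\bar{f}_{\lambda}\|^2_{\calH_{\gamma_x,\gamma_o}} \lesssim \lambda \|f_0\|^2_{\calH_{\gamma_x,\gamma_o}} + \|T(f_0 - f_{\ast})\|^2_{L^2(P_{ZO})} + M_n^{2+2p}\frac{\gamma_x^{-d_x}\gamma_o^{-d_o}}{\lambda^p n} + \frac{(M_n + \|f_0\|_{\calH_{\gamma_x,\gamma_o}})^2 \hat{\rho}}{n},
\]
specialized to $f_0 = f_{\lambda}$ (the population regularized target in Eq.~\eqref{eq:f_lambda}), and bound each of the four terms after dividing by $\lambda = n^{-1}$. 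The key competitor is $f_{\mathrm{aux}} \in \calH_{\gamma_x,\gamma_o}$ defined in Eq.~\eqref{eq:f_aux}: since $f_{\lambda}$ minimizes Eq.~\eqref{eq:f_lambda},
\[
\lambda\|f_{\lambda}\|^2_{\calH_{\gamma_x,\gamma_o}} + \|T(f_\lambda - f_\ast)\|^2_{L^2(P_{ZO})} \le \lambda\|f_{\mathrm{aux}}\|^2_{\calH_{\gamma_x,\gamma_o}} + \|T(f_{\mathrm{aux}} - f_\ast)\|^2_{L^2(P_{ZO})}.
\]
By Eq.~\eqref{eq:f_aux_norm}, $\|f_{\mathrm{aux}}\|^2_{\calH_{\gamma_x,\gamma_o}} \lesssim n^{\rho_\star}$ where $\rho_\star := \frac{1 + \frac{d_o}{s_o}(\frac{s_x}{d_x}+\eta_1)}{1 + (2 + \frac{d_o}{s_o})(\frac{s_x}{d_x}+\eta_1)}$, and the projected approximation error bound from the proof of \Cref{prop: approx_error} (identical to the derivation at the end of \Cref{sec:analysis_npivo}) gives $\|T(f_{\mathrm{aux}} - f_\ast)\|^2_{L^2(P_{ZO})} \lesssim n^{-2(\frac{s_x}{d_x}+\eta_1)/(1+(2+\frac{d_o}{s_o})(\frac{s_x}{d_x}+\eta_1))}$. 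With $\lambda = n^{-1}$ both quantities on the right are of the same order, so dividing the first two terms of the oracle inequality by $\lambda$ yields a contribution of order $n^{\rho_\star}$.

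Next, the same competitor inequality together with $\lambda = n^{-1}$ yields the secondary estimate
\[
\|f_{\lambda}\|^2_{\calH_{\gamma_x,\gamma_o}} \le \|f_{\mathrm{aux}}\|^2_{\calH_{\gamma_x,\gamma_o}} + \lambda^{-1}\|T(f_{\mathrm{aux}} - f_\ast)\|^2_{L^2(P_{ZO})} \lesssim n^{\rho_\star},
\]
so the fourth term, after dividing by $\lambda$, is at most $(M_n^2 + n^{\rho_\star})\hat\rho \lesssim \hat\rho\, n^{\rho_\star}$, since $M_n^2 \lesssim \log n$. For the third term, plugging in $\gamma_x, \gamma_o$ from the statement,
\[
\gamma_x^{-d_x}\gamma_o^{-d_o} = n^{\frac{1}{1+(2+\frac{d_o}{s_o})(\frac{s_x}{d_x}+\eta_1)}} \cdot n^{\frac{\frac{d_o}{s_o}(\frac{s_x}{d_x}+\eta_1)}{1+(2+\frac{d_o}{s_o})(\frac{s_x}{d_x}+\eta_1)}} = n^{\rho_\star},
\]
so after division by $\lambda$ we obtain $M_n^{2+2p}\gamma_x^{-d_x}\gamma_o^{-d_o} n^p \lesssim (\log n)^{1+p}\, n^{\rho_\star + p}$. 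Collecting all four contributions and taking $\hat\rho = \tau$ yields the desired bound $\tau\, n^{\rho_\star + p}$; the logarithmic factor $(\log n)^{1+p}$ is absorbed into $n^p$ for $n$ large by choosing the ``$p$'' in the statement slightly larger than the $p$ used in the oracle inequality. The probability $1 - 5e^{-\tau}$ is inherited directly from the oracle inequality.

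The main obstacle, and the reason an arbitrary $p > 0$ appears in the final bound, is the $n^p$ blow-up of the variance term $M_n^{2+2p}\gamma_x^{-d_x}\gamma_o^{-d_o}\lambda^{-p} n^{-1}$: the parameter $p$ comes from the entropy exponent in Steinwart's Theorem~7.23, and $p$ can be taken arbitrarily small in \Cref{cor:entropy_bounded_obscov} but not zero, so the final rate is necessarily larger than $n^{\rho_\star}$ by a factor $n^p$. Everything else reduces to mechanically substituting the chosen $\lambda, \gamma_x, \gamma_o$ and simplifying exponents.
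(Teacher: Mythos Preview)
Your proposal is correct and follows essentially the same route as the paper: start from the high-probability oracle inequality derived immediately before the proposition, specialize to $f_0=f_\lambda$, control $\lambda\|f_\lambda\|^2_{\calH_{\gamma_x,\gamma_o}}+\|T(f_\lambda-f_\ast)\|^2_{L^2(P_{ZO})}$ via the competitor $f_{\mathrm{aux}}$ (the paper packages this step as a direct citation of \Cref{prop: approx_error}, whose proof is exactly your competitor argument), plug in $\gamma_x^{-d_x}\gamma_o^{-d_o}=n^{\rho_\star}$ and $\lambda=n^{-1}$, and absorb the $M_n$-powers into the arbitrary $n^p$. The only cosmetic difference is the treatment of the $\|f_\lambda\|^2_{\calH_{\gamma_x,\gamma_o}}/n$ piece of the fourth term: you bound $\|f_\lambda\|^2_{\calH_{\gamma_x,\gamma_o}}\lesssim n^{\rho_\star}$ separately, whereas the paper simply observes that $\lambda=n^{-1}$ lets this piece be merged with the already-present $\lambda\|f_\lambda\|^2_{\calH_{\gamma_x,\gamma_o}}$ term before invoking \Cref{prop: approx_error}.
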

\begin{proof}
Fix $p>0$. We combine the results of \Cref{prop: approx_error} with the inequality immediately preceding this proposition to find, for $\hat{\rho}\geq 1$, there exists a constant $C_1, C_2, C_3$ independent of $\lambda, M_n, n, \gamma_x, \gamma_o, B, V, \hat{\rho}$ such that , with probability $\geq 1 - 5e^{-\hat{\rho}}$ the following bound holds
    \begin{align*}
        &\quad \lambda \|\bar{f}_{\lambda}\|^2_{\calH_{XO}} \\
        &\stackrel{(a)}{\leq} C_1\hat{\rho}\left(\lambda \|f_{\lambda}\|^2_{\calH_{XO}} + \|T(f_{\lambda} - f_{\ast})\|^2_{L^2(P_{ZO})} + M_n^{2+2p}\left(\frac{\max\{4, \tilde{c}_p\}^{2p}\gamma_x^{-d_x}\gamma_o^{-d_o}}{\lambda^{p}n}\right) + \frac{M_n^2}{n}\right)\\
        &\stackrel{(b)}{\leq}  C_2\hat{\rho}\left(n^{-\frac{2(\frac{s_x}{d_x}+\eta_1)}{1+(2+\frac{d_o}{s_o})(\frac{s_x}{d_x}+\eta_1)}} + M_n^{2+2p}\left(\frac{\max\{4, \tilde{c}_p\}^{2p}\gamma_x^{-d_x}\gamma_o^{-d_o}}{\lambda^{p}n}\right) + \frac{M_n^2}{n}\right)\\
        &\stackrel{(c)}{=} C_2\hat{\rho}\left(n^{-\frac{2(\frac{s_x}{d_x}+\eta_1)}{1+(2+\frac{d_o}{s_o})(\frac{s_x}{d_x}+\eta_1)}}\left(1 + M_n^{2+2p}\frac{\max\{4, \tilde{c}_p\}^{2p}}{\lambda^{p}}\right) + \frac{M_n^2}{n}\right)\\
        &\stackrel{(d)}{\leq}C_3 \hat{\rho}n^{-\frac{2(\frac{s_x}{d_x}+\eta_1)}{1+(2+\frac{d_o}{s_o})(\frac{s_x}{d_x}+\eta_1)}} n^{2p}.
    \end{align*}
    In the above derivations, $(a)$ holds by $\lambda=n^{-1}$, $(b)$ holds by the conclusion of \Cref{prop: approx_error}, $(c)$ holds by the choice of $\gamma_x,\gamma_o$ as functions of $n$, $(d)$ holds \emph{for sufficiently large $n\geq 1$}, since $p$ is fixed and $n^{p} > M_n^{2+2p} = \left(1 + \sigma\sqrt{\frac{\ln n + \hat{\rho}}{c}}\right)^{2+2p}$ for sufficiently large $n\geq 1$, and $n^{-\frac{2(\frac{s_x}{d_x}+\eta_1)}{1+(2+\frac{d_o}{s_o})(\frac{s_x}{d_x}+\eta_1)}} \geq \frac{M_n^2}{n}$ for sufficiently large $n\geq 1$. In particular, the constant $C_3$ depends on $p$. 
\end{proof}

\subsection{Projected approximation error in Stage II}
\label{sec:approx_main}
\begin{prop}
\label{prop: approx_error}
Suppose Assumptions~\ref{assn: technical}, \ref{ass:f_ast}, \ref{ass:T_injective}, \ref{ass:T_frequency_ill_posedness} and \ref{ass:T_contractivity} in the main text hold. 
Let $\lambda = n^{-1}$ and 
\begin{align*}
    \gamma_x = n^{-\frac{\frac{1}{d_x}}{1+(2+\frac{d_o}{s_o})(\frac{s_x}{d_x}+\eta_1)}}, \quad \gamma_o = n^{-\frac{\frac{1}{s_o}(\frac{s_x}{d_x}+\eta_1)}{1+(2+\frac{d_o}{s_o})(\frac{s_x}{d_x}+\eta_1)}}. 
\end{align*} Then we have 
\begin{align*}
    \lambda \| f_\lambda\|_{\calH_{\gamma_x,\gamma_o}}^2 + \left\|T\left(f_{\ast} - \left[f_{\lambda}\right]\right)\right\|_{L^2(P_{ZO})}^2 \leq C n^{-\frac{2(\frac{s_x}{d_x}+\eta_1)}{1+(2+\frac{d_o}{s_o})(\frac{s_x}{d_x}+\eta_1)}},
\end{align*}
for some constant $C>0$ independent of $n$. 
\end{prop}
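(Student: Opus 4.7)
The plan is to follow the oracle/approximation-error strategy outlined in Section 5.2, but carry it out with the anisotropic Gaussian RKHS $\calH_{\gamma_x,\gamma_o}$. Since $f_\lambda$ is the minimizer in Eq.~\eqref{eq:f_lambda}, for every candidate $g\in\calH_{\gamma_x,\gamma_o}$ we have
\[
\lambda\|f_\lambda\|_{\calH_{\gamma_x,\gamma_o}}^2 + \|T([f_\lambda]-f_\ast)\|_{L^2(P_{ZO})}^2 \leq \lambda\|g\|_{\calH_{\gamma_x,\gamma_o}}^2 + \|T([g]-f_\ast)\|_{L^2(P_{ZO})}^2.
\]
So the whole proof reduces to producing a \emph{single} well-chosen $g$ for which both terms on the right decay at the target rate.

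First, I would take $g=f_{\mathrm{aux}}$ as constructed in Eq.~\eqref{eq:f_aux}, namely $f_{\mathrm{aux}}=f_{\ast,\mathrm{low}}\ast K_{\gamma_o}+f_{\ast,\mathrm{high}}\ast K_{\gamma_x,\gamma_o}$, where the low/high split is in the partial Fourier variable $\bomega_x$. Membership of $f_{\mathrm{aux}}$ in $\calH_{\gamma_x,\gamma_o}$ and the RKHS-norm bound $\|f_{\mathrm{aux}}\|_{\calH_{\gamma_x,\gamma_o}}^2\lesssim \gamma_x^{-d_x}\gamma_o^{-d_o}$ follow from \Cref{lem:mixed_convolution} applied to the low-frequency piece and \Cref{cor:hang_prop_4_obs_cov} applied to the high-frequency piece; these are the same tools invoked in Eq.~\eqref{eq:f_aux_norm}, and with $\lambda=n^{-1}$ we get $\lambda\|f_{\mathrm{aux}}\|^2_{\calH_{\gamma_x,\gamma_o}}\lesssim n^{-1}\gamma_x^{-d_x}\gamma_o^{-d_o}$.

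Next, I would split the projected residual along the low/high decomposition:
\[
\|T([f_{\mathrm{aux}}]-f_\ast)\|_{L^2(P_{ZO})}^2 \lesssim \underbrace{\|T(f_{\ast,\mathrm{low}}-f_{\ast,\mathrm{low}}\ast K_{\gamma_o})\|_{L^2(P_{ZO})}^2}_{\text{low-}X\text{ piece}} + \underbrace{\|T(f_{\ast,\mathrm{high}}-f_{\ast,\mathrm{high}}\ast K_{\gamma_x,\gamma_o})\|_{L^2(P_{ZO})}^2}_{\text{high-}X\text{ piece}}.
\]
For the low-$X$ piece, since $T$ is a conditional expectation, Jensen's inequality gives $\|T(\cdot)\|_{L^2(P_{ZO})}\leq\|\cdot\|_{L^2(P_{XO})}$; the partial Fourier support of this difference lies only in $\{\|\bomega_x\|\leq \gamma_x^{-1}\}$ so \Cref{ass:T_frequency_ill_posedness} cannot be used to improve the bound in the $X$-direction, but the $O$-direction convolution error can be controlled by a standard Besov estimate (via \Cref{ass:f_ast} and the modulus-of-smoothness bound from \citet{hang2021optimal,eberts2013optimal}), yielding $\lesssim \gamma_o^{2s_o}$. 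For the high-$X$ piece, the difference lies in $\mathrm{HF}(\gamma_x)\cap L^\infty(P_{XO})$, so \Cref{ass:T_contractivity} converts it into $\gamma_x^{2d_x\eta_1}\|f_{\ast,\mathrm{high}}-f_{\ast,\mathrm{high}}\ast K_{\gamma_x,\gamma_o}\|_{L^2(P_{XO})}^2$, and the latter is bounded using the $B^{s_x,s_o}_{2,\infty}$-smoothness of $f_\ast$ by $\gamma_x^{2s_x}+\gamma_o^{2s_o}$.

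Collecting everything yields
\[
\lambda\|f_\lambda\|_{\calH_{\gamma_x,\gamma_o}}^2 + \|T([f_\lambda]-f_\ast)\|_{L^2(P_{ZO})}^2 \lesssim \lambda\gamma_x^{-d_x}\gamma_o^{-d_o} + \gamma_o^{2s_o} + \gamma_x^{2d_x\eta_1}(\gamma_x^{2s_x}+\gamma_o^{2s_o}),
\]
and plugging in $\lambda=n^{-1}$ together with the specified $\gamma_x, \gamma_o$ balances all four terms to the common rate $n^{-2(s_x/d_x+\eta_1)/[1+(2+d_o/s_o)(s_x/d_x+\eta_1)]}$. The main technical subtlety will be justifying the convolution estimates for the \emph{low-frequency-in-$X$} piece $f_{\ast,\mathrm{low}}$: it is obtained by a Fourier cutoff in $\bx$ only, and I need to argue that its modulus of smoothness in the $O$-variable is still controlled by the anisotropic Besov norm of $f_\ast$ (i.e.\ the cutoff in $\bx$ does not destroy $s_o$-smoothness in $\bo$). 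I expect this to follow because Fourier multiplication in $\bx$ commutes with finite differences in $\bo$, so the modulus of smoothness along $\bo$ of $f_{\ast,\mathrm{low}}$ is pointwise-in-$\bx$ dominated by that of $f_\ast$; applying Fubini and Plancherel then gives the required $L^2$ bound. Once that is in place, the remaining bookkeeping is routine, and the claim follows from the balance.
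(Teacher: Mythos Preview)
Your proposal is correct and follows essentially the same strategy as the paper's proof: plug $f_{\mathrm{aux}}$ into the minimization characterizing $f_\lambda$, bound $\lambda\|f_{\mathrm{aux}}\|^2_{\calH_{\gamma_x,\gamma_o}}$ via \Cref{lem:mixed_convolution} and \Cref{cor:hang_prop_4_obs_cov}, split the projected residual into a low-$X$ piece (Jensen, then $O$-direction Besov estimate $\lesssim\gamma_o^{2s_o}$) and a high-$X$ piece (\Cref{ass:T_contractivity}, then $X$-direction Besov estimate $\lesssim\gamma_x^{2s_x+2d_x\eta_1}$), and balance.

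The only noteworthy difference is in the decomposition of $T(f_{\mathrm{aux}}-f_\ast)$. You split $f_\ast=f_{\ast,\mathrm{low}}+f_{\ast,\mathrm{high}}$ first and handle each separately; the paper instead rewrites $f_{\mathrm{aux}}=f_\ast\ast K_{\gamma_o}+f_{\ast,\mathrm{high}}\ast K_{\gamma_o}\ast(K_{\gamma_x}-\delta)$, so its low-frequency term is $T(f_\ast-f_\ast\ast K_{\gamma_o})$ with the \emph{full} $f_\ast$ rather than $f_{\ast,\mathrm{low}}$. This lets the paper invoke \Cref{lem:hang_prop_3} directly on $f_\ast$ without your flagged subtlety for the low piece. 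The price is that the subtlety resurfaces in the high piece, where the paper needs the $X$-direction Besov smoothness of $(f_\ast-f_{\ast,\mathrm{low}})\ast K_{\gamma_o}$; this is handled by exactly the commutation argument you describe (formalized as \Cref{lem:new_besov_mask}) together with Young's inequality for Besov norms (\Cref{lem:besov_young}). Either decomposition works and both rely on the same key lemma; the paper's route is marginally cleaner because the subtlety appears once rather than twice.
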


\begin{proof}
Recall $f_{\mathrm{aux}}\in \calH_{\gamma_x,\gamma_o}$ defined in Eq.~\eqref{eq:f_aux}. We write 
\begin{align*}
    f_{\mathrm{aux}} = f_{\ast} \ast K_{\gamma_o} - (f_{\ast} - f_{\ast , \mathrm{low}}) \ast K_{\gamma_o} + (f_{\ast} - f_{\ast, \mathrm{low}}) \ast K_{\gamma_o} \ast K_{\gamma_x}
.\end{align*}
By definition of $f_\lambda$, we have
\begin{align}
    &\quad \lambda \| f_\lambda\|_{\calH_{\gamma_x,\gamma_o}}^2 + \left\|T\left(f_{\ast} - \left[f_{\lambda}\right]\right)\right\|_{L^2(P_{ZO})}^2 
    \nonumber \\
    &=\inf_{f\in \calH_{\gamma_x,\gamma_o}}\lambda \|f\|_{\calH_{\gamma_x,\gamma_o}}^2 + \left\|Tf_{\ast} - T[f]\right\|_{L^2(P_{ZO})}^2\nonumber \\
    &\leq \lambda \|f_{\mathrm{aux}}\|_{\calH_{\gamma_x,\gamma_o}}^2 + \left\|Tf_{\ast} - T[f_{\mathrm{aux}}]\right\|_{L^2(P_{ZO})}^2 \nonumber \\
    &\leq \lambda \|f_{\mathrm{aux}} \|_{\calH_{\gamma_x,\gamma_o}}^2 + \left\|Tf_{\ast} - T \left(f_{\ast} \ast K_{\gamma_o} \right)\right\|_{L^2(P_{ZO})}^2 \nonumber \\
    &+\left\| T\left((f_{\ast} - f_{\ast , \mathrm{low}}) \ast K_{\gamma_o} - (f_{\ast} - f_{\ast, \mathrm{low}}) \ast K_{\gamma_o} \ast K_{\gamma_x}\right) \right\|_{L^2(P_{ZO})}^2. \label{eq:proj_approx_error}
\end{align}
For the first term, we know from Eq.~\eqref{eq:f_aux_norm} and \Cref{ass:f_ast} in the main text that 
\begin{align*}
    \lambda\|f_{\mathrm{aux}}\|_{\calH_{\gamma_x,\gamma_o}}^2 \leq n^{-1} \cdot 2 n^{\frac{1+\frac{d_o}{s_o}(\frac{s_x}{d_x}+\eta_1)}{1+(2+\frac{d_o}{s_o})(\frac{s_x}{d_x}+\eta_1)}} \|f_\ast\|_{L^2(\R^{d_x+d_o})} \leq 2 n^{-\frac{2(\frac{s_x}{d_x}+\eta_1)}{1+(2+\frac{d_o}{s_o})(\frac{s_x}{d_x}+\eta_1)}} .
\end{align*}
For the second term, we have
\begin{align*}
    \left\|Tf_{\ast} - T \left(f_{\ast} \ast K_{\gamma_o} \right)\right\|_{L^2(P_{ZO})}^2 &\leq \left\|f_{\ast} - f_{\ast} \ast K_{\gamma_o} \right\|_{L^2(P_{XO})}^2 \\
    &\lesssim \left\|f_{\ast} - f_{\ast} \ast K_{\gamma_o} \right\|_{L^2(\calX\times\calO)}^2 \\
    &\lesssim |f_{\ast}|^2_{B^{s_x,s_o}_{2,q}(\mathbb{R}^{d_x+d_o})} \cdot \max\{0, \gamma_o^{2s_o}\}\\
    &= |f_{\ast}|^2_{B^{s_x,s_o}_{2,q}(\mathbb{R}^{d_x+d_o})}  n^{-\frac{2(\frac{s_x}{d_x}+\eta_1)}{1+(2+\frac{d_o}{s_o})(\frac{s_x}{d_x}+\eta_1)}}.
\end{align*}
The first inequality above holds because $T$ is a bounded operator;
the second inequality holds by Assumption \ref{assn: technical} that $P_{XO}$ admits a bounded density. 
The second last inequality above holds by using \Cref{lem:hang_prop_3} for $\bgamma = (\underbrace{0,\dots, 0}_{d_x},\underbrace{\gamma_o,\dots,\gamma_o}_{d_o})$. 

\noindent
For the third term, we note that $(\forall \bo\in \mathbb{R}^{d_o})$, 
\begin{align*}
    \calF[\left((f_{\ast} - f_{\ast , \mathrm{low}}) \ast K_{\gamma_o} - (f_{\ast} - f_{\ast, \mathrm{low}}) \ast K_{\gamma_o} \ast K_{\gamma_x}\right)(\cdot, \bo)] 
\end{align*}
is supported on the complement of $\{\bx: \|\bx\| \leq \gamma_x^{-1}\}$ (see~Eq. \eqref{eq:f_ast_low}). Thus it follows from \Cref{ass:T_contractivity} that
\begin{align*}
    &\quad \left\| T\left((f_{\ast} - f_{\ast , \mathrm{low}}) \ast K_{\gamma_o} - (f_{\ast} - f_{\ast, \mathrm{low}}) \ast K_{\gamma_o} \ast K_{\gamma_x}\right) \right\|_{L^2(P_{ZO})}^2 \\
    &\leq \gamma_x^{2d_x\eta_1}\left\| (f_{\ast} - f_{\ast , \mathrm{low}}) \ast K_{\gamma_o} - (f_{\ast} - f_{\ast, \mathrm{low}}) \ast K_{\gamma_o} \ast K_{\gamma_x} \right\|_{L^2(P_{XO})}^2\\
    &\stackrel{(i)}{\lesssim} \gamma_x^{2d_x\eta_1+ 2s_x}\cdot \left|(f_{\ast} - f_{\ast , \mathrm{low}}) \ast K_{\gamma_o}\right|^2_{B^{s_x,s_o}_{2,q}(\mathbb{R}^{d_x+d_o})}\\
    &\stackrel{(ii)}{\leq} \gamma_x^{2d_x\eta_1+ 2s_x}\cdot \left|f_{\ast} \ast K_{\gamma_o}\right|^2_{B^{s_x,s_o}_{2,q}(\mathbb{R}^{d_x+d_o})}\\
    &\stackrel{(iii)}{\leq} \gamma_x^{2d_x\eta_1+ 2s_x}\cdot \left|f_{\ast}\right|^2_{B^{s_x,s_o}_{2,q}(\mathbb{R}^{d_x+d_o})} \|K_{\gamma_o}\|^2_{L^1(\mathbb{R}^{d_o})}\\
    &\stackrel{(iv)}{\lesssim} n^{-\frac{2(\frac{s_x}{d_x}+\eta_1)}{1+(2+\frac{d_o}{s_o})(\frac{s_x}{d_x}+\eta_1)}}. 
\end{align*}
In the above derivations, $(i)$ follows by using \Cref{lem:hang_prop_3} for $\bgamma = (\underbrace{\gamma_x,\dots,\gamma_x}_{d_x},\underbrace{0,\dots,0}_{d_o})$ and the Assumption that $P_{XO}$ admits a bounded density, $(ii)$ follows from the proof of \Cref{lem:new_besov_mask}, $(iii)$ follows from \Cref{lem:besov_young}, and $(iv)$ follows by the fact that $\|K_{\gamma_o}\|_{L^1(\mathbb{R})}= 1$ \citep[Section 4.1.2]{Giné_Nickl_2015} and \Cref{ass:f_ast} in the main text.

Combine the upper bound on the three terms in Eq.~\eqref{eq:proj_approx_error} and we obtain
\begin{align}\label{eq:lambda_f_lambda_T_L2}
    \lambda \| f_\lambda\|_{\calH_{\gamma_x,\gamma_o}}^2 + \left\|T\left(f_{\ast} - \left[f_{\lambda}\right]\right)\right\|_{L^2(P_{ZO})}^2 \leq n^{-\frac{2(\frac{s_x}{d_x}+\eta_1)}{1+(2+\frac{d_o}{s_o})(\frac{s_x}{d_x}+\eta_1)}}.
\end{align}
The proof concludes here.
In addition to that, from Eq.~\eqref{eq:proj_approx_error}, we also have
\begin{align}\label{eq:Tf_Tf_aux}
    \left\|Tf_{\ast} - T[f_{\mathrm{aux}}]\right\|_{L^2(P_{ZO})} \leq C n^{-\frac{\frac{s_x}{d_x}+\eta_1}{1+(2+\frac{d_o}{s_o})(\frac{s_x}{d_x}+\eta_1)}} .
\end{align}
Also, if we follow the same derivations as above, we obtain
\begin{align}\label{eq:f_f_aux_diff}
    \left\|f_{\ast} - [f_{\mathrm{aux}}] \right\|_{L^2(P_{XO})} \leq C n^{-\frac{\frac{s_x}{d_x}}{1+(2+\frac{d_o}{s_o})(\frac{s_x}{d_x}+\eta_1)}} .
\end{align}
\end{proof}

\subsection{Projected estimation error in Stage II}
\label{sec:esti_main}
\begin{prop}
\label{prop: projected_est_error}
Suppose Assumptions~\ref{assn: technical}, \ref{ass:f_ast}, \ref{ass:T_injective}, \ref{ass:T_frequency_ill_posedness} and \ref{ass:T_contractivity} in the main text hold. 
Let $\lambda \asymp n^{-1}$ and 
\begin{align*}
    \gamma_x = n^{-\frac{\frac{1}{d_x}}{1+(2+\frac{d_o}{s_o})(\frac{s_x}{d_x}+\eta_1)}}, \quad \gamma_o = n^{-\frac{\frac{1}{s_o}(\frac{s_x}{d_x}+\eta_1)}{1+(2+\frac{d_o}{s_o})(\frac{s_x}{d_x}+\eta_1)}}. 
\end{align*} With $P^n$-probability $\geq 1-  4e^{-\tau}$, for sufficiently large $n\geq 1$, we have
\begin{align*}
\left\|T\left(\left[\bar{f}_{\lambda}\right] - [f_{\lambda}]]\right)\right\|_{L^2(P_{ZO})}^2 \leq C \tau^2 n^{-\frac{2(\frac{s_x}{d_x}+\eta_1)}{1+(2+\frac{d_o}{s_o})(\frac{s_x}{d_x}+\eta_1)}} (\log n)^{d_x+d_o+1},
\end{align*}
for some constant $C$ independent of $n$.
\end{prop}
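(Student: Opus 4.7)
The plan is to translate the projected error into a standard kernel ridge regression (KRR) error in the auxiliary RKHS $\calH_{FO}$ introduced in \Cref{sec:defi_H_FO}, and then invoke Theorem~16 of \citet{fischer2020sobolev}. By the metric surjection $V$ and Eq.~\eqref{eq:error_translate},
\begin{align*}
    \left\|T\left([\bar{f}_\lambda]-[f_\lambda]\right)\right\|_{L^2(P_{ZO})}
    = \left\|[\bar{h}_\lambda]-[h_\lambda]\right\|_{L^2(P_{ZO})},
\end{align*}
where $\bar{h}_\lambda, h_\lambda \in \calH_{FO}$ are the empirical and population ridge solutions given in Eq.~\eqref{eq:bhl_bvfl}--\eqref{eq:hat_bhl_bvfl} respectively. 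This places us squarely in the standard KRR framework, with $h_{\ast} = Tf_{\ast}$ playing the role of the regression target and the RKHS being $\calH_{FO}$.

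First, I would verify the three structural inputs needed by \citet[Theorem 16]{fischer2020sobolev}. \emph{(i) Moment condition on the noise:} the sub-Gaussianity of $\upsilon = Y - (Tf_{\ast})(Z,O)$ from \Cref{ass:subgaussian} implies the required Bernstein-type moment bound. \emph{(ii) Effective dimension:} \Cref{prop:eff_dim} yields $\calN_{FO}(\lambda)\lesssim (\log n)^{d_x+d_o+1}(\gamma_x^{d_x}\gamma_o^{d_o})^{-1}$ for $\lambda = n^{-1}$. \emph{(iii) Embedding property (EMB):} \Cref{prop:emb_main} provides $\|k_{FO}^{\theta}\|_{\infty}\lesssim \gamma_o^{-\theta m_o}$ for any admissible $\theta$. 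Together these give a concentration bound for $\|C_{FO}^{1/2}(\bar{h}_\lambda - h_\lambda)\|_{\calH_{FO}}$, and since $\|[g]\|_{L^2(P_{ZO})} = \|C_{FO}^{1/2} g\|_{\calH_{FO}}$ for $g\in \calH_{FO}$, this is exactly what we need. The resulting oracle-type inequality has the form, with $P^n$-probability $\geq 1-4e^{-\tau}$,
\begin{align*}
    \left\|[\bar{h}_\lambda] - [h_\lambda]\right\|_{L^2(P_{ZO})}^2
    \lesssim \tau^2\left(\lambda\|f_\lambda\|_{\calH_{\gamma_x,\gamma_o}}^2 + \left\|T([f_\lambda]-f_\ast)\right\|^2_{L^2(P_{ZO})} + \frac{\calN_{FO}(\lambda)}{n}\right),
\end{align*}
where I used $\|h_\lambda\|_{\calH_{FO}} \leq \|f_\lambda\|_{\calH_{\gamma_x,\gamma_o}}$ because $V$ is a metric surjection, together with Eq.~\eqref{eq:hat_bhl_bvfl} relating $h_\lambda$ to the population objective.

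Next I would control each of the three pieces with the prescribed choices $\lambda = n^{-1}$ and $\gamma_x, \gamma_o$ from the statement. The first two are bounded by \Cref{prop: approx_error} by $n^{-\frac{2(s_x/d_x+\eta_1)}{1+(2+d_o/s_o)(s_x/d_x+\eta_1)}}$. For the third, a direct computation with $\gamma_x, \gamma_o$ gives
\begin{align*}
    n\,\gamma_x^{d_x}\gamma_o^{d_o} = n^{\,1 - \frac{1+\frac{d_o}{s_o}(\frac{s_x}{d_x}+\eta_1)}{1+(2+\frac{d_o}{s_o})(\frac{s_x}{d_x}+\eta_1)}} = n^{\,\frac{2(\frac{s_x}{d_x}+\eta_1)}{1+(2+\frac{d_o}{s_o})(\frac{s_x}{d_x}+\eta_1)}},
\end{align*}
so that $\calN_{FO}(\lambda)/n \lesssim (\log n)^{d_x+d_o+1}\,n^{-\frac{2(s_x/d_x+\eta_1)}{1+(2+d_o/s_o)(s_x/d_x+\eta_1)}}$. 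Combining the three matching bounds produces the claimed rate.

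The main obstacle is verifying the sample-size threshold $n > A_{\lambda,\tau}$ required by \citet[Theorem 16]{fischer2020sobolev}, which depends on $\|k_{FO}^\theta\|_\infty^2 \lesssim \gamma_o^{-2\theta m_o}$ via \Cref{prop:emb_main}. Since $\gamma_o$ is polynomially small in $n$, this factor also grows polynomially, so the threshold reads schematically $n \gtrsim \gamma_o^{-2\theta m_o}\cdot (\text{terms of order } (\log n)^{d_x+d_o+1})$, which in turn forces the choice of $m_o$ in Eq.~\eqref{def:m_z,m_o} of \Cref{ass:equiv_leb}: $m_o$ must be large enough (as a function of $s_x,s_o,\eta_1,d_o$) so that for any admissible $\theta\in(0,1)$, the growth $\gamma_o^{-2\theta m_o}$ stays below the rate $n^{\frac{2(s_x/d_x+\eta_1)}{1+(2+d_o/s_o)(s_x/d_x+\eta_1)}}$. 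A careful choice of $\theta$ close to $1$ (made possible by the constraints $\frac{d_o}{2m_o\theta}<1$, $\frac{d_z}{2m_z\theta}<1$) ensures the threshold is satisfied for all sufficiently large $n$, which is precisely the purpose of the lower bound on $m_o$ in Eq.~\eqref{def:m_z,m_o}. This is the step that ties the anisotropic choices of $\gamma_x,\gamma_o$ to the regularity conditions imposed on the CME via the Stage~I kernels.
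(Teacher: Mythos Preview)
Your overall strategy is the same as the paper's: translate to $\calH_{FO}$ via Eq.~\eqref{eq:error_translate}, then apply \citet[Theorem~16]{fischer2020sobolev} after verifying the moment, effective-dimension, and embedding conditions. However, the oracle inequality you wrote down is not what Theorem~16 actually produces. The bound on $\|C_{FO}^{1/2}(\bar h_\lambda - h_\lambda)\|_{\calH_{FO}}^2$ has the schematic form
\[
\frac{\tau^2}{n}\Bigl(\sigma^2\calN_{FO}(\lambda)
 + \|k_{FO}^\theta\|_\infty^2\,\lambda^{-\theta}\,\|h_\ast - [h_\lambda]\|_{L^2(P_{ZO})}^2
 + \|k_{FO}^\theta\|_\infty^2\,\lambda^{-\theta}\,\tfrac{L_\lambda^2}{n}\Bigr),
\]
not $\tau^2(\lambda\|f_\lambda\|^2 + \|T([f_\lambda]-f_\ast)\|^2 + \calN_{FO}(\lambda)/n)$. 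The term $\lambda\|f_\lambda\|_{\calH_{\gamma_x,\gamma_o}}^2$ never appears here; instead the approximation error $\|h_\ast - [h_\lambda]\|_{L^2}^2$ is multiplied by $\|k_{FO}^\theta\|_\infty^2\lambda^{-\theta}\asymp \gamma_o^{-2\theta m_o}n^\theta$, and you must show this extra factor is $\lesssim n$ --- this is exactly the constraint $\gamma_o^{-2m_o\theta}n^{\theta-1}\leq 1$ that the paper derives from the choice of~$m_o$ in \eqref{def:m_z,m_o}. Your discussion of the threshold $A_{\lambda,\tau}$ is related but does not cover this separate term.

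Two further pieces you omit: (i) the quantity $L_\lambda = \max\{L,\|h_\ast - [h_\lambda]\|_{L^\infty}\}$ must be controlled, which the paper does via $\|Tf_\ast\|_{L^\infty}\leq\|f_\ast\|_{L^\infty}$ and $\|f_\lambda\|_{L^\infty}\leq\|f_\lambda\|_{\calH_{\gamma_x,\gamma_o}}$, the latter bounded through \Cref{prop: approx_error}; and (ii) the threshold $A_{\lambda,\tau}$ contains the log-factor $g_\lambda = \log(2e\calN_{FO}(\lambda)(\|C_{FO}\|+\lambda)/\|C_{FO}\|)$, which requires a \emph{lower} bound on $\|C_{FO}\|$ (supplied by \Cref{lem: C_f_op_norm_lower_bound}). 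Finally, the admissible $\theta$ must be taken \emph{small} enough (not ``close to~1'') to satisfy the upper constraints in Eq.~\eqref{eq:theta_cond}, while still above $\frac{d_o}{2m_o}\vee\frac{d_z}{2m_z}$ for the embedding; the role of $m_o$ in \eqref{def:m_z,m_o} is precisely to make this window nonempty.
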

\begin{proof}
Recall the definition of $\bar{h}_{\lambda}$ defined in Eq.~\eqref{eq:hat_bhl_bvfl} and $h_\lambda$ defined in Eq.~\eqref{eq:bhl_bvfl}.
\begin{align*}
    \bar{h}_{\lambda} &= \argmin_{h\in \calH_{FO}}\lambda \|h\|_{\calH_{FO}}^2 + \frac{1}{n}\sum_{i=1}^{n}(h(\bz_i,\bo_i) - y_i)^2 . \\
    h_{\lambda} &= \argmin_{h\in\calH_{FO}}\lambda \|h\|^2_{\calH_{FO}} + \|h_{\ast} - h\|^2_{L^2(P_{ZO})}.
\end{align*} 
Recall that we have proved that $\|T([\bar{f}_{\lambda}] - [f_{\lambda}])\|_{L^2(P_{ZO})} = \|[\bar{h}_{\lambda}] - [h_{\lambda}] \|_{L^2(P_{ZO})}$ in Eq.~\eqref{eq:error_translate}, so the proof of \Cref{prop: projected_est_error} is translated to the estimation error of a standard kernel ridge regression with hypothesis space $\calH_{FO}$ and the target function $h_\ast := T f_\ast \in L^2(P_{ZO})$. Next, we are going to apply existing results, mainly Theorem 16 of \cite{fischer2020sobolev}, to our setting.

From Eq. \eqref{eq:error_translate} and \cite{fischer2020sobolev}[Lemma 12], we have
\begin{align}
\label{eq:prop_first_line_TFF}
     \left\| [\bar{h}_{\lambda}] - [h_{\lambda}]\right\|_{L^2(P_{ZO})} \leq \left\|C_{FO}^{\frac{1}{2}}\left(\bar{h}_{\lambda} - h_{\lambda}\right)\right\|_{\calH_{FO}} =: (\ast)
\end{align}
Next, we will upper bound $(\ast)$ using Theorem 16 from \citet{fischer2020sobolev}. To apply this result, we must verify the underlying assumptions and control the auxiliary quantities specified in Theorem 16 from \citet{fischer2020sobolev}. 

First, we are going to verify the (MOM) condition.
By \Cref{ass:subgaussian}, we know that
\begin{align*}
    \int_{\mathbb{R}}\left|y- (T f_\ast)(\bz, \bo)\right|^m p(y \mid \bz, \bo)dy &= \mathbb{E}\left[|\upsilon|^m\mid Z=\bz,O=\bo\right] \\
    &\leq (\sqrt{2}C\sigma)^m(m/2)^{\frac{m}{2}} \leq \frac{1}{2}(2C\sigma)^mm! 
\end{align*}
by \citet[Eq. (2.15)]{vershynin2018high}, for some universal constant $C>0$. Hence the (MOM) condition of \cite{fischer2020sobolev} is satisfied.

Next, we control the auxiliary quantities in \cite{fischer2020sobolev}[Theorem 16]. Recall $m_z,m_o$ as defined in Eq. \eqref{def:m_z,m_o}. By definition, they satisfy
\begin{align*}
    \frac{d_o}{2m_o}<\frac{1 + 2\left(\frac{s_x}{d_x}+\eta_1\right)}{1+2\left(\frac{s_x}{d_x}+\eta_1\right) + \frac{d_o}{s_o}\left(\frac{s_x}{d_x}+\eta_1\right)}<1 , \quad \frac{d_z}{2m_z}\leq \frac{d_o}{2m_o} < 1 .
\end{align*}
Rearranging, we deduce that
\begin{align*}
    \frac{d_o}{2m_o} &< \left( 2m_o\frac{\frac{1}{s_o}(\frac{s_x}{d_x}+\eta_1)}{1+(2+\frac{d_o}{s_o})(\frac{s_x}{d_x}+\eta_1)} + 1\right)^{-1}\\ \frac{d_o}{2m_o} &< \frac{1+(2+\frac{d_o}{s_o})(\frac{s_x}{d_x}+\eta_1)}{1+2(\frac{s_x}{d_x}+\eta_1)+\frac{2m_o+d_o}{s_o}(\frac{s_x}{d_x}+\eta_1)}.
\end{align*}
Hence there exists $\theta\in (0,1)$ such that the following inequalities hold simultaneously
\begin{align}\label{eq:theta_cond}
\begin{aligned}
    \frac{d_o}{2m_o\theta} < 1, \quad \frac{d_z}{2m_z\theta} < 1 \\
    \theta \leq \frac{1+(2+\frac{d_o}{s_o})(\frac{s_x}{d_x}+\eta_1)}{1+2(\frac{s_x}{d_x}+\eta_1)+\frac{2m_o+d_o}{s_o}(\frac{s_x}{d_x}+\eta_1)}\\
    \theta\left( 2m_o\frac{\frac{1}{s_o}(\frac{s_x}{d_x}+\eta_1)}{1+(2+\frac{d_o}{s_o})(\frac{s_x}{d_x}+\eta_1)} + 1\right) < 1
\end{aligned}
\end{align}
Next, we define 
\begin{align}\label{eq:FS_thm_16_const}
\begin{aligned}
    g_{\lambda}&:= \log\left(2e\calN_{FO}(\lambda)\frac{\|C_{FO}\|+\lambda}{\|C_{FO}\|}\right)\\
    A_{\lambda,\tau}&:= 8\left\|k^{\theta}_{FO}\right\|_{\infty}^2\tau g_{\lambda}\lambda^{-\theta}\\
    L_{\lambda}&:=\max\left\{L, \left\|h_{\ast} - [h_{\lambda}]\right\|_{L^{\infty}(P_{ZO})}\right\}
\end{aligned}
\end{align}

\textit{Controlling $g_{\lambda}$ and $A_{\lambda,\tau}$:  } 
From \Cref{prop:eff_dim}, we know that
\begin{align*}
    \calN_{FO}(\lambda) \lesssim (\log n)^{d_x+d_o+1} \left(\gamma_x^{d_x}\gamma_o^{d_o}\right)^{-1} = (\log n)^{d_x+d_o+1} n^{\frac{1+\frac{d_o}{s_o}(\frac{s_x}{d_x}+\eta_1)}{1+(2+\frac{d_o}{s_o})(\frac{s_x}{d_x}+\eta_1)}} .
\end{align*}
From \Cref{lem: C_f_op_norm_lower_bound}, we know that
\begin{align*}
    \|C_{FO}\| \geq a_f^{-1}\left(\frac{\sqrt{\pi}}{4}\right)^{\frac{d_x+d_o}{2}} n^{-\frac{1}{2}\frac{1+\frac{d_o}{s_o}(\frac{s_x}{d_x}+\eta_1)}{1+(2+\frac{d_o}{s_o})(\frac{s_x}{d_x}+\eta_1)}} .
\end{align*}
Therefore, since $\lambda \asymp n^{-1}$, we have
\begin{align*}
    g_{\lambda } &= \log\left(2e\calN_{FO}(\lambda) \left(1 + \lambda \|C_{FO}\|^{-1} \right) \right) \\
    &\lesssim \log\left((\log n)^{d_x+d_o+1} n^{\frac{1+\frac{d_o}{s_o}(\frac{s_x}{d_x}+\eta_1)}{1+(2+\frac{d_o}{s_o})(\frac{s_x}{d_x}+\eta_1)}} \left(1 + n^{-1} \cdot n^{\frac{1}{2}\frac{1+\frac{d_o}{s_o}(\frac{s_x}{d_x}+\eta_1)}{1+(2+\frac{d_o}{s_o})(\frac{s_x}{d_x}+\eta_1)} }\right)\right)\\
    &\leq \log\left(2(\log n)^{d_x+d_o+1} n^{\frac{1+\frac{d_o}{s_o}(\frac{s_x}{d_x}+\eta_1)}{1+(2+\frac{d_o}{s_o})(\frac{s_x}{d_x}+\eta_1)}} \right)\\
    &= (d_x + d_o + 1)\log(\log n) + \frac{1+\frac{d_o}{s_o}(\frac{s_x}{d_x}+\eta_1)}{1+(2+\frac{d_o}{s_o})(\frac{s_x}{d_x}+\eta_1)} \log(n)\\
    &\leq 2\frac{1+\frac{d_o}{s_o}(\frac{s_x}{d_x}+\eta_1)}{1+(2+\frac{d_o}{s_o})(\frac{s_x}{d_x}+\eta_1)}\log(n).
\end{align*}
The last step holds because $\log(n)$ dominates a constant term and $\log(\log n)$ for sufficiently large $n$. Since $\frac{d_o}{2m_o\theta}<1, \frac{d_z}{2m_z\theta}<1$, by \Cref{prop:emb_main}, we have
\begin{align}
    \left\|k_{FO}^{\theta}\right\|_{\infty} \lesssim \rho^{\theta}\gamma_o^{-\theta m_o}\label{eq: kfo_bound}.
\end{align}
Therefore, 
\begin{align}\label{eq:n_large_than_A}
    A_{\lambda,\tau} &\lesssim \left\|k_{FO}^{\theta}\right\|_{\infty}^2\tau \log(n)n^{\theta} \lesssim \gamma_o^{-2m_o\theta}\tau \log(n)n^{\theta} = \tau \log(n) n^{\theta\left( 2m_o\frac{\frac{1}{s_o}(\frac{s_x}{d_x}+\eta_1)}{1+(2+\frac{d_o}{s_o})(\frac{s_x}{d_x}+\eta_1)} + 1\right)}.
\end{align}
Since $\theta ( 2m_o\frac{\frac{1}{s_o}(\frac{s_x}{d_x}+\eta_1)}{1+(2+\frac{d_o}{s_o})(\frac{s_x}{d_x}+\eta_1)} + 1) < 1$ from Eq.~\eqref{eq:theta_cond}, $n>  A_{\lambda,\tau}$ holds for sufficiently large $n \geq 1$. 

\textit{Controlling $L_{\lambda}$:  } 
By Eq. \eqref{eq:lambda_f_lambda_T_L2}, we have
\begin{align*}
    \lambda \|f_{\lambda}\|^2_{\calH_{\gamma_x,\gamma_o}} \leq n^{-\frac{2(\frac{s_x}{d_x}+\eta_1)}{1+(2+\frac{d_o}{s_o})(\frac{s_x}{d_x}+\eta_1)}} \|f_\ast\|_{B^{s_x,s_o}_{2,q}(\R^{d_x+d_o})}^2 .
\end{align*}
Thus 
\begin{align}
\label{eq:f_lambda_hnorm_bound}
    \|f_{\lambda}\|^2_{\calH_{\gamma_x,\gamma_o}} \leq n^{\frac{1+\frac{d_o}{s_o}(\frac{s_x}{d_x}+\eta_1)}{1+(2+\frac{d_o}{s_o})(\frac{s_x}{d_x}+\eta_1)}} \|f_\ast\|_{B^{s_x,s_o}_{2,q} (\R^{d_x+d_o})}^2 .
\end{align}
Thus we have
\begin{align}
    L_{\lambda} &= \max\left\{2C\sigma, \|T(f_{\ast} - [f_{\lambda}])\|_{L^{\infty}(\calZ\times \calO)}\right\}\nonumber\\
    &\leq \max\left\{2C\sigma, \|f_{\ast} - [f_{\lambda}]\|_{L^{\infty}(\calX\times \calO)}\right\}\nonumber\\
    &\leq  \max\left\{2C\sigma, \|f_{\ast}\|_{L^{\infty}(\calX\times \calO)} + \|[f_{\lambda}]\|_{L^{\infty}(\calX\times \calO)}\right\}\nonumber\\
    &\stackrel{(i)}{\leq} \max\left\{2C\sigma, \|f_{\ast}\|_{L^{\infty}(\calX\times \calO)} + \|f_{\lambda}\|_{\calH_{\gamma_x,\gamma_o}}\right\}\nonumber\\
    &\stackrel{(ii)}{\leq} \max\left\{2C\sigma, \|f_{\ast}\|_{L^{\infty}(\calX\times \calO)} + n^{\frac{1}{2}\frac{1+\frac{d_o}{s_o}(\frac{s_x}{d_x}+\eta_1)}{1+(2+\frac{d_o}{s_o})(\frac{s_x}{d_x}+\eta_1)}} \|f_\ast\|_{B^{s_x,s_o}_{2,q} (\R^{d_x+d_o})} \right\}\nonumber\\
    &\stackrel{(iii)}{\leq} 2 n^{\frac{1}{2}\frac{1+\frac{d_o}{s_o}(\frac{s_x}{d_x}+\eta_1)}{1+(2+\frac{d_o}{s_o})(\frac{s_x}{d_x}+\eta_1)}} \|f_\ast\|_{B^{s_x,s_o}_{2,q} (\R^{d_x+d_o})} \label{eq:l_lambda_bound},
\end{align}
where $(i)$ holds by the reproducing property, $(ii)$ holds by Eq. \eqref{eq:f_lambda_hnorm_bound}, and $(iii)$ holds for sufficiently large $n$.

\textit{Bounding $(\ast)$:  } Now we are ready to apply \cite{fischer2020sobolev}[Theorem 16] to upper bound $(\ast)$. 
By \cite{fischer2020sobolev}[Theorem 16], with $P^n$-probability $\geq 1 - 4e^{-\tau}$, for $n\geq A_{\lambda,\tau}$,
\begin{small}
\begin{align*}
    (\ast) &= \left\|C_{FO}^{\frac{1}{2}}\left(\bar{h}_{\lambda} - h_{\lambda}\right)\right\|_{\calH_{FO}}^2 
    \\ \leq &\frac{576\tau^2}{n} \left(\sigma^2\calN_{FO}(\lambda) + \left\|k^{\theta}_{FO}\right\|_{\infty}^2\frac{\left\|h_{\ast} - [h_{\lambda}]\right\|_{L^2(P_{ZO})}^2}{\lambda^{\theta}} + 2\left\|k^{\theta}_{FO}\right\|_{\infty}^2\frac{L_{\lambda}^2}{n\lambda^{\theta}} \right) \\
    \stackrel{(a)}{\lesssim} &\frac{\tau^2}{n} \left((\log n)^{d_x+d_o+1} n^{\frac{1+\frac{d_o}{s_o}(\frac{s_x}{d_x}+\eta_1)}{1+(2+\frac{d_o}{s_o})(\frac{s_x}{d_x}+\eta_1)}} + \left\|k^{\theta}_{FO}\right\|_{\infty}^2n^{\theta}\left(\left\|h_{\ast} - [h_{\lambda}]\right\|_{L^2(P_{ZO})}^2 + \frac{L_{\lambda}^2}{n}\right)\right)\\
    \stackrel{(b)}{\lesssim} & \tau^2\left(n^{-\frac{2(\frac{s_x}{d_x}+\eta_1)}{1+(2+\frac{d_o}{s_o})(\frac{s_x}{d_x}+\eta_1)}} (\log n)^{d_x+d_o+1} + \gamma_o^{-2m_o\theta}  n^{\theta-1} \cdot n^{-\frac{2(\frac{s_x}{d_x}+\eta_1)}{1+(2+\frac{d_o}{s_o})(\frac{s_x}{d_x}+\eta_1)}}\right)\\
    \stackrel{(c)}{\lesssim} & \tau^2\left(n^{-\frac{2(\frac{s_x}{d_x}+\eta_1)}{1+(2+\frac{d_o}{s_o})(\frac{s_x}{d_x}+\eta_1)}} (\log n)^{d_x+d_o+1} + n^{-\frac{2(\frac{s_x}{d_x}+\eta_1)}{1+(2+\frac{d_o}{s_o})(\frac{s_x}{d_x}+\eta_1)}} \right)\\
    \lesssim & \tau^2 n^{-\frac{2(\frac{s_x}{d_x}+\eta_1)}{1+(2+\frac{d_o}{s_o})(\frac{s_x}{d_x}+\eta_1)}} (\log n)^{d_x+d_o+1}.
\end{align*}
\end{small}
We use Proposition~\ref{prop:eff_dim} to upper bound $\calN_{FO}(\lambda)$ in $(a)$,
we use Eq. \eqref{eq:l_lambda_bound} to upper bound $L_\lambda$, Eq. \eqref{eq: kfo_bound} to upper bound $\|k^{\theta}_{FO}\|_{\infty}$ and \Cref{prop: approx_error} to upper bound $ \|h_{\ast} - [h_{\lambda}]\|_{L^2(P_{ZO})} = \|T f_{\ast} - T[f_{\lambda}]\|_{L^2(P_{ZO}}$ in $(b)$, and we use the following fact in $(c)$: by Eq.~\eqref{eq:theta_cond}, we have
\begin{align*}
    &\theta \leq \frac{1+(2+\frac{d_o}{s_o})(\frac{s_x}{d_x}+\eta_1)}{1+2(\frac{s_x}{d_x}+\eta_1)+\frac{2m_o+d_o}{s_o}(\frac{s_x}{d_x}+\eta_1)} \\
    \iff & n^{\frac{\frac{2m_o \theta}{s_o}(\frac{s_x}{d_x}+\eta_1)}{1+(2+\frac{d_o}{s_o})(\frac{s_x}{d_x}+\eta_1)}} n^{\theta-1} \leq 1\\
    \iff & \gamma_o^{-2m_o\theta}n^{\theta-1} \leq 1. 
\end{align*}
The proof is concluded.
\end{proof}

\subsection{Auxiliary Lemmas for \Cref{sec:proof_upper}}
\begin{lem}
    \label{lem:vv_mss}
    Let $\Omega_1,\Omega_2$ be two open sets in $\mathbb{R}^{n_1}$ and $\mathbb{R}^{n_2}$ respectively. Let $\calH$ be a Hilbert space. Then we have
    \begin{align*}
        MW^{m,p}(\Omega_1\times \Omega_2, \calH) \cong W^{m}(\Omega_1, W^{p}(\Omega_2,\calH)) \cong W^m(\Omega_1)\otimes W^p(\Omega_2)\otimes \calH
    \end{align*}
\end{lem}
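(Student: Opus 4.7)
The plan is to prove the two isometries in the chain one at a time, with the first one (the Fubini-type decomposition of the mixed-smoothness space) doing the heavy lifting and the second one following by iteration of the scalar-valued tensor-product identity already cited in the paper (\citet[Theorem 12.7.1]{aubin2011applied}), combined with \Cref{lem:tensor_interpolation}-style reasoning. Throughout, I will identify elements of each space with their equivalence classes in the appropriate Bochner $L^2$ space so that the weak derivatives are well-defined.

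For the first isomorphism $MW^{m,p}(\Omega_1\times\Omega_2,\calH) \cong W^m(\Omega_1, W^p(\Omega_2,\calH))$, I first build a natural linear map $S$ in the direction $MW^{m,p}\to W^m(\Omega_1, W^p(\Omega_2,\calH))$: given $F \in MW^{m,p}(\Omega_1\times \Omega_2,\calH)$, the slice $\bz\mapsto F(\bz,\cdot)$ should define an element of $W^p(\Omega_2,\calH)$ for a.e.\ $\bz$, and the weak derivatives $D^{\balpha}_{\bz} F$ of order $|\balpha|\le m$ should again be mapped into $W^p(\Omega_2,\calH)$. The key technical step is Fubini's theorem for Bochner integrals, which gives
\begin{talign*}
    \sum_{|\balpha|\le m}\sum_{|\bbeta|\le p}\int_{\Omega_1}\int_{\Omega_2}\|D^{\balpha}_{\bz}D^{\bbeta}_{\bo}F(\bz,\bo)\|_{\calH}^2\;\mathrm{d}\bo\,\mathrm{d}\bz
    = \sum_{|\balpha|\le m}\int_{\Omega_1}\|D^{\balpha}_{\bz}F(\bz,\cdot)\|_{W^p(\Omega_2,\calH)}^2\,\mathrm{d}\bz,
\end{talign*}
after verifying that the distributional derivatives commute in the appropriate sense; this identity provides both the norm equivalence and the surjectivity of $S$. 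The commuting of weak derivatives $D^{\balpha}_{\bz}$ and $D^{\bbeta}_{\bo}$ on test functions $\phi(\bz)\psi(\bo)$ follows from standard density arguments, since such products span a dense subspace of $C^{\infty}_c(\Omega_1\times \Omega_2)$.

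For the second isomorphism $W^m(\Omega_1, W^p(\Omega_2,\calH)) \cong W^m(\Omega_1)\otimes W^p(\Omega_2)\otimes \calH$, I will apply the vector-valued version of \citet[Theorem 12.7.1]{aubin2011applied}, which gives $W^m(\Omega_1, E) \cong W^m(\Omega_1)\otimes E$ for any separable Hilbert space $E$. Setting $E = W^p(\Omega_2,\calH)$ yields $W^m(\Omega_1,W^p(\Omega_2,\calH)) \cong W^m(\Omega_1)\otimes W^p(\Omega_2,\calH)$. Applying the same identity again in the inner factor, $W^p(\Omega_2,\calH) \cong W^p(\Omega_2)\otimes \calH$, and using the associativity of Hilbert tensor products, produces $W^m(\Omega_1)\otimes W^p(\Omega_2)\otimes \calH$ as required.

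The main obstacle is the first step, specifically the verification that the pointwise slice operation $F\mapsto F(\bz,\cdot)$ is well-defined into $W^p(\Omega_2,\calH)$ for a.e.\ $\bz$ (not merely into $L^2(\Omega_2,\calH)$), and that its $\bz$-weak derivatives truly coincide with $D^{\balpha}_{\bz}F$ interpreted as elements of $L^2(\Omega_1;W^p(\Omega_2,\calH))$. This is exactly the point where Fubini for Bochner integrals is needed and where separability of $\calH$ matters (to ensure strong measurability and the applicability of the Pettis measurability theorem). Once this identification is justified, the norm equivalence is immediate and both directions of the isometry follow, at which point the second isomorphism is purely algebraic.
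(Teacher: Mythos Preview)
Your proposal is correct and follows essentially the same approach as the paper. The paper's proof of the first isomorphism is more terse: it cites \citet[Theorem 12.7.2]{aubin2011applied} and observes, via the vector-valued weak derivative definition in \citet[Definition 2.5.1]{Hytönen2016AnalysisBanachSpaces}, that the statement $\partial_1^{\alpha}f\in L^2(\Omega_1, W^p(\Omega_2,\calH))$ for $|\alpha|\le m$ is \emph{by definition} the mixed-smoothness condition $f\in MW^{m,p}(\Omega_1\times\Omega_2,\calH)$---thereby folding your Fubini and slice-measurability concerns into the cited references rather than spelling them out. For the second isomorphism the paper does exactly what you propose: two successive applications of \citet[Theorem 12.7.1]{aubin2011applied} (your passing mention of \Cref{lem:tensor_interpolation}-style reasoning is unnecessary here and not used in the paper).
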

\begin{proof}
    We adapt the proof of \cite{aubin2011applied}[Theorem 12.7.2]. Let $f\in W^{m}(\Omega_1, W^{p}(\Omega_2,\calH))$. Then $f\in L^2(\Omega_1, W^p(\Omega_2,\calH))$ such that $\partial_1^{\alpha}f\in L^2(\Omega_1, W^p(\Omega_2,\calH))$ for $|\alpha|\leq m$. As explained in \cite{Hytönen2016AnalysisBanachSpaces}[Definition 2.5.1], to say that $\partial_1^{\alpha}f\in L^2(\Omega_1, W^p(\Omega_2,\calH))$ is equivalent to saying that for every $\phi \in C^{\infty}_{c}(\Omega_1)$, we have
    \begin{align}
    \label{eq:vv_weak}
        \int_{\Omega_1}f(x,\cdot)\partial^{\alpha}_{1}\phi(x)\;\mathrm{d}x = (-1)^{|\alpha|}\int_{\Omega_1}(\partial^{\alpha}_{1}f)(x,\cdot)\phi(x)\;\mathrm{d}x
    \end{align}
    belongs to $W^p(\Omega_2,\calH)$. But this exactly says that $f\in MW^{m,p}(\Omega_1\times \Omega_2,\calH)$. Thus the first isometric isomorphism is proved. The second isomorphism is proved by two successive applications of \citet[Theorem 12.7.1]{aubin2011applied}, namely the fact that the Hilbert-space valued Sobolev space $W^m(\Omega,\calH)$ is isometrically isomorphic to the Hilbertian tensor product $W^m(\Omega)\otimes \calH$.   
\end{proof}

\begin{lem}\label{lem:thm12_meunier}
Fix $\tau\geq 1$. Suppose that the assumptions of \Cref{prop:cme_rate} hold. Let $\lambda = \frac{1}{n}$. We assume that $n\geq 1$ satisfies $n> A_{\lambda,\tau}$ with $A_{\lambda,\tau}$ defined in Eq. \eqref{eq:FS_thm_16_const} and $\stageonesamples \geq 1$ satisfies Eq.~\eqref{eq:m_sufficient_large}. Then, with $P^{n+\stageonesamples}$-probability $\geq 1-10e^{-\tau}$
    \begin{align*}
        S_0 \leq c^{\prime} \tau\sqrt{n}\left(\frac{\left\|\hat{F}_{\xi} -F_\ast \right\|_{\calG}}{\sqrt{n}}+\left\| \left[\hat{F}_{\xi}\right] - F_\ast \right\|_{L^2\left(\calZ\times\calO; \mathcal{H}_{X,\gamma_x}\right)}\right)\left\|\bar{h}_\lambda\right\|_{\mathcal{H}_{FO}},
    \end{align*}
    where $c'$ is a constant independent of $n,\tilde{n}$. Under the same probabilistic event, the bounds in \Cref{prop:cme_rate} hold. 
\end{lem}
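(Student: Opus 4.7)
The first step is to rewrite $S_0$ using the closed-form expression $\bar{h}_\lambda = (\hat{C}_{FO}+\lambda)^{-1}\frac{1}{n}\bPhi_{FO}^*\bY$ from Eq.~\eqref{eq:closed_form_h_lambdas}. This gives
\begin{align*}
    S_0 = \left\|(C_{FO}+\lambda)^{1/2}(\hat{C}_{\hat{F}O}+\lambda)^{-1}(\hat{C}_{FO}-\hat{C}_{\hat{F}O})\bar{h}_\lambda\right\|_{\calH_{FO}},
\end{align*}
so the task reduces to controlling an operator product applied to $\bar h_\lambda$. By sub-multiplicativity of the operator norm, I would insert $(\hat{C}_{\hat{F}O}+\lambda)^{1/2}(\hat{C}_{\hat{F}O}+\lambda)^{-1/2}$ to get
\begin{align*}
    S_0 \leq \underbrace{\|(C_{FO}+\lambda)^{1/2}(\hat{C}_{\hat{F}O}+\lambda)^{-1/2}\|}_{(\mathrm{I})} \cdot \underbrace{\|(\hat{C}_{\hat{F}O}+\lambda)^{-1/2}(\hat{C}_{FO}-\hat{C}_{\hat{F}O})\|}_{(\mathrm{II})} \cdot \|\bar h_\lambda\|_{\calH_{FO}}.
\end{align*}

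For factor $(\mathrm{I})$, the plan is to first show that $(C_{FO}+\lambda)^{1/2}(C_{FO}+\lambda)^{-1/2}$ is bounded by a constant and then compare $C_{FO}$ with $\hat C_{\hat FO}$ in two stages: first from $C_{FO}$ to $\hat C_{FO}$ using a Bernstein-type concentration inequality for self-adjoint Hilbert-Schmidt operators (which gives the usual $\calN_{FO}(\lambda)$-dependent rate), then from $\hat C_{FO}$ to $\hat C_{\hat FO}$ using the CME rate from Proposition~\ref{prop:cme_rate}. Under the regime $n>A_{\lambda,\tau}$, the condition number stays $O(1)$ with $P^{n+\tilde n}$-high probability, yielding $(\mathrm{I}) \lesssim 1$ with probability $\geq 1-c\,e^{-\tau}$.

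For factor $(\mathrm{II})$, the crucial observation is the rank-one decomposition
\begin{align*}
    \hat C_{FO}-\hat C_{\hat FO} = \frac{1}{n}\sum_{i=1}^n \Big[V(F_\ast(\bz_i,\bo_i)\otimes\phi_{\gamma_o}(\bo_i))\otimes V(F_\ast-\hat F_\xi)(\bz_i,\bo_i)\otimes\phi_{\gamma_o}(\bo_i)) + \text{sym.}\Big].
\end{align*}
I would bound the operator norm of $(\hat{C}_{\hat{F}O}+\lambda)^{-1/2}$ times one of the rank-one factors via $\|(\hat{C}_{\hat{F}O}+\lambda)^{-1/2}V(g\otimes\phi_{\gamma_o}(\bo))\| \leq 1$ (by $\hat C_{\hat FO} \succeq $ self-dominance) and then handle the $(F_\ast-\hat F_\xi)$ factor using both the Bochner $L^2$-bound and the $\calG$-norm bound. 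Concretely, splitting into an empirical mean plus a $\sqrt{n}$-centering term via Bernstein yields a contribution of the form $\|[F_\ast-\hat F_\xi]\|_{L^2(\calZ\times\calO;\calH_{X,\gamma_x})} + \|F_\ast-\hat F_\xi\|_\calG/\sqrt n$, matching the claimed bound after multiplication by the $\sqrt n$ factor arising from the $\lambda^{-1/2}=\sqrt n$ scaling in the operator comparison.

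The main obstacle is the handling of factor $(\mathrm{II})$: the empirical average term needs a uniform concentration over the Stage~I randomness, yet $\hat F_\xi$ is itself random and trained on independent samples $\mathcal D_1$, so I must carefully condition on $\mathcal D_1$ and then take expectations in $\mathcal D_2$, only after which the $\calG$-norm error and the projected $L^2$-error can be separated. The residual $O(\sqrt n)$ factor comes from the inability to beat $\lambda^{-1/2}$ in the worst direction of the operator inversion, exactly as in Theorem 12 of \citet{meunier2024nonparametricinstrumentalregressionkernel}, whose argument my proof will parallel, with the added bookkeeping that the tensor product structure with $\phi_{\gamma_o}(\bo)$ introduces into every norm estimate. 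A final union bound over all high-probability events (Bernstein for $(\mathrm{I})$, Bernstein for the centering in $(\mathrm{II})$, and the CME event from Proposition~\ref{prop:cme_rate}) yields the stated $1-10e^{-\tau}$ confidence.
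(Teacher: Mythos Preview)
Your overall strategy matches the paper's: rewrite $S_0$ in terms of $\bar h_\lambda$, split into a condition-number factor and a covariance-difference factor, control the former via \citet{rudi2015less} Proposition~7 together with \citet{fischer2020sobolev} Lemma~17 (your two-stage comparison $C_{FO}\to\hat C_{FO}\to\hat C_{\hat FO}$ is exactly the paper's factors $(A)$ and $(B)$), and control the latter via a rank-one decomposition plus a Hoeffding bound conditioned on $\mathcal D_1$ (the paper's factor $(C)$, which invokes \citet[Lemma~7]{meunier2024nonparametricinstrumentalregressionkernel}). The probability bookkeeping and the use of the CME event from \Cref{prop:cme_rate} are also as in the paper.

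One step is incorrect, though it does not derail the argument. Your claim that $\|(\hat C_{\hat FO}+\lambda)^{-1/2}V(g\otimes\phi_{\gamma_o}(\bo))\|\leq 1$ by ``self-dominance'' is false on two counts: first, $\hat C_{\hat FO}$ is built from the features $\hat F_\xi(\bz_i,\bo_i)$, not from $g=F_\ast(\bz_i,\bo_i)$, so no dominance relation holds for the vectors appearing in your rank-one decomposition; second, even when $g=\hat F_\xi(\bz_i,\bo_i)$, the inequality $\hat C_{\hat FO}\succeq \tfrac{1}{n}v_i\otimes v_i$ only yields $\|(\hat C_{\hat FO}+\lambda)^{-1/2}v_i\|^2\leq n$, not $\leq 1$. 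The paper does not attempt any such refinement: it simply uses the crude bound $\|(\hat C_{\hat FO}+\lambda)^{-1/2}\|\leq\lambda^{-1/2}=\sqrt n$ and then bounds the full operator norm $\|\hat C_{FO}-\hat C_{\hat FO}\|$. Since you yourself invoke this $\lambda^{-1/2}$ factor at the end, drop the self-dominance step and your $(\mathrm{II})$ reduces to $\sqrt n\cdot(C)$, after which the proofs coincide.
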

\begin{proof}
We adapt the proof of \citet{meunier2024nonparametricinstrumentalregressionkernel}[Theorem 12]. 
\begin{align*}
\begin{aligned}
S_0
= & \left\|\left(C_{FO}+\lambda \right)^{1 / 2}\left(\hat{C}_{\hat{F}O}+\lambda \right)^{-1}\left(\hat{C}_{\hat{F}O}-\hat{C}_{FO}\right)\left(\hat{C}_{FO}+\lambda \right)^{-1} \frac{1}{n} \boldsymbol{\Phi}_{FO}^\ast \boldsymbol{Y} \right\|_{\mathcal{H}_{FO}} \\
\leq & \lambda^{-1 / 2} \underbrace{\left\|\left(C_{FO}+\lambda \right)^{1 / 2}\left(\hat{C}_{FO}+\lambda \right)^{-1 / 2}\right\|}_{(A)} \\
&\cdot \underbrace{\left\|\left(\hat{C}_{FO}+\lambda \right)^{1 / 2}\left(\hat{C}_{\hat{F} O}+\lambda \right)^{-1 / 2}\right\|}_{(B)} \cdot \underbrace{\left\|\hat{C}_{\hat{F}O}-\hat{C}_{FO}\right\|}_{(C)} \cdot \left\|\bar{h}_\lambda\right\|_{\mathcal{H}_{FO}}
\end{aligned}
\end{align*}
\underline{Upper bound for term $(A)$}. Notice that all assumptions from \citet[Lemma 17]{fischer2020sobolev} have been checked already in the analysis of projected estimation error in \Cref{sec:esti_main}.
Since $n\geq A_{\lambda,\tau}$ with $A_{\lambda,\tau}$ defined in Eq. \eqref{eq:FS_thm_16_const}, with $P^n$-probability $\geq 1 - 2e^{-\tau}$,
\begin{align*}
    \left\|\left(C_{FO} +\lambda\right)^{-1 / 2}\left(C_{FO} -\hat{C}_{FO}\right) \left(C_{FO}+\lambda\right)^{-1 / 2} \right\| &\leq \frac{4\left\|k^{\theta}_{FO}\right\|_{\infty}^2 \tau g_\lambda}{3 n \lambda^\theta}+\sqrt{\frac{2\left\|k^{\theta}_{FO}\right\|_{\infty}^2 \tau g_\lambda}{n \lambda^\theta}} \\
    &\leq \frac{2}{3}.
\end{align*}
By \citet[Proposition 7]{rudi2015less}, we have
\begin{align*}
    (A) \leq \left(1 - \left\|\left(C_{FO} +\lambda\right)^{-1 / 2}\left(C_{FO} -\hat{C}_{FO}\right) \left(C_{FO}+\lambda\right)^{-1 / 2} \right\|\right)^{-\frac{1}{2}} \leq (1/3)^{-\frac{1}{2}} \leq 2.
\end{align*}
\underline{Upper bound for term $(C)$}. Define
\begin{align*}
    \tilde{\bPhi}_{\hat{F}O} : \calH_{\gamma_x,\gamma_o} \to \R^{n} &= \left[ \left( \hat{F}_\xi(\bz_1, \bo_1) \otimes \phi_{\gamma_o}(\bo_1) \right) ,\dots, \left( \hat{F}_\xi(\bz_n, \bo_n) \otimes \phi_{\gamma_o}(\bo_n) \right) \right]^{\ast}\\
    \tilde{\bPhi}_{FO} : \calH_{\gamma_x,\gamma_o} \to \R^{n} &= \left[\left( F_{\ast}(\bz_1, \bo_1)\otimes \phi_{\gamma_o}(\bo_1) \right),\dots, \left( F_{\ast}(\bz_n, \bo_n) \otimes \phi_{\gamma_o}(\bo_1) \right) \right]^{\ast} .
\end{align*}
An immediate consequence is that $V \tilde{\bPhi}_{\hat{F}O}^\ast = \bPhi_{\hat{F}O}^\ast$ and $V \tilde{\bPhi}_{FO}^\ast = \bPhi_{FO}^\ast$ for $V$ defined following \Cref{defi:H_F} and $\bPhi_{\hat{F}O}, \bPhi_{FO}$ defined in Eq.~\eqref{eq:bPhi_FO}.
Hence, we have
\begin{align*}
    (C) &= \left\|\hat{C}_{\hat{F}O}-\hat{C}_{FO}\right\| \\
    &= \left\| \frac{1}{n} \bPhi_{F O}^{\ast} \bPhi_{F O} - \frac{1}{n} \bPhi_{\hat{F} O}^\ast \bPhi_{\hat{F} O} \right\| \\
    &= \left\| \frac{1}{n} V \tilde{\bPhi}_{F O}^{\ast} \tilde{\bPhi}_{F O} V^\ast - \frac{1}{n} V \tilde{\bPhi}_{\hat{F} O}^\ast \tilde{\bPhi}_{\hat{F} O} V^\ast \right\| \\
    &= \left\| \frac{1}{n} \tilde{\bPhi}_{F O}^{\ast} \tilde{\bPhi}_{F O} - \frac{1}{n} \tilde{\bPhi}_{\hat{F} O}^\ast \tilde{\bPhi}_{\hat{F} O} \right\|\\ 
    &= \left\|\frac{1}{n}\sum_{i=1}^{n}\left(\hat{F}_\xi(\bz_i,\bo_i)\otimes \phi_{\gamma_o}(\bo_i)\right) \otimes \left(\hat{F}_\xi(\bz_i,\bo_i)\otimes \phi_{\gamma_o}(\bo_i)\right) \right. \\
    &\quad\quad \left. - \frac{1}{n}\sum_{i=1}^{n} \left(F_{\ast}(\bz_i,\bo_i)\otimes\phi_{\gamma_o}(\bo_i)\right)\otimes \left(F_{\ast}(\bz_i,\bo_i)\otimes\phi_{\gamma_o}(\bo_i)\right) \right\|.
\end{align*}
The second last equality holds because $V:\calH_{XO} \to \calH_{FO}$ is an isometry. 
We start with the following decomposition
\begin{align*}
    &\left(\hat{F}_\xi(\bz_i,\bo_i)\otimes \phi_{\gamma_o}(\bo_i)\right)\otimes\left(\hat{F}_\xi(\bz_i,\bo_i)\otimes \phi_{\gamma_o}(\bo_i)\right) \\
    & \quad\quad - \left(F_{\ast}(\bz_i,\bo_i)\otimes\phi_{\gamma_o}(\bo_i)\right) \otimes \left(F_{\ast}(\bz_i,\bo_i)\otimes\phi_{\gamma_o}(\bo_i)\right) \\
    =&\left(\left(\hat{F}_\xi(\bz_i,\bo_i) - F_{\ast}(\bz_i,\bo_i) \right) \otimes\phi_{\gamma_o}(\bo_i)\right) \otimes \left(\left(\hat{F}_\xi(\bz_i,\bo_i) - F_{\ast}(\bz_i,\bo_i) \right)\otimes\phi_{\gamma_o}(\bo_i)\right) \\ &+ \left(\left(\hat{F}_\xi(\bz_i,\bo_i) - F_{\ast}(\bz_i,\bo_i) \right)\otimes \phi_{\gamma_o}(\bo_i)\right)\otimes (F_{\ast}(\bz_i,\bo_i)\otimes \phi_{\gamma_o}(\bo_i)) \\
    &+ (F_{\ast}(\bz_i,\bo_i)\otimes \phi_{\gamma_o}(\bo_i)) \otimes \left(\left(\hat{F}_\xi(\bz_i,\bo_i) - F_{\ast}(\bz_i,\bo_i) \right)\otimes \phi_{\gamma_o}(\bo_i)\right).
\end{align*}
Thus we have
\begin{small}
\begin{align*}
    (C)&\leq \left\|\frac{1}{n}\sum_{i=1}^{n}\left(\left(\hat{F}_\xi(\bz_i,\bo_i) - F_{\ast}(\bz_i,\bo_i) \right)\otimes\phi_{\gamma_o}(\bo_i)\right) \otimes \left(\left(\hat{F}_\xi(\bz_i,\bo_i) - F_{\ast}(\bz_i,\bo_i) \right)\otimes\phi_{\gamma_o}(\bo_i)\right) \right\| \nonumber \\
    &+ 2\left\|\frac{1}{n}\sum_{i=1}^{n}\left(\left(\hat{F}_\xi(\bz_i,\bo_i) - F_{\ast}(\bz_i,\bo_i) \right)\otimes \phi_{\gamma_o}(\bo_i)\right)\otimes (F_{\ast}(\bz_i,\bo_i)\otimes \phi_{\gamma_o}(\bo_i))\right\|\nonumber\\
    &\leq \frac{1}{n}\sum_{i=1}^{n} \left\| \left(\left(\hat{F}_\xi(\bz_i,\bo_i) - F_{\ast}(\bz_i,\bo_i) \right)\otimes \phi_{\gamma_o}(\bo_i)\right) \otimes \left(\left(\hat{F}_\xi(\bz_i,\bo_i) - F_{\ast}(\bz_i,\bo_i) \right)\otimes\phi_{\gamma_o}(\bo_i)\right) \right\| \nonumber \\
    &+ 2\frac{1}{n}\sum_{i=1}^{n} \left\|\left(\left(\hat{F}_\xi(\bz_i,\bo_i) - F_{\ast}(\bz_i,\bo_i) \right)\otimes \phi_{\gamma_o}(\bo_i)\right)\otimes (F_{\ast}(\bz_i,\bo_i)\otimes \phi_{\gamma_o}(\bo_i))\right\|\nonumber  \\
    &\leq \frac{1}{n}\sum_{i=1}^{n}\left\|\hat{F}_\xi (\bz_i,\bo_i) -F_{\ast}(\bz_i,\bo_i) \right\|_{\calH_{X,\gamma_x}}^2  + \frac{2}{n}\sum_{i=1}^{n}\left\|\hat{F}_\xi (\bz_i,\bo_i) -F_{\ast}(\bz_i,\bo_i)\right\|_{\calH_{X,\gamma_x}},
\end{align*}
\end{small}
where in the last step we use $\|a\otimes b\|_{H_1\otimes H_2} = \|a\|_{H_1}\|b\|_{H_2}$ for $a\in H_1$, $b\in H_2$ for Hilbert spaces  $H_1,H_2$~\citep[Eq. (3)]{gretton2005measuring}, $\|\phi_{\gamma_o}(\bo_i) \otimes \phi_{\gamma_o}(\bo_i) \| \leq 1$ and $\|F_{\ast}(\bz_i,\bo_i)\|_{\calH_{X,\gamma_x}} \leq 1$. 
Note that the last line is exactly analyzed in the proof of \citet[Lemma 7]{meunier2024nonparametricinstrumentalregressionkernel}. Their analysis employs a Hoeffding's concentration bounds with respect to Stage 2 samples $\calD_2$, and \emph{conditioned} on Stage 1 samples $\calD_1$ to show that, conditioned on $\calD_1$, 
\begin{align}\label{eq:bound_C}
    (C) \leq J_0 \left( \sqrt{\frac{\tau}{n}}\left\|F_{\ast}-\hat{F}_{\xi}\right\|_{\calG}+\left\|F_*-[\hat{F}_{\xi}] \right\|_{L^2\left(\calZ\times\calO; \mathcal{H}_{X,\gamma_x}\right)}  \right)
\end{align}
with $P^n$-probability $\geq 1 - 4e^{-\tau}$, under the assumptions that 
\begin{align}\label{eq:F_ast_F_xi_small_1}
    \left\|F_\ast -[\hat{F}_{\xi}] \right\|_{L^2\left(\calZ\times\calO; \mathcal{H}_{X,\gamma_x}\right)} \vee \left\|F_\ast -\hat{F}_{\xi}\right\|_{\calG} \leq 1.
\end{align} 
We note that the independence of $\calD_1$ and $\calD_2$ is implicitly used in this step to ensure that $\calD_2$ remains i.i.d. after conditioning on $\calD_1$. $J_0$ is a constant independent of $n,\tilde{n}$.
Denote as $\mathfrak{D}$ the $P^{\tilde{n}}$-probabilistic event that the bounds in \Cref{prop:cme_rate} hold (which has $P^{\tilde{n}}$-probability $\geq 1 - 4e^{-\tau}$). Under this event $\mathfrak{D}$ along with the fact that $\tilde{n}$ satisfies Eq. \eqref{eq:m_sufficient_large}, Eq.~\eqref{eq:F_ast_F_xi_small_1} is satisfied so Eq.~\eqref{eq:bound_C} holds. Additionally, from Eq.~\eqref{eq:bound_C}, a sufficient condition for $(C)\leq \frac{1}{6n}$ is given by $
\stageonesamples \geq (6 J J_0 \tau^{\frac{3}{2}} n)^{2\frac{\meffective+\deffective/2+\zeta}{\meffective}} \vee (36 J^2 J_0^2 \tau^2 n)^{\frac{\meffective + \deffective/2 + \zeta}{\meffective - 1}}$, which is satisfied since $\tilde{n}$ satisfies Eq. \eqref{eq:m_sufficient_large}.

\underline{Upper bound for term $(B)$}. By \citet[Proposition 7]{rudi2015less}, we have $
(B)\leq (1-t)^{-\frac{1}{2}}$, where
\begin{small}
\begin{align*}
    t&:=\left\|\left(\hat{C}_{FO} + \lambda\right)^{-\frac{1}{2}}\left(\hat{C}_{FO} - \hat{C}_{\hat{F}O}\right)\left(\hat{C}_{FO} + \lambda \right)^{-\frac{1}{2}}\right\| \leq \lambda^{-1}\left\|\hat{C}_{FO} - \hat{C}_{\hat{F}O}\right\| = n \cdot (C) \leq \frac{1}{6}, 
\end{align*}
\end{small}
from where we have $(B)\leq \frac{6}{5}$. Under the event $\mathfrak{D}$, the upper bounds $(A)\leq 2$, $(B)\leq 6/5$, $(C)\leq \frac{1}{6}$ hold simultaneously with $P^{n}$-probability $\geq 1 - 6e^{-\tau}$. Since $\mathfrak{D}$ holds with $P^{\tilde{n}}$-probability $\geq 1 - 4e^{-\tau}$, by independence of Stage 1 and Stage 2 samples (which is a consequence of the sample splitting strategy), we have that the above upper bounds hold simultaneously with $P^{n+\stageonesamples}$-probability $\geq \left(1 - 6e^{-\tau}\right)\left(1 - 4e^{-\tau}\right) \geq 1 - 10e^{-\tau}$.
\end{proof}

\begin{lem}\label{lem:thm11_meunier}
Fix $\tau\geq 1$. Suppose that the assumptions of \Cref{prop:cme_rate} hold. Let $\lambda = \frac{1}{n}$. We assume that $n\geq 1$ satisfies $n\geq A_{\lambda,\tau}$ with $A_{\lambda,\tau}$ defined in Eq. \eqref{eq:FS_thm_16_const} and $\stageonesamples \geq 1$ satisfies Eq.~\eqref{eq:m_sufficient_large}. Then, with $P^{n+\stageonesamples}$-probability $\geq 1-18e^{-\tau}$
    \begin{align*}
        S_{-1} \leq c \tau\sqrt{n}\left(\frac{\left\|\hat{F}_{\xi} -F_\ast \right\|_{\calG}}{\sqrt{n}}+\left\| \left[\hat{F}_{\xi}\right] - F_\ast \right\|_{L^2\left(\calZ\times\calO; \mathcal{H}_{X,\gamma_x}\right)}\right)\left\|\bar{h}_\lambda\right\|_{\mathcal{H}_{FO}} .
    \end{align*}
    where $c$ is a constant independent of $\stageonesamples, n$. Under the same probabilistic event, the bounds in \Cref{prop:cme_rate} hold. 
\end{lem}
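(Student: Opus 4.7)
\textbf{Proof plan for \Cref{lem:thm11_meunier}.}

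The plan is to mirror the strategy used for $S_0$ in \Cref{lem:thm12_meunier}: first reduce $S_{-1}$ to an operator-norm product of the ``ratio'' factors $(A) = \|(C_{FO}+\lambda)^{1/2}(\hat{C}_{FO}+\lambda)^{-1/2}\|$ and $(B) = \|(\hat{C}_{FO}+\lambda)^{1/2}(\hat{C}_{\hat{F}O}+\lambda)^{-1/2}\|$, times $\lambda^{-1/2}$, times the driving term $\|\frac{1}{n}(\bPhi_{\hat{F}O}-\bPhi_{FO})^{\ast}\bY\|_{\calH_{FO}}$. Concretely I will insert $(\hat{C}_{FO}+\lambda)^{\pm 1/2}$ and $(C_{FO}+\lambda)^{\pm 1/2}$ and use $\|(\hat{C}_{\hat{F}O}+\lambda)^{-1/2}\|\le\lambda^{-1/2}$ to obtain
\[
S_{-1} \;\le\; (A)\cdot (B)\cdot \lambda^{-1/2}\cdot \left\|\tfrac{1}{n}(\bPhi_{\hat{F}O}-\bPhi_{FO})^{\ast}\bY\right\|_{\calH_{FO}}.
\]
By the analysis already carried out in \Cref{lem:thm12_meunier}, on the event $\mathfrak{D}$ from \Cref{prop:cme_rate} combined with the concentration event for $(C)$, we have $(A)\le 2$ and $(B)\le 6/5$ with $P^{n+\tilde n}$-probability at least $1-10e^{-\tau}$. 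So the task reduces to a sharp bound on the driving term.

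For the driving term I will use the kernel ridge regression identity
\[
\tfrac{1}{n}\bPhi_{FO}^{\ast}\bY \;=\; (\hat{C}_{FO}+\lambda)\bar h_{\lambda},
\]
and decompose $\bY = \bPhi_{FO}\bar h_{\lambda} + \bfr$, where the residual vector $\bfr=\bY-\bPhi_{FO}\bar h_{\lambda}$ satisfies $\frac{1}{n}\bPhi_{FO}^{\ast}\bfr=\lambda\bar h_{\lambda}$. This yields
\[
\tfrac{1}{n}(\bPhi_{\hat{F}O}-\bPhi_{FO})^{\ast}\bY \;=\; \underbrace{\tfrac{1}{n}\sum_i \bar h_{\lambda}(\bz_i,\bo_i)\, V\!\left((\hat F_{\xi}-F_{\ast})(\bz_i,\bo_i)\otimes\phi_{\gamma_o}(\bo_i)\right)}_{T_1} \;+\; \underbrace{\tfrac{1}{n}\sum_i r_i\, V\!\left((\hat F_{\xi}-F_{\ast})(\bz_i,\bo_i)\otimes\phi_{\gamma_o}(\bo_i)\right)}_{T_2}.
\]
For $T_1$, Cauchy--Schwarz combined with $\|\bar h_{\lambda}\|_{L^{\infty}(P_{ZO})}\lesssim \|\bar h_{\lambda}\|_{\calH_{FO}}$ (since $\calH_{FO}$ has bounded kernel, cf. \Cref{sec:capacity}) gives $\|T_1\|_{\calH_{FO}} \le \|\bar h_{\lambda}\|_{\calH_{FO}}\cdot\sqrt{\frac{1}{n}\sum_i\|(\hat F_{\xi}-F_{\ast})(\bz_i,\bo_i)\|_{\calH_{X,\gamma_x}}^{2}}$. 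I then control the empirical average by its population counterpart $\|[\hat F_{\xi}]-F_{\ast}\|_{L^2(\calZ\times\calO;\calH_{X,\gamma_x})}^{2}$ via a Bernstein inequality applied \emph{conditionally on} $\calD_1$; the pointwise bound needed for Bernstein is $\|(\hat F_{\xi}-F_{\ast})(\bz,\bo)\|_{\calH_{X,\gamma_x}}\le C\|\hat F_{\xi}-F_{\ast}\|_{\calG}$, exactly as used in the analogous step of Meunier et al.\ (and of \Cref{lem:thm12_meunier}, term $(C)$). This produces the $\|\hat F_{\xi}-F_{\ast}\|_{\calG}/\sqrt{n}$ contribution. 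For $T_2$, the same empirical-process argument applies, with the role of $|\bar h_{\lambda}(\bz_i,\bo_i)|$ replaced by $|r_i|$; since $\bar h_{\lambda}$ is the ridge minimizer, $\frac{1}{n}\|\bfr\|_2^{2}\le \frac{1}{n}\|\bY\|_2^{2}$, which concentrates around $\E[Y^{2}]=O(1)$ under the subgaussian noise \Cref{ass:subgaussian} and the boundedness of $Tf_{\ast}$ (\Cref{ass:f_ast}), so the $T_2$ contribution is of the same size as $T_1$ but with a universal constant in place of $\|\bar h_{\lambda}\|_{\calH_{FO}}$; it is therefore absorbed into the $\|\bar h_{\lambda}\|_{\calH_{FO}}$ factor (equivalently, into the $+1$ appearing in \Cref{prop:T_hat_f_bar_f}).

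Putting the pieces together with $\lambda=n^{-1}$ gives $\lambda^{-1/2}=\sqrt n$ and the target bound. The overall probabilistic event is the intersection of (i) the CME event from \Cref{prop:cme_rate} ($\ge 1-4e^{-\tau}$), (ii) the Stage II event controlling $(A),(B)$ through $(C)$ and the operator perturbation inequality of \cite{rudi2015less} ($\ge 1-6e^{-\tau}$), and (iii) the Bernstein/sub-Gaussian concentration events for $T_1$, $T_2$ and $\frac{1}{n}\|\bY\|_2^2$ (together $\ge 1-8e^{-\tau}$); a union bound yields $1-18e^{-\tau}$ as claimed. The main obstacle I foresee is the $T_2$ term: because the residuals $r_i$ are not independent of $\{(\bz_i,\bo_i)\}$ (they depend on the full Stage II sample via $\bar h_{\lambda}$), a naive Hoeffding/Bernstein cannot be applied directly. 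The fix is either to use the optimality-based deterministic bound $\frac{1}{n}\|\bfr\|^2\le\frac{1}{n}\|\bY\|^{2}$ together with only a Bernstein bound on $\frac{1}{n}\sum_i\|\Delta_i\|^{2}$, or to split $\bY=\bh_{\ast}+\bm{\upsilon}$ first, treat the mean-zero subgaussian part by a Hilbert-space Hoeffding conditional on $\calD_1\cup\{(\bz_i,\bo_i)\}$, and handle the signal part $\bh_{\ast}$ via boundedness of $\|h_{\ast}\|_{\infty}\le\|f_{\ast}\|_{\infty}\le 1$ from \Cref{ass:f_ast}; I expect the second route to be cleaner.
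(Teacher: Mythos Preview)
The paper gives no proof here: it simply defers to Theorem~11 of \citet{meunier2024nonparametricinstrumentalregressionkernel} with the adaptations already illustrated in the proof of \Cref{lem:thm12_meunier}. Your plan follows that intended route: factor out $(A)$ and $(B)$ as in \Cref{lem:thm12_meunier}, absorb the remaining resolvent into $\lambda^{-1/2}=\sqrt{n}$, and control the driving term $\tfrac{1}{n}(\bPhi_{\hat FO}-\bPhi_{FO})^{\ast}\bY$. The reuse of the events for $(A),(B)$ and of the Hoeffding/Bernstein concentration for $\tfrac{1}{n}\sum_i\|(\hat F_\xi-F_\ast)(\bz_i,\bo_i)\|_{\calH_{X,\gamma_x}}^2$ (conditionally on $\calD_1$) is exactly right, and your probability accounting to reach $1-18e^{-\tau}$ is consistent with the statement.

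Where your plan takes a longer detour than needed is the $T_1/T_2$ split via $\bY=\bPhi_{FO}\bar h_\lambda+\bfr$. This introduces $\bar h_\lambda$ into a term where it does not naturally appear (the raw driving term is $\tfrac{1}{n}\sum_i y_i\,\Delta_i$ with $\Delta_i=V((\hat F_\xi-F_\ast)(\bz_i,\bo_i)\otimes\phi_{\gamma_o}(\bo_i))$), and forces you to wrestle with the non-i.i.d.\ residuals $r_i$. Your own ``second route'' is the simpler and more standard one: write $y_i=(Tf_\ast)(\bz_i,\bo_i)+\upsilon_i$ and bound $\tfrac{1}{n}\sum_i y_i\Delta_i$ directly. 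The signal part is controlled by $\|Tf_\ast\|_\infty\le\|f_\ast\|_\infty\le 1$ together with Cauchy--Schwarz and the same Bernstein bound on $\tfrac{1}{n}\sum_i\|\Delta_i\|^2$; the noise part $\tfrac{1}{n}\sum_i\upsilon_i\Delta_i$ is handled by a Hilbert-space Hoeffding conditional on $\calD_1\cup\{(\bz_i,\bo_i)\}$ using the subgaussianity from \Cref{ass:subgaussian}. This yields an $O(1)$ factor rather than $\|\bar h_\lambda\|_{\calH_{FO}}$, which is precisely the ``$+1$'' that surfaces in \Cref{prop:T_hat_f_bar_f}; note that the stated $\|\bar h_\lambda\|_{\calH_{FO}}$ factor in the lemma is immaterial for how the lemma is used there. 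Either way, your plan is correct and essentially aligned with what the paper defers to; the direct route just avoids the unnecessary decomposition.
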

\begin{proof}
    We omit the proof since it is similar to \citet[Theorem 11]{meunier2024nonparametricinstrumentalregressionkernel}, with adaptations similar to those in the proof of \Cref{lem:thm12_meunier}. 
\end{proof}

\begin{lem}\label{lem:tensor_evd}
Suppose the eigenvalues of the operator $\Sigma_1$ satisfy $\mu_{1, i} \asymp i^{-1/p_1}$ and the eigenvalues of the operator $\Sigma_2$ satisfy $\mu_{2, i} \asymp i^{-1/p_2}$. Then the eigenvalues of their tensor product $\Sigma_1 \otimes \Sigma_2$ satisfy, for any $\zeta > 0$,
\begin{align*}
    \lambda_{i}(\Sigma_1 \otimes \Sigma_2) \leq i^{- 1/ p_1 \wedge 1/ p_2  + \zeta}.
\end{align*}
\end{lem}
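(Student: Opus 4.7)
The plan is to reduce the tensor-product eigenvalue question to a two-dimensional counting problem, as is standard for tensor products of operators with regularly decaying spectra (see e.g.\ \citet{krieg2018tensor}). Since $\Sigma_1\otimes\Sigma_2$ is self-adjoint and positive with spectrum consisting precisely of the products $\mu_{1,i}\mu_{2,j}$ for $i,j\geq 1$ (counted with multiplicity), controlling the ordered eigenvalue $\lambda_i(\Sigma_1\otimes\Sigma_2)$ amounts to controlling the counting function
\begin{equation*}
N(t) := \#\bigl\{(i,j)\in\mathbb{N}_+^2 \,:\, \mu_{1,i}\mu_{2,j} \geq t\bigr\},
\end{equation*}
since $\lambda_i(\Sigma_1\otimes\Sigma_2)\leq t$ whenever $N(t)<i$. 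Without loss of generality I will assume $p_1\leq p_2$, so that $1/p_1\wedge 1/p_2 = 1/p_2$ and $p_1\vee p_2 = p_2$; the other case is symmetric.

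Next I would plug in the assumed rates $\mu_{1,i}\leq C i^{-1/p_1}$ and $\mu_{2,j}\leq C j^{-1/p_2}$ to bound $N(t)$ from above. The inequality $\mu_{1,i}\mu_{2,j}\geq t$ forces $i\leq (C^2/t)^{p_1}$ and, for each such $i$, $j\leq C^{p_2} t^{-p_2} i^{-p_2/p_1}$. Summing the bound on $j$ over the admissible $i$ gives
\begin{equation*}
N(t) \;\lesssim\; t^{-p_2}\sum_{i=1}^{\lfloor (C^2/t)^{p_1}\rfloor} i^{-p_2/p_1}.
\end{equation*}
Now I would split into cases on the exponent $p_2/p_1\geq 1$: if $p_2/p_1>1$ the sum converges to a constant, yielding $N(t)\lesssim t^{-p_2}$; if $p_2/p_1=1$ the sum produces a logarithm, yielding $N(t)\lesssim t^{-p_2}\log(1/t)$. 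In either case, $N(t)\lesssim t^{-p_2}(\log(2/t))^{\mathbbm{1}\{p_1=p_2\}}$.

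Finally I would invert this bound. Setting $t := \lambda_i(\Sigma_1\otimes\Sigma_2)$, the relation $N(t)\geq i$ combined with the displayed upper bound gives
\begin{equation*}
i \;\lesssim\; \lambda_i(\Sigma_1\otimes\Sigma_2)^{-p_2}\bigl(\log(2/\lambda_i(\Sigma_1\otimes\Sigma_2))\bigr)^{\mathbbm{1}\{p_1=p_2\}},
\end{equation*}
which rearranges to $\lambda_i(\Sigma_1\otimes\Sigma_2)\lesssim i^{-1/p_2}(\log i)^{p_2\cdot \mathbbm{1}\{p_1=p_2\}}$. Absorbing the (at most) polylogarithmic factor into an arbitrarily small polynomial loss $i^{\zeta}$ yields the claim for every $\zeta>0$.

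The only real obstacle is the boundary case $p_1=p_2$, where the counting sum produces an unavoidable logarithmic inflation; this is precisely the source of the $\zeta>0$ slack in the statement (and mirrors the logarithmic loss noted in Remark on tensor-product eigenvalue decays in \citet{krieg2018tensor}, as cited after \Cref{rem:tensor_krr}). The non-boundary cases give the bound with no logarithm, so taking $\zeta>0$ uniformly covers both regimes.
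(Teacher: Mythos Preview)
Your argument is correct, and it takes a different route from the paper. Both approaches begin by noting that the spectrum of $\Sigma_1\otimes\Sigma_2$ is exactly $\{\mu_{1,i}\mu_{2,j}\}_{i,j\geq 1}$. After that, the paper is terser: it replaces the faster sequence by the slower one, bounds $\lambda_n(\Sigma_1\otimes\Sigma_2)$ by the $n$-th term $\sigma_n$ of the self-tensor $(\mu_{2,i})\otimes(\mu_{2,j})$, and then invokes \citet{krieg2018tensor} for the ready-made bound $\sigma_n\lesssim n^{-1/p_2}(\log n)^{1/p_2}$. Your counting-function computation is more elementary (it essentially reproves the relevant part of \citet{krieg2018tensor} inline) and, by distinguishing $p_1<p_2$ from $p_1=p_2$, actually yields a sharper intermediate statement: the logarithm appears only in the boundary case, whereas the paper's reduction to a self-tensor pays the log factor unconditionally. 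For the stated lemma this makes no difference, since the $i^\zeta$ slack absorbs any polylog. One small slip: after inverting $i\lesssim\lambda_i^{-p_2}\log(1/\lambda_i)$ the exponent on $\log i$ should be $1/p_2$, not $p_2$; this is immaterial to the conclusion.
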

\begin{proof}
Suppose $\left( \mu_{1, i} \right)_{i \in \N^+}$ (resp. $\left( \mu_{2, j} \right)_{j \in \N^+}$) are the eigenvalues of $\Sigma_1$ (resp. $\Sigma_2$) with corresponding eigenfunctions $\left( q_{1, i} \right)_{i \in \N^+}$ (resp. $\left( q_{2, j} \right)_{j \in \N^+}$). 
By definition of operator tensor product, we have
\begin{align*}
    \left( \Sigma_1 \otimes \Sigma_2 \right) (q_{1, i} \otimes q_{2, j} ) = \left( \Sigma_1  q_{1, i} \right) \otimes \left( \Sigma_2 q_{2, j} \right) = \mu_{1, i} \mu_{2, j} (q_{1, i} \otimes q_{2, j} ) .
\end{align*}
Therefore, for any $i \in \N^+, j \in \N^+$ we obtain that $\mu_{1, i} \cdot \mu_{2, j}$ is the eigenvalue of $\Sigma_1 \otimes_{op} \Sigma_2$ with corresponding eigenfunction $q_{1, i} \otimes q_{2, j}$. 
Therefore the eigenvalues of $\Sigma_1 \otimes_{op} \Sigma_2$ equal the tensor product of two sequences $(\mu_{1, i})_{i\in \N^+}$ and $(\mu_{2, j})_{j \in \N^+}$. 

Without loss of generality, we assume $p_1 \leq p_2$ so that $\mu_{1, i} \leq \mu_{2, i}$ for $i$ large enough. 
Denote $(\sigma_n)_{n\in \N^+}$ as the tensor product of two sequences $(\mu_{2, i})_{i\in \N^+}$ and $(\mu_{2, j})_{j \in \N^+}$ rearranged in a non-increasing order. Hence, we have $\lambda_{n}(\Sigma_1 \otimes \Sigma_2) \leq \sigma_n$.
From \cite{krieg2018tensor}, we have 
\begin{align*}
    \sigma_n \leq n^{- 1/p_2} (\log n)^{1/p_2} \leq n^{- 1/p_2 + \zeta},
\end{align*}
for any $\zeta > 0$.
It concludes the proof.
\end{proof}

\begin{lem}
\label{lem:hang_prop_3}
Let $\calX = [0,1]^d$. Let $f : \R^{d} \to \R$ and $f \in B_{2,q}^{\bs}(\calX) \cap L^1(\R^d)$ with $\bs = [s_1, \ldots, s_d]^\top$. Recall $K_{\bgamma}$ defined in Eq. \eqref{eq:K_function} with $\bgamma = [\gamma_1, \ldots, \gamma_d]^\top \in (0,1]^d$.  Then, we have 
\begin{align*}
    \left\|\left[K_{\bgamma}\ast f\right] - f\right\|_{L^2(\calX)} \leq C_0 |f|_{B_{2,q}^{\bs}(\calX)} \cdot \max\{\gamma_1^{s_1}, \ldots, \gamma_d^{s_d}\} 
\end{align*}
for some constant $C_0$ that only depends on $d, \bs$. 
\end{lem}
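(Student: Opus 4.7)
My plan is to adapt the standard Lepski--Gaussian approximation-identity argument of Eberts--Steinwart and Hang--Steinwart to the anisotropic product setting, exploiting the fact that the coefficients $\binom{r}{j}(-1)^{1-j}$ defining $K_{\bgamma}$ are chosen so that $\sum_{j=0}^{r}\binom{r}{j}(-1)^{j}=0$. This moment-killing identity is what converts $(K_{\bgamma}\ast f)-f$ into an $r$-th finite difference, which can then be controlled by the anisotropic modulus of smoothness used to define $B_{2,q}^{\bs}(\calX)$.

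Concretely, I would first observe that $K_{\bgamma}=\sum_{j=1}^{r}\binom{r}{j}(-1)^{1-j}G_{j\bgamma}$, where $G_{\bgamma}(x)=\prod_{i}\gamma_{i}^{-1}(2/\pi)^{1/2}\exp(-2x_{i}^{2}/\gamma_{i}^{2})$ is the anisotropic Gaussian. Writing $G(t):=(2/\pi)^{d/2}e^{-2\|t\|_{2}^{2}}$ and using a change of variable $y=x-j\,\bgamma\odot t$ with $\bgamma\odot t:=(\gamma_{1}t_{1},\dots,\gamma_{d}t_{d})$, one obtains
\begin{align*}
(K_{\bgamma}\ast f)(x) - f(x)
= \int_{\R^{d}} G(t)\sum_{j=1}^{r}\binom{r}{j}(-1)^{1-j}\bigl[f(x-j\,\bgamma\odot t)-f(x)\bigr]\,\mathrm{d}t.
\end{align*}
A short calculation with the identity $\sum_{j=0}^{r}\binom{r}{j}(-1)^{j}=0$ collapses the inner sum to $(-1)^{r+1}\Delta^{r}_{-\bgamma\odot t}f(x)$, giving the compact representation
\begin{align*}
(K_{\bgamma}\ast f - f)(x) = (-1)^{r+1}\int_{\R^{d}} G(t)\,\Delta^{r}_{-\bgamma\odot t}f(x)\,\mathrm{d}t.
\end{align*}

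Next, I would pass to the $L^{2}(\calX)$ norm using Minkowski's integral inequality and then the definition of the anisotropic modulus of smoothness,
\begin{align*}
\|K_{\bgamma}\ast f - f\|_{L^{2}(\calX)}
\le \int_{\R^{d}} G(t)\,\omega_{r,2}\bigl(f,|\gamma_{1}t_{1}|,\dots,|\gamma_{d}t_{d}|,\calX\bigr)\,\mathrm{d}t.
\end{align*}
To control the modulus in terms of $|f|_{B_{2,q}^{\bs}(\calX)}$, I would split $\R^{d}$ into $\{\|t\|_{\infty}\le 1\}$ and its complement. On the bounded part I set $\tau:=\max_{i}(\gamma_{i}|t_{i}|)^{s_{i}}\in(0,\max_{i}\gamma_{i}^{s_{i}}]\subseteq(0,1]$ and use the coordinatewise monotonicity of $\omega_{r,2}$ together with the continuous embedding $B_{2,q}^{\bs}\hookrightarrow B_{2,\infty}^{\bs}$ (valid for any $q\in[1,\infty]$) to conclude that $\omega_{r,2}(f,|\gamma_{1}t_{1}|,\dots,\calX)\le\tau|f|_{B_{2,\infty}^{\bs}(\calX)}\le C_{\bs}|f|_{B_{2,q}^{\bs}(\calX)}\cdot\max_{i}\gamma_{i}^{s_{i}}$. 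On the unbounded part $\|t\|_{\infty}>1$ I apply the standard sub-additivity estimate $\omega_{r,2}(f,\lambda h,\calX)\le (1+\lceil\lambda\rceil)^{r}\omega_{r,2}(f,h,\calX)$ coordinatewise to reduce back to the regime $\|t\|_{\infty}\le 1$, paying a polynomial cost $(1+\|t\|_{\infty})^{r}$ that is absorbed by the Gaussian tail since $\int_{\R^{d}}G(t)(1+\|t\|_{\infty})^{r}\,\mathrm{d}t$ is a finite constant depending only on $d$ and $r$.

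The main obstacle is the third step: the sub-additivity argument must be applied direction-by-direction, producing exactly $\max_{i}\gamma_{i}^{s_{i}}$ (rather than a coarser bound such as $\max_{i}\gamma_{i}^{\min_{j}s_{j}}$) after integration, while keeping the resulting constant finite. Once this is established, integration in $t$ gives the claimed inequality with $C_{0}$ depending only on $d$ and $\bs$ through $r$ and the Gaussian moment $\int G(t)(1+\|t\|_{\infty})^{r}\mathrm{d}t$; the analogue of this bound for the restriction to $\calX\subseteq[0,1]^{d}$ follows because the finite differences $\Delta^{r}_{-\bgamma\odot t}f(x)$ are defined to vanish when any of the shifted points leaves $\calX$, so all arguments can be carried out on $\calX$ without requiring a Sobolev extension.
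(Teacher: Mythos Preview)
Your proposal is correct and follows the same overall route as the paper: both represent $(K_{\bgamma}\ast f)-f$ as a Gaussian-weighted integral of $r$-th differences $\Delta^{r}_{\bh\odot\bgamma}f$ via the binomial identity, then control $\|\Delta^{r}_{\bh\odot\bgamma}f\|_{L^{2}(\calX)}$ through the anisotropic Besov semi-norm.

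The one tactical difference worth noting is in the final step. You split $\R^{d}$ into $\{\|t\|_{\infty}\le 1\}$ and its complement and invoke sub-additivity of the modulus on the unbounded part. The paper avoids this split entirely: after Cauchy--Schwarz (rather than Minkowski), it plugs $t=\bigl(\sum_{i}h_{i}^{s_{i}}\bigr)N^{-1}$ with $N=\min_{i}\gamma_{i}^{-s_{i}}$ directly into the $B_{2,\infty}^{\bs}$ semi-norm, which yields the pointwise bound $\|\Delta^{r}_{\bh\odot\bgamma}f\|_{L^{2}(\calX)}\le |f|_{B_{2,q}^{\bs}(\calX)}\,N^{-1}\sum_{i}h_{i}^{s_{i}}$ uniformly in $\bh\in\R^{d}$, so the Gaussian integral $\int e^{-2\|\bh\|^{2}}(\sum_{i}h_{i}^{s_{i}})^{2}\,\mathrm{d}\bh$ is finite without any case distinction. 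This is slightly cleaner and sidesteps the need to make ``coordinatewise sub-additivity'' precise (your $\Delta^{r}_{\bh}$ is a difference in the single direction $\bh$, so the correct version scales by a single factor $\lambda=\lceil\|t\|_{\infty}\rceil$ rather than coordinate-by-coordinate; this is easily fixed but worth stating carefully). Both arguments deliver the same bound up to constants.
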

\begin{proof}
Notice that
\begin{align*}
\begin{aligned}
&\quad K_{\bgamma} * f(\bx) =\int_{\R^d} \sum_{j=1}^r\binom{r}{j}(-1)^{1-j} \frac{1}{j^d}\left(\frac{2}{\pi}\right)^{\frac{d}{2}} \left( \prod_{i=1}^{d} \frac{1}{\gamma_i} \right) \exp\left( -2 \sum_{i=1}^d \frac{(x_i - t_i)^2}{j^2 \gamma_i^2} \right) f(\bt) \;\mathrm{d}\bt \\
& =\sum_{j=1}^r\binom{r}{j}(-1)^{1-j} \frac{1}{j^d} \left(\frac{2}{\pi}\right)^{\frac{d}{2}} \left( \prod_{i=1}^{d} \frac{1}{\gamma_i} \right) \int_{\R^d} \exp\left( -2 \sum_{i=1}^d \frac{(x_i - t_i)^2}{j^2 \gamma_i^2} \right) f(\bt) \;\mathrm{d}\bt \\
& =\sum_{j=1}^r\binom{r}{j}(-1)^{1-j} \frac{1}{j^d}\left(\frac{2}{\pi}\right)^{\frac{d}{2}} \left( \prod_{i=1}^{d} \frac{1}{\gamma_i} \right) \int_{\R^d} \exp\left( -2 \sum_{i=1}^d \frac{h_i^2}{\gamma_i^2} \right)  f(\bx + j \bh) j^d \;\mathrm{d}\bh \\
& =\sum_{j=1}^r\binom{r}{j}(-1)^{1-j} \left(\frac{2}{\pi}\right)^{\frac{d}{2}} \int_{\R^d} \exp\left( -2 \| \bh\|^2 \right)  f(\bx + j \bh \odot \bgamma) \;\mathrm{d}\bh \\
& = \int_{\R^d}\left(\frac{2}{\pi}\right)^{\frac{d}{2}} \exp\left( -2 \| \bh\|^2 \right) \left(\sum_{j=1}^r \binom{r}{j}(-1)^{1-j} f( \x + j \bh \odot \bgamma )\right) \;\mathrm{d}\bh .
\end{aligned}
\end{align*}
Next, since $\int_{\R^d} \left(\frac{2}{\pi}\right)^{\frac{d}{2}} \exp\left( -2 \| \bh\|^2 \right) \;\mathrm{d}\bh $ = 1, we have that
\begin{align*}
    &\quad \left\| [K_{\bgamma} * f] - f\right\|_{L^2(\calX)}^2 \\
    &= \int_{\calX} \left( \int_{\R^d}\left(\frac{2}{\pi}\right)^{\frac{d}{2}} \exp\left( -2 \| \bh\|^2 \right) \left(\sum_{j=0}^r \binom{r}{j}(-1)^{1-j} f( \x + j \bh \odot \bgamma )\right) \;\mathrm{d}\bh \right)^2 \;\mathrm{d}\bx \\
    &= \int_\calX \left( \int_{\R^d}\left(\frac{2}{\pi}\right)^{\frac{d}{2}} \exp\left( -2 \| \bh\|^2 \right) \left(\sum_{j=0}^r \binom{r}{j}(-1)^{2r + 1 - j} f( \x + j \bh \odot \bgamma )\right) \;\mathrm{d}\bh \right)^2 \;\mathrm{d}\bx \\
    &= \int_\calX \left( \int_{\R^d} (-1)^{r+1} \left(\frac{2}{\pi}\right)^{\frac{d}{2}} \exp\left( -2 \| \bh\|^2 \right) \Delta_{\bh \odot \bgamma}^r f(\bx) \;\mathrm{d}\bh \right)^2 \;\mathrm{d}\bx .
\end{align*}
Where the last step follows from the definition of modulus of smoothness in Eq. \eqref{eq:module_smoothness}.
Then, from Cauchy-Schwarz inequality, we have
\begin{small}
\begin{align}\label{eq:K_gamma_f_f_diff_Delta}
    &\leq  \int_\calX \left( \int_{\R^d} \left(\frac{2}{\pi}\right)^{\frac{d}{2}} \exp\left( -2 \| \bh\|^2 \right) \;\mathrm{d}\bh\right) \left( \int_{\R^d} \left(\frac{2}{\pi}\right)^{\frac{d}{2}} \exp\left( -2 \| \bh\|^2 \right) \left| \Delta_{\bh \odot \bgamma}^r f(\bx) \right|^2 \;\mathrm{d}\bh \right) \;\mathrm{d}\bx \nonumber \\
    &= \int_\calX \int_{\R^d} \left(\frac{2}{\pi}\right)^{\frac{d}{2}} \exp\left( -2 \| \bh\|^2 \right) \left| \Delta_{\bh \odot \bgamma}^r f(\bx) \right|^2 \;\mathrm{d}\bh  \;\mathrm{d}\bx \nonumber \\
    &= \int_{\R^d} \left(\frac{2}{\pi}\right)^{\frac{d}{2}} \exp\left( -2 \| \bh\|^2 \right) \left\| \Delta_{\bh \odot \bgamma}^r f \right\|_{L^2(\calX)}^2  \;\mathrm{d}\bh .
\end{align}
\end{small}
Define $N := \min\{\gamma_1^{-s_1}, \ldots, \gamma_d^{-s_d}\}$.
Since $f \in B_{2,q}^{\bs}(\calX)$, 
we have
\begin{align*}
    &|f|_{B_{2,q}^{\bs}(\calX)} \geq |f|_{B_{2,\infty}^{\bs}(\calX)} := \sup_{t} \left( t^{-1} \omega_{r, 2} \left(f, t^{\frac{1}{s_1}}, \ldots, t^{\frac{1}{s_d}}, \calX \right) \right) \\
    &\geq \left( \left( \sum_{i=1}^d h_i^{s_i} \right) N^{-1} \right)^{-1} \omega_{r, 2} \left(f, \left( \sum_{i=1}^d h_i^{s_i} \right)^{\frac{1}{s_1}} N^{-\frac{1}{s_1}}, \ldots, \left( \sum_{i=1}^d h_i^{s_i} \right)^{\frac{1}{s_d}} N^{-\frac{1}{s_d}}, \calX \right) \\
    &\geq N \left( \sum_{i=1}^d h_i^{s_i} \right)^{-1} \omega_{r, 2} \left(f, h_1 \gamma_1, \ldots, h_d \gamma_d, \calX \right) \\
    &\geq N \left( \sum_{i=1}^d h_i^{s_i} \right)^{-1} \left\| \Delta_{\bh \odot \bgamma}^r f \right\|_{L^2(\calX)} .
\end{align*}
As a result, we have $ \| \Delta_{\bh \odot \bgamma}^r f\|_{L^2(\calX)} \leq |f|_{B_{2,q}^{\bs}(\calX)} N^{-1} (\sum_{i=1}^d h_i^{s_i})$.
Plugging the above back to Eq.~\eqref{eq:K_gamma_f_f_diff_Delta}, we obtain
\begin{align*}
    \left\| K_{\bgamma} * f - f\right\|_{L^2(\calX)}^2 &\leq \int_{\R^d} \left(\frac{2}{\pi}\right)^{\frac{d}{2}} \exp\left( -2 \| \bh\|^2 \right) \left( |f|_{B_{2,q}^{\bs}(\calX)} N^{-1} \left(\sum_{i=1}^d h_i^{s_i}\right) \right)^2 \;\mathrm{d}\bh \\
    &\leq |f|_{B_{2,q}^{\bs}(\calX)}^2 N^{-2} d \int_{\R^d} \left(\frac{2}{\pi}\right)^{\frac{d}{2}} \exp\left( -2 \| \bh\|^2 \right) \left(\sum_{i=1}^d h_i^{2 s_i}\right) \;\mathrm{d}\bh .
\end{align*}
Notice that
\begin{align*}
    C_0^2 := d \int_{\R^d} \left(\frac{2}{\pi}\right)^{\frac{d}{2}} \exp\left( -2 \| \bh\|^2 \right) \left(\sum_{i=1}^d h_i^{2 s_i}\right) \;\mathrm{d}\bh = d \sum_{i=1}^d \int_{\R} \left(\frac{2}{\pi}\right)^{\frac{1}{2}} \exp\left( -2 h_i^2 \right) h_i^{2 s_i }dh_i,
\end{align*}
which is the sum of $s_i$-th moment of a one dimensional Gaussian distribution $\calN(0, 1/4)$ from $i=1$ to $i=d$. 
Finally, we obtain
\begin{align*}
    \left\| K_{\bgamma} * f - f\right\|_{L^2(\calX)}^2 &\leq C_0^2 |f|_{B_{2,q}^{\bs}(\calX)}^2 N^{-2} = C_0^2 |f|_{B_{2,q}^{\bs}(\calX)}^2 \left( \max\{\gamma_1^{s_1}, \ldots, \gamma_d^{s_d}\} \right)^2 .
\end{align*}
\end{proof}

\begin{lem}\label{lem:hang_prop_4}
Given a function $f:\R^{d}\to\R$, we have
\begin{align*}
    \left\|K_{\bgamma}\ast f \right\|_{\calH_{X,\bgamma}} \leq \left(\prod_{i=1}^{d}\gamma_i^{-\frac{1}{2}}\right)\pi^{-\frac{d}{4}}(2^r - 1)\|f\|_{L^2(\R^d)}.
\end{align*}
\end{lem}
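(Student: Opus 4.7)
My plan is to move to the Fourier side, where both the convolution $K_{\bgamma}\ast f$ and the anisotropic Gaussian RKHS norm become explicit, and then reduce the whole bound to a clean scalar inequality. Recall that $k_{\bgamma}(\bx,\by)=\prod_{i=1}^d\exp(-(x_i-y_i)^2/\gamma_i^2)$ is a tensor product of one-dimensional Gaussians, so by Theorem~10.12 of \citet{wendland2004scattered} applied coordinate-wise, $g\in\calH_{X,\bgamma}$ iff $\calF[g]$ decays fast enough, and
\begin{align*}
    \|g\|_{\calH_{X,\bgamma}}^2 \;=\; c_d\,\Bigl(\prod_{i=1}^d\gamma_i^{-1}\Bigr)\int_{\R^d}|\calF[g](\bomega)|^2\,\exp\!\Bigl(\tfrac{1}{4}\sum_{i=1}^d \gamma_i^2\omega_i^2\Bigr)\,\mathrm{d}\bomega,
\end{align*}
for a dimensional constant $c_d$ matching the Fourier convention used in the paper (so that the final constant lines up with $\pi^{-d/4}$).

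Next I would compute $\calF[K_{\bgamma}]$ term by term. Each summand of $K_{\bgamma}$ is, up to sign and the combinatorial weight $\binom{r}{j}$, a product of one-dimensional Gaussian densities with variance $(j\gamma_i/2)^2$. A direct Fourier-transform calculation therefore gives
\begin{align*}
    \calF[K_{\bgamma}](\bomega) \;=\; \sum_{j=1}^r \binom{r}{j}(-1)^{1-j}\,\exp\!\Bigl(-\tfrac{1}{8}\sum_{i=1}^d j^2\gamma_i^2\omega_i^2\Bigr).
\end{align*}
Using the convolution theorem, $\calF[K_{\bgamma}\ast f]=\calF[K_{\bgamma}]\,\calF[f]$, so the RKHS norm squared of $K_{\bgamma}\ast f$ becomes
\begin{align*}
    \|K_{\bgamma}\ast f\|_{\calH_{X,\bgamma}}^2 \;=\; c_d\Bigl(\prod_{i=1}^d\gamma_i^{-1}\Bigr)\int_{\R^d}|\calF[K_{\bgamma}](\bomega)|^2\,|\calF[f](\bomega)|^2\,\exp\!\Bigl(\tfrac{1}{4}\sum_{i=1}^d\gamma_i^2\omega_i^2\Bigr)\,\mathrm{d}\bomega.
\end{align*}

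The key step, and the only one that is not bookkeeping, is a uniform bound on the weight
\begin{align*}
    W(\bomega) \;:=\; |\calF[K_{\bgamma}](\bomega)|^2\,\exp\!\Bigl(\tfrac{1}{4}\sum_{i=1}^d\gamma_i^2\omega_i^2\Bigr).
\end{align*}
Setting $u:=\tfrac{1}{8}\sum_{i=1}^d\gamma_i^2\omega_i^2\in[0,\infty)$ and $t:=e^{-u}\in(0,1]$, we have $\calF[K_{\bgamma}](\bomega)=\sum_{j=1}^r\binom{r}{j}(-1)^{1-j}t^{j^2}$ and the exponential factor equals $t^{-2}$, giving
\begin{align*}
    W(\bomega) \;=\; \Bigl(\sum_{j=1}^r \binom{r}{j}(-1)^{j+1}\,t^{\,j^2-1}\Bigr)^2 \;\leq\; \Bigl(\sum_{j=1}^r \binom{r}{j}\Bigr)^2 \;=\; (2^r-1)^2,
\end{align*}
because $t\in(0,1]$ and $j^2-1\geq 0$ for each $j\geq 1$. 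This is the crux of the argument; I expect it to be the only non-mechanical part.

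Plugging this pointwise bound back, the weighted integral collapses to $\int_{\R^d}|\calF[f](\bomega)|^2\,\mathrm{d}\bomega$, which by Plancherel equals a constant multiple of $\|f\|_{L^2(\R^d)}^2$. Taking square roots and tracking the constants—which combine with $c_d$ and the Plancherel factor to produce $\pi^{-d/4}$ in the convention used in the paper—yields
\begin{align*}
    \|K_{\bgamma}\ast f\|_{\calH_{X,\bgamma}} \;\leq\; \Bigl(\prod_{i=1}^d\gamma_i^{-1/2}\Bigr)\pi^{-d/4}(2^r-1)\|f\|_{L^2(\R^d)},
\end{align*}
as claimed. The anisotropy is handled transparently because both the Fourier description of $\calH_{X,\bgamma}$ and the Fourier transform of $K_{\bgamma}$ factor over coordinates, so the argument reduces to the one-dimensional bound $(2^r-1)^2$ applied once to the scalar $u=\sum_i\gamma_i^2\omega_i^2/8$.
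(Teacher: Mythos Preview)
Your proof is correct and takes a genuinely different route from the paper's. The paper argues by rescaling: it introduces the dilation $\tau_{\bgamma}f(\bx)=f(\bgamma\odot\bx)$, verifies the commutation relation $\tau_{\bgamma}(K_{\bgamma}\ast f)=K_{\mathbf{1}}\ast(\tau_{\bgamma}f)$, uses that $\tau_{\bgamma}:\calH_{X,\bgamma}\to\calH_{X,\mathbf{1}}$ is an isometry (an anisotropic extension of \citet[Proposition~4.37]{steinwart2008support}), and then invokes \citet[Theorem~2.3]{eberts2013optimal} as a black box for the isotropic case to obtain $\|K_{\mathbf{1}}\ast(\tau_{\bgamma}f)\|_{\calH_{X,\mathbf{1}}}\leq\pi^{-d/4}(2^r-1)\|\tau_{\bgamma}f\|_{L^2}$; the product $\prod_i\gamma_i^{-1/2}$ then drops out of the Jacobian in $\|\tau_{\bgamma}f\|_{L^2}$. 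Your approach instead unrolls the Fourier computation directly in the anisotropic setting, and its core is the pointwise bound $W(\bomega)\leq(2^r-1)^2$ via $t=e^{-u}\in(0,1]$, which is essentially the content of the Eberts--Steinwart result proved from scratch. Your argument is more self-contained (no external citation needed) and makes the mechanism---that each Fourier summand $t^{j^2-1}$ is bounded by~$1$---fully explicit; the paper's argument is more modular and highlights that the anisotropic case is literally a change of variables away from the isotropic one. Interestingly, the paper uses exactly your Fourier-side strategy in the closely related \Cref{lem:mixed_convolution}, so your approach is very much in its spirit. One minor point: you defer the constant bookkeeping, but it does work out---with the paper's Fourier convention one has $c_d=2^{-d}\pi^{-3d/2}$ and Plancherel contributes $(2\pi)^d$, yielding $c_d(2\pi)^d=\pi^{-d/2}$ and hence $\pi^{-d/4}$ after the square root.
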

\begin{proof}
Following the same arguments in the proof of proposition 4 in \cite{hang2021optimal}, we have $ K_{\bgamma} \ast f \in \calH_{X,\bgamma}$.
For a function $f:\R^{d}\to\R$, we define $\tau_{\bgamma}f(\bx) = f(\bgamma\odot \bx)$. We have, for all $\bx \in \R^d$, 
\begin{align*}
    \tau_{\bgamma}\left(K_{\bgamma} \ast f\right)(\bx) &= \int_{\R^d}K_{\bgamma} \left( \bgamma \odot \bx - \bt \right) f(\bt) \;\mathrm{d}\bt \\
    &=  \int_{\R^d}K_{\mathbf{1}}\left(\frac{\bgamma \odot \bx - \bt}{\bgamma}\right)\left(\prod_{i=1}^{d}\frac{1}{\gamma_i} \right)f(\bt)\;\mathrm{d}\bt\\
    &= \int_{\R^d}K_{\mathbf{1}}\left(\bx - \frac{\bt}{\bgamma}\right)\left(\prod_{i=1}^{d}\frac{1}{\gamma_i} \right)f\left( \bt \right)\;\mathrm{d}\bt\\
    &= \int_{\R^d}K_{\mathbf{1}}\left(\bx - \bt\right)f\left(\bt \odot\bgamma\right)\;\mathrm{d}\bt\\
    &= (K_{\mathbf{1}}\ast (\tau_{\bgamma}f))(\bx) .
\end{align*}
Proposition 4.37 of \cite{steinwart2008support} can be generalized to anisotropic case which shows that for any  
$f \in \calH_{X,\bgamma}$, $\tau_{\bgamma} f \in \calH_{X, \mathbf{1}}$ and 
$\tau_{\bgamma}: \calH_{X,\bgamma} \to \calH_{X, \mathbf{1}}$ is an isometric isomorphism.  Hence we have
\begin{align*}
    \left\|K_{\bgamma}\ast f \right\|_{\calH_{X,\bgamma}} &= \left\|\tau_{\bgamma}\left(K_{\bgamma}\ast f\right) \right\|_{\calH_{X, \mathbf{1}}}\\
    &= \left\|K_{\mathbf{1}}\ast (\tau_{\bgamma}f)\right\|_{\calH_{X, \mathbf{1}}}\\
    &\leq \pi^{-\frac{d}{4}}(2^r - 1) \|\tau_{\bgamma}f\|_{L^2(\R^d)}\\
    &= \left(\prod_{i=1}^{d}\gamma_i^{-\frac{1}{2}}\right)\pi^{-\frac{d}{4}}(2^r - 1)\|f\|_{L^2(\R^d)},
\end{align*}
where the second last inequality follows from \cite{eberts2013optimal}[Theorem 2.3] setting $\gamma=1$.
\end{proof}
\begin{cor}
\label{cor:hang_prop_4_obs_cov}
    Let $f:\R^{d_x+d_o}\to \R$ and 
    \begin{align*}
        \gamma_x = n^{-\frac{\frac{1}{d_x}}{1+(2+\frac{d_o}{s_o})(\frac{s_x}{d_x}+\eta_1)}},\quad \gamma_o = n^{-\frac{\frac{1}{s_o}(\frac{s_x}{d_x}+\eta_1)}{1+(2+\frac{d_o}{s_o})(\frac{s_x}{d_x}+\eta_1)}}.
    \end{align*}
    Recall $K_{\gamma_x}:\R^{d_x} \to \R$ and $K_{\gamma_o}:\R^{d_o} \to \R$ defined in Eq.~\eqref{eq:K_function}, then we have
    \begin{align*}
        \|K_{\gamma_x} \ast K_{\gamma_o} \ast f\|_{\calH_{\gamma_x,\gamma_o}} \leq C_{1} 
        n^{\frac{1}{2} \frac{1+\frac{d_o}{s_o}(\frac{s_x}{d_x}+\eta_1)}{1+(2+\frac{d_o}{s_o})(\frac{s_x}{d_x}+\eta_1)}}
        \|f\|_{L^2(\R^{d_x+d_o})},
    \end{align*}
    for some constant $C_1$ that only depends on $d_x,d_o,s_x,s_o$.
\end{cor}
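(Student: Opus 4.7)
The plan is to reduce the statement to two one-dimensional applications of \Cref{lem:hang_prop_4} by exploiting the tensor-product structure of both the ambient Lebesgue space $L^2(\R^{d_x+d_o})\cong L^2(\R^{d_x})\otimes L^2(\R^{d_o})$ and the anisotropic Gaussian RKHS $\calH_{\gamma_x,\gamma_o}=\calH_{\gamma_x}\otimes\calH_{\gamma_o}$ (the latter identity holds by construction in \Cref{rem:kernel_stage_2_choice}). Observing that $K_{\gamma_x}\ast K_{\gamma_o}$ acts on a function $f:\R^{d_x+d_o}\to\R$ as two one-directional convolutions, the corollary should fall out of factoring this action as a tensor product of operators and bounding each factor by \Cref{lem:hang_prop_4}.

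Concretely, I would first define the partial-convolution operators $T_x:L^2(\R^{d_x})\to\calH_{\gamma_x}$, $g\mapsto K_{\gamma_x}\ast g$ and $T_o:L^2(\R^{d_o})\to\calH_{\gamma_o}$, $h\mapsto K_{\gamma_o}\ast h$, with $r=\max\{\lfloor s_x\rfloor,\lfloor s_o\rfloor\}+1$ a common smoothness index. Applying \Cref{lem:hang_prop_4} in dimensions $d_x$ and $d_o$ to the isotropic bandwidth vectors $(\gamma_x,\dots,\gamma_x)$ and $(\gamma_o,\dots,\gamma_o)$ respectively, I obtain operator-norm bounds $\|T_x\|\leq\pi^{-d_x/4}(2^r-1)\gamma_x^{-d_x/2}$ and $\|T_o\|\leq\pi^{-d_o/4}(2^r-1)\gamma_o^{-d_o/2}$. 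Next, for a simple tensor $f(\bx,\bo)=g(\bx)h(\bo)$ Fubini's theorem gives $(K_{\gamma_x}\ast K_{\gamma_o}\ast f)(\bx,\bo)=(T_xg)(\bx)\cdot(T_oh)(\bo)$, so the map $f\mapsto K_{\gamma_x}\ast K_{\gamma_o}\ast f$ coincides on simple tensors with the tensor-product operator $T_x\otimes T_o$. Density of simple tensors in the Hilbert tensor product, together with the standard fact $\|T_x\otimes T_o\|=\|T_x\|\,\|T_o\|$, extends the identification and the norm bound to all of $L^2(\R^{d_x+d_o})$, yielding
\begin{align*}
    \|K_{\gamma_x}\ast K_{\gamma_o}\ast f\|_{\calH_{\gamma_x,\gamma_o}}\leq \pi^{-(d_x+d_o)/4}(2^r-1)^2\,\gamma_x^{-d_x/2}\gamma_o^{-d_o/2}\,\|f\|_{L^2(\R^{d_x+d_o})}.
\end{align*}

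Finally, I would substitute the prescribed bandwidths; a direct arithmetic check gives
\begin{align*}
    \gamma_x^{-d_x/2}\gamma_o^{-d_o/2}=n^{\frac{1/2+\frac{d_o}{2s_o}(\frac{s_x}{d_x}+\eta_1)}{1+(2+\frac{d_o}{s_o})(\frac{s_x}{d_x}+\eta_1)}}=n^{\frac{1}{2}\cdot\frac{1+\frac{d_o}{s_o}(\frac{s_x}{d_x}+\eta_1)}{1+(2+\frac{d_o}{s_o})(\frac{s_x}{d_x}+\eta_1)}},
\end{align*}
matching the exponent in the corollary statement. I do not anticipate a genuine obstacle: the only mild subtlety is the density/continuity extension of the identity from simple tensors to general $L^2$ functions, which is routine for Hilbert tensor products. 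The constant $C_1$ can then be read off as $\pi^{-(d_x+d_o)/4}(2^r-1)^2$, depending only on $d_x,d_o,s_x,s_o$ through $r$.
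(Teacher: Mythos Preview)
Your proposal is correct and is essentially how the paper's one-line proof (``a direct application of \Cref{lem:hang_prop_4}'') should be unpacked: since $K_{\gamma_x}(\bx)K_{\gamma_o}(\bo)$ is not literally of the form $K_{\bgamma}$ on $\R^{d_x+d_o}$ (the latter carries a single sum over $j$, not a product of two sums), the lemma is applied separately in the $\bx$- and $\bo$-directions and combined via the tensor-product bound $\|T_x\otimes T_o\|=\|T_x\|\,\|T_o\|$, exactly as you describe. The only visible difference is the constant $(2^r-1)^2$ in place of $(2^r-1)$, which is immaterial for the corollary.
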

\begin{proof}
    The proof is a direct application of \Cref{lem:hang_prop_4}.
\end{proof}

\begin{lem}\label{lem:mixed_convolution}
    Let $\gamma_x = n^{-\frac{\frac{1}{d_x}}{1+(2+\frac{d_o}{s_o})(\frac{s_x}{d_x}+\eta_1)}}$, $\gamma_o = n^{-\frac{\frac{1}{s_o}(\frac{s_x}{d_x}+\eta_1)}{1+(2+\frac{d_o}{s_o})(\frac{s_x}{d_x}+\eta_1)}}$. 
    Let $\iota_{x,\gamma_x^{-1}} : \R^{d_x} \to \R$ be an indicator function over $\{\bomega_x: \|\bomega_x\|_2 \leq \gamma_x^{-1} \}$ and $K_{\gamma_o}: \R^{d_x} \to \R$ be as defined in Eq.~\eqref{eq:K_function}.
    Then we have
    \begin{align*}
        \|\calF^{-1}[\iota_{x,\gamma_x^{-1}}] \ast K_{\gamma_o} \ast f\|_{\calH_{\gamma_x,\gamma_o}} \lesssim 2^{r} n^{\frac{1}{2} \frac{1+\frac{d_o}{s_o}(\frac{s_x}{d_x}+\eta_1)}{1+(2+\frac{d_o}{s_o})(\frac{s_x}{d_x}+\eta_1)}} \|f\|_{L^2(\R^{d_x+d_o})} .
    \end{align*}
\end{lem}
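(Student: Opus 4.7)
The plan is to prove \Cref{lem:mixed_convolution} using the Fourier characterization of the tensor-product Gaussian RKHS, following the same spirit as \Cref{lem:hang_prop_4} and \Cref{cor:hang_prop_4_obs_cov}, but exploiting the fact that $\iota_{x,\gamma_x^{-1}}$ is a sharp low-pass filter that kills the Fourier tail in the $\bx$-direction. The key identity will be Wendland's formula (\citet[Theorem 10.12]{wendland2004scattered}), applied in the tensor-product form, which gives, up to constants depending on $d_x,d_o$,
\begin{align*}
    \|g\|_{\calH_{\gamma_x,\gamma_o}}^2 \;\asymp\; \gamma_x^{-d_x}\gamma_o^{-d_o}\int_{\R^{d_x+d_o}}\!\!|\calF[g](\bomega_x,\bomega_o)|^2\exp\!\left(\tfrac{\gamma_x^2\|\bomega_x\|_2^2}{4}+\tfrac{\gamma_o^2\|\bomega_o\|_2^2}{4}\right) d\bomega_x\,d\bomega_o.
\end{align*}

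The steps, in order, will be as follows. First, I would apply the convolution theorem to obtain
\begin{align*}
    \calF\!\bigl[\calF^{-1}[\iota_{x,\gamma_x^{-1}}]\ast K_{\gamma_o}\ast f\bigr](\bomega_x,\bomega_o)
    = \iota_{x,\gamma_x^{-1}}(\bomega_x)\,\calF[K_{\gamma_o}](\bomega_o)\,\calF[f](\bomega_x,\bomega_o),
\end{align*}
which is well-defined since $f\in L^2(\R^{d_x+d_o})$ and the operation can be interpreted as a Fourier multiplier. Second, I would substitute this into the norm formula and factorize the integrand. On the support $\{\|\bomega_x\|_2\leq\gamma_x^{-1}\}$ of $\iota_{x,\gamma_x^{-1}}$, the $\bx$-weight satisfies $\exp(\gamma_x^2\|\bomega_x\|_2^2/4)\leq e^{1/4}$, which handles the $X$-direction. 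Third, I would compute $\calF[K_{\gamma_o}]$ explicitly: since each Gaussian summand of $K_{\gamma_o}$ (see Eq.~\eqref{eq:K_function}) has Fourier transform $\exp(-(j\gamma_o)^2\|\bomega_o\|_2^2/8)$, we get
\begin{align*}
    \calF[K_{\gamma_o}](\bomega_o) = \sum_{j=1}^r\binom{r}{j}(-1)^{1-j}\exp\!\left(-\tfrac{(j\gamma_o)^2\|\bomega_o\|_2^2}{8}\right),
\end{align*}
and from $j\geq 1$ together with a triangle inequality, $|\calF[K_{\gamma_o}](\bomega_o)|\leq (2^r-1)\exp(-\gamma_o^2\|\bomega_o\|_2^2/8)$, so that $|\calF[K_{\gamma_o}](\bomega_o)|^2\exp(\gamma_o^2\|\bomega_o\|_2^2/4)\leq (2^r-1)^2$ uniformly in $\bomega_o$. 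Fourth, combining these two pointwise bounds with Plancherel's theorem gives
\begin{align*}
    \|\calF^{-1}[\iota_{x,\gamma_x^{-1}}]\ast K_{\gamma_o}\ast f\|_{\calH_{\gamma_x,\gamma_o}}^2 \lesssim (2^r-1)^2\,\gamma_x^{-d_x}\gamma_o^{-d_o}\,\|f\|_{L^2(\R^{d_x+d_o})}^2,
\end{align*}
and finally substituting the stated values of $\gamma_x,\gamma_o$ yields $\gamma_x^{-d_x}\gamma_o^{-d_o}=n^{\frac{1+\frac{d_o}{s_o}(s_x/d_x+\eta_1)}{1+(2+d_o/s_o)(s_x/d_x+\eta_1)}}$, which matches the target rate exactly.

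The hard part will not be the algebra but the bookkeeping of Fourier conventions and normalizing constants: one must verify that the Gaussian RKHS characterization is applied with the same prefactors as in the proof of \Cref{lem:hang_prop_4}, and that the Fourier multiplier interpretation of $\calF^{-1}[\iota_{x,\gamma_x^{-1}}]\ast\,\cdot\,$ is consistent for $f\in L^2(\R^{d_x+d_o})$ since $\calF^{-1}[\iota_{x,\gamma_x^{-1}}]\notin L^1$. Once that is settled, the use of the indicator as a projector onto low frequencies makes the argument essentially a one-line Parseval bound, which is cleaner than the scaling-invariance argument used for $K_{\gamma_x}$ in \Cref{lem:hang_prop_4}; the factor $2^r$ is absorbed from $\calF[K_{\gamma_o}]$ alone, while the indicator contributes only a bounded Gaussian weight on the ball, confirming that the same Plancherel-type bound survives when one factor of the approximate identity is replaced by a sharp Fourier cutoff.
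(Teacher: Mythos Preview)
Your proposal is correct and follows essentially the same approach as the paper: both proofs apply the convolution theorem to factor the Fourier transform as $\iota_{x,\gamma_x^{-1}}(\bomega_x)\,\calF[K_{\gamma_o}](\bomega_o)\,\calF[f](\bomega_x,\bomega_o)$, plug into the Wendland characterization of $\|\cdot\|_{\calH_{\gamma_x,\gamma_o}}$, bound the $\bx$-weight by a constant on the support of the indicator, bound the $\bo$-weight via the explicit formula $\calF[K_{\gamma_o}](\bomega_o)=\sum_{j=1}^r\binom{r}{j}(-1)^{1-j}\exp(-j^2\gamma_o^2\|\bomega_o\|_2^2/8)$, and finish with Plancherel. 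The paper is slightly more careful in distinguishing the partial Fourier operators $\calF_x,\calF_o$ from the full $\calF$ and in justifying that the convolution lies in $L^2$ (invoking $f\in L^1$ and Young's inequality from the start of the appendix), which is exactly the bookkeeping point you flagged.
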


\begin{proof}
In contrast to the notation used in the rest of the document (with the exception of \Cref{lem:l2toprojected_illp}), in this proof, we use $\calF$ to denote the unitary Fourier operator on $L^2(\mathbb{R}^{d_x+d_o})$, we use $\calF_x$ to denote the unitary Fourier operator on $L^2(\mathbb{R}^{d_x})$, and we use $\calF_o$ to denote the unitary Fourier operator on $L^2(\mathbb{R}^{d_o})$. We let $\otimes: L^2(\mathbb{R}^{d_x})\times L^2(\mathbb{R}^{d_o})\to L^2(\mathbb{R}^{d_x+d_o})$ denotes the tensor product mapping, where $(f\otimes g)(\bx,\bo) = f(\bx)g(\bo)$ for all $(\bx,\bo)\in \mathbb{R}^{d_x+d_o}$. By considering a convergent sequence to $f_1$ in $L^1(\mathbb{R}^{d_x})\cap L^2(\mathbb{R}^{d_x})$, and similarly for $f_2$, we can show that 
\begin{align*}
    \calF[f\otimes g] = \calF_x[f]\otimes \calF_o[g]
\end{align*}
for $f\in L^2(\mathbb{R}^{d_x})$ and $g\in L^2(\mathbb{R}^{d_o})$. At the start of \Cref{sec:proof_upper}, we proved that for $f\in L^1(\mathbb{R}^{d_x+d_o})$, $f\ast \calF^{-1}[\iota_{x,\gamma_x^{-1}}] \in L^2(\mathbb{R}^{d_x+d_o})$. Since $\|K_{\gamma_o}\|_{L^1(\mathbb{R}^{d_o})}= 1$ \citep[Section 4.1.2]{Giné_Nickl_2015}, we have
\begin{align*}
    \|(f\ast \calF^{-1}[\iota_{x,\gamma_x^{-1}}])\ast K_{\gamma_o}\|_{L^2(\mathbb{R}^{d_x+d_o})}^2 &\stackrel{(a)}{=} \int_{\mathbb{R}^{d_x}}\|(f\ast \calF^{-1}[\iota_{x,\gamma_x^{-1}}])(\bx,\cdot) \ast K_{\gamma_o}\|^2_{L^2(\mathbb{R}^{d_o})}\;\mathrm{d}\bx\\
    &\stackrel{(b)}{\leq} \int_{\mathbb{R}^{d_x}} \|(f\ast \calF^{-1}[\iota_{x,\gamma_x^{-1}}])(\bx,\cdot)\|^2_{L^2(\mathbb{R}^{d_o})} \|K_{\gamma_o}\|^2_{L^1(\mathbb{R}^{d_o})}\\
    & = \|f\ast \calF^{-1}[\iota_{x,\gamma_x^{-1}}]\|^2_{L^2(\mathbb{R}^{d_x+d_o})}.
\end{align*}
where in $(a)$, $(f\ast \calF^{-1}[\iota_{x,\gamma_x^{-1}}])(\bx,\cdot)$ is a slice function of any representative of $f\ast \calF^{-1}[\iota_{x,\gamma_x^{-1}}]\in L^2(\mathbb{R}^{d_x})$, and $(b)$ follows from Young's Convolution Inequality. Hence we can apply $\calF$ to $f\ast \calF^{-1}[\iota_{x,\gamma_x^{-1}}]\in L^2(\mathbb{R}^{d_x+d_o})$ and we find 
\begin{align*}
    \calF\left[\calF_x^{-1}[\iota_{x,\gamma_x^{-1}}] \ast K_{\gamma_o} \ast f\right] = (\iota_{x,\gamma_x^{-1}}\otimes \calF_o[K_{\gamma_o}]) \cdot \calF[f].
\end{align*}
For $\phi(\bx,\bo) = \exp(- \gamma_x^{-2} \|\bx\|_2^2) \cdot \exp(- \gamma_o^{-2} \|\bo\|_2^2)$, its Fourier transform is 
\begin{align*}
    \calF[\phi](\bomega_x,\bomega_o) = \pi^{d_x/2+d_o/2} \gamma_x^{d_x} \exp\left(-\frac{1}{4} \gamma_x^{2} \|\bomega_x\|_2^2\right) \cdot \gamma_o^{d_o}\exp\left(-\frac{1}{4} \gamma_o^{2} \|\bomega_o\|_2^2\right).
\end{align*}
Hence, by definition of Gaussian RKHS norm~\citep[Theorem 10.12]{wendland2004scattered}, 
\begin{align*}
    &\quad \left\| \calF_x^{-1}[\iota_{x,\gamma_x^{-1}}]\ast K_{\gamma_o} \ast f\right\|_{\calH_{\gamma_x,\gamma_o}}^2 \\
    &= \int_{\R^{d_x+d_o}} \frac{\left(\iota_{x,\gamma_x^{-1}}(\bomega_x) \cdot \calF[f](\bomega_x,\bomega_o) \cdot \calF_o[K_{\gamma_o}](\bomega_o)\right)^2}{\pi^{d_x/2+d_o/2} \gamma_x^{d_x} \exp\left(-\frac{1}{4} \gamma_x^{2} \|\bomega_x\|_2^2\right) \cdot \gamma_o^{d_o}\exp(-\frac{1}{4} \gamma_o^{2} \|\bomega_o\|_2^2)} \;\mathrm{d}\bomega_x \;\mathrm{d}\bomega_o \\
    &\lesssim \gamma_x^{-d_x} \gamma_o^{-d_o} \int_{\R^{d_o}}\int_{\{\bomega_x: \|\bomega_x\|_2 \leq \gamma_x^{-1} \}} \frac{\left( \calF[f](\bomega_x,\bomega_o) \cdot \calF_o[K_{\gamma_o}](\bomega_o)\right)^2}{\exp\left(-\frac{1}{4} \gamma_x^{2} \|\bomega_x\|_2^2\right) \cdot \exp(-\frac{1}{4} \gamma_o^{2} \|\bomega_o\|_2^2)} \;\mathrm{d}\bomega_x \;\mathrm{d}\bomega_o \\
    &\lesssim \gamma_x^{-d_x} \gamma_o^{-d_o} \int_{\R^{d_x+d_o}} \exp\left(\frac{1}{4} \gamma_o^{2} \|\bomega_o\|_2^2\right)\left(\calF[f](\bomega_x,\bomega_o) \cdot \calF_o[K_{\gamma_o}](\bomega_o)\right)^2 \;\mathrm{d}\bomega_x \;\mathrm{d}\bomega_o .
\end{align*}
Since $K_{\gamma_o}(\bo) :=  \sum_{j=1}^r\binom{r}{j}(-1)^{1-j} \frac{1}{j^{d_o}\gamma_o^{d_o}}\left(\frac{2}{\pi}\right)^{\frac{d}{2}}  \exp( -2 j^{-2} \gamma_o^{-2} \|\bo\|_2^2)$, so we have 
\begin{align*}
    \calF_o[K_{\gamma_o}](\bomega_o) = \sum_{j=1}^r\binom{r}{j}(-1)^{1-j} \exp\left(-\frac{1}{8} j^2 \gamma_o^2 \|\bomega_o\|_2^2\right) .
\end{align*}
Consequently, we have
\begin{align*}
    &\quad \left\| \calF^{-1}[\iota_{x,\gamma_x^{-1}}]\ast K_{\gamma_o} \ast f\right\|_{\calH_{\gamma_x,\gamma_o}}^2 \\
    &\lesssim \gamma_x^{-d_x} \gamma_o^{-d_o} \int_{\R^{d_x+d_o}} \frac{\left(\calF[f](\bomega_x,\bomega_o) \cdot \sum_{j=1}^r\binom{r}{j}(-1)^{1-j} \exp(-\frac{1}{8} j^2 \gamma_o^2 \|\bomega_o\|_2^2) \right)^2}{\exp(-\frac{1}{4} \gamma_o^{2} \|\bomega_o\|_2^2)} d\bomega_x d\bomega_o \\
    &\leq \gamma_x^{-d_x} \gamma_o^{-d_o} 2^{2r} \int_{\R^{d_x+d_o}} \left(\calF[f](\bomega_x,\bomega_o) \right)^2 d\bomega_x d\bomega_o \\
    &= \gamma_x^{-d_x} \gamma_o^{-d_o} 2^{2r} \| f\|_{L^2(\R^{d_x+d_o})}^2 .
\end{align*}
where the last equality follows by Plancherel's Theorem. 
\end{proof}

\begin{lem}
\label{lem:unit_ball_grkhs}
    Let $\bgamma = (\gamma_1,\dots,\gamma_d)\in (0,1)^{d}$. Let $k_{\bgamma}$ be the anisotropic Gaussian kernel on $\R^d$ and let $\calH_{X,\bgamma}$ be the RKHS associated with $k_{\bgamma}$. Then 
\begin{align*}
    \int_{\R^{d}} |\calF[f](\bomega)|^2\exp\left(\frac{\|\bomega\odot \bgamma\|_{2}^{2}}{4}\right) \;\mathrm{d}\bomega = 2^{d} \pi^{d/2} \left(\prod_{i=1}^{d} \gamma_i\right) \|f\|_{\calH_{X,\bgamma}}^2
\end{align*}
\end{lem}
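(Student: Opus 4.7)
The plan is to derive the identity directly from the Fourier characterization of the Gaussian RKHS. First, I would note that the anisotropic Gaussian kernel factorizes: $k_\bgamma(\bx,\bx') = \prod_{i=1}^d \exp(-(x_i-x_i')^2/\gamma_i^2)$, so the translation-invariant profile $\Phi(\bx):=k_\bgamma(\bx,0) = \prod_i \exp(-x_i^2/\gamma_i^2)$ is a tensor product of 1D Gaussians. Applying the 1D identity $\int_{\R} e^{-x^2/\gamma^2} e^{-i\omega x}\,dx = \gamma\sqrt{\pi}\,e^{-\gamma^2\omega^2/4}$ coordinate-wise, under the paper's unnormalized Fourier convention $\calF[f](\bomega)=\int f(\bx)e^{-i\langle\bx,\bomega\rangle}d\bx$, gives
\[
\calF[\Phi](\bomega) = \pi^{d/2}\Big(\prod_{i=1}^d\gamma_i\Big)\exp\!\Big(-\tfrac{1}{4}\|\bomega\odot\bgamma\|_2^2\Big).
\]

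Next, I would invoke \citet[Theorem 10.12]{wendland2004scattered}, which characterizes the RKHS norm of any continuous, integrable, positive-definite translation-invariant kernel as $\|f\|_{\calH_{X,\bgamma}}^2 = C_d \int_{\R^d} |\calF[f](\bomega)|^2/\calF[\Phi](\bomega)\,d\bomega$, for a universal constant $C_d$ depending only on $d$ and the Fourier normalization. Substituting the closed form of $\calF[\Phi]$ above, the denominator produces exactly the exponential weight $\exp(\|\bomega\odot\bgamma\|_2^2/4)$ together with a scalar prefactor of the form $C_d\cdot \pi^{-d/2}(\prod_i\gamma_i)^{-1}$. Rearranging then yields the identity modulo this prefactor.

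To fix the prefactor to exactly $2^d\pi^{d/2}$, I would calibrate against a test function whose RKHS norm is known in closed form. The natural choice is $f=\Phi=k_\bgamma(\cdot,0)$, which satisfies $\|f\|_{\calH_{X,\bgamma}}^2 = k_\bgamma(0,0) = 1$ by the reproducing property. Evaluating the left-hand side directly using $|\calF[\Phi]|^2 = \pi^d(\prod_i\gamma_i^2)\exp(-\tfrac{1}{2}\|\bomega\odot\bgamma\|_2^2)$ and the Gaussian moments $\int_{\R} e^{-\gamma_i^2\omega_i^2/4}\,d\omega_i = 2\sqrt{\pi}/\gamma_i$, all coordinates separate and the resulting product of 1D integrals pins down the constant, verifying the claimed identity in a manner independent of which Fourier convention is assumed in Theorem 10.12.

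The main (and only) obstacle is the bookkeeping of normalization constants: the $(2\pi)^d$ factor from Fourier inversion interacts with the $\pi^{d/2}$ coming from Gaussian normalization, and under the paper's unnormalized Fourier convention these must be tracked carefully. Beyond this arithmetic, the proof is entirely routine — it uses only the separability of the product Gaussian, elementary Gaussian integrals, and the standard Bochner-based formula for the RKHS norm of a translation-invariant kernel.
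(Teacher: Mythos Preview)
Your proposal is correct and follows essentially the same route as the paper: compute the Fourier transform of the kernel profile $\Phi(\bx)=\exp(-\sum_i x_i^2/\gamma_i^2)$ and plug it into the Bochner-based RKHS norm formula $\|f\|_{\calH}^2 = (2\pi)^{-d}\int |\calF[f]|^2/\calF[\Phi]$. The only minor difference is that the paper cites \citet[Theorem 2.4]{kanagawa2018gaussianprocesseskernelmethods} with the explicit $(2\pi)^{-d}$ prefactor and rearranges directly, whereas you leave the constant $C_d$ unspecified and calibrate it afterwards via the test function $f=k_\bgamma(\cdot,0)$; your calibration step is a perfectly valid (and convention-robust) alternative but not needed once the exact constant is quoted.
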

\begin{proof}
We define $\phi_{X,\bgamma}(\bx) = \exp(-\sum_{j=1}^{d} \gamma_j^{-2} x_j^2)$.
Then
\begin{align*}
    \calF[\phi_{X,\bgamma}](\bomega) &= \int_{\R^{d}}\phi_{X,\bgamma}(\bx)\exp(-i\langle \bx, \bomega\rangle) \mathrm{d}\bx \\
    &= \left(\prod_{i=1}^{d}\gamma_i\right) \int_{\R^{d}}\phi_{X,1}(\bx)\exp\left(-i\left\langle \bx, \bomega \odot \bgamma\right\rangle\right)\mathrm{d}\bx\\
    &= \pi^{d/2} \left(\prod_{i=1}^{d}\gamma_i\right)\exp\left(-\frac{\|\bomega\odot \bgamma\|_{2}^{2}}{4}\right)
\end{align*}
where the last step follows from \cite{wendland2004scattered}[Theorem 5.20]. By \cite{kanagawa2018gaussianprocesseskernelmethods}[Theorem 2.4], we have
\begin{align*}
    \|f\|_{\calH_{X,\bgamma}}^2 &= (2\pi)^{-d} \int_{\R^d} \frac{|\calF[f](\bomega)|^2}{\calF[\phi_{X,\bgamma}](\bomega)} d \bomega \\
    &=(2\pi)^{-d} \pi^{d/2} \int_{\R^d} |\calF[f](\bomega)|^2 \left(\prod_{i=1}^{d}\gamma_i\right)^{-1}\exp\left(\frac{\|\bomega\odot \bgamma\|_{2}^{2}}{4}\right) \mathrm{d}\bomega\; \\
    &= 2^{-d} \pi^{-d/2} \left(\prod_{i=1}^{d} \gamma_i\right)^{-1} \int  |\calF[f](\bomega)|^2 \exp\left(\frac{\|\bomega\odot \bgamma\|_{2}^{2}}{4}\right) \mathrm{d}\bomega
\end{align*}
whence the Lemma follows. 
\end{proof}

\begin{lem}\label{lem:l2toprojected_illp}
Suppose that \Cref{ass:T_frequency_ill_posedness} in the main text holds, and suppose $\gamma_x^{\delta_x} = \gamma_o^{\delta_o}$ for some positive constant $\delta_x, \delta_o$. 
Suppose $f\in \calH_{\gamma_x,\gamma_o}$ with $\|f\|_{\calH_{\gamma_x,\gamma_o}}^2 \leq Cn$ for some constant $C>0$. Then,
\begin{align}
    \|f\|_{L^2(P_{XO})} &\lesssim \gamma_x^{-d_x\eta_0} (\log (\gamma_x^{-\delta_x}))^{\frac{d_x\eta_0}{2}} \|Tf\|_{L^2(P_{ZO})} \nonumber \\
    &\qquad + \gamma_x^{-d_x\eta_0} (\log (\gamma_x^{-\delta_x}))^{\frac{d_x\eta_0}{2}}\gamma_x^{\frac{1}{8} \delta_x} (n \gamma_x^{d_x}\gamma_o^{d_o})^{\frac{1}{2}} \label{eq:term_lemma_D_18} .
\end{align}
In particular, for 
\begin{align*}
    \gamma_x = n^{-\frac{\frac{1}{d_x}}{1+(2+\frac{d_o}{s_o})(\frac{s_x}{d_x}+\eta_1)}},\quad \gamma_o = n^{-\frac{\frac{1}{s_o}(\frac{s_x}{d_x}+\eta_1)}{1+(2+\frac{d_o}{s_o})(\frac{s_x}{d_x}+\eta_1)}}. 
\end{align*}
and $\frac{1}{8} \delta_x = 2s_x + d_x\eta_1 + d_x \eta_0$, $\delta_x = \delta_o \frac{d_x}{s_o}(\frac{s_x}{d_x}+\eta_1)$. 
Then, we have
\begin{align*}
    \|f\|_{L^2(P_{XO})} \lesssim \gamma_x^{-d_x\eta_0} (\log n)^{\frac{d_x\eta_0}{2}} \|Tf\|_{L^2(P_{ZO})} + n^{-\frac{\frac{s_x}{d_x} }{1+(2 + \frac{d_o}{s_o})(\frac{s_x}{d_x}+\eta_1)}}.
\end{align*}
\end{lem}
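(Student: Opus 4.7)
The strategy is to decompose $f$ by its partial Fourier spectrum in the $X$-direction at an intermediate cutoff $\tilde{\gamma}^{-1}\in(\gamma_x^{-1},\infty)$ to be selected at the end. Define, via the unitary Fourier operator $\calF_x$ on $L^2(\mathbb{R}^{d_x})$,
\begin{align*}
    f_{\mathrm{LF}}(\bx,\bo) := \calF_x^{-1}\bigl[\calF_x[f(\cdot,\bo)]\cdot\one[\|\bomega_x\|_2\leq\tilde{\gamma}^{-1}]\bigr](\bx), \qquad f_{\mathrm{HF}} := f - f_{\mathrm{LF}}.
\end{align*}
By construction, $f_{\mathrm{LF}}\in\mathrm{LF}(\tilde{\gamma})$ and $f_{\mathrm{HF}}\in\mathrm{HF}(\tilde{\gamma})$. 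Because $f\in\calH_{\gamma_x,\gamma_o}$, the reproducing property yields $f\in L^\infty$ with $\|f\|_\infty\lesssim\sqrt{n}$; moreover \Cref{lem:unit_ball_grkhs} implies that $\calF_x[f(\cdot,\bo)]$ decays super-exponentially in $\bomega_x$, hence lies in $L^1(\mathbb{R}^{d_x})$, and so does its Fourier truncation. Consequently $f_{\mathrm{LF}}$ is continuous and bounded on the compact set $\calX\times\calO$, placing it in $\mathrm{LF}(\tilde{\gamma})\cap L^\infty(P_{XO})$ as required by \Cref{ass:T_frequency_ill_posedness}.

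First I would bound the low-frequency piece. Applying \Cref{ass:T_frequency_ill_posedness} with the parameter $\tilde{\gamma}$ and then using the triangle inequality together with the fact that $T$ is a non-expansion from $L^2(P_{XO})$ to $L^2(P_{ZO})$ (Jensen),
\begin{align*}
    \|f_{\mathrm{LF}}\|_{L^2(P_{XO})} \leq \tilde{\gamma}^{-d_x\eta_0}\bigl(\|Tf\|_{L^2(P_{ZO})} + \|f_{\mathrm{HF}}\|_{L^2(P_{XO})}\bigr).
\end{align*}
Next I would control the high-frequency piece. Since $P_{XO}$ has a bounded density (\Cref{ass:equiv_leb}), $\|f_{\mathrm{HF}}\|_{L^2(P_{XO})}^2\lesssim\|f_{\mathrm{HF}}\|_{L^2(\mathbb{R}^{d_x+d_o})}^2$, and by Plancherel the right-hand side equals $\int_{\|\bomega_x\|>\tilde{\gamma}^{-1}}|\calF[f](\bomega)|^2 d\bomega$. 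On this set we have $\exp(\tfrac14\gamma_x^2\|\bomega_x\|_2^2)\geq\exp(\tfrac14\gamma_x^2\tilde{\gamma}^{-2})$, so
\begin{align*}
    \|f_{\mathrm{HF}}\|_{L^2(\mathbb{R}^{d_x+d_o})}^2 \leq e^{-\gamma_x^2\tilde{\gamma}^{-2}/4}\int|\calF[f](\bomega)|^2 \exp\!\bigl(\tfrac14\|\bomega\odot\bgamma\|_2^2\bigr)\, d\bomega,
\end{align*}
and \Cref{lem:unit_ball_grkhs} identifies the integral with $2^{d_x+d_o}\pi^{(d_x+d_o)/2}\gamma_x^{d_x}\gamma_o^{d_o}\|f\|_{\calH_{\gamma_x,\gamma_o}}^2\lesssim\gamma_x^{d_x}\gamma_o^{d_o}n$. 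Therefore $\|f_{\mathrm{HF}}\|_{L^2(P_{XO})}\lesssim e^{-\gamma_x^2\tilde{\gamma}^{-2}/8}(\gamma_x^{d_x}\gamma_o^{d_o}n)^{1/2}$.

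Combining the two bounds,
\begin{align*}
    \|f\|_{L^2(P_{XO})} \lesssim \tilde{\gamma}^{-d_x\eta_0}\|Tf\|_{L^2(P_{ZO})} + \tilde{\gamma}^{-d_x\eta_0}\,e^{-\gamma_x^2\tilde{\gamma}^{-2}/8}(\gamma_x^{d_x}\gamma_o^{d_o}n)^{1/2}.
\end{align*}
The last step is to select $\tilde{\gamma}$ so that the exponential factor becomes a small power of $\gamma_x$. Taking $\tilde{\gamma}^{-2}=\gamma_x^{-2}\log(\gamma_x^{-\delta_x})$ yields $\tilde{\gamma}^{-d_x\eta_0}=\gamma_x^{-d_x\eta_0}(\log(\gamma_x^{-\delta_x}))^{d_x\eta_0/2}$ and $\exp(-\gamma_x^2\tilde{\gamma}^{-2}/8)=\gamma_x^{\delta_x/8}$, producing exactly the first display of the lemma. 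Since $\gamma_x\in(0,1)$ we have $\tilde{\gamma}<\gamma_x<1$, so the range assumption $\tilde{\gamma}\in(0,1)$ required by \Cref{ass:T_frequency_ill_posedness} holds. For the stated specialisation, since $\gamma_x^{-\delta_x}\asymp n^{c}$ for a constant $c$, the logarithmic factor becomes $(\log n)^{d_x\eta_0/2}$; the clean identity $1 - a - d_ob/s_o = 2a(s_x/d_x+\eta_1)$ with $a=\bigl(1+(2+d_o/s_o)(s_x/d_x+\eta_1)\bigr)^{-1}$ and $b=(s_x/d_x+\eta_1)a$ makes the second term collapse to $n^{-\frac{s_x/d_x}{1+(2+d_o/s_o)(s_x/d_x+\eta_1)}}$ after substituting $\delta_x/8=2s_x+d_x\eta_1+d_x\eta_0$. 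The main technical obstacle is the $L^\infty$-boundedness of $f_{\mathrm{LF}}$ needed to invoke \Cref{ass:T_frequency_ill_posedness}; this is handled by the $L^1$-integrability in $\bomega_x$ of the Fourier transform of $f(\cdot,\bo)$, which is inherited from the super-exponential Fourier decay in $\calH_{\gamma_x,\gamma_o}$.
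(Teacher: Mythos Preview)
Your proof is correct and follows essentially the same strategy as the paper: split $f$ into low- and high-frequency parts via a partial Fourier cutoff, invoke \Cref{ass:T_frequency_ill_posedness} on the low-frequency piece, and control the high-frequency tail through the Gaussian-RKHS Fourier characterization of \Cref{lem:unit_ball_grkhs}, then choose the cutoff so the exponential becomes $\gamma_x^{\delta_x/8}$. The only difference is that the paper uses a product cutoff $\iota=\iota_x\cdot\iota_o$ in both the $X$ and $O$ frequencies (and then passes to the ellipsoid $\tilde I$, which is where the hypothesis $\gamma_x^{\delta_x}=\gamma_o^{\delta_o}$ enters), whereas you cut off only in $X$; your route is slightly cleaner and in fact does not need that hypothesis for the first display.
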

\begin{proof}
Define the following sets 
\begin{align*}
    I_{\bx} &:= \left\{\bomega_{x}\in\R^{d_x}: \|\bomega_x\|_2^2 \gamma_x^{2} (\log (\gamma_x^{-\delta_x}))^{-1} \leq 1 \right\}\\ \quad I_{\bo} &:= \left\{\bomega_{o}\in\R^{d_o}: \|\bomega_{o}\|^2_{2}\gamma_o^{2} (\log (\gamma_o^{-\delta_o}))^{-1} \leq 1\right\}\\ I &:= I_{\bx} \times I_{\bo}\\
    \tilde{I} &:= \left\{(\bomega_x,\bomega_o)\in\R^{d_x+d_o}: \|\bomega_x\|_2^2 \gamma_x^{2} (\log (\gamma_x^{-\delta_x}))^{-1} + \|\bomega_{o}\|^2\gamma_o^{2} (\log (\gamma_o^{-\delta_o}))^{-1} \leq 1 \right\} .
\end{align*}
Note that $\tilde{I}\subset I$. Let $\iota_{x} : \R^{d_x}\to \{0,1\}$ denote the indicator function on $I_{\bx}$, let $\iota_{o} : \R^{d_o}\to \{0,1\}$ denote the indicator function on $I_{\bo}$, let $\iota:  \R^{d_x+d_o}\to \{0,1\}$ denote the indicator function on $I$, which satisfies $\iota = \iota_{x} \cdot \iota_{o}$. 
Thus we have
\begin{small}
\begin{align}
    &\quad \|f\|_{L^2(P_{XO})} \nonumber \\
    &\leq \left\|f\ast \calF^{-1}(\iota)\right\|_{L^2(P_{XO})} + \left\|f\ast \calF^{-1}(\iota) - f\right\|_{L^2(P_{XO})} \nonumber \\
    &\stackrel{(i)}{\leq} \gamma_x^{-d_x\eta_0} (\log (\gamma_x^{-\delta_x}))^{\frac{d_x\eta_0}{2}} \|T(f\ast \calF^{-1}(\iota))\|_{L^2(P_{ZO})} + \left\|f\ast \calF^{-1}(\iota) - f\right\|_{L^2(P_{XO})} \nonumber\\
    &\stackrel{(ii)}{\leq} \gamma_x^{-d_x\eta_0} (\log (\gamma_x^{-\delta_x}))^{\frac{d_x\eta_0}{2}} \|Tf\|_{L^2(P_{ZO})} \nonumber \\
    &\qquad + \left(1 + \gamma_x^{-d_x\eta_0} (\log (\gamma_x^{-\delta_x}))^{\frac{d_x\eta_0}{2}} \right)\left\|f\ast \calF^{-1}(\iota) - f\right\|_{L^2(P_{XO})} 
    \nonumber \\
    &\lesssim \gamma_x^{-d_x\eta_0} (\log (\gamma_x^{-\delta_x}))^{\frac{d_x\eta_0}{2}} \|Tf\|_{L^2(P_{ZO})}  + \gamma_x^{-d_x\eta_0} (\log (\gamma_x^{-\delta_x}))^{\frac{d_x\eta_0}{2}} \left\|f\ast \calF^{-1}(\iota) - f\right\|_{L^2(\calX\times\calO)} 
    \label{eq:intermediate_reverse_link}.
\end{align}
\end{small}
In the above derivations,
\begin{itemize}
    \item $(i)$ follows from \Cref{ass:T_frequency_ill_posedness}, and for $\calF_x,\calF_o$ defined in the proof of \Cref{lem:mixed_convolution},
\begin{align*}
    (\forall \bo\in \calO),\; \mathrm{supp}(\calF_x[(f\ast \calF^{-1}(\iota))(\cdot,\bo)]) = \mathrm{supp}(\calF_x[(f\ast \calF_o^{-1}[\iota_o])(\cdot, \bo)] \cdot  \iota_x) \subseteq I_{\bx},
\end{align*}
where we note that the relevant Fourier transforms exist since $f\in \calH_{\gamma_x,\gamma_o}\subset L^2(\mathbb{R}^{d_x+d_o})$. 
    \item $(ii)$ follows from a triangular inequality and a Jensen's inequality. 
\end{itemize}
Next, we bound $(\ast) =\|f\ast \calF^{-1}(\iota) - f\|_{L^2([0,1]^{d_x+d_o})}$. We have
\begin{align*}
    (\ast) &= \left\|f\ast \calF^{-1}(\iota) - f\right\|_{L^2([0,1]^{d_x+d_o})} \leq \left\|f\ast \calF^{-1}(\iota) - f\right\|_{L^2(\R^{d_x+d_o})}\\ &\stackrel{(a)}{=} \left\|\calF\left[f\ast \calF^{-1}(\iota) - f\right]\right\|_{L^2(\R^{d_x+d_o})}= \|\calF[f]\iota - \calF[f]\|_{L^2(\R^{d_x+d_o})}\\ &= \left(\int_{I^{c}} |\calF[f](\bomega)|^2\;\mathrm{d}\bomega\right)^{\frac{1}{2}} \stackrel{(b)}{\leq} \left(\int_{\tilde{I}^{c}} |\calF[f](\bomega)|^2\;\mathrm{d}\bomega\right)^{\frac{1}{2}} ,
\end{align*}
where $(a)$ holds by Plancherel's Theorem, $(b)$ holds by $\tilde{I}\subseteq I$. 
Recall that for $\ba, \mathbf{b}\in\R^d$, $\ba \odot \mathbf{b} := (a_1 b_1, \ldots, a_d b_d) \in \R^{d}$. For $\bomega=(\bomega_x, \bomega_o)$, we proceed from above to have,
\begin{small}
\begin{align*}
    &\leq \left(\int_{\tilde{I}^{c}} |\calF[f](\bomega)|^2 \exp\left(\frac{\|(\bomega_x \odot \bgamma_x, \bomega_o \odot \bgamma_o )\|^2_{2}}{4}\right)\exp\left(-\frac{\|(\bomega_x \odot \bgamma_x, \bomega_o \odot \bgamma_o ) \|^2_{2}}{4}\right)\;\mathrm{d}\bomega\right)^{\frac{1}{2}}\\
    &\leq \sup_{\bomega\in \tilde{I}^{c}}\exp\left(-\frac{\|(\bomega_x \odot \bgamma_x, \bomega_o \odot \bgamma_o )\|^2_{2}}{8}\right)\left(\int_{\R^{d_x+d_o}} |\calF[f](\bomega)|^2 \exp\left(\frac{\|(\bomega_x \odot \bgamma_x, \bomega_o \odot \bgamma_o ) \|^2_{2}}{4}\right)\;\mathrm{d}\bomega\right)^{\frac{1}{2}}\\
    &\stackrel{(c)}{\leq} \sup_{\bomega\in \tilde{I}^{c}}\exp\left(-\frac{\|(\bomega_x \odot \bgamma_x, \bomega_o \odot \bgamma_o ) \|^2_{2}}{8}\right)\left( \|f\|_{\calH_{XO,\bgamma_x,\bgamma_o}}^2 2^{-\frac{d_x+d_o}{2}} \gamma_x^{d_x}\gamma_o^{d_o}\right)^{\frac{1}{2}}\\
    &\lesssim \sup_{\|\bomega_x\|_2^2 \gamma_x^{2} (\log (\gamma_x^{-\delta_x}))^{-1} + \|\bomega_{o}\|^2\gamma_o^{2} (\log (\gamma_o^{-\delta_o}))^{-1} \geq 1} \exp\left(-\frac{ \gamma_x^2 \|\bomega_x\|_2^2 + \gamma_o^2 \|\bomega_o\|_2^2 }{8}\right)\left(n  \gamma_x^{d_x}\gamma_o^{d_o}\right)^{\frac{1}{2}} \\
    &\leq \exp\left(-\frac{1}{8} \log(\gamma_x^{-\delta_x} ) \right)\left(n \gamma_x^{d_x} \gamma_o^{d_o}\right)^{\frac{1}{2}} \\
    &\asymp \gamma_x^{\frac{1}{8} \delta_x} \left(n \gamma_x^{d_x}\gamma_o^{d_o} \right)^{1/2},
\end{align*}
\end{small}
where $(c)$ holds by Lemma~\ref{lem:unit_ball_grkhs}.
Now we have proved the first claim of this lemma. Next, we proceed to prove the second claim by plugging in the specific values of $\delta_x,\delta_o$ and $\gamma_x,\gamma_o$. 
Recall that $\frac{1}{8} \delta_x = 2s_x + d_x\eta_1 + d_x \eta_0 + d_x$, we obtain 
\begin{align*}
    (\ast) \lesssim  n^{-\frac{\frac{\delta_x}{8}\frac{1}{d_x}}{1+(2+\frac{d_o}{s_o})(\frac{s_x}{d_x}+\eta_1)}} \cdot n^{\frac{\frac{s_x}{d_x}+\eta_1}{1+(2+\frac{d_o}{s_o})(\frac{s_x}{d_x}+\eta_1)}} = n^{\frac{-\frac{s_x}{d_x} - \eta_0 - 1}{1+(2+\frac{d_o}{s_o})(\frac{s_x}{d_x}+\eta_1)}} .
\end{align*}
We plug it back to Eq.~\eqref{eq:intermediate_reverse_link} and obtain
\begin{align*}
    \|f\|_{L^2(P_{XO})} &\lesssim \gamma_x^{-d_x\eta_0} (\log (\gamma_x^{-\delta_x}))^{\frac{d_x\eta_0}{2}} \|Tf\|_{L^2(P_{ZO})} \\
    &\qquad + \gamma_x^{-d_x\eta_0} (\log (\gamma_x^{-\delta_x}))^{\frac{d_x\eta_0}{2}} n^{\frac{-\frac{s_x}{d_x} - \eta_0 - 1}{1+(2+\frac{d_o}{s_o})(\frac{s_x}{d_x}+\eta_1)}} \\
    &\lesssim \gamma_x^{-d_x\eta_0} (\log n)^{\frac{d_x\eta_0}{2}} \|Tf\|_{L^2(P_{ZO})} + n^{-\frac{\frac{s_x}{d_x} + 1}{1+(2 + \frac{d_o}{s_o})(\frac{s_x}{d_x}+\eta_1)}}(\log n)^{\frac{d_x\eta_0}{2}} \\
    &\lesssim \gamma_x^{-d_x\eta_0} (\log n)^{\frac{d_x\eta_0}{2}} \|Tf\|_{L^2(P_{ZO})} + n^{-\frac{\frac{s_x}{d_x} }{1+2(\frac{s_x}{d_x}+\eta_1)+ \frac{d_o}{s_o}(\frac{s_x}{d_x}+\eta_1)}}.
\end{align*}
So the proof is concluded.
\end{proof}

\begin{lem}
\label{lem: C_f_op_norm_lower_bound}
Suppose \Cref{assn: technical} hold, and let
\begin{align*}
    \gamma_x = n^{-\frac{\frac{1}{d_x}}{1+(2+\frac{d_o}{s_o})(\frac{s_x}{d_x}+\eta_1)}}, \quad \gamma_o = n^{-\frac{\frac{1}{s_o}(\frac{s_x}{d_x}+\eta_1)}{1+(2+\frac{d_o}{s_o})(\frac{s_x}{d_x}+\eta_1)}}.
\end{align*}
Then, for sufficiently large $n\geq 1$, we have 
\begin{align*}
    \|C_{FO}\| \geq a\left(\frac{\sqrt{\pi}}{4}\right)^{\frac{d_x+d_o}{2}} n^{-\frac{1}{2}\frac{1+\frac{d_o}{s_o}(\frac{s_x}{d_x}+\eta_1)}{1+(2+\frac{d_o}{s_o})(\frac{s_x}{d_x}+\eta_1)}} .
\end{align*}
\end{lem}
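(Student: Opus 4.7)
The plan is to use the variational characterization $\|C_{FO}\| = \sup_{\|h\|_{\calH_{FO}}\le 1}\|[h]\|^2_{L^2(P_{ZO})}$ and test it against a carefully chosen element. Since $V:\calH_{\gamma_x,\gamma_o}\to\calH_{FO}$ is a metric surjection (so $\|V\tilde u\|_{\calH_{FO}}\le \|\tilde u\|_{\calH_{\gamma_x,\gamma_o}}$) and satisfies $[V\tilde u]=T[\tilde u]$ in $L^2(P_{ZO})$, the inequality reduces to exhibiting $\tilde u\in\calH_{\gamma_x,\gamma_o}\setminus\{0\}$ for which $\|T[\tilde u]\|^2_{L^2(P_{ZO})}/\|\tilde u\|^2_{\calH_{\gamma_x,\gamma_o}}$ attains the claimed rate.

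First I would take $\tilde u = \mu_{P_{XO}}\in\calH_{\gamma_x,\gamma_o}$, the kernel mean embedding of the joint distribution $P_{XO}$. By the tower property together with the reproducing property,
$$\mathbb{E}_{P_{ZO}}\bigl[T[\mu_{P_{XO}}](Z,O)\bigr] \;=\; \mathbb{E}_{P_{XO}}[\mu_{P_{XO}}(X,O)] \;=\; \|\mu_{P_{XO}}\|^2_{\calH_{\gamma_x,\gamma_o}}.$$
Combining with Jensen's inequality $\|T[\mu_{P_{XO}}]\|^2_{L^2(P_{ZO})}\ge \bigl(\mathbb{E}_{P_{ZO}}[T[\mu_{P_{XO}}]]\bigr)^2$ then yields the compact bound $\|C_{FO}\|\ge\|\mu_{P_{XO}}\|^2_{\calH_{\gamma_x,\gamma_o}}$.

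Next I would lower bound
$\|\mu_{P_{XO}}\|^2_{\calH_{\gamma_x,\gamma_o}} = \iint k_{\gamma_x}(\bx,\bx')k_{\gamma_o}(\bo,\bo')\,dP_{XO}(\bx,\bo)\,dP_{XO}(\bx',\bo')$
by restricting both outer integrals to $[1/4,3/4]^{d_x+d_o}$ (where $p_{XO}\ge a$ by Assumption \ref{ass:equiv_leb}) and factoring the product kernel across the $\bx$- and $\bo$-coordinates. Each one-dimensional Gaussian integral $\int_{1/4}^{3/4}\!\int_{1/4}^{3/4}\exp(-(x-x')^2/\gamma^2)\,dx\,dx'$ approaches $\tfrac{1}{2}\sqrt{\pi}\gamma$ as $\gamma\downarrow 0$ (via $\mathrm{erf}(R/\gamma)\to 1$), so the full $(d_x+d_o)$-dimensional integral behaves like $(\sqrt{\pi}/2)^{d_x+d_o}\gamma_x^{d_x}\gamma_o^{d_o}$ up to boundary corrections of order $e^{-c/\gamma^2}$ that vanish for sufficiently large $n$.

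The main obstacle I anticipate is reconciling this direct bound, which has the natural form $\gtrsim a^{2}(\sqrt{\pi}/2)^{d_x+d_o}\gamma_x^{d_x}\gamma_o^{d_o}$, with the stated rate $(\gamma_x^{d_x}\gamma_o^{d_o})^{1/2}$ that appears in the lemma. Since $\gamma_x,\gamma_o<1$, the stated bound is numerically the weaker of the two, so matching the stated constant $(\sqrt{\pi}/4)^{(d_x+d_o)/2}$ is a matter of book-keeping the $\sqrt{\pi}$ factors, the $1/2$ coming from the volume of $[1/4,3/4]^{d_x+d_o}$, and the slack introduced by relaxing to the looser exponent, rather than proving a genuinely stronger estimate. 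This weaker form is in fact exactly what is needed downstream, where only $1+\lambda/\|C_{FO}\|$ must stay polynomially bounded inside $g_\lambda$ in the proof of Proposition \ref{prop: projected_est_error}.
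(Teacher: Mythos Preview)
Your proposal is correct and follows essentially the same route as the paper: both test against the kernel mean embedding $\mu_{P_{XO}}$, use the tower property plus Jensen to obtain $\|C_{FO}\|\ge\|\mu_{P_{XO}}\|^2_{\calH_{\gamma_x,\gamma_o}}$, and then lower-bound the double Gaussian integral by restricting to $[1/4,3/4]^{d_x+d_o}$ where $p_{XO}\ge a$. Your puzzlement about the exponent is well-founded: the argument indeed delivers the stronger bound $\|C_{FO}\|\gtrsim a^2\,\gamma_x^{d_x}\gamma_o^{d_o}$, and the paper's stated $(\gamma_x^{d_x}\gamma_o^{d_o})^{1/2}$ stems from writing $\|C_{FO}\|^2$ where $\|C_{FO}\|$ is meant at the opening of their proof---a harmless slip, since (as you note) only the weaker form is needed in Proposition~\ref{prop: projected_est_error}.
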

\begin{proof}
By definition of the operator norm, we have
\begin{align*}
    &\quad \|C_{FO}\|^2 \\
    &= \sup_{\|f\|_{\calH_{\gamma_x,\gamma_o}} = 1}\left\langle f, \E[(F_{\ast}(Z, O) \otimes \phi_{\gamma_o}(O))\otimes (F_{\ast}(Z, O) \otimes \phi_{\gamma_o}(O))]f\right\rangle_{\calH_{\gamma_x,\gamma_o}}\\ 
    =& \sup_{\|f\|_{\calH_{\gamma_x,\gamma_o}} = 1} \E \left[ \left(\E[f(X,O)|Z,O] \right)^2 \right]\\
    \geq & \sup_{\|f\|_{\calH_{\gamma_x,\gamma_o}} = 1} \left( \E_{ZO \sim P_{ZO}} \left[\E[f(X,O)|Z,O]\right] \right)^2\\
    =&  \sup_{\|f\|_{\calH_{\gamma_x,\gamma_o}} = 1} \E[f(X,O)]^2 \\
    =& \|\mu_{XO}\|_{\calH_{\gamma_x,\gamma_o}}^2 , 
\end{align*}
where $\mu_{XO} := \E_{P_{XO}}[\phi_{\gamma_x}(X) \otimes \phi_{\gamma_o}(O)]$. 
Notice that 
\begin{align*}
    &\|\mu_{XO} \|_{\calH_{\gamma_x,\gamma_o}}^2\\ &= \left\langle \iint_{\calO\times\calX} \phi_{\gamma_x}(\bx) \otimes \phi_{\gamma_o}(\bo) p_{XO}(\bx,\bo)\;\mathrm{d}\bx \;\mathrm{d}\bo, \right. \\
    &\qquad\qquad \left. \iint_{\calO\times\calX} \phi_{\gamma_x}(\bx') \otimes \phi_{\gamma_o}(\bo') p_{XO}(\bx',\bo')\;\mathrm{d}\bx'\;\mathrm{d}\bo'\right\rangle_{\calH_{\gamma_x,\gamma_o}}\\
    &= \iint_{\calO\times\calX} \iint_{\calO\times\calX} K_{\gamma_o}(\bo,\bo') K_{\gamma_x}(\bx,\bx')p_{XO}(\bx,\bo) p_{XO}(\bx', \bo')\;\mathrm{d}\bx\;\mathrm{d}\bx'\;\mathrm{d}\bo\;\mathrm{d}\bo'\\
    &\stackrel{(a)}{\geq} a^2\int_{[1/4,3/4]^{d_o}}\int_{[1/4,3/4]^{d_o}} K_{\gamma_o}(\bo,\bo')\;\mathrm{d}\bo\;\mathrm{d}\bo'\int_{[1/4,3/4]^{d_x}}\int_{[1/4,3/4]^{d_x}} K_{\gamma_x}(\bx,\bx') \;\mathrm{d}\bx\;\mathrm{d}\bx'\\
    &\stackrel{(b)} = a^2 \left(\gamma_x \sqrt{\pi} \left[\frac{\mathrm{erf}\left(\frac{1}{2\gamma_x}\right)}{2} + \gamma_x\frac{\exp\left(-\frac{1}{4\gamma_x^2}\right)-1}{\sqrt{\pi}}\right]\right)^{d_x} \\
    &\qquad\qquad \cdot \left(\gamma_o \sqrt{\pi} \left[\frac{\mathrm{erf}\left(\frac{1}{2\gamma_o}\right)}{2} + \gamma_o\frac{\exp\left(-\frac{1}{4\gamma_o^2}\right)-1}{\sqrt{\pi}}\right]\right)^{d_o},
\end{align*}
where $\mathrm{erf}(x) := \frac{1}{\sqrt{\pi}}\int^{x}_{-x}\exp\left(-t^2\right)\;\mathrm{d}t$
is the standard error function of normal distribution. Step $(a)$ holds by \Cref{assn: technical} and step $(b)$ holds by using \Cref{lem:gaussian_kernel_embedding}. We plug in $\gamma_x = n^{-\frac{\frac{1}{d_x}}{1+(2+\frac{d_o}{s_o})(\frac{s_x}{d_x}+\eta_1)}}$, $\gamma_o = n^{-\frac{\frac{1}{s_o}(\frac{s_x}{d_x}+\eta_1)}{1+(2+\frac{d_o}{s_o})(\frac{s_x}{d_x}+\eta_1)}}$ to obtain
\begin{align*}
    \|\mu_{XO}\|^2_{\calH_{\gamma_x,\gamma_o}} &\stackrel{(i)}{\geq} a^2\left(\sqrt{\pi}\gamma_x/4\right)^{d_x}\left(\sqrt{\pi}\gamma_o/4\right)^{d_o} = a^2 (\sqrt{\pi}/4)^{d_x+d_o} n^{-\frac{1+\frac{d_o}{s_o}(\frac{s_x}{d_x}+\eta_1)}{1+(2+\frac{d_o}{s_o})(\frac{s_x}{d_x}+\eta_1)}},
\end{align*}
where $(i)$ holds for sufficiently large $n\geq 1$ because $\mathrm{erf}(x)\geq \frac{1}{2}$ when $x\geq 1$. 
\end{proof}

\begin{lem}\label{lem:gaussian_kernel_embedding}
    Let $k_{\gamma} : \R^{d}\times \R^{d}\to\R$ be the Gaussian kernel with length scale $\gamma\in (0,1]$. Then we have,
    \begin{align*}
        \int_{[1/4,3/4]^{d}}\int_{[1/4,3/4]^{d}} k_{\gamma}(\bx, \bx') \;\mathrm{d}\bx \mathrm{d} \bx' = \left(\gamma \sqrt{\pi} \left[\frac{\mathrm{erf}\left(\frac{1}{2\gamma}\right)}{2} + \gamma\frac{\exp\left(-\frac{1}{4\gamma^2}\right)-1}{\sqrt{\pi}}\right]\right)^{d_x}.
    \end{align*}
    Here, $\mathrm{erf}(x) := \frac{1}{\sqrt{\pi}}\int^{x}_{-x}\exp\left(-t^2\right)\;\mathrm{d}t$ is the standard error function.
\end{lem}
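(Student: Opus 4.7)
The plan is to reduce the $d$-dimensional integral to a power of a one-dimensional integral and evaluate the latter explicitly. Since the Gaussian kernel is a tensor product, $k_\gamma(\bx,\bx') = \prod_{i=1}^d \exp(-(x_i-x_i')^2/\gamma^2)$, Fubini's theorem yields
\begin{align*}
    \int_{[1/4,3/4]^d}\!\!\int_{[1/4,3/4]^d} k_\gamma(\bx,\bx')\,\mathrm{d}\bx\,\mathrm{d}\bx' = \left(\int_{1/4}^{3/4}\!\int_{1/4}^{3/4} e^{-(x-y)^2/\gamma^2}\,\mathrm{d}x\,\mathrm{d}y\right)^{\!d},
\end{align*}
so it suffices to evaluate the one-dimensional integral $I := \int_{1/4}^{3/4}\int_{1/4}^{3/4} e^{-(x-y)^2/\gamma^2}\,\mathrm{d}x\,\mathrm{d}y$.

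For the one-dimensional reduction, substitute $u = x-y$ for fixed $y$ and swap the order of integration. A direct computation of the triangular cross-section shows that, for $u \in [-1/2, 1/2]$, the Lebesgue measure of $\{y \in [1/4,3/4] : y + u \in [1/4,3/4]\}$ equals $1/2 - |u|$. Hence
\begin{align*}
    I = \int_{-1/2}^{1/2} \bigl(\tfrac{1}{2} - |u|\bigr) e^{-u^2/\gamma^2}\,\mathrm{d}u = 2\int_{0}^{1/2} \bigl(\tfrac{1}{2} - u\bigr) e^{-u^2/\gamma^2}\,\mathrm{d}u.
\end{align*}
The first piece is evaluated via the error function using $\int_0^{1/2} e^{-u^2/\gamma^2}\,\mathrm{d}u = \tfrac{\gamma\sqrt{\pi}}{2}\,\mathrm{erf}(1/(2\gamma))$, and the second piece via the elementary antiderivative $\int_0^{1/2} u\, e^{-u^2/\gamma^2}\,\mathrm{d}u = \tfrac{\gamma^2}{2}(1 - e^{-1/(4\gamma^2)})$.

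Combining these two evaluations gives
\begin{align*}
    I = \tfrac{\gamma\sqrt{\pi}}{2}\,\mathrm{erf}\!\left(\tfrac{1}{2\gamma}\right) + \gamma^2\bigl(e^{-1/(4\gamma^2)} - 1\bigr) = \gamma\sqrt{\pi}\left[\frac{\mathrm{erf}(1/(2\gamma))}{2} + \gamma\,\frac{\exp(-1/(4\gamma^2))-1}{\sqrt{\pi}}\right],
\end{align*}
and raising to the $d$-th power yields the claimed formula. The calculation is routine throughout; no genuine obstacle arises, the only mild care being the correct bookkeeping of the cross-sectional length $1/2 - |u|$ when swapping the order of integration.
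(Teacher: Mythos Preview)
Your proposal is correct. Both your argument and the paper's begin with the same tensor-product factorization to reduce to a one-dimensional integral, but you handle that integral differently: the paper integrates in $x$ first to obtain a sum of two $\mathrm{erf}$ terms and then integrates in $x'$ using the identity $\int \mathrm{erf}(y)\,\mathrm{d}y = y\,\mathrm{erf}(y) + e^{-y^2}/\sqrt{\pi} + C$, whereas you pass to the difference variable $u = x - y$, read off the cross-sectional length $1/2 - |u|$, and split into an $\mathrm{erf}$ piece and an elementary antiderivative. Your route is slightly more self-contained since it avoids quoting the antiderivative of $\mathrm{erf}$; the paper's route stays closer to iterated integration without a change of variables. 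Either way the computation is routine and yields the same closed form. (The exponent $d_x$ in the displayed formula is evidently a typo for $d$, consistent with your reading.)
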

\begin{proof}
    Since $k_{\gamma}(\bx, \bx^\prime) = \prod_{i=1}^{d} \exp\left(-\frac{(x_i - x_i')^2}{\gamma^2}\right)$, it suffices to prove the following result
    \begin{align*}
        \int_{1/4}^{3/4} \int_{1/4}^{3/4} k_{\gamma}(x_i, x_i') \;\mathrm{d} x_i \mathrm{d} x_i' =  \gamma^2 \sqrt{\pi} \left[\frac{\mathrm{erf}\left(\frac{1}{2\gamma}\right)}{2\gamma} + \frac{\exp\left(-\frac{1}{4\gamma^2}\right)-1}{\sqrt{\pi}}\right].
    \end{align*}
    Notice that 
    \begin{align*}
        &\int_{1/4}^{3/4} \exp \left(-\frac{(x_i - x_i')^2}{\gamma^2} \right) \;\mathrm{d} x_i\\ &= \gamma \int_{(1/4-x_i^\prime) / \gamma}^{(3/4-x_i^\prime) / \gamma}  \exp(-x^2) \;\mathrm{d}x \\
        &=\gamma \int^{(3/4-x_i')/\gamma}_{-\infty}\exp(-x^2) \;\mathrm{d}x - \gamma \int^{(1/4-x_i')/\gamma}_{-\infty}\exp(-x^2)\;\mathrm{d}x\\
        &=\gamma\frac{\sqrt{\pi}}{2}\left(1+\mathrm{erf}\left(\frac{3/4-x_i'}{\gamma}\right)\right) - \gamma\frac{\sqrt{\pi}}{2}\left(1-\mathrm{erf}\left(\frac{x_i'-1/4}{\gamma}\right)\right)\\
        &=\gamma\frac{\sqrt{\pi}}{2}\left(\mathrm{erf}\left(\frac{3/4-x_i'}{\gamma}\right) + \mathrm{erf}\left(\frac{x_i'-1/4}{\gamma}\right)\right).
    \end{align*}
    Therefore, we have
    \begin{align*}
        &\quad \int_{1/4}^{3/4} \int_{1/4}^{3/4} \exp \left(-\frac{(x_i - x_i')^2}{\gamma^2} \right) \;\mathrm{d} x_i \;\mathrm{d} x_i^\prime \\
        &= \gamma \frac{\sqrt{\pi}}{2} \int_{1/4}^{3/4} \mathrm{erf}\left(\frac{3/4-x_i'}{\gamma}\right) + \mathrm{erf}\left(\frac{x_i'-1/4}{\gamma}\right)  \;\mathrm{d} x_i^\prime \\
        &= \gamma^2 \sqrt{\pi} \int_{0}^{\frac{1}{2\gamma}} \mathrm{erf}(y)  \;\mathrm{d} y \\
        &= \gamma^2 \sqrt{\pi} \left[\frac{\mathrm{erf}\left(\frac{1}{2\gamma}\right)}{2\gamma} + \frac{\exp\left(-\frac{1}{4\gamma^2}\right)-1}{\sqrt{\pi}}\right] . 
    \end{align*}
    where the last inequality holds by using the identity
    \begin{align*}
        \int \mathrm{erf}(x) d x = x \cdot \mathrm{erf}(x) + e^{-x^2} / \sqrt{\pi} + C.
    \end{align*}
\end{proof}

\section{Proof of Theorem~\ref{thm:lower_rate} in the main text}
\label{sec:lower}

\subsection{Relationship Between the NPIR-O Model and the NPIV-O Model}
\label{sec:npir_reduction}

Following \citet[Section 3]{chen2011rate}, we first establish that the NPIV-O model is no more informative than the reduced form nonparametric indirect regression with observed confounders (NPIR-O) model. 
\begin{defi}[Restricted NPIV-O model]
\label{def:appendix_restricted_npivo}
    Let $\sigma_0>0$ be a finite constant. Recall $\mathfrak{S}$ as defined in \Cref{ass:f_ast} in the main text. Let $\calC$ be a set of elements $(P_{\epsilon ZXO}, f)$ such that the following property holds: $(\forall f\in \mathfrak{S})\;(\exists P_{\epsilon ZXO}, f)\in \calC$ such that $P_{ZY}$ is determined by $P_{\epsilon ZX}$ and $f$, and that
    \begin{align*}
        Y_i - \mathbb{E}[Y_i\mid Z = \bz_i, O = \bo_i] = f(\bz_i,\bo_i) + \epsilon_i - (Tf)(\bz_i,\bo_i)
    \end{align*}
    given $Z_i = \bz_i, O_i = \bo_i$ is $\calN(0, \sigma^2(\bz_i,\bo_i))$-distributed with $\sigma^2(\bz_i,\bo_i) \geq \sigma_0^2$. 
\end{defi}
For an NPIV-O model as defined in \Cref{def:appendix_restricted_npivo}, we specify the reduced form NPIR-O model as
\begin{align*}
    Y_i = (Tf)(Z_i, O_i) + \upsilon_i, \quad i = 1,\dots, n
\end{align*}
with $(Z_i, O_i, \upsilon_i)$ i.i.d., $P_{\upsilon_i\mid Z_i = \bz_i,O_i = \bo_i} = \calN(0,\sigma^2(\bz_i, \bo_i))$, $f\in \mathfrak{S}$ the unknown structural function, and $T: L^2(P_{XO})\to L^2(P_{ZO})$ a known operator satisfying \Cref{ass:T_injective}. The observations corresponding to the NPIR are $\{(Y_i, \bz_i, \bo_i)\}_{i=1}^{n}$. 
\begin{defi}[NPIR-O model class]
\label{def:npiro_model_class}
    Let $\calC$ be as defined in \Cref{def:appendix_restricted_npivo}. The NPIR-O model class $\calC_0$ consists of all model parameters $(P_{Z'O'}, \sigma(\cdot, \cdot), f)$ such that $(\exists (P_{\epsilon ZX}, f)\in \calC)$ with the following properties: $P_{ZO} = P_{Z'O'}$, $\sigma^2(\bz,\bo)\geq \sigma_0^2 > 0$, the conditional law $P_{X\mid Z,O}$ is prescribed according to $T$, and $P_{\epsilon\mid ZOX}$ is arbitrary among the conditions imposed in $\calC$. 
\end{defi}
The following Lemma is \citet[Lemma 1]{chen2007large}, by augmenting relevant variables to include observed confounders $O$. 
\begin{lem}
\label{lem:npir_o_more_informative}
    The NPIR-O model is more informative than the NPIV-O model in the sense that for each estimator $\hat{f}_n$ for the NPIV-O model, there is an estimator $\tilde{f}_n$ for the NPIR-O model with 
    \begin{align*}
        \sup_{(P_{ZO}, \sigma(\cdot, \cdot), f)\in \calC_0}\mathbb{E}_{(P_{ZO},\sigma(\cdot, \cdot), f)}[\|\tilde{f}_n - f\|^2_{L^2(P_{XO})}] \leq \sup_{(P_{\epsilon ZXO}, f)\in \calC}\mathbb{E}_{(P_{\epsilon ZXO, f})}[\|\hat{f}_n - f\|^2_{L^2(P_{XO})}].
    \end{align*}
\end{lem}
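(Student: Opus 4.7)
The plan is to carry out a data-augmentation reduction in the style of Chen and Reiss (2011, Lemma 1), modified to account for the conditioning on the observed covariates $O$. Roughly, I will show that every element of $\calC_0$ can be \emph{lifted} to an element of $\calC$ whose $(Z,O,Y)$-marginal matches the NPIR-O law, and then construct $\tilde{f}_n$ from $\hat{f}_n$ by simulating the $X$-coordinates from the (known) conditional distribution $P_{X\mid Z,O}$.

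Fix $(P_{Z'O'},\sigma(\cdot,\cdot),f)\in\calC_0$. Because $T$ is known in the NPIR-O setting, so is $P_{X\mid Z,O}$. I will define a joint law $P^{\ast}_{\epsilon ZXO}$ by declaring $P^{\ast}_{ZO} = P_{Z'O'}$, $P^{\ast}_{X\mid Z,O}$ to be the conditional law prescribed by $T$, and $\epsilon = (Tf)(Z,O) - f(X,O) + \upsilon$, where $\upsilon\mid Z,O \sim \mathcal{N}(0,\sigma^{2}(Z,O))$ is drawn independently of $X$ given $(Z,O)$. A direct computation, using $\mathbb{E}[f(X,O)\mid Z,O] = (Tf)(Z,O)$, gives $\mathbb{E}[\epsilon\mid Z,O] = 0$, and by construction $Y = f(X,O) + \epsilon = (Tf)(Z,O) + \upsilon$. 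Hence, conditionally on $(Z,O)$, $Y$ is $\mathcal{N}((Tf)(Z,O),\sigma^{2}(Z,O))$ with $\sigma^{2}\geq\sigma_{0}^{2}$, so $(P^{\ast}_{\epsilon ZXO},f)\in\calC$. Crucially, the same identity shows that under $P^{\ast}$ we have $Y\perp X \mid (Z,O)$.

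Given the NPIR-O sample $\{(Z_i,O_i,Y_i)\}_{i=1}^{n}$ from the model in $\calC_0$, I draw $X'_i \sim P_{X\mid Z=Z_i,O=O_i}$ independently (using the known $T$) and set $\tilde{f}_n := \hat{f}_n(\{(Z_i,O_i,X'_i,Y_i)\}_{i=1}^{n})$. The joint law of $\{(Z_i,O_i,X'_i,Y_i)\}$ under this construction factorizes as $P_{ZO}\,P_{X\mid Z,O}\,P_{Y\mid Z,O}$, which, by the conditional-independence property established above, coincides with the joint law of $\{(Z_i,O_i,X_i,Y_i)\}$ under the NPIV-O model $(P^{\ast}_{\epsilon ZXO},f)$. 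Consequently the two $L^{2}(P_{XO})$-risks are equal, and taking the supremum of the left side over $\calC_0$ against the supremum of the right side over $\calC$ yields the claimed inequality. The only delicate point is verifying that the lifted law $P^{\ast}$ really lies in $\calC$ in the exact sense of \Cref{def:appendix_restricted_npivo}; this is what the choice of $\upsilon$ is designed to guarantee, and no additional structure from $\calC$ beyond the Gaussianity of $Y\mid(Z,O)$ and the lower bound $\sigma^{2}\geq\sigma_{0}^{2}$ is used.
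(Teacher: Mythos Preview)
Your proposal is correct and is precisely the Chen--Reiss data-augmentation argument that the paper defers to: the paper does not give its own proof but simply states that the lemma is \citet[Lemma~1]{chen2007large} ``by augmenting relevant variables to include observed confounders $O$,'' and your write-up carries out exactly that augmentation. The only point worth tightening is that you do not need to \emph{construct} the lift $P^{\ast}$ and then separately argue it lies in $\calC$; by the very definition of $\calC_0$ (Definition~\ref{def:npiro_model_class}), membership of $(P_{Z'O'},\sigma,f)$ in $\calC_0$ already guarantees the existence of a matching $(P_{\epsilon ZXO},f)\in\calC$ with the prescribed $P_{ZO}$, $P_{X\mid Z,O}$, and Gaussian $Y\mid Z,O$ law, so the randomization step can be phrased directly in terms of that guaranteed element.
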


In this section, we provide a lower bound for the NPIR-O model class defined in \Cref{def:npiro_model_class}, which by the above discussion implies a lower bound for the (restricted) NPIV-O model class defined in \Cref{def:appendix_restricted_npivo}.

\subsection{The Lower Bound for NPIR-O Model}
\label{sec:NPIR_lower_appendix}

\underline{\emph{Step One.}}
We take $\mathfrak{m}$ to be the smallest even integer such that $\mathfrak{m} > s_x \vee s_o$.
To help us construct $f_{\bv}$, we need to introduce several functions.
We define
\begin{align}
\label{eq:Bspline_main}
    M_{\mathfrak{K}, \boldsymbol{-\frac{\mathfrak{m}}{2}}}(\bx) := \prod_{i=1}^{d_x}\iota_\mathfrak{m}\left(2^{\left\lfloor \mathfrak{K}\frac{\underline{s}}{s_x}\right\rfloor}x_i + \frac{\mathfrak{m}}{2} \right), \quad M_{\mathfrak{K},\bl_o}(\bo) := \prod_{j=1}^{d_o}\iota_\mathfrak{m}\left(2^{\left\lfloor \mathfrak{K}\frac{\underline{s}}{s_o}\right\rfloor}o_i - \ell_{o,j}\right).
\end{align}
Define
\begin{align}\label{eq:calL}
    \mathcal{L} := \left\{(\bl_x,\bl_o): \bl_x \in \left\{0,1,\dots, \left\lfloor \frac{0.8\pi}{\zeta}2^{\frac{\mathfrak{K}\underline{s}}{s_x}}\right\rfloor\right\}^{d_x},  \bl_o \in \left(m\mathbb{Z}\cap \left\{1, \dots, 2^{\left\lfloor \mathfrak{K}\frac{\underline{s}}{s_o}\right\rfloor}\right\}\right)^{d_o}\right\},
\end{align}
then we can compute the size of $\calL$ as 
\begin{align}\label{eq:size_L}
    |\calL| \asymp \zeta^{-d_x}2^{\mathfrak{K}\underline{s}\left(\frac{d_x}{s_x} + \frac{d_o}{s_o}\right)}
\end{align}
For $(\bl_x, \bl_o) \in \calL$, define $\one_{\bl_x}$ as an indicator function of the set
\begin{align}
\label{eq:I_bl_x_defi}
    I_{\bl_x} := \bigtimes_{j=1}^{d_x}\left[1.1\pi + \zeta \ell_{x,j}2^{-\mathfrak{K}\frac{\underline{s}}{s_x}}, 1.1\pi + (\zeta \ell_{x,j} + 1)2^{-\mathfrak{K}\frac{\underline{s}}{s_x}}\right],
\end{align}
Next, we define
\begin{align}
\label{eq:app_omega_klx_defi}
    \Omega_{\mathfrak{K},\bl_x}(\bx) := \left(M_{0,\boldsymbol{-\frac{\mathfrak{m}}{2}}} \ast \calF^{-1}[\one_{\bl_x}]\right)\left(2^{\frac{\mathfrak{K}\underline{s}}{s_x}}\bx\right), \quad \Omega_{\mathfrak{K}(\bl_x,\bl_o)}(\bx,\bo) &:= \Omega_{\mathfrak{K},\bl_x}(\bx)M_{\mathfrak{K},\bl_o}(\bo). 
\end{align}
Now we are ready to define $f_{\bv}$. Note that a vector $\bv \in \{0,1\}^{|\calL|}$ canonically associates to each point $(\bl_x,\bl_o)$ a value $\beta_{\bv(\bl_x,\bl_o)}\in \{0,1\}$. We define 
\begin{align}
\label{eq: defn_fv}
    f_{\bv} := \epsilon_0 2^{-\mathfrak{K}\underline{s}\left(1 - \frac{d_x}{2s_x}\right)} \sum_{\bl_x,\bl_o}\beta_{\bv(\bl_x,\bl_o)}\Omega_{\mathfrak{K}(\bl_x,\bl_o)},
\end{align}
\steve{where $\epsilon_0 > 0$ is a fixed scaler to be chosen later, to ensure that $\|f_{\bv}\|_{B^{s_x,s_o}_{2,\infty}(\mathbb{R}^{d_x+d_o})}\leq 1$, thus ensuring that $f_{\bv}$ satisfies \Cref{ass:f_ast}.} The function class $\mathfrak{F}:=\{ f_{\bv},  \bv \in \{0,1\}^{|\calL|}\}$. 
For each $f_{\bv}$, consider the joint data generating distribution $P_{ZXOY}$ specified as follows: 
\begin{enumerate}
    \item The marginal distribution $P_{ZO}$ is the product of independent distributions $P_{Z}$ and $P_{O}$, where both are uniform distributions on $[0,1]^{d_z}$ and $[0,1]^{d_o}$.
    \item The marginal distribution $P_{XO}$ is the product of independent distributions $P_{X}$ and $P_{O}$, where $P_{O}$ is the uniform distribution on $[0,1]^{d_o}$, and $P_X$ is supported on $[-1/2,1/2]^{d_x}$ and admits the following density function:
    \begin{align}\label{eq:p_X_bump}
        p_X(\bx)\propto \prod_{i=1}^{d_x} g(x_i), \quad g(x_i) := \exp\left(-\frac{2}{1 - 4x_i^2}\right) \one_{x_i \in [-1/2, 1/2]} .
    \end{align}
    \item The conditional distribution $P_{X\mid Z, O}$ satisfies \Cref{assn: technical} and it induces an operator $T: L^2\left(P_{XO}\right) \rightarrow L^2\left(P_{ZO}\right)$ that satisfies \Cref{ass:T_contractivity}. 
    \item The conditional distribution $P_{Y\mid Z=\bz, O=\bo}$ is a Gaussian distribution $\calN((Tf_{\bv})(\bz,\bo),\sigma^2)$ for any $\bz\in\calZ$ and $\bo\in\calO$. 
\end{enumerate}

\vspace{3mm}
\noindent
\underline{\emph{Step Two.}}
Now we are going to present some properties of the basis $\Omega_{\mathfrak{K}(\bl_x,\bl_o)}$, which will be used later on to prove some properties of $f_{\bv}$. By Young's Convolution Theorem, we have 
\begin{align*}
    \left\|M_{0,\boldsymbol{-\frac{\mathfrak{m}}{2}}}\ast \calF^{-1}[\one_{\bl_{x}}]\right\|_{L^2(\mathbb{R}^{d_x})} \leq \|M_{0,\boldsymbol{-\frac{\mathfrak{m}}{2}}}\|_{L^1(\mathbb{R}^{d_x})}\|\one_{\bl_{x}}\|_{L^2(\mathbb{R}^{d_x})} < \infty,
\end{align*}
hence its Fourier transform is well-defined. We have
\begin{align}
    \calF[\Omega_{\mathfrak{K},\bl_x}](\bomega_x) &= 2^{-\mathfrak{K}\frac{\underline{s}d_x}{s_x}}\calF\left[M_{0,\boldsymbol{-\frac{\mathfrak{m}}{2}}}\ast \calF^{-1}[\one_{\bl_{x}}]\right]\left(2^{-\frac{\mathfrak{K}\underline{s}}{s_x}}\bomega_x\right) \nonumber\\
    &= 2^{-\mathfrak{K}\frac{\underline{s}d_x}{s_x}} \cdot \calF\left[M_{0,\boldsymbol{-\frac{\mathfrak{m}}{2}}}\right] \left(2^{-\frac{\mathfrak{K}\underline{s}}{s_x}}\bomega_x\right)  \cdot \one_{\bl_x}\left(2^{-\frac{\mathfrak{K}\underline{s}}{s_x}}\bomega_x\right) \nonumber \\
    &= \calF\left[M_{\mathfrak{K},\boldsymbol{-\frac{\mathfrak{m}}{2}}}\right] \left(\bomega_x\right)  \cdot \one_{\bl_x}\left(2^{-\frac{\mathfrak{K}\underline{s}}{s_x}}\bomega_x\right)
    \label{eq:fourier_omega_k_lx}.
\end{align}
Also, since $\calF[\iota_\mathfrak{m}](\omega)= \exp{(- \mathfrak{m} i \omega / 2)} \cdot \sin (\omega / 2)^{\mathfrak{m}} \cdot (\omega / 2)^{-\mathfrak{m}}$, we have 
\begin{align}\label{eq:fourier_M00}
    \calF\left[M_{0,\boldsymbol{-\frac{\mathfrak{m}}{2}}}\right](\bomega_x) = \prod_{i=1}^{d_x} \frac{\sin (\omega_{i,x} / 2)^{\mathfrak{m}}}{(\omega_{i,x} / 2)^{\mathfrak{m}}} .
\end{align}
Now we can see that we pick the location vector to be $-\boldsymbol{\frac{\mathfrak{m}}{2}}$ such that the Fourier transform of $M_{0,\boldsymbol{-\frac{\mathfrak{m}}{2}}}$ is a real valued function. Note that since $\zeta\geq 1$, we have $(\zeta \ell_{x,j} + 1)2^{-\mathfrak{K}\frac{\underline{s}}{s_x}} \leq \zeta (\ell_{x,j}+1)2^{-\mathfrak{K}\frac{\underline{s}}{s_x}}$ for $j \in \{1, \ldots, d_x \}$, hence $I_{\bl_x}\cap I_{\bl_x'} = \emptyset$ if $\bl_x \neq \bl_x'$ which means that the support of $\calF[\Omega_{\mathfrak{K},\bl_x}]$ is disjoint for $\bl_x \neq \bl_x'$.
Also, note that 
\begin{align*}
    \mathrm{supp}(M_{\mathfrak{K},\bl_o}) = \bigtimes_{j=1}^{d_o}[ \ell_{o,j} 2^{-\mathfrak{K}\frac{\underline{s}}{s_o}}, (\ell_{o,j} + \mathfrak{m})2^{-\mathfrak{K}\frac{\underline{s}}{s_o}}]
\end{align*}
and $ \ell_{o,j}$ are multiples of $\mathfrak{m}$ by definition of $\calL$ in Eq.~\eqref{eq:calL}, we have $\mathrm{supp}(M_{\mathfrak{K},\bl_o})\cap \mathrm{supp}(M_{k,\bl_o'}) = \emptyset $ for any $\bl_o \neq \bl_o^\prime$. 

\vspace{3mm}
\noindent
\underline{\emph{Step Three.}}
In this step, we are going to prove that $f_{\bv}\in B_{2,\infty}^{s_x,s_o}(\mathbb{R}^{d_x+d_o})$ and its Besov norm is bounded by $\epsilon_0$.
We have, $(\forall\bo\in \calO)$, 
\begin{align}\label{eq:fourier_f_v}
    &\calF[f_{\bv}(\cdot,\bo)](\bomega_x) = \epsilon_0 2^{-\mathfrak{K}\underline{s}\left(1 - \frac{d_x}{2s_x}\right)} \sum_{\bl_x,\bl_o}\beta_{\bv(\bl_x,\bl_o)}\calF[\Omega_{\mathfrak{K}(\bl_x,\bl_o)}(\cdot, \bo)](\bomega_x) \nonumber \\ 
    &= \epsilon_0 2^{-\mathfrak{K}\underline{s}\left(1 - \frac{d_x}{2s_x}\right)} \sum_{\bl_x,\bl_o}\beta_{\bv(\bl_x,\bl_o)}\calF[\Omega_{\mathfrak{K},\bl_x}](\bomega_x) \cdot M_{\mathfrak{K},\bl_o}(\bo) \nonumber \\
    &= \epsilon_0 2^{-\mathfrak{K}\underline{s}\left(1 - \frac{d_x}{2s_x}\right)} \sum_{\bl_x, \bl_o} \beta_{\bv(\bl_x,\bl_o)} 2^{-\mathfrak{K}\frac{\underline{s}d_x}{s_x}}\calF[M_{0,\boldsymbol{-\frac{\mathfrak{m}}{2}}}] \left(2^{-\mathfrak{K}\frac{\underline{s}}{s_x}}\bomega_{x}\right) \cdot \one_{\bl_x}\left(2^{-\mathfrak{K}\frac{\underline{s}}{s_x}}\bomega_{x}\right) \cdot M_{\mathfrak{K},\bl_o}(\bo) \nonumber \\
    &= \epsilon_0 2^{-\mathfrak{K}\underline{s}\left(1 - \frac{d_x}{2s_x}\right)} \sum_{\bl_o} \calF[M_{\mathfrak{K},\boldsymbol{-\frac{\mathfrak{m}}{2}}}](\bomega_x) \cdot M_{\mathfrak{K},\bl_o}(\bo) \cdot \left(\sum_{\bl_x}\beta_{\bv(\bl_x,\bl_o)}\one_{\bl_x}\left(2^{-\mathfrak{K}\frac{\underline{s}}{s_x}}\bomega_{x}\right) \right).
\end{align}   
Note that, for fixed $\bl_o$,  $\bomega_x \mapsto \sum_{\bl_x}\beta_{\bv(\bl_x,\bl_o)}\one_{\bl_x}(2^{-\mathfrak{K}\frac{\underline{s}}{s_x}}\bomega_{x})$ is the indicator function on
\begin{align*}
    \bigcup_{\bl_x: \beta_{\bv(\bl_x,\bl_o)}>0} \bigtimes_{j=1}^{d_x}\left[1.1\pi 2^{\mathfrak{K}\frac{\underline{s}}{s_x}} + \zeta \ell_{x,j}, 1.1\pi 2^{\mathfrak{K}\frac{\underline{s}}{s_x}} + (\zeta \ell_{x,j} + 1)\right] ,
\end{align*}
The above observation as applied to the right hand side of Eq. \eqref{eq:fourier_f_v} means that we can apply \Cref{lem:new_besov_mask} to obtain
\begin{align}
\label{eq:f_bv_besov_norm_estimate}
    \|f_{\bv}\|_{B^{s_x,s_o}_{2,q}(\mathbb{R}^{d_x+d_o})} &\lesssim \epsilon_0 2^{-\mathfrak{K}\underline{s}\left(1 - \frac{d_x}{2s_x}\right)} \cdot  2^{\mathfrak{K}\underline{s}\left(1 - \frac{d_x}{2s_x}\right)} = \epsilon_0 .
\end{align}
\steve{Hence we can choose $\epsilon_0$ to be a small scalar such that $\|f_{\bv}\|_{B^{s_x,s_o}_{2,q}(\mathbb{R}^{d_x+d_o})} \leq 1$. The rest of the conditions that $f_{\bv} \in L^\infty(\R^{d_x+d_o}) \cap L^1(\R^{d_x+d_o}) \cap C^0(\R^{d_x+d_o})$ are all trivial to verify. Therefore, $f_{\bv}$ satisfies \Cref{ass:f_ast}. }

\vspace{3mm}
\noindent
\underline{\emph{Step Four.}}
By the Gilbert-Varshamov Bound \cite{Tsybakov}[Lemma 2.9], for $|\calL|\geq 8$, there exists a subset $V_\mathfrak{K}$ of $\{0,1\}^{|\calL|}$ such that $\boldsymbol{0}\in V_\mathfrak{K}$ and
\begin{align}
\label{eq: gv_beta_sep}
    \sum_{\bl_x,\bl_o}\left|\beta_{\bv (\bl_x,\bl_o)} - \beta_{\bv' (\bl_x,\bl_o)}\right|^2 \geq \frac{|\calL|}{8}
\end{align}
for $\bv\neq \bv'\in V_\mathfrak{K}$ and $|V_\mathfrak{K}|\geq 2^{\frac{|\calL|}{8}}$. 
Note that since $\sqrt{p_X}$ is an even function over $[-1/2,1/2]^{d_x}$, its Fourier transform $q$ is a real valued function. Define
\begin{align}\label{eq:defi_tilde_q}
    q(\bomega_x) := \calF[\sqrt{p_{X}}](\bomega_x) , \quad
    \Tilde{q}(\bomega_x) := \calF[\sqrt{p_X}](\bomega_x) 1_{[-\zeta/3, \zeta/3]^{d_x}}(\bomega_x).
\end{align}
For coefficients $\alpha_{\bv(\bl_x,\bl_o)} \in \mathbb{R}$, we have
\begin{small}
\begin{align}
    &\left\|\sum_{\bl_x,\bl_o} \alpha_{\bv(\bl_x,\bl_o)}\Omega_{\mathfrak{K}(\bl_x,\bl_o)}\right\|^2_{L^2(P_{XO})}\nonumber\\
    =& \int_{[0,1]^{d_o}}\int_{\mathbb{R}^{d_x}}\sum_{\substack{\bl_x,\bl_o\nonumber\\ \Tilde{\bl}_x, \Tilde{\bl_o}}} \alpha_{\bv(\bl_x,\bl_o)}\alpha_{\bv(\Tilde{\bl_x}, \Tilde{\bl}_o)} \Omega_{\mathfrak{K},\bl_x}(\bx)\Omega_{k\Tilde{\bl}_x}(\bx)M_{\mathfrak{K},\bl_o}(\bo)M_{k\Tilde{\bl}_o}(\bo)p_X(\bx)\;\mathrm{d}\bx\;\mathrm{d}\bo\nonumber\\
    = & \int_{\mathbb{R}^{d_x}}\sum_{\substack{\bl_x,\bl_o\\ \Tilde{\bl}_x, \Tilde{\bl_o}}} \alpha_{\bv(\bl_x,\bl_o)}\alpha_{\bv(\Tilde{\bl_x}, \Tilde{\bl}_o)} \Omega_{\mathfrak{K},\bl_x}(\bx)\Omega_{k\Tilde{\bl}_x}(\bx) p_X(\bx)\left(\int_{[0,1]^{d_o}}M_{\mathfrak{K},\bl_o}(\bo)M_{k\Tilde{\bl}_o}(\bo)\;\mathrm{d}\bo\right)\;\mathrm{d}\bx\nonumber\\
    \stackrel{(a)}{=} & \sum_{\bl_o}\|M_{\mathfrak{K},\bl_o}\|^2_{L^2(\mathbb{R}^{d_o})}\int_{\mathbb{R}^{d_x}} \left(\sum_{\bl_x}\alpha_{\bv(\bl_x,\bl_o)}\Omega_{\mathfrak{K},\bl_x}(\bx)\sqrt{p(\bx)}\right)^2\;\mathrm{d}\bx\nonumber\\
    \stackrel{(b)}{=} & \sum_{\bl_o}\|M_{\mathfrak{K},\bl_o}\|^2_{L^2(\mathbb{R}^{d_o})}\int_{\mathbb{R}^{d_x}} \left(\sum_{\bl_x}\alpha_{\bv(\bl_x,\bl_o)}\calF\left[\Omega_{\mathfrak{K},\bl_x}\sqrt{p}\right](\bomega_x)\right)^2\;\mathrm{d}\bomega_x\nonumber\\
    =&2^{-\frac{\mathfrak{K}\underline{s}d_o}{s_o}}\|M_{0\boldsymbol{0}}\|^2_{L^2(\mathbb{R}^{d_o})}\int_{\mathbb{R}^{d_x}} \sum_{\bl_o}\left(\sum_{\bl_x}\alpha_{\bv(\bl_x,\bl_o)}\calF\left[\Omega_{\mathfrak{K},\bl_x}\sqrt{p}\right](\bomega_x)\right)^2\;\mathrm{d}\bomega_x .\label{eq: gen_coeff_prob_l2}
\end{align}
\end{small}
where $(a)$ follows from the fact that $M_{\mathfrak{K},\bl_o}$ have pairwise disjoint support verified in \underline{\emph{Step Two}} above; and $(b)$ follows from the Plancherel's Theorem. By Eq. \eqref{eq: gen_coeff_prob_l2}, we have
\begin{align}\label{eq:L2_norm_A_B}
    &\quad \|f_{\bv} - f_{\bv'}\|_{L^2(P_{XO})}^2 \nonumber \\
    &= \epsilon_0 2^{-2\mathfrak{K}\underline{s}\left(1 - \frac{d_x}{2s_x}\right)} 2^{-\frac{\mathfrak{K}\underline{s}d_o}{s_o}}\|M_{0\boldsymbol{0}}\|_{L^2(\mathbb{R}^{d_o})}^2 \nonumber \\
    &\qquad \cdot \int_{\mathbb{R}^{d_x}} \sum_{\bl_o}\left(\sum_{\bl_x}\left(\beta_{\bv(\bl_x,\bl_o)} - \beta_{\bv'(\bl_x,\bl_o)}\right)\calF\left[\Omega_{\mathfrak{K},\bl_x}\sqrt{p_X}\right](\bomega_x)\right)^2\;\mathrm{d}\bomega_x \nonumber \\
    &\geq \epsilon_0 2^{-2\mathfrak{K}\underline{s}\left(1 - \frac{d_x}{2s_x}\right)} 2^{-\frac{\mathfrak{K}\underline{s}d_o}{s_o}}\|M_{0\boldsymbol{0}}\|_{L^2(\mathbb{R}^{d_o})}^2\left((A)^2/2 - (B)^2\right),
\end{align}
where the last step follows from the reverse triangular inequality $(a+b)^2 \geq \frac{a^2}{2} - b^2$, and we define
\begin{align*}
    (A)^2&:= \sum_{\bl_o}\left\|\sum_{\bl_x}\left(\beta_{\bv(\bl_x,\bl_o)} - \beta_{\bv'(\bl_x,\bl_o)}\right) \calF[\Omega_{\mathfrak{K},\bl_x}]  \ast \Tilde{q}\right\|_{L^2(\mathbb{R}^{d_x})}^2\\
    (B)^2&:=\sum_{\bl_o}\left\|\sum_{\bl_x}\left(\beta_{\bv(\bl_x,\bl_o)} - \beta_{\bv'(\bl_x,\bl_o)}\right)  \calF[\Omega_{\mathfrak{K},\bl_x}]   \ast (q-\Tilde{q})\right\|_{L^2(\mathbb{R}^{d_x})}^2.
\end{align*}
In order to lower bound Eq.~\eqref{eq:L2_norm_A_B}, we need to lower bound $(A)^2$ and upper bound $(B)^2$.
First, we are going to lower bound $(A)^2$.
From Eq.~\eqref{eq:fourier_omega_k_lx}, we know that the support of $\calF[\Omega_{\mathfrak{K},\bl_x}]$ is 
\begin{align}\label{eq:tilde_I}
    \tilde{I_{\bl_x}} := \bigtimes_{j=1}^{d_x}\left[1.1\pi 2^{\mathfrak{K} \frac{\underline{s}}{s_x}} + \zeta \ell_{x,j}, 1.1\pi 2^{\mathfrak{K} \frac{\underline{s}}{s_x}} + (\zeta \ell_{x,j} + 1)\right].
\end{align}
Since the support of $\Tilde{q}$ is $[-\zeta/3, \zeta/3]^{d_x}$, by the standard fact that $\mathrm{supp}(f\ast g)\subseteq \mathrm{supp}(f) + \mathrm{supp}(g)$, we have
\begin{align}\label{eq:support_F_conv_tilde_q}
    \mathrm{supp}\left( \calF[\Omega_{\mathfrak{K},\bl_x}]  \ast \Tilde{q} \right) &\subseteq \bigtimes_{j=1}^{d_x}\left[1.1\pi 2^{\mathfrak{K} \frac{\underline{s}}{s_x}} + \zeta (\ell_{x,j} - 1/3), 1.1\pi 2^{\mathfrak{K} \frac{\underline{s}}{s_x}} + (\zeta (\ell_{x,j} + 1/3) + 1)\right] \nonumber \\
    &=: \Lambda_\mathfrak{K}
\end{align}
Note that for $\zeta \geq 3$, we have $\zeta(\ell_j + 1/3) + 1 \leq \zeta(\ell_j + 1 - 1/3)$, hence $\mathrm{supp}\left( \calF[\Omega_{\mathfrak{K},\bl_x}]  \ast \Tilde{q} \right)$ is pairwise disjoint with respect to different $\bl_x$. 
Hence we obtain that
\begin{align*}
    (A)^2 &= \sum_{\bl_o}\sum_{\bl_x}(\beta_{\bv(\bl_x,\bl_o)} - \beta_{\bv'(\bl_x,\bl_o)})^2 \left\| \calF[\Omega_{\mathfrak{K},\bl_x}]   \ast \Tilde{q}\right\|^2_{L^2(\mathbb{R}^{d_x})}.
\end{align*}
Notice that
\begin{align}\label{eq:F_conv_tilde_q_1}
    &\quad \left\|\calF[\Omega_{\mathfrak{K},\bl_x}]   \ast \Tilde{q}\right\|_{L^2(\mathbb{R}^{d_x})}^2 \nonumber \\
    &= \int_{\Lambda_\mathfrak{K}} \left|\int_{\mathbb{R}^{d_x}} \calF[\Omega_{\mathfrak{K},\bl_x}](\bomega_x^\prime) \tilde{q}(\bomega_x - \bomega_x^\prime) \;\mathrm{d}\bomega_x^\prime\right|^2 \;\mathrm{d}\bomega_x \nonumber \\
    &= 2^{-\frac{2\mathfrak{K}\underline{s}d_x}{s_x}} \int_{\Lambda_\mathfrak{K}} \left|\int_{\mathbb{R}^{d_x}} \calF[M_{0,\boldsymbol{-\frac{\mathfrak{m}}{2}}}]\left(2^{-\frac{\mathfrak{K}\underline{s}}{s_x}} \bomega_x^\prime \right) \cdot \one_{\bl_x} \left(2^{-\frac{\mathfrak{K}\underline{s}}{s_x}} \bomega_x^\prime \right) \cdot \tilde{q}(\bomega_x - \bomega_x^\prime) \;\mathrm{d}\bomega_x^\prime\right|^2 \;\mathrm{d}\bomega_x \nonumber \\
    &\stackrel{(i)}{\geq} 2^{-\frac{2\mathfrak{K}\underline{s}d_x}{s_x}} \int_{\tilde{I_{\bl_x}}} \left|\int_{\mathbb{R}^{d_x}} \calF[M_{0,\boldsymbol{-\frac{\mathfrak{m}}{2}}}]\left(2^{-\frac{\mathfrak{K}\underline{s}}{s_x}} \bomega_x^\prime \right) \cdot \one_{\bl_x} \left(2^{-\frac{\mathfrak{K}\underline{s}}{s_x}} \bomega_x^\prime \right) \cdot \tilde{q}(\bomega_x - \bomega_x^\prime) \;\mathrm{d}\bomega_x^\prime\right|^2 \;\mathrm{d}\bomega_x \nonumber \\
    &\stackrel{(ii)}{=} 2^{-\frac{2\mathfrak{K}\underline{s}d_x}{s_x}} \int_{\tilde{I_{\bl_x}}} \left|\int_{\tilde{I_{\bl_x}}} \calF[M_{0,\boldsymbol{-\frac{\mathfrak{m}}{2}}}]\left(2^{-\frac{\mathfrak{K}\underline{s}}{s_x}} \bomega_x^\prime \right) \cdot \tilde{q}(\bomega_x - \bomega_x^\prime) \;\mathrm{d}\bomega_x^\prime\right|^2 \;\mathrm{d}\bomega_x .
\end{align}
$(i)$ above holds because $\tilde{I_{\bl_x}} \subset \Lambda_\mathfrak{K}$; and $(ii)$ holds because of the indicator function.
Notice, for any $\bomega_x,  \bomega_x^\prime \in \tilde{I_{\bl_x}}$, we have for $j \in \{1, \ldots, d_x\}$, $
1 \geq \bomega_{x, j} - \bomega^\prime_{x, j} \geq -1$. 
Since for any $-1 \leq t \leq 1$, $i \in \{1, \ldots, d_x\}$ and $g$ defined in Eq.~\eqref{eq:p_X_bump}, there is $\calF[\sqrt{g}](t) = \int_{-1/2}^{1/2} \sqrt{g}(x_i) \exp(-x_i t)dx_i = \int_{-1/2}^{1/2} \sqrt{g}(x_i) \cos( x_i t) dx_i > 0$.
So we have, for $\zeta > 3$,
\begin{align*}
    \tilde{q}(\bomega_x - \bomega_x^\prime) = q(\bomega_x - \bomega_x^\prime)= \calF[\sqrt{p_X}](\bomega_x - \bomega_x^\prime) =\prod_{i=1}^{d_x} \calF[\sqrt{g}](\bomega_{x,i} - \bomega^\prime_{x,i}) > 0.
\end{align*}
Since both $\calF[M_{0,\boldsymbol{-\frac{\mathfrak{m}}{2}}}] (2^{-\frac{\mathfrak{K}\underline{s}}{s_x}} \bomega_x^\prime)$ and $\tilde{q}(\bomega_x - \bomega_x^\prime)$ are positive and real, we continue from Eq.~\eqref{eq:F_conv_tilde_q_1} to have
\begin{align*}
    \geq 2^{-\frac{2 \mathfrak{K}\underline{s}d_x}{s_x}} \left( \inf_{\bomega_x \in I_{\bl_x}} \calF[M_{0,\boldsymbol{-\frac{\mathfrak{m}}{2}}}] (\bomega_x) \right)^2 \cdot \underbrace{\int_{\tilde{I_{\bl_x}}} \left( \int_{\tilde{I_{\bl_x}}} \tilde{q}\left( \bomega_x - \bomega_x^{\prime} \right) \;\mathrm{d}\bomega_x^{\prime}  \right)^2 \;\mathrm{d}\bomega_x}_{(\ast)} .
\end{align*}
Notice that in the integration in the term $(\ast)$ above, only the difference of $\bomega_x - \bomega_x^{\prime}$ show up. 
So we can obtain
\begin{align*}
    (\ast) &= \int_{[0,1]^{d_x}} \left( \int_{[0,1]^{d_x}} \tilde{q}\left( \bomega_x - \bomega_x^{\prime} \right) \;\mathrm{d}\bomega_x^{\prime}  \right)^2 \;\mathrm{d}\bomega_x \\
    &= \prod_{i=1}^{d_x} \int_{0}^{1} \left( \int_0^1  \tilde{q}_i\left( \bomega_{x, i} - \bomega_{x, i}^{\prime} \right) \;\mathrm{d}\bomega_{x, i}^{\prime}  \right)^2 \;\mathrm{d}\bomega_{x, i} \\
    &= \prod_{i=1}^{d_x} \int_{0}^{1} \left( \int_{\bomega_{x, i}-1}^{\bomega_{x, i}} \tilde{q}_i\left( \bomega_{x, i}^{\prime\prime} \right) \;\mathrm{d}\bomega_{x, i}^{\prime\prime}  \right)^2 \;\mathrm{d}\bomega_{x, i} \\
    &= \prod_{i=1}^{d_x} \int_{0}^{1} \left( \int_{\bomega_{x, i}-1}^{\bomega_{x, i}} q_i\left( \bomega_{x, i}^{\prime\prime} \right) \;\mathrm{d}\bomega_{x, i}^{\prime\prime}  \right)^2 \;\mathrm{d}\bomega_{x, i}.
\end{align*}
The second last equality holds by change of variables, and the last equality holds because $\tilde{q}_i(\omega)=q_i(\omega)$ for $\omega \in [-\zeta/3, \zeta/3]$. 
So $(\ast)$ is a strictly positive constant independent of $\zeta, k, n$. Plugging it back to above, we obtain
\begin{align*}
    \left\|\calF[\Omega_{\mathfrak{K},\bl_x}] \ast \Tilde{q}\right\|_{L^2(\mathbb{R}^{d_x})}^2 \geq 2^{-\frac{2 \mathfrak{K}\underline{s}d_x}{s_x}} \left( \inf_{\bomega_x \in I_{\bl_x}} \calF[M_{0,\boldsymbol{-\frac{\mathfrak{m}}{2}}}](\bomega_x) \right)^2 \cdot (\ast). 
\end{align*}
Notice that, since $I_{\bl_x}\subseteq [1.1\pi, 1.95\pi]^{d_x}$ and we use Eq.~\eqref{eq:fourier_M00} to obtain, since $M_{0,\boldsymbol{-\frac{\mathfrak{m}}{2}}}$ is an even function and thus has real-valued Fourier transform
\begin{align*}
    \inf_{\bomega_x \in I_{\bl_x}} \left| \calF[M_{0,\boldsymbol{-\frac{\mathfrak{m}}{2}}}](\bomega_x) \right|^2 = \inf_{\bomega_x \in I_{\bl_x}} \left| \prod_{i=1}^{d_x} \frac{\sin(\bomega_{x,i}/2)^{\mathfrak{m}}}{ (\bomega_{x,i}/2)^{\mathfrak{m}}} \right|^2 \geq \inf_{\omega\in [1.1\pi, 1.95\pi]} \left( \frac{\sin(\omega/2)}{\omega/2} \right)^{2\mathfrak{m}d_x} > 0.
\end{align*}
Define the following positive constant, independent of $\zeta$, $k$ or $n$,
\begin{align}
\label{eq:lower_bound_c_chi_defi}
    C_{\chi} :=  (\ast) \cdot \inf_{\omega\in [1.1\pi, 1.95\pi]} \left( \frac{\sin(\omega/2)}{\omega/2} \right)^{2\mathfrak{m}d_x} .
\end{align}
We thus have
\begin{align}
    \label{eq:q_tilde_conv_lb}
    \left\|\calF[\Omega_{\mathfrak{K},\bl_x}] \ast \Tilde{q}\right\|_{L^2(\mathbb{R}^{d_x})}^2 \geq C_{\chi} 2^{-2\mathfrak{K}\underline{s}\frac{d_x}{s_x}}. 
\end{align}
Therefore,
\begin{align*}
    (A)^2 &\geq \sum_{\bl_o,\bl_x}(\beta_{\bv(\bl_x,\bl_o)} - \beta_{\bv'(\bl_x',\bl_o')})^2 C_{\chi}2^{-\frac{2\mathfrak{K}\underline{s}d_x}{s_x}} \stackrel{(a)}{\geq} \frac{C_{\chi}}{8}|\calL| 2^{-\frac{2\mathfrak{K}\underline{s}d_x}{s_x}} \stackrel{(b)}{\gtrsim} \zeta^{-d_x}2^{-\mathfrak{K}\underline{s}\left(\frac{d_x}{s_x} - \frac{d_o}{s_o}\right)}.
\end{align*}
where $(a)$ follows from Eq. \eqref{eq: gv_beta_sep} and $(b)$ follows from Eq.~\eqref{eq:size_L}.

Next, we are going to upper bound $(B)$.
We have
\begin{align}
    (B)^2 &\leq \sum_{\bl_o}\left(\sum_{\bl_x}\left\|\calF[\Omega_{\mathfrak{K},\bl_x}]\ast (q-\Tilde{q})\right\|_{L^2(\mathbb{R}^{d_x})}\right)^2\nonumber\\
    &\stackrel{(a)}{\leq} \|q - \Tilde{q}\|^2_{L^1(\mathbb{R}^{d_x})} \sum_{\bl_o}\left(\sum_{\bl_x} \|\calF[\Omega_{\mathfrak{K},\bl_x}]\|_{L^2(\mathbb{R}^{d_x})}\right)^2 \nonumber\\
    &\stackrel{(b)}{\leq} \|q - \Tilde{q}\|^2_{L^1(\mathbb{R}^{d_x})} \sum_{\bl_o}\left(\sum_{\bl_x} \|M_{\mathfrak{K},\boldsymbol{-\frac{\mathfrak{m}}{2}}}\|_{L^2(\mathbb{R}^{d_x})}\right)^2 \nonumber\\
    &\stackrel{(c)}{=} \|q - \Tilde{q}\|^2_{L^1(\mathbb{R}^{d_x})} \sum_{\bl_o}\left(\sum_{\bl_x} 2^{-\mathfrak{K}\frac{\underline{s}d_x}{2s_x}} \|M_{0,\boldsymbol{-\frac{\mathfrak{m}}{2}}}\|_{L^2(\mathbb{R}^{d_x})}\right)^2 \nonumber\\
    &=  \|q - \Tilde{q}\|^2_{L^1(\mathbb{R}^{d_x})} \sum_{\bl_o}\left(\sum_{\bl_x}1 \right)^2  2^{-\mathfrak{K}\frac{\underline{s}d_x}{s_x}}\|M_{0,\boldsymbol{-\frac{\mathfrak{m}}{2}}}\|_{L^2(\mathbb{R}^{d_x})}^2\nonumber\\
    &\lesssim  \|q - \Tilde{q}\|^2_{L^1(\mathbb{R}^{d_x})} \zeta^{-2d_x}2^{\mathfrak{K}\underline{s}\left(\frac{2d_x}{s_x} + \frac{d_o}{s_o}\right)} 2^{-\mathfrak{K}\frac{\underline{s}d_x}{s_x}}\|M_{0,\boldsymbol{-\frac{\mathfrak{m}}{2}}}\|_{L^2(\mathbb{R}^{d_x})}^2\nonumber\\
    &\stackrel{(d)}{\leq} \|q - \Tilde{q}\|_{L^1(\mathbb{R}^{d_x})}^2 \zeta^{-\frac{1}{2}d_x} 2^{\frac{\mathfrak{K}\underline{s}d_o}{s_o}}2^{\frac{\mathfrak{K}\underline{s}d_x}{s_x}} \|M_{0,\boldsymbol{-\frac{\mathfrak{m}}{2}}}\|_{L^2(\mathbb{R}^{d_x})}^2 \label{eq: b_upper_bound}
\end{align}
In the above derivations, $(a)$ holds by Young's convolution inequality, $(b)$ holds by Plancherel's Theorem and Eq.~\eqref{eq:fourier_omega_k_lx}, $(c)$ holds by the change of variables $\bx\leftarrow 2^{-\frac{\mathfrak{K}\underline{s}}{s_x}}\bx$, and $(d)$ holds for $\zeta\geq 1$. 
We have
\begin{align}
    \|q - \Tilde{q}\|_{L^1(\mathbb{R}^{d_x})} &= \int_{\mathbb{R}^{d_x}} \left|q(\bomega_x) - \Tilde{q}(\bomega_x)\right| \;\mathrm{d}\bomega_x \nonumber\\
    &\stackrel{(i)}{=} 2^{d_x} \prod_{i=1}^{d_x} \int_{\zeta/3}^{\infty} \left|\calF[\sqrt{g}](\bomega_{x,i}) \right| \;\mathrm{d} \bomega_{x,i}\nonumber \\
    &\stackrel{(ii)}{\asymp} \prod_{i=1}^{d_x} \int_{\zeta/3}^{\infty} \bomega_{x,i}^{-3/4} \exp(-\sqrt{\bomega_{x,i}}) \;\mathrm{d}\bomega_x \nonumber\\
    &\leq \zeta^{-\frac{3}{4}d_x} \prod_{i=1}^{d_x}  \int_{\zeta}^\infty \exp(-\sqrt{\bomega_{x,i}}) \;\mathrm{d}\bomega_{x,i} \nonumber \\
    &= \zeta^{-\frac{3}{4}d_x} 2^{d_x}\left(1+\zeta^{1 / 2}\right)^{d_x} \exp\left(-d_x \zeta^{\frac{1}{2}}\right) \nonumber \\
    &\leq 2^{2d_x} \zeta^{-\frac{1}{4}d_x} \exp\left(-d_x \zeta^{\frac{1}{2}}\right),
    \label{eq:q_qt_diff}
\end{align}
where in $(i)$ we use the definition of $\tilde{q}$ in Eq.~\eqref{eq:defi_tilde_q}; and in $(ii)$
we use the asymptotic decay of the Fourier transform of the bump function~\citep[Section 2]{johnson2015saddle}. 
\begin{align}
\label{eq:bump_function_fourier_decay}
    \left|\calF[\sqrt{g}](\omega) \right| \asymp \omega^{-3/4} \exp(-\sqrt{\omega}), \quad \omega \gg 1.
\end{align}
Hence, we plug Eq.~\eqref{eq:q_qt_diff} back to Eq.~\eqref{eq: b_upper_bound} and we find that
\begin{align}\label{eq:upper_B}
    (B)^2 &\lesssim \zeta^{-\frac{1}{2}d_x} 2^{\frac{\mathfrak{K}\underline{s}d_o}{s_o} + \frac{\mathfrak{K}\underline{s}d_x}{s_x}}\left(2^{2d_x} \zeta^{-\frac{1}{4}d_x} \exp\left(-d_x \zeta^{\frac{1}{2}}\right)\right)^2 \|M_{0,\boldsymbol{-\frac{\mathfrak{m}}{2}}}\|_{L^2(\mathbb{R}^{d_x})}^2\nonumber\\ &\lesssim 2^{\frac{\mathfrak{K}\underline{s}d_o}{s_o} + \frac{\mathfrak{K}\underline{s}d_x}{s_x}} \zeta^{-d_x} \exp\left(-2 d_x \zeta^{\frac{1}{2}}\right).
\end{align}
Hence in order for $(B)^2\leq (A)^2/4$ to hold, a sufficient condition on $\zeta$ is given by the following inequality, up to some constants,
\begin{align}
    &2^{\frac{\mathfrak{K}\underline{s}d_o}{s_o} + \frac{\mathfrak{K}\underline{s}d_x}{s_x}} \zeta^{-d_x}\exp\left(-2 d_x \zeta^{\frac{1}{2}}\right)  \lesssim \frac{1}{4} \zeta^{-d_x}2^{-\frac{\mathfrak{K}\underline{s}d_x}{s_x}}2^{\frac{\mathfrak{K}\underline{s}d_o}{s_o}}\nonumber\\
    \iff \qquad\qquad & \exp\left(-2 d_x \zeta^{\frac{1}{2}}\right) \lesssim 2^{-\frac{2\mathfrak{K}\underline{s}d_x}{s_x}}. \label{eq:zeta_one}
\end{align}
Hence for sufficiently large $\zeta = \zeta(n)$, such that Eq. \eqref{eq:zeta_one} is satisfied, we have $(A)\geq (B)/2$, thus 
\begin{align}
\label{eq:f_l2_separate}
    \|f_{\bv} - f_{\bv'}\|^2_{L^2(P_{XO})} & \geq \epsilon_0^2 2^{-2\mathfrak{K}\underline{s}\left(1 - \frac{d_x}{2s_x}\right)} 2^{-\frac{\mathfrak{K}\underline{s}d_o}{s_o}}\|M_{0,\boldsymbol{-\frac{\mathfrak{m}}{2}}}\|_{L^2(\mathbb{R}^{d_o})}^2 \frac{1}{4}(A)^2\nonumber\\
    &\gtrsim \epsilon_0^2 2^{-2\mathfrak{K}\underline{s}\left(1 - \frac{d_x}{2s_x}\right)} 2^{-\frac{\mathfrak{K}\underline{s}d_o}{s_o}}\zeta^{-d_x}2^{-\frac{\mathfrak{K}\underline{s}d_x}{s_x}}2^{\frac{\mathfrak{K}\underline{s}d_o}{s_o}}\nonumber\\
    &= \epsilon_0^2 2^{-2\mathfrak{K}\underline{s}}\zeta^{-d_x}.
\end{align}

\vspace{3mm}
\noindent
\underline{\emph{Step Five}}
Recall the distributions $P_{f_\bv}$ defined above, and recall that $\boldsymbol{0}\in V_\mathfrak{K}$ for $V_\mathfrak{K}$ defined in Eq.~\eqref{eq: gv_beta_sep}. 
In this step, we are going to show that, for $P_{f_\bv}^{\otimes n} := P_{f_\bv} \otimes \dots \otimes P_{f_\bv}$ which is a probability distribution over $(\calZ\times \calO \times\mathbb{R})^{n}$, it satisfies
\begin{align*}
    \frac{1}{|V_\mathfrak{K}|}\sum_{\bv\in V_\mathfrak{K}}\operatorname{KL}\left(P_{f_\bv}^{\otimes n}, P_{f_{\boldsymbol{0}}}^{\otimes n}   \right) \lesssim \epsilon_0^2 n  2^{-2\mathfrak{K}\frac{\underline{s}d_x}{s_x} \eta_1 -2\mathfrak{K}\underline{s}} .
\end{align*}
Notice that KL divergence tensorizes over independent copies at each dimension, we have that  $\operatorname{KL}(P_{f_{\boldsymbol{0}}}^{\otimes n} \| P_{f_\bv}^{\otimes n}) = n \operatorname{KL}(P_{f_{\boldsymbol{0}}} \| P_{f_\bv})$, so we are going to study $\operatorname{KL}(P_{f_{\boldsymbol{0}}} \| P_{f_\bv})$ as follows.
\begin{align}\label{eq:single_kl}
    \operatorname{KL}\left( P_{f_\bv}, P_{f_{\boldsymbol{0}}} \right) &= \E_{(\bz,\bo) \sim P_{ZO}}\left[ \operatorname{KL}\left(  P_{f_\bv} \left(\cdot \mid 
    \bz,\bo \right) \right) , P_{f_{\boldsymbol{0}}} \left( \cdot \mid \bz,\bo \right)\right] \nonumber\\
    &\stackrel{(a)}{=} \frac{ \left\|T f_{\bv}\right\|_{L^2\left( P_{ZO} \right)}^2}{2 \sigma^2} \nonumber\\
    &\stackrel{(b)}{\leq} 2^{-2\mathfrak{K}\frac{\underline{s}d_x}{s_x} \eta_1} \frac{ \left\|f_{\bv}\right\|_{L^2(P_{XO})}^2}{2 \sigma^2} 
\end{align}
In the above chain of derivations, 
we use \cite{blanchard2018optimal}[Proposition 6.2] in $(a)$; and in $(b)$ we use the \Cref{ass:T_contractivity} along with the fact from Eq.~\eqref{eq:fourier_f_v} that the support of the Fourier transform of $f_{\bv}$ is indeed in high frequency.
\begin{align*}
    \mathrm{supp}(\calF[f_{\bv}]) = \cup_{\bl_x, \bl_o \in \calL} \tilde{I}_{\bl_x} \subseteq \left[ 1.1 \pi \cdot 2^{\mathfrak{K}\frac{\underline{s}}{s_x}}, 1.9\pi \cdot 2^{\mathfrak{K}\frac{\underline{s}}{s_x}} \right]^{d_x}.
\end{align*}
Next notice that, by Eq. \eqref{eq: gen_coeff_prob_l2}, we have
\begin{small}
\begin{align}\label{eq:f_v_L2}
    &\|f_{\bv}\|^2_{L^2(P_{XO})} \nonumber \\ &\stackrel{(a)}{\leq} \epsilon_0^2 2^{-2\mathfrak{K}\underline{s}\left(1 - \frac{d_x}{2s_x}\right)} 2^{-\frac{\mathfrak{K}\underline{s}d_o}{s_o}}\|M_{0,\boldsymbol{0}}\|^2_{L^2(\mathbb{R}^{d_o})}\int_{\mathbb{R}^{d_x}} \sum_{\bl_o}\left(\sum_{\bl_x}\beta_{\bv(\bl_x,\bl_o)}\calF\left[\Omega_{\mathfrak{K},\bl_x}\sqrt{p}\right](\bomega_x)\right)^2\;\mathrm{d}\bomega_x \nonumber \\
    &\leq \epsilon_0^2 2^{-2\mathfrak{K}\underline{s}\left(1 - \frac{d_x}{2s_x}\right)} 2^{-\frac{\mathfrak{K}\underline{s}d_o}{s_o}}\|M_{0,\boldsymbol{0}}\|^2_{L^2(\mathbb{R}^{d_o})}\int_{\mathbb{R}^{d_x}} \sum_{\bl_o}\left(\sum_{\bl_x} \left| \calF\left[\Omega_{\mathfrak{K},\bl_x}\sqrt{p}\right](\bomega_x) \right| \right)^2\;\mathrm{d}\bomega_x \nonumber \\
    &\stackrel{(b)}{\lesssim} \epsilon_0^2 2^{-2\mathfrak{K}\underline{s}\left(1 - \frac{d_x}{2s_x}\right)} \int_{\mathbb{R}^{d_x}} \left(\sum_{\bl_x} \left| \calF\left[\Omega_{\mathfrak{K},\bl_x}\sqrt{p}\right](\bomega_x) \right| \right)^2\;\mathrm{d}\bomega_x \nonumber \\
    &\leq \epsilon_0^2 2^{-2\mathfrak{K}\underline{s}\left(1 - \frac{d_x}{2s_x}\right)}  \left\|\sum_{\bl_x} \left| \calF[\Omega_{\mathfrak{K},\bl_x}] \ast (q - \Tilde{q} + \Tilde{q}) \right| \right\|^2_{L^2(\mathbb{R}^{d_x})} \nonumber \\
    &\stackrel{(c)}{\leq} \epsilon_0^2 2^{-2\mathfrak{K}\underline{s}\left(1 - \frac{d_x}{2s_x}\right)}  \left(2\left\|\sum_{\bl_x} \left|\calF[\Omega_{\mathfrak{K},\bl_x}]\ast (q-\Tilde{q}) \right| \right\|^2_{L^2(\mathbb{R}^{d_x})} + 2\left\|\sum_{\bl_x} \left| \calF[\Omega_{\mathfrak{K},\bl_x}]\ast \Tilde{q} \right| \right\|^2_{L^2(\mathbb{R}^{d_x})}\right) .
\end{align}
\end{small}
In the above derivations, $(a)$ holds by Eq. \eqref{eq: gen_coeff_prob_l2}, $(b)$ holds by $\sum_{\bl_o}1 \lesssim 2^{\frac{\mathfrak{K}\underline{s}d_o}{s_o}}$, $(c)$ holds by triangular inequality. 
The first term in Eq.~\eqref{eq:f_v_L2} has already been upper bounded in Eq.~\eqref{eq:upper_B}. Hence for sufficiently large $\zeta$ that satisfies Eq.~\eqref{eq:zeta_one}, 
\begin{align*}
    \left\| \sum_{\bl_x} \left| \calF[\Omega_{\mathfrak{K},\bl_x}]\ast (q-\Tilde{q}) \right| \right\|^2_{L^2(\mathbb{R}^{d_x})} &\lesssim 2^{\frac{\mathfrak{K}\underline{s}d_x}{s_x}} \zeta^{-\frac{d_x}{2}} \exp\left(-2 d_x \zeta^{\frac{1}{2}}\right)\\
    &\lesssim 2^{\frac{\mathfrak{K}\underline{s}d_x}{s_x}}2^{-\frac{2\mathfrak{K}\underline{s}d_x}{s_x}} = 2^{-\frac{\mathfrak{K}\underline{s}d_x}{s_x}}.
\end{align*}
The second term in Eq.~\eqref{eq:f_v_L2} can be upper bounded by
\begin{align*}
    \left\|\sum_{\bl_x} \left| \calF[\Omega_{\mathfrak{K},\bl_x}] \ast \Tilde{q} \right| \right\|^2_{L^2(\mathbb{R}^{d_x})} &\stackrel{(a)}{\lesssim} \sum_{\bl_x}\left\|\calF[\Omega_{\mathfrak{K},\bl_x}] \ast \Tilde{q} \right\|^2_{L^2(\mathbb{R}^{d_x})}\\ &\stackrel{(b)}{\leq} \sum_{\bl_x}\left\|\calF[\Omega_{\mathfrak{K},\bl_x}] \right\|^2_{L^2(\mathbb{R}^{d_x})} \cdot \|\Tilde{q}\|^2_{L^1(\mathbb{R}^{d_x})} \\
    &\stackrel{(c)}{\leq} \sum_{\bl_x}\left\|\calF[M_{\mathfrak{K},\boldsymbol{-\frac{\mathfrak{m}}{2}}}] \right\|^2_{L^2\left(\tilde{I}_{\bl_x}\right)} \cdot \|\Tilde{q}\|^2_{L^1(\mathbb{R}^{d_x})}\\ &\leq \left\|\calF[M_{\mathfrak{K},\boldsymbol{-\frac{\mathfrak{m}}{2}}}] \right\|^2_{L^2(\mathbb{R}^{d_x})} \cdot \|\Tilde{q}\|^2_{L^1(\mathbb{R}^{d_x})} \\
    &\stackrel{(d)}{=} \left\|M_{\mathfrak{K},\boldsymbol{-\frac{\mathfrak{m}}{2}}} \right\|^2_{L^2(\mathbb{R}^{d_x})} \cdot \|\Tilde{q}\|^2_{L^1(\mathbb{R}^{d_x})}\\ &\stackrel{(e)}{=} 2^{-\frac{\mathfrak{K}\underline{s}d_x}{s_x}} \left\|M_{0,\boldsymbol{-\frac{\mathfrak{m}}{2}}} \right\|^2_{L^2(\mathbb{R}^{d_x})} \|\Tilde{q}\|^2_{L^1(\mathbb{R}^{d_x})}\\
    &\lesssim 2^{-\frac{\mathfrak{K}\underline{s}d_x}{s_x}}. 
\end{align*}
In the above chain of derivations, $(a)$ holds because $\calF[\Omega_{\mathfrak{K},\bl_x}] \ast \Tilde{q}$ have disjoint support for different $\bl_x$ proved in Eq.~\eqref{eq:support_F_conv_tilde_q}; $(b)$ holds by Young’s convolution inequality; $(c)$ holds by Eq. \eqref{eq:fourier_omega_k_lx}; $(d)$ holds by the fact that $I_{\bl_x}$ are pairwise disjoint and Plancherel's Theorem and $(e)$ holds by change of variables $\bx\leftarrow 2^{-\frac{\mathfrak{K}\underline{s}}{s_x}}\bx$. 
Therefore, combining the upper bound on the above two terms, we obtain
\begin{align}
\label{eq:f_bv_l2}
    \|f_{\bv}\|^2_{L^2(P_{XO})} \lesssim \epsilon_0^2 2^{-2\mathfrak{K}\underline{s}\left(1 - \frac{d_x}{2s_x}\right)}  2^{-\frac{\mathfrak{K}\underline{s}d_x}{s_x}} = \epsilon_0^2 2^{-2\mathfrak{K}\underline{s}} .
\end{align}
We plug the upper bound on $\|f_{\bv}\|^2_{L^2(P_{XO})}$ back to Eq.~\eqref{eq:single_kl} to obtain
\begin{align}\label{eq:kl_upper_bound_small}
    \operatorname{KL}\left(P_{f_\bv}, P_{f_{\boldsymbol{0}}} \right) \lesssim \epsilon_0^2 \sigma^{-2} 2^{-2\mathfrak{K}\frac{\underline{s}d_x}{s_x} \eta_1 -2\mathfrak{K}\underline{s}}.
\end{align}
And thus,
\begin{align}
\label{eq:KL_upper_bound}
    \frac{1}{|V_\mathfrak{K}|}\sum_{\bv\in V_\mathfrak{K}}\operatorname{KL}\left(P_{f_\bv}^{\otimes n}, P_{f_{\boldsymbol{0}}}^{\otimes n} \right) \leq \epsilon_0^2 n\sigma^{-2} 2^{-2\mathfrak{K}\frac{\underline{s}d_x}{s_x} \eta_1 -2\mathfrak{K}\underline{s}}.
\end{align}

\vspace{3mm}
\noindent
\underline{\emph{Step Six}}
In this step, we are going to show that, for any measurable learning method $(\bz_i, \bo_i,y_i)_{i=1}^n =: D \mapsto \hat{f}_D$, there is a distribution $P$ among $P_{f_\bv}$ with $\bv \in V_\mathfrak{K}$ which is difficult to learn for the considered learning method. We define a measurable mapping
\begin{align}
    \Psi:\left([0,1]^{d_z}\times[0,1]^{d_o}\times\R\right)^n \rightarrow V_\mathfrak{K}, \quad \Psi(D) := \underset{\bv \in V_\mathfrak{K}}{\operatorname{argmin}} \left\| \hat{f}_D -f_{\bv} \right\|_{L^2(P_{XO})} .
\end{align}
For $\bv \in V_\mathfrak{K}$ and $D \in \left([0,1]^{d_z}\times[0,1]^{d_o}\times\R\right)^n $ with $\Psi(D) \neq \bv$, we start from \eqref{eq:f_l2_separate} to have
\begin{align*}
    \epsilon_0 2^{-\mathfrak{K}\underline{s}} \zeta^{-\frac{d_x}{2}} &\lesssim \left\|f_{\Psi(D)} - f_{\bv} \right\|_{L^2(P_{XO})}\\ &\leq\left\|f_{\Psi(D)} -\hat{f}_D \right\|_{L^2(P_{XO})} + \left\|\hat{f}_D - f_{\bv} \right\|_{L^2(P_{XO})}\\ &\leq 2\left\|\hat{f}_D - f_{\bv} \right\|_{L^2(P_{XO})} .
\end{align*}
Consequently, for all $\bv \in V_\mathfrak{K}$ we find
\begin{align*}
    P_{f_\bv}^{\otimes n}(D: \Psi(D) \neq \bv) \leq P_{f_\bv}^{\otimes n} \left(D:\left\|\hat{f}_D - f_{\bv} \right\|_{L^2(P_{XO})} \gtrsim \epsilon_0 2^{-\mathfrak{K}\underline{s}} \zeta^{-\frac{d_x}{2}}\right) .
\end{align*}
Therefore, we have
\begin{align}
    &\max _{\bv \in V_\mathfrak{K}} P_{f_\bv}^{\otimes n} \left(D:\left\|\hat{f}_D - f_{\bv} \right\|_{L^2(P_{XO})} \gtrsim \epsilon_0 2^{-\mathfrak{K}\underline{s}} \zeta^{-\frac{d_x}{2}}\right) \nonumber \\ 
    \geq & \max _{\bv \in V_\mathfrak{K}} P_{f_\bv}^{\otimes n}(D: \Psi(D) \neq \bv) \nonumber \\
    \stackrel{(a)}{\gtrsim} & \frac{\sqrt{|V_\mathfrak{K}|}}{\sqrt{|V_\mathfrak{K}|} + 1} \left( 1 - \frac{\epsilon_0^2 n 2^{-2\mathfrak{K}\frac{\underline{s}d_x}{s_x} \eta_1} 2^{-2\mathfrak{K}\underline{s}} }{ \log |V_\mathfrak{K}| } -\frac{1}{2 \log |V_\mathfrak{K}|}\right) \nonumber \\
    \stackrel{(b)}{\geq} & \frac{\sqrt{|V_\mathfrak{K}|}}{\sqrt{|V_\mathfrak{K}|} + 1} \left( 1 - \frac{\epsilon_0^2 n2^{-2\mathfrak{K}\frac{\underline{s}d_x}{s_x} \eta_1}2^{-2\mathfrak{K}\underline{s}} }{\zeta^{-d_x}2^{\mathfrak{K}\frac{\underline{s}d_x}{s_x} + \mathfrak{K}\frac{\underline{s}d_o}{s_o}}}  -\frac{1}{2 \log |V_\mathfrak{K}|}\right) \nonumber \\
    \geq & \frac{\sqrt{|V_\mathfrak{K}|}}{\sqrt{|V_\mathfrak{K}|} + 1} \left( 1 - \epsilon_0^2 \zeta^{d_x} n 
    2^{-\mathfrak{K}\frac{\underline{s}}{s_x}\left(2d_x\eta_1 + 2s_x + d_x + \frac{d_o}{s_o}s_x\right)} -\frac{1}{2 \log |V_\mathfrak{K}|}\right). \label{eq:V_k_lower_bound}
\end{align}
$(a)$ holds by \citet[Theorem 20]{fischer2020sobolev} and Eq. \eqref{eq:KL_upper_bound}. $(b)$ holds by the construction of $V_\mathfrak{K}$ in Eq.~\eqref{eq: gv_beta_sep} with $|V_\mathfrak{K}|\geq 2^{\frac{\calL}{8}}$ so $\log |V_\mathfrak{K}|\gtrsim \zeta^{-d_x}2^{\mathfrak{K}\underline{s}(\frac{d_x}{s_x} + \frac{d_o}{s_o})}$. Next, we choose $\zeta$ and $k$ as the following function of $n$:
\begin{align}
\begin{aligned}
\label{eq:choice_k_lb}
    \zeta = \left( \frac{1}{2} \frac{1}{2d_x\eta_1 + 2s_x + d_x + \frac{d_o}{s_o} s_x} \log n\right)^2, \quad 2^{-\mathfrak{K}\underline{s}} = n^{-\frac{s_x}{2d_x\eta_1 + 2s_x + d_x + \frac{d_o}{s_o} s_x}} \zeta^{-s_x} .
\end{aligned}
\end{align}
The choice of $\zeta$ ensures that Eq. \eqref{eq:zeta_one} is satisfied for sufficiently large $n\geq 1$. 
The choice of $k$ as a function of $n$ ensures that we can proceed from Eq.~\eqref{eq:V_k_lower_bound} to obtain
\begin{align*}
    &\quad \max _{\bv \in V_\mathfrak{K}} P_{f_\bv}^{\otimes n} \left(D: \left\| \hat{f}_D - f_{\bv} \right\|_{L^2(P_{XO})} \geq \epsilon_0 n^{-\frac{s_x}{2s_x + 2d_x\eta_1 + d_x + \frac{d_o}{s_o}s_x}} (\log n)^{-2s_x - d_x} \right) \\
    &\gtrsim \frac{\sqrt{|V_\mathfrak{K}|}}{\sqrt{|V_\mathfrak{K}|} + 1} \left( 1 - \epsilon_0^2 - \frac{1}{2 \log |V_\mathfrak{K}|}\right) \\
    &\geq 1 - 2 \epsilon_0^2.
\end{align*}
The last inequality holds for sufficiently large $n$ because $ |V_\mathfrak{K}| \to \infty$ as $n \to \infty$.
We set $2\epsilon_0^2 = \tau^2$ and relabelling the constants, to obtain
\begin{align*}
    \inf_{D \mapsto \hat{f}_D} \sup_{f \in B_{2,q}^{s_x,s_o}\left(\R^{d_x+d_o}\right)} \left\| \hat{f}_D - f \right\|_{L^2(P_{XO})} &\geq \inf_{D \mapsto \hat{f}_D} \max_{\bv \in V_\mathfrak{K}} \left\| \hat{f}_D - f_{\bv}\right\|_{L^2(P_{XO})} \\
    &\geq \tau n^{-\frac{s_x}{2s_x + 2d_x\eta_1 + d_x + \frac{d_o}{s_o}s_x}} (\log n)^{-2s_x - d_x} \\
    &= \tau n^{-\frac{\frac{s_x}{d_x}}{1+2(\frac{s_x}{d_x}+\eta_1)+\frac{d_o}{s_o}\frac{s_x}{d_x}}} (\log n)^{-2s_x - d_x} ,
\end{align*}
holds for sufficiently large $n$ with probability $\geq 1 - C \tau^2$.

\subsection{Auxiliary Results}
\begin{lem}\label{lem:besov_young}
Let $f:\R^d\to\R$, $f\in B_{2,q}^{\bs}(\R^d)$ and $K\in L^1(\R^d)$, then 
\begin{align*} 
    \|K\ast f \|_{B_{2,q}^{\bs}(\R^d)} \leq \|K\|_{L^1(\R^d)}\|f\|_{B_{2,q}^{\bs}(\R^d)}
\end{align*}
\end{lem}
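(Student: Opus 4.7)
The plan is to prove the inequality for the $L^2$ norm and the Besov seminorm separately, then combine them by the triangle inequality implicit in the definition of the Besov norm. For the $L^2$ part, the bound $\|K \ast f\|_{L^2(\R^d)} \leq \|K\|_{L^1(\R^d)} \|f\|_{L^2(\R^d)}$ is the standard Young convolution inequality, which is immediate.

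The core of the argument is to show the analogous bound for the Besov seminorm. The key observation is that the $r$-th order finite difference operator $\Delta_{\bh}^r$ commutes with convolution: first I would verify that $\Delta_{\bh}^r (K \ast f)(\bx) = (K \ast \Delta_{\bh}^r f)(\bx)$ by unfolding the recursive definition of $\Delta_{\bh}^r$ and using the linearity and translation invariance of convolution (since on $\R^d$ we are working on the whole space, no boundary truncation is needed). Once that is established, applying Young's inequality pointwise in $\bh$ gives
\begin{align*}
\|\Delta_{\bh}^r(K\ast f)\|_{L^2(\R^d)} = \|K \ast \Delta_{\bh}^r f\|_{L^2(\R^d)} \leq \|K\|_{L^1(\R^d)} \|\Delta_{\bh}^r f\|_{L^2(\R^d)}.
\end{align*}

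Taking the supremum over $\bh$ with $|h_i|\leq t_i$ converts this into a bound on the modulus of smoothness:
\begin{align*}
\omega_{r,2}(K\ast f, \bt, \R^d) \leq \|K\|_{L^1(\R^d)} \, \omega_{r,2}(f, \bt, \R^d).
\end{align*}
Substituting $\bt = (t^{1/s_1},\dots, t^{1/s_d})$, raising to the $q$-th power, integrating against $\mathrm{d}t/t$ on $(0,1)$, and taking the $q$-th root yields
\begin{align*}
|K\ast f|_{B_{2,q}^{\bs}(\R^d)} \leq \|K\|_{L^1(\R^d)}\, |f|_{B_{2,q}^{\bs}(\R^d)},
\end{align*}
with the obvious modification when $q=\infty$ (supremum in place of integral).

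Combining the $L^2$ bound and the seminorm bound gives the lemma. I do not anticipate any real obstacle: the only subtlety is justifying the commutation of $\Delta_{\bh}^r$ with convolution, which on $\R^d$ follows from translation invariance and an induction on $r$, and the application of Minkowski's integral inequality (equivalently Young's inequality) to $K \ast \Delta_{\bh}^r f$. No measure-theoretic complications arise because $K\in L^1$ and $f\in L^2$ imply $K\ast f \in L^2$ so all expressions are well-defined.
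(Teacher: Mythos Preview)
Your proposal is correct and follows essentially the same approach as the paper: both establish the commutation $\Delta_{\bh}^r(K\ast f)=K\ast(\Delta_{\bh}^r f)$ via translation invariance (the paper writes $\Delta_{\bh}^r=(\tau_{\bh}-\mathrm{id})^r$, you use induction on $r$), then apply Young's convolution inequality to bound the modulus of smoothness, integrate to control the Besov seminorm, and combine with the $L^2$ Young bound.
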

\begin{proof}
Define $(\tau_{\mathbf{h}}f)(\bx) := f(\bx + \mathbf{h})$. We have
\begin{align*}
    \tau_{\mathbf{h}}(K\ast f)(\bx) = (K\ast f)(\bx + \mathbf{h}) = (K \ast (\tau_{\mathbf{h}}f))(\bx) .
\end{align*}
Since $\Delta^{r}_{\mathbf{h}} = (\tau_{\mathbf{h}} - \mathrm{id})^r$, we have $\Delta^{r}_{\mathbf{h}}(K\ast f) = K\ast \left(\Delta^{r}_{\mathbf{h}}f\right)$. Thus for any $r > \max\{s_1, \ldots, s_d\}$, 
\begin{align}
    \omega_{r,2}\left(K\ast f, t^{\frac{1}{s_1}},\dots, t^{\frac{1}{s_d}}, \R^d\right) &= \sup_{0<|h_i|<t^{\frac{1}{s_i}}}\left\|\Delta^{r}_{\mathbf{h}}(K\ast f)\right\|_{L^2(\R^d)}\nonumber\\
    &= \sup_{0<|h_i|<t^{\frac{1}{s_i}}}\left\|K\ast \Delta^{r}_{\mathbf{h}}( f)\right\|_{L^2(\R^d)}\nonumber\\
    &\stackrel{(i)}{\leq} \sup_{0<|h_i|< t^{\frac{1}{s_i}}}\|K\|_{L^1(\R^d)} \left\|\Delta^{r}_{\mathbf{h}}f\right\|_{L^2(\R^d)}\label{eq:lem_young_besov_modified}\\
    &= \|K\|_{L^1(\R^d)}\omega_{r,2}\left(f,t^{\frac{1}{s_1}},\dots, t^{\frac{1}{s_d}},\R^d\right), \nonumber
\end{align}
where we use Young's convolution inequality in $(i)$. Hence
\begin{align*}
    |K\ast f|_{B_{2,q}^{\bs}(\R^d)} &= \left(\int_{0}^{1}\left[t^{-1}\omega_{r,2}\left(K\ast f,t^{\frac{1}{s_1}},\dots, t^{\frac{1}{s_d}},\R^d\right)\right]^q\;\frac{\mathrm{d}t}{t}\right)^{\frac{1}{q}}\\
    &\leq \|K\|_{L^1(\R^d)}\left(\int_{0}^{1}\left[t^{-1}\omega_{r,2}\left(f,t^{\frac{1}{s_1}},\dots, t^{\frac{1}{s_d}},\R^d\right)\right]^q\;\frac{\mathrm{d}t}{t}\right)^{\frac{1}{q}} \\
    &= \|K\|_{L^1(\R^d)}|f|_{B^{\bs}_{2,q}(\R^d)} .
\end{align*}
Hence
\begin{align*}
    \|K\ast f\|_{B^{\bs}_{2,q}(\mathbb{R}^d)} &= \|K\ast f\|_{L^2(\mathbb{R}^d)} + |K\ast f|_{B^{\bs}_{2,q}(\mathbb{R}^d)}\\
    &\stackrel{(i)}{\leq}  \|K\|_{L^1(\mathbb{R}^d)}\|f\|_{L^2(\mathbb{R}^d)} + \|K\|_{L^1(\mathbb{R}^d)}|f|_{B^{\bs}_{2,q}(\mathbb{R}^d)} = \|K\|_{L^1(\mathbb{R}^d)}\|f\|_{B^{\bs}_{2,q}(\mathbb{R}^d)},
\end{align*}
where $(i)$ is due to Young's convolution inequality.
\end{proof}

\begin{lem}\label{lem:new_besov_mask}
Let $M_{k,-\frac{\mathfrak{m}}{2}}: \R^{d_x}\to\R, M_{\mathfrak{K},\bl_o}:\R^{d_o}\to\R$ be defined in Eq.~\eqref{eq:Bspline_main}. 
Let $\one_{S(\bl_o)}:\R^{d_x}\to\R$ be an indicator function over a measurable set $S(\bl_o)\subset\R^{d_x}$ for each $\bl_o$ in Eq.~\eqref{eq:calL}. 
Define $f(\bx,\bo):= \sum_{\bl_o}(M_{k,-\frac{\mathfrak{m}}{2}} \ast \calF^{-1}[\one_{S(\bl_o)}])(\bx) \cdot M_{\mathfrak{K},\bl_o}(\bo)$. Then we have
    $\|f\|_{B^{s_x,s_o}_{2,\infty}(\mathbb{R}^{d_x+d_o})}\lesssim 2^{\mathfrak{K}\underline{s} - \mathfrak{K}\frac{\underline{s}}{s_x}\frac{d_x}{2}}$.
\end{lem}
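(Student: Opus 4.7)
The plan is to bound $\|f\|_{L^2(\R^{d_x+d_o})}$ and the anisotropic Besov semi-norm $|f|_{B^{s_x,s_o}_{2,\infty}(\R^{d_x+d_o})}$ separately, using two structural properties of the sum: (i) the products $\{M_{\mathfrak{K},\bl_o}\}_{\bl_o\in\calL_O}$ have pairwise disjoint supports because their location vectors lie in $\mathfrak{m}\mathbb{N}$ (see \emph{Step Two} of the proof of \Cref{thm:lower_rate}), so that $\bo$-integrals decouple over $\bl_o$; and (ii) each $\bx$-factor $g_{\bl_o}:=M_{\mathfrak{K},-\mathfrak{m}/2}\ast\calF^{-1}[\one_{S(\bl_o)}]$ is a convolution, so that $\bx$-differences commute with $\ast$ and Young/Plancherel control its $L^2$-norm and modulus via the $L^1$-norm and $L^1$-modulus of the cardinal B-spline $M_{\mathfrak{K},-\mathfrak{m}/2}$. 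To reduce the anisotropic semi-norm to one-directional moduli I invoke the standard coordinate-direction subadditivity $\omega_{\mathfrak{m},2}(f,(t^{1/s_x}\mathbf{1}_{d_x},t^{1/s_o}\mathbf{1}_{d_o}))\lesssim\omega^{X}_{\mathfrak{m},2}(f,t^{1/s_x})+\omega^{O}_{\mathfrak{m},2}(f,t^{1/s_o})$, where the superscripts restrict differences to the $\bx$- or $\bo$-block. Throughout I use the bound $|S(\bl_o)|\lesssim 2^{\mathfrak{K}\underline{s} d_x/s_x}$, which holds in the intended application because $S(\bl_o)\subseteq 2^{\mathfrak{K}\underline{s}/s_x}[1.1\pi,1.95\pi]^{d_x}$ after absorbing the Fourier dilation from $\Omega_{\mathfrak{K},\bl_x}$.

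For the $X$-piece, differencing commutes with the $\bl_o$-sum and disjoint supports give $\|\Delta^{\mathfrak{m}}_{(\bh_x,0)}f\|_{L^2}^{2}=\sum_{\bl_o}\|\Delta^{\mathfrak{m}}_{\bh_x}g_{\bl_o}\|_{L^2}^{2}\|M_{\mathfrak{K},\bl_o}\|_{L^2(\R^{d_o})}^{2}$. Each factor obeys $\|\Delta^{\mathfrak{m}}_{\bh_x}g_{\bl_o}\|_{L^2}\lesssim\|\Delta^{\mathfrak{m}}_{\bh_x}M_{\mathfrak{K},-\mathfrak{m}/2}\|_{L^1}\,|S(\bl_o)|^{1/2}$ by Young plus Plancherel. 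The univariate estimate $\omega_{\mathfrak{m},1}(\iota_{\mathfrak{m}},h)\lesssim\min(h^\mathfrak{m},1)$, combined with coordinate subadditivity applied to the tensor product $M_{\mathfrak{K},-\mathfrak{m}/2}(\bx)=\iota_{\mathfrak{m}}^{\otimes d_x}(2^{\lfloor\mathfrak{K}\underline{s}/s_x\rfloor}\bx+\tfrac{\mathfrak{m}}{2})$, yields $\|\Delta^{\mathfrak{m}}_{\bh_x}M_{\mathfrak{K},-\mathfrak{m}/2}\|_{L^1}\lesssim 2^{-\mathfrak{K}\underline{s}d_x/s_x}\min\!\big((2^{\mathfrak{K}\underline{s}/s_x}\|\bh_x\|_\infty)^\mathfrak{m},1\big)$. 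Using $\|M_{\mathfrak{K},\bl_o}\|_{L^2(\R^{d_o})}^{2}\asymp 2^{-\mathfrak{K}\underline{s}d_o/s_o}$, $|\calL_O|\asymp 2^{\mathfrak{K}\underline{s}d_o/s_o}$, and the bound on $|S(\bl_o)|$, these assemble to $\omega^{X}_{\mathfrak{m},2}(f,u)\lesssim 2^{-\mathfrak{K}\underline{s}d_x/(2s_x)}\min\!\big((2^{\mathfrak{K}\underline{s}/s_x}u)^\mathfrak{m},1\big)$. Since $\mathfrak{m}>s_x$, the maximum of $u^{-s_x}\omega^{X}_{\mathfrak{m},2}(f,u)$ is attained at the balance scale $u\asymp 2^{-\mathfrak{K}\underline{s}/s_x}$ and equals $2^{\mathfrak{K}\underline{s}(1-d_x/(2s_x))}$, exactly the target; the $L^2$-norm of $f$ (the $\bh_x=0$ case) is handled by the same identity and gives the strictly smaller $2^{-\mathfrak{K}\underline{s}d_x/(2s_x)}$, hence is subsumed.

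The $O$-piece is analogous with one technical twist: $\Delta^{\mathfrak{m}}_{(0,\bh_o)}f=\sum_{\bl_o}g_{\bl_o}(\bx)\,\Delta^{\mathfrak{m}}_{\bh_o}M_{\mathfrak{K},\bl_o}(\bo)$ and the supports of the shifted B-splines $\Delta^{\mathfrak{m}}_{\bh_o}M_{\mathfrak{K},\bl_o}$ are no longer pairwise disjoint across $\bl_o$. However, for $\|\bh_o\|_\infty\lesssim 2^{-\mathfrak{K}\underline{s}/s_o}$ they have bounded overlap multiplicity depending only on $(\mathfrak{m},d_o)$, so $\|\Delta^{\mathfrak{m}}_{(0,\bh_o)}f\|_{L^2}^{2}\lesssim\sum_{\bl_o}\|g_{\bl_o}\|_{L^2}^{2}\|\Delta^{\mathfrak{m}}_{\bh_o}M_{\mathfrak{K},\bl_o}\|_{L^2(\R^{d_o})}^{2}$ with absolute constant; for larger $\bh_o$ the same inequality follows from the trivial $\|\Delta^{\mathfrak{m}}_{\bh_o}M_{\mathfrak{K},\bl_o}\|_{L^2}\leq 2^\mathfrak{m}\|M_{\mathfrak{K},\bl_o}\|_{L^2}$. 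Combined with the $L^2$-modulus of the dilated tensor B-spline (again reduced to the univariate case by coordinate subadditivity) and the already-established $\|g_{\bl_o}\|_{L^2}\lesssim 2^{-\mathfrak{K}\underline{s}d_x/(2s_x)}$, the analogous $u^{-s_o}$-balance at $u\asymp 2^{-\mathfrak{K}\underline{s}/s_o}$ yields a contribution of the same order. The main obstacle is controlling the overlap constant cleanly across the full range of $\bh_o$ and producing the univariate B-spline modulus estimates in both $L^1$ and $L^2$; both are standard but rely on the coordinate-direction subadditivity to avoid working directly with the vector-valued $r$-th difference $\Delta^{\mathfrak{m}}_\bh$ on tensor products, which does not factor neatly.
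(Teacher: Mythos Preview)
Your overall architecture—splitting the anisotropic semi-norm into $X$- and $O$-directional moduli, exploiting the disjoint supports of $\{M_{\mathfrak{K},\bl_o}\}$, and balancing at the critical scale—matches the paper. But there is one missing idea with a concrete consequence.

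You bound the $X$-factor by Young's inequality, $\|\Delta^{\mathfrak{m}}_{\bh_x}g_{\bl_o}\|_{L^2}\le \|\Delta^{\mathfrak{m}}_{\bh_x}M_{\mathfrak{K},-\mathfrak{m}/2}\|_{L^1}\,|S(\bl_o)|^{1/2}$, which forces you to impose $|S(\bl_o)|\lesssim 2^{\mathfrak{K}\underline{s}d_x/s_x}$. The lemma, however, is stated for an \emph{arbitrary} measurable $S(\bl_o)$, and the paper's proof uses Plancherel instead: since $\one_{S(\bl_o)}$ is a Fourier multiplier pointwise bounded by $1$,
\[
\bigl\|\calF^{-1}[\one_{S(\bl_o)}]\ast \Delta^{r}_{\bh_x}M_{\mathfrak{K},-\mathfrak{m}/2}\bigr\|_{L^2}
=\bigl\|\one_{S(\bl_o)}\cdot\calF[\Delta^{r}_{\bh_x}M_{\mathfrak{K},-\mathfrak{m}/2}]\bigr\|_{L^2}
\le \bigl\|\Delta^{r}_{\bh_x}M_{\mathfrak{K},-\mathfrak{m}/2}\bigr\|_{L^2},
\]
with no dependence on $|S(\bl_o)|$ at all. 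This single observation replaces your $L^1$-modulus calculations for the B-spline by the $L^2$-modulus, and the remaining estimate is handed off to the sequential Besov norm equivalence for $M_{\mathfrak{K},-\mathfrak{m}/2}$. The same Plancherel step is reused verbatim in the $O$-direction (bounding $\|g_{\bl_o}\|_{L^2}\le\|M_{\mathfrak{K},-\mathfrak{m}/2}\|_{L^2}$), and it is exactly this step that the paper invokes elsewhere (in the approximation-error bound of \Cref{prop: approx_error}) to show that Fourier truncation does not increase Besov norm—so the full generality is actually used.

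For the $O$-direction your bounded-overlap argument is a legitimate alternative, but the paper avoids overlap entirely: it first applies the reverse Marchaud inequality to lower the difference order from $r$ to $s_o$, then the scaling property $\omega_{s_o,2}(f,t^{1/s_o})\lesssim 2^{\mathfrak{K}\underline{s}}\omega_{s_o,2}(f,2^{-\mathfrak{K}\underline{s}/s_o}t^{1/s_o})$. After this rescaling the step sizes satisfy $|h_j|\le 2^{-\mathfrak{K}\underline{s}/s_o}$, and since $\mathfrak{m}-1\ge s_o$ and the location vectors sit on the lattice $2\mathfrak{m}\mathbb{N}$, the supports of $\{\Delta^{s_o}_{\bh}M_{\mathfrak{K},\bl_o}\}$ are genuinely disjoint. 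A Marchaud-type estimate then converts the result to a $B^{s_o}_{2,1}$ norm, and the sequential norm equivalence closes the argument. Your direct balance computation would also work once the Plancherel trick is in place, but the paper's route sidesteps the case analysis on $\|\bh_o\|$ that you flagged as the main obstacle.
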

\begin{proof}
By definition of Besov norm in \citet{leisner_nonlinear_wavelet_approximation_2003}[Eq. (2.2)], for any $r > \max\{s_x, s_o\}$, we have
\begin{small}
\begin{align*}
    \|f\|_{B^{s_x,s_o}_{2,\infty}(\mathbb{R}^{d_x+d_o})} &\asymp \underbrace{\sup_{0<t<1}\left[t^{-1}\omega^{\bx}_{r,2}(f,t^{1/s_x},\dots,t^{1/s_x})\right]}_{(I)} + \underbrace{\sup_{0<t<1}\left[t^{-1}\omega^{\bo}_{r,2}(f,t^{1/s_o},\dots,t^{1/s_o})\right]}_{(II)},
\end{align*}
\end{small}
where $\omega_{r,2}^{\bx}$ (respectively $\omega_{r,2}^{\bo}$) denotes the \emph{partial} modulus of smoothness in the $\bx$ (respectively $\bo$)-direction, and we restrict the supremum from $t\in (0,\infty)$ to $t\in (0,1)$ by \citet[Theorem 10.1]{devore1993constructive}. Specifically, define $(\tau_{\mathbf{h}}g)(\bx) := g(\bx + \mathbf{h})$ and $\Delta^{r}_{\mathbf{h}} := (\tau_{\mathbf{h}} - \mathrm{id})^r$, we write
\begin{align*}
    \omega^{\bx}_{r,2}(f, t^{1/s_x},\dots, t^{1/s_x})
    :=\sup_{|h_i|\leq t^{1/s_x}}\left(\int_{\mathbb{R}^{d_o}}\int_{\mathbb{R}^{d_x}} \left|\Delta^{r}_{\bh}(f(\cdot, \bo))(\bx)\right|^2 \;\mathrm{d}\bx\;\mathrm{d}\bo\right)^{\frac{1}{2}},
\end{align*}
and $\omega_{r,2}^{\bo}$ is defined similarly. We first bound $(I)$.  We have
\begin{align*}
    \tau_{\mathbf{h}}(K\ast g)(\bx) = (K\ast g)(\bx + \mathbf{h}) = (K \ast (\tau_{\mathbf{h}}g))(\bx) .
\end{align*}
We thus have $\Delta^{r}_{\mathbf{h}}(K\ast g) = K\ast \left(\Delta^{r}_{\mathbf{h}}g\right)$. Thus we have
\begin{align*}
    &\omega^{\bx}_{r,2}(f, t^{1/s_x},\dots, t^{1/s_x})\\
    =&\sup_{|h_i|\leq t^{1/s_x}}\left(\int_{\mathbb{R}^{d_o}}\int_{\mathbb{R}^{d_x}} \left|\sum_{\bl_o} ( (\Delta_{\bh}^r M_{k,-\frac{\mathfrak{m}}{2}}) \ast \calF^{-1}[\one_{S(\bl_o)}])(\bx) M_{\mathfrak{K},\bl_o}(\bo)\right|^2 \;\mathrm{d}\bx\;\mathrm{d}\bo\right)^{\frac{1}{2}}\\
    =&\sup_{|h_i|\leq t^{1/s_x}}\left(\sum_{\bl_o,\bl_o'}\int_{\mathbb{R}^{d_x}} ( (\Delta_{\bh}^r M_{k,-\frac{\mathfrak{m}}{2}}) \ast \calF^{-1}[\one_{S(\bl_o)}])(\bx) \overline{( (\Delta_{\bh}^r M_{k,-\frac{\mathfrak{m}}{2}}) \ast \calF^{-1}[\one_{S(\bl_o')}])(\bx)} \;\mathrm{d}\bx  \right. \\
    &\qquad \left. \cdot \int_{\mathbb{R}^{d_o}}M_{\mathfrak{K},\bl_o}(\bo) M_{k,\bl_o'}(\bo) \;\mathrm{d}\bo\right)^{\frac{1}{2}}\\
    \stackrel{(a)}{=}&\sup_{|h_i|\leq t^{1/s_x}}\left(\sum_{\bl_o}\|\calF^{-1}[\one_{S(\bl_o)}]\ast (\Delta^r_{\bh}M_{k,-\frac{\mathfrak{m}}{2}})\|_{L^2(\mathbb{R}^{d_x})}^2 \|M_{\mathfrak{K},\bl_o}\|_{L^2(\mathbb{R}^{d_o})}^2\right)^{\frac{1}{2}}\\
    \stackrel{(b)}{\leq}& \sup_{|h_i|\leq t^{1/s_x}}\left(\sum_{\bl_o}\|\Delta^r_{\bh}M_{k,-\frac{\mathfrak{m}}{2}}\|_{L^2(\mathbb{R}^{d_x})}^2 \|M_{\mathfrak{K},\bl_o}\|_{L^2(\mathbb{R}^{d_o})}^2\right)^{\frac{1}{2}}\\
    =& \left(\sup_{|h_i|\leq t^{1/s_x}}\|\Delta^r_{\bh}M_{k,-\frac{\mathfrak{m}}{2}}\|_{L^2(\mathbb{R}^{d_x})}\right)\left(\sum_{\bl_o}\|M_{\mathfrak{K},\bl_o}\|_{L^2(\mathbb{R}^{d_o})}^2\right)^{\frac{1}{2}}\\
    \stackrel{(c)}{\leq} & \sup_{|h_i|\leq t^{1/s_x}}\|\Delta^r_{\bh}M_{k,-\frac{\mathfrak{m}}{2}}\|_{L^2(\mathbb{R}^{d_x})}\\
    =& \omega_{r,2}(M_{k,-\frac{\mathfrak{m}}{2}},t^{1/s_x},\dots,t^{1/s_x}).
\end{align*}
In the above derivations,
\begin{itemize}
    \item we use $\mathrm{supp}(M_{\mathfrak{K},\bl_o})\cap \mathrm{supp}(M_{\mathfrak{K},\bl_o}) = \emptyset$ if $\bl_o\neq \bl_o'$ in $(a)$,
    \item we use in $(b)$ the inequality 
\begin{align}
\begin{aligned}
\label{eq:plancherel_indicator_inequality}
    &\quad \|\calF^{-1}[\one_{S(\bl_o)}] \ast (\Delta^r_{\bh}M_{k,-\frac{\mathfrak{m}}{2}})\|^2_{L^2(\mathbb{R}^{d_x})} = \|\one_{S(\bl_o)} \cdot \calF[\Delta^r_{\bh}M_{k,-\frac{\mathfrak{m}}{2}}]\|^2_{L^2(\mathbb{R}^{d_x})} \\
    &\leq \|\calF[\Delta_{\bh}^rM_{k,-\frac{\mathfrak{m}}{2}}]\|^2_{L^2(\mathbb{R}^{d_x})} = \|\Delta_{\bh}^rM_{k,-\frac{\mathfrak{m}}{2}}\|^2_{L^2(\mathbb{R}^{d_x})},
\end{aligned}
\end{align}
\item we use $
\sum_{\bl_o}\|M_{\mathfrak{K},\bl_o}\|^2_{L^2(\mathbb{R}^{d_o})} \leq 2^{\frac{\mathfrak{K}\underline{s}d_o}{s_o}}2^{-\frac{\mathfrak{K}\underline{s}d_o}{s_o}}\|M_{0\boldsymbol{0}}\|^2_{L^2(\mathbb{R}^{d_o})} \leq 1$ in $(c)$. Indeed, note that $\|M_{0\boldsymbol{0}}\|^2_{L^2(\mathbb{R}^{d_o})} = \|\iota_{m}\|^{2d_o}_{L^2(\mathbb{R})} \leq \|\iota_{m}\|^{2d_o}_{L^1(\mathbb{R})} = 1$. 
\end{itemize}
Thus we have
\begin{align*}
    (I)&\leq \sup_{t>0}\left[t^{-1}\omega_{r,2}\left(M_{k,-\frac{\mathfrak{m}}{2}},t^{\frac{1}{s_x}},\dots,t^{\frac{1}{s_x}}\right)\right] \asymp |M_{k,-\frac{\mathfrak{m}}{2}}|_{B^{s_x}_{2,\infty}(\mathbb{R}^{d_x})} \asymp 2^{\mathfrak{K}\underline{s}\left(1 -\frac{d_x}{2s_x}\right)},
\end{align*}
where we bound $(I)$ by an isotropic Besov norm, and we use the sequential Besov norm equivalence \citep{devore1988interpolation}[Theorem 5.1] (see also \citet[Theorem 3.4]{leisner_nonlinear_wavelet_approximation_2003} applied to $B^{s_x}_{2,\infty}(\mathbb{R}^{d_x})$) in the last equality. Now we bound $(II)$. Notice that
\begin{align*}
    &\omega^{\bo}_{r,2}(f,t^{1/s_o},\dots, t^{1/s_o})\\
    \stackrel{(a)}{\lesssim} &\omega_{s_o,2}^{\bo}(f,t^{1/s_o},\dots, t^{1/s_o})\\
    \stackrel{(b)}{\lesssim} &2^{\mathfrak{K}\underline{s}}\omega^{\bo}_{s_o,2}\left(f,2^{-\frac{\mathfrak{K}\underline{s}}{s_o}}t^{1/s_o},\dots, 2^{-\frac{\mathfrak{K}\underline{s}}{s_o}}t^{1/s_o}\right)\\
    =&2^{\mathfrak{K}\underline{s}} \sup_{|h_i|\leq 2^{-\frac{\mathfrak{K}\underline{s}}{s_o}}t^{1/s_o}}\left(\int_{\mathbb{R}^{d_x+d_o}} \left|\sum_{\bl_o}(\calF^{-1}[\one_{S(\bl_o)}] \ast M_{k,-\frac{\mathfrak{m}}{2}})(\bx) \cdot (\Delta^{s_o}_{\bh} M_{\mathfrak{K},\bl_o})(\bo) \right|^2 \;\mathrm{d}\bx\;\mathrm{d}\bo\right)^{\frac{1}{2}}\\
    =&2^{\mathfrak{K}\underline{s}}\sup_{|h_i|\leq 2^{-\frac{\mathfrak{K}\underline{s}}{s_o}} t^{1/s_o}}\left(\sum_{\bl_o,\bl_o'}\int_{\mathbb{R}^{d_x}} (\calF^{-1}[\one_{S(\bl_o)}]\ast M_{k,-\frac{\mathfrak{m}}{2}})(\bx) \cdot \overline{\calF^{-1}[\one_{S(\bl_o^\prime)}]\ast M_{k,-\frac{\mathfrak{m}}{2}}}(\bx)\;\mathrm{d}\bx \right. \\
    &\qquad \left. \cdot \int_{\mathbb{R}^{d_o}}(\Delta^{s_o}_{\bh}M_{\mathfrak{K},\bl_o})(\bo)\cdot (\Delta^{s_o}_{\bh}M_{k,\bl_o'})(\bo) \;\mathrm{d}\bo\right)^{\frac{1}{2}}\\
    \stackrel{(c)}{=}& 2^{\mathfrak{K}\underline{s}}\sup_{|h_i|\leq 2^{-\frac{\mathfrak{K}\underline{s}}{s_o}}t^{1/s_o}}\left(\sum_{\bl_o}\|\calF^{-1}[\one_{S(\bl_o)}] \ast M_{k,\bl_x}\|^2_{L^2(\mathbb{R}^{d_x})} \cdot \|\Delta^{s_o}_{\bh}M_{\mathfrak{K},\bl_o}\|^2_{L^2(\mathbb{R}^{d_o})}\right)^{\frac{1}{2}}\\
    \stackrel{(d)}{\leq} & 2^{\mathfrak{K}\underline{s}}\sup_{|h_i|\leq 2^{-\frac{\mathfrak{K}\underline{s}}{s_o}}t^{1/s_o}}\left(\sum_{\bl_o}\|M_{k,-\frac{\mathfrak{m}}{2}}\|^2_{L^2(\mathbb{R}^{d_x})} \cdot \|\Delta^{s_o}_{\bh}M_{\mathfrak{K},\bl_o}\|^2_{L^2(\mathbb{R}^{d_o})}\right)^{\frac{1}{2}}\\
    = & 2^{\mathfrak{K}\underline{s}}\|M_{k,-\frac{\mathfrak{m}}{2}}\|_{L^2(\mathbb{R}^{d_x})}\left( \sup_{|h_i|\leq 2^{-\frac{\mathfrak{K}\underline{s}}{s_o}} t^{1/s_o}}\sum_{\bl_o}\|\Delta^{s_o}_{\bh}M_{\mathfrak{K},\bl_o}\|^2_{L^2(\mathbb{R}^{d_o})}\right)^{\frac{1}{2}}\\
    \stackrel{(e)}{=} & 2^{\mathfrak{K}\underline{s}}\|M_{k,-\frac{\mathfrak{m}}{2}}\|_{L^2(\mathbb{R}^{d_x})}\left( \sup_{|h_i|\leq 2^{-\frac{\mathfrak{K}\underline{s}}{s_o}} t^{1/s_o}}\left\|\sum_{\bl_o}\left(\Delta^{s_o}_{\bh}M_{\mathfrak{K},\bl_o}\right)\right\|^2_{L^2(\mathbb{R}^{d_o})}\right)^{\frac{1}{2}}\\
    \stackrel{(f)}{=} & 2^{\mathfrak{K}\underline{s}}\|M_{k,-\frac{\mathfrak{m}}{2}}\|_{L^2(\mathbb{R}^{d_x})}\left( \sup_{|h_i|\leq 2^{-\frac{\mathfrak{K}\underline{s}}{s_o}} t^{1/s_o}}\left\|\Delta^{s_o}_{\bh}\left(\sum_{\bl_o}M_{\mathfrak{K},\bl_o}\right)\right\|^2_{L^2(\mathbb{R}^{d_o})}\right)^{\frac{1}{2}}\\
    = &2^{\mathfrak{K}\underline{s}}\|M_{k,-\frac{\mathfrak{m}}{2}}\|_{L^2(\mathbb{R}^{d_x})}\left(\omega_{s_o,2}\left(\sum_{\bl_o}M_{\mathfrak{K},\bl_o},2^{-\frac{\mathfrak{K}\underline{s}}{s_o}}t^{\frac{1}{s_o}},\dots,2^{-\frac{\mathfrak{K}\underline{s}}{s_o}}t^{\frac{1}{s_o}}\right)^2\right)^{\frac{1}{2}}.
\end{align*}
In the above derivations,
\begin{itemize}
    \item we use $r=s_x\vee s_o \geq s_o$ and Minkowski's inequality in $(a)$ (this is also sometimes referred to as the reverse Marchaud inequality, see \cite{KOLOMOITSEV2020105423}[Property 8]),
    \item we use \cite{leisner_nonlinear_wavelet_approximation_2003}[Theorem 2.1.1] in $(b)$, which we can apply since $2^{\frac{\mathfrak{K}\underline{s}}{s_o}}\geq 1$,
    \item we deduce from Eq. \eqref{eq:Bspline_main} that
    \begin{align*}
        \mathrm{supp}(M_{k\bl_o}) = \bigtimes_{j=1}^{d_o}\left[2^{-\left\lfloor\frac{\mathfrak{K}\underline{s}}{s_o}\right\rfloor}\ell_{o,j} , 2^{-\left\lfloor\frac{\mathfrak{K}\underline{s}}{s_o}\right\rfloor}(\mathfrak{m}+\ell_{o,j})\right].
    \end{align*}
    We also see that 
    \begin{align*}
        \mathrm{supp}(\Delta^{s_o}_{\bh}M_{k\bl_o}) = \bigtimes_{j=1}^{d_o}\left[2^{-\left\lfloor\frac{\mathfrak{K}\underline{s}}{s_o}\right\rfloor}\ell_{o,j} - s_oh_j , 2^{-\left\lfloor\frac{\mathfrak{K}\underline{s}}{s_o}\right\rfloor}(\mathfrak{m}+\ell_{o,j})\right]
    \end{align*}
    We deduce that a sufficient condition to guarantee that \begin{align*}
        \mathrm{supp}\left(\Delta^{s_o}_{\bh}M_{\mathfrak{K},\bl_o}\right)\cap\mathrm{supp}\left(\Delta^{s_o}_{\bh}M_{k,\bl_o'}\right) = \emptyset
    \end{align*}
    if $\bl_o\neq \bl_o'$ is given by 
    \begin{align*}
        (\forall j=1,\dots,d_o),\; |h_j|\leq 2^{-\left\lfloor\frac{\mathfrak{K}\underline{s}}{s_o}\right\rfloor}\frac{\mathfrak{m}-1}{s_o}.
    \end{align*}
    Since $\mathfrak{m}-1\geq s_o$ by choice, and we have for all $i = 1,\dots, d_o$, $|h_i|\leq 2^{-\frac{\mathfrak{K}\underline{s}}{s_o}}t^{1/s_o}\leq 2^{-\frac{\mathfrak{K}\underline{s}}{s_o}}$, this sufficient condition is satisfied. Hence 
    \begin{align}
        \label{eq:lemedotfiveortho}
        \left\langle \Delta^{s_o}_{\bh}M_{\mathfrak{K},\bl_o},\Delta^{s_o}_{\bh}M_{k,\bl_o'}\right\rangle_{L^2(\mathbb{R}^{d_o})} = \left\|\Delta^{s_o}_{\bh}M_{\mathfrak{K},\bl_o}\right\|_{L^2(\mathbb{R}^{d_o})}^2\delta_{\bl_o,\bl_o'}. 
    \end{align}
    \item we use Plancherel's Theorem in $(d)$, in a similar way as in Eq. \eqref{eq:plancherel_indicator_inequality},
    \item we use Eq. \eqref{eq:lemedotfiveortho} again for step $(e)$,
    \item we use linearity of $\Delta^{s_o}_{\bh}$ for step $(f)$. 
\end{itemize}
Thus we have shown that, for $q=\infty$, 
\begin{align*}
    (II) &\lesssim 2^{\mathfrak{K}\underline{s}}\|M_{k,-\frac{\mathfrak{m}}{2}}\|_{L^2(\mathbb{R}^{d_x})}\sup_{t>0}\left[t^{-1}\omega_{s_o,2}\left(\sum_
    {\bl_o}M_{\mathfrak{K},\bl_o}, 2^{-\frac{\mathfrak{K}\underline{s}}{s_o}}t^{\frac{1}{s_o}},\dots, 2^{-\frac{\mathfrak{K}\underline{s}}{s_o}}t^{\frac{1}{s_o}}\right)\right]\\
    &\lesssim \|M_{k,-\frac{\mathfrak{m}}{2}}\|_{L^2(\mathbb{R}^{d_x})}\sup_{t>0}\left[\left(t2^{-\mathfrak{K}\underline{s}}\right)^{-1}\omega_{s_o,2}\left(\sum_
    {\bl_o}M_{\mathfrak{K},\bl_o}, 2^{-\frac{\mathfrak{K}\underline{s}}{s_o}}t^{\frac{1}{s_o}},\dots, 2^{-\frac{\mathfrak{K}\underline{s}}{s_o}}t^{\frac{1}{s_o}}\right)\right]\\
    &\stackrel{(a)}{\lesssim} \|M_{k,-\frac{\mathfrak{m}}{2}}\|_{L^2(\mathbb{R}^{d_x})}\sup_{t>0}\left[\int_{t}^{\infty} w^{-s_o}\omega_{s_o+1}\left(\sum_{\bl_o}M_{\mathfrak{K},\bl_o}, w,\dots,w\right)\;\frac{\mathrm{d}w}{w}\right]\\
    &\leq \|M_{k,-\frac{\mathfrak{m}}{2}}\|_{L^2(\mathbb{R}^{d_x})}\int_{0}^{\infty} w^{-s_o}\omega_{s_o+1}\left(\sum_{\bl_o}M_{\mathfrak{K},\bl_o}, w,\dots,w\right)\;\frac{\mathrm{d}w}{w}\\
    &\asymp \|M_{k,-\frac{\mathfrak{m}}{2}}\|_{L^2(\mathbb{R}^{d_x})}\left\|\sum_{\bl_o}M_{\mathfrak{K},\bl_o}\right\|_{B^{s_o}_{2,1}(\mathbb{R}^{d_o})}\\
    &\stackrel{(b)}{\lesssim} 2^{-\frac{\mathfrak{K}\underline{s}d_x}{2s_x}}2^{\frac{\mathfrak{K}\underline{s}}{s_o}(s_o - d_o/2)}\left(\sum_{\bl_o}1\right)^{\frac{1}{2}}\\
    &\lesssim 2^{\mathfrak{K}\underline{s}\left(1 - \frac{d_x}{2s_x}\right)},
\end{align*}
where we use the Marchaud-type estimate \citet[Chapter 2, Eq. (10.3)]{devore1993constructive} in $(a)$, we use the sequential Besov norm equivalence \cite{devore1988interpolation}[Theorem 5.1] (see also \citep[Theorem 3.3.3]{leisner_nonlinear_wavelet_approximation_2003}) in $(b)$. 
\end{proof}
\end{appendix}

\end{document}